\documentclass[thnuscript,10pt,number,sort&compress]{elsarticle}
\usepackage{lineno}
\usepackage{float}
\usepackage{calrsfs}
\usepackage{physics}
\usepackage{mathtools}  
\usepackage{amsmath}
\usepackage{amssymb}
\usepackage{tabulary}
\usepackage{booktabs}
\usepackage{hyperref}
\linespread{1.4}
\usepackage{geometry}
\geometry{
	a4paper,
	total={160mm,240mm},
	left=25mm,
	top=30mm,
}
\usepackage[page]{appendix}
\usepackage{graphics}
\usepackage[labelfont=bf,justification=justified,singlelinecheck=false]{caption}
\captionsetup[figure]{name=Fig. ,labelsep=period}
\captionsetup[table]{labelsep=newline,font=footnotesize,position=bottom}
\captionsetup[figure]{labelfont={bf},labelformat={default},labelsep=period,name={Fig.}}

\usepackage{subcaption}
\usepackage{bbm}
\usepackage{tabularx}

\usepackage{indentfirst}

\usepackage{array}
\usepackage{longtable}
\usepackage{xcolor}

\usepackage{comment}

\usepackage[shortlabels]{enumitem}

\usepackage{wrapfig}

\setcounter{secnumdepth}{4}
\setcounter{tocdepth}{4}
\usepackage{titlesec}
\titleformat{\paragraph}[runin]
{\normalfont\scshape\bfseries}{\theparagraph}{1em}{}

\usepackage{chngcntr}
\usepackage{subcaption}

\usepackage{blindtext}
\usepackage{amsmath,amsthm}

\usepackage{makecell}

\DeclareMathOperator*{\argmin}{arg\,min}

\DeclareMathAlphabet{\pazocal}{OMS}{zplm}{m}{n}

\usepackage[ruled,vlined,linesnumbered]{algorithm2e}
\SetKwComment{Comment}{$\triangleright$\ }{}
\usepackage{xcolor}

\newcommand{\RR}{\mathbb{R}} 
\newcommand{\bg}{\bar g}
\newcommand{\uc}{\textit{\underbar c}}
\newcommand{\bc}{\bar c}
\DeclareMathOperator{\vect}{vec}
\newcommand{\T}{^\top}
\DeclareMathOperator{\dom}{dom}

\newtheorem{assumption}{Assumption}

\theoremstyle{plain}
\ifx\theorem\undefined
\newtheorem{theorem}{Theorem}
\newtheorem{lemma}{Lemma}
\newtheorem{corollary}{Corollary}
\newtheorem{proposition}{Proposition}
\theoremstyle{definition}
\newtheorem{definition}{Definition}

\newcommand{\nc}{the optimal stationarity condition}
\newcommand{\msec}{MSE relation condition}

\begin{document}
	\begin{frontmatter}
		
		\title{Meta-learning PINN loss functions}
		
		\author[mymainaddress]{Apostolos~F~Psaros}
		\author[Kenjiaddress]{Kenji~Kawaguchi}
		\author[mymainaddress]{George~Em~Karniadakis\corref{mycorrespondingauthor}}\cortext[mycorrespondingauthor]{Corresponding Author}\ead{george_karniadakis@brown.edu}
		\address[mymainaddress]{Division of Applied Mathematics, Brown University, Providence, RI 02906, USA}
		\address[Kenjiaddress]{Center of Mathematical Sciences and Applications, Harvard University, Cambridge, MA 02138, USA}
		
		\begin{abstract}
			We propose a meta-learning technique for offline discovery of physics-informed neural network (PINN) loss functions.
			We extend earlier works on meta-learning, and develop a gradient-based meta-learning algorithm for addressing diverse task distributions based on parametrized partial differential equations (PDEs) that are solved with PINNs.
			Furthermore, based on new theory we identify two desirable properties of meta-learned losses in PINN problems, which we enforce by proposing a new regularization method or using a specific parametrization of the loss function.  
			In the computational examples, the meta-learned losses are employed at test time for addressing regression and PDE task distributions. 
			Our results indicate that significant performance improvement can be achieved by using a shared-among-tasks offline-learned loss function even for out-of-distribution meta-testing. 
			In this case, we solve for test tasks that do not belong to the task distribution used in meta-training, and we also employ PINN architectures that are different from the PINN architecture used in meta-training. 
			To better understand the capabilities and limitations of the proposed method, we consider various parametrizations of the loss function and describe different algorithm design options and how they may affect meta-learning performance.						
		\end{abstract}
		
		\begin{keyword}
			physics-informed neural networks, meta-learning, meta-learned loss function
		\end{keyword}
		
	\end{frontmatter}

\section{Introduction}
\subsection{Related work and motivation}

The physics-informed neural network (PINN) is a recently proposed method for solving forward and inverse problems involving partial differential equations (PDEs); see, e.g., \cite{raissi2019physicsinformed,lu2021deepxde,jagtap2020adaptive,meng2020composite,pang2020npinns,kharazmi2021hpvpinns,yang2021bpinns,cai2021physicsinformed,shukla2021parallel,jin2021nsfnets} for different versions of PINNs.
PINNs are based on (a) constructing a neural network (NN) approximator for the PDE solution that is inserted via automatic differentiation in the nonlinear operators describing the PDE, and (b) learning the solution by minimizing a composite objective function comprised of the residual terms for PDEs and boundary and initial conditions in the strong form; other PINN types in variational (weak) form have also been developed in \cite{kharazmi2021hpvpinns}.

Similar to solving supervised learning tasks with NNs, optimally solving PDEs with PINNs requires selecting the architecture, the optimizer, the learning rate schedule, and other hyperparameters (considered as such in a broad sense), each of which plays a different role in training.
Moreover, optimally enforcing the physics-based constraints, e.g., by controlling the number and locations of residual points for each term in the composite objective function introduces additional PINN-specific hyperparameters to be selected.   
Overall, partly because of the above and despite the conceptual simplicity of PINNs, the resulting learning problem is theoretically and practically challenging; see, e.g., \cite{wang2020understanding}.

In general, the loss function in NN training interacts with the optimization algorithm and affects both the convergence rate and the performance of the obtained minimum. 
In addition, the loss function interacts with the NN for shaping the loss landscape; i.e., the training objective as a function of the trainable NN parameters. 
As a result, from this point of view, deciding whether to use mean squared error (MSE), mean absolute error (MAE) or a different loss function can be considered as an additional hyperparameter to be selected; trial and error is often employed in practice, with MSE being the most popular option for PINNs.
For facilitating automatic selection, a parametrized loss function has been proposed in \cite{barron2019general}, which includes standard losses as special cases and can be optimized online for improving performance.
Although such an adaptive loss is shown in \cite{barron2019general} to be highly effective in the computer vision problems considered therein, it increases training computational cost and does not benefit from prior knowledge regarding the particular problem being solved. 
The following question, therefore, arises in the context of PINNs: \textit{Can we develop a framework for encoding the underlying physics of a parametrized PDE in a loss function optimized offline?}

Meta-learning is an emerging field that aims to optimize various parts of learning by infusing prior knowledge of a given task distribution such that new tasks drawn from the distribution can be solved faster and more accurately; see \cite{hospedales2020metalearning} for a comprehensive review. 
One form of meta-learning, namely gradient-based, alternates between an inner optimization in which dependence of updates on meta-learned parameters is tracked, and an outer optimization in which meta-learned parameters are updated based on differentiating the inner optimization paths. 
Such differentiation can be performed either exactly or approximately for reducing computational cost; see, e.g., \cite{grefenstette2019generalized,nichol2018firstorder,rajeswaran2019metalearning,lorraine2020optimizing}.  

In this regard, a recent research direction is concerned with loss function meta-learning, with diverse applications in supervised and reinforcement learning \cite{sung2017learning,houthooft2018evolved,wu2018learning,xu2018metagradient,zheng2018learning,antoniou2019learning,grabocka2019learning,huang2019addressing,zou2019reward,bechtle2020metalearning,gonzalez2020effective,gonzalez2020improved,kirsch2020improving,sicilia2021multidomain}.
Although different works utilize different meta-learning techniques and have different goals, it has been shown that loss functions obtained via meta-learning can lead to an improved convergence of the gradient-descent-based optimization. 
Moreover, meta-learned loss functions can improve test performance under few-shot and semi-supervised conditions as well as cases involving a mismatch between train and test distributions, or between train loss functions and test evaluation metrics (e.g., because of non-differentiability of the latter). 
For simplicity, we refer to meta-learned loss functions as \textit{learned losses} in this paper, while loss functions optimized during training are referred to as \textit{online adaptive losses}.

\subsection{Overview of the proposed method}

In this work, we propose a method for offline discovery via meta-learning of PINN loss functions by utilizing information from task distributions defined based on parametrized PDEs.
In the learned loss function, we encode information specific to the considered PDE task distribution by differentiating the PINN optimization path with respect to the loss function parametrization.
Discovering loss functions by differentiating the physics-informed optimization path can enhance our understanding about the complex problem of solving PDEs with PINNs.

Following the PDE task distribution definition and the learned loss parametrization, the learned loss parameters are optimized via meta-training by repeating the following steps until a stopping criterion is met: 
(a) PDE tasks are drawn from the task distribution; 
(b) in the inner optimization part, they are solved with PINNs for a few iterations using the current learned loss, and the gradient of the learned loss parameters is tracked throughout optimization; and 
(c) in the outer optimization part, the learned loss parameters are updated based on MSE of the final (optimized) PINN parameters. 
After meta-training, the obtained learned loss is used for meta-testing, which entails solving unseen tasks until convergence.   
A schematic illustration of the above alternating optimization procedure is provided in Fig.~\ref{fig:overview}.

As we demonstrate in the computational examples of the present paper, the proposed loss function meta-learning technique can improve PINN performance significantly even compared with the online adaptive loss proposed in \cite{barron2019general}, while also allocating the loss function training computational cost to the offline phase.
\begin{figure}[H]
	\centering
	\includegraphics[width=.7\linewidth]{./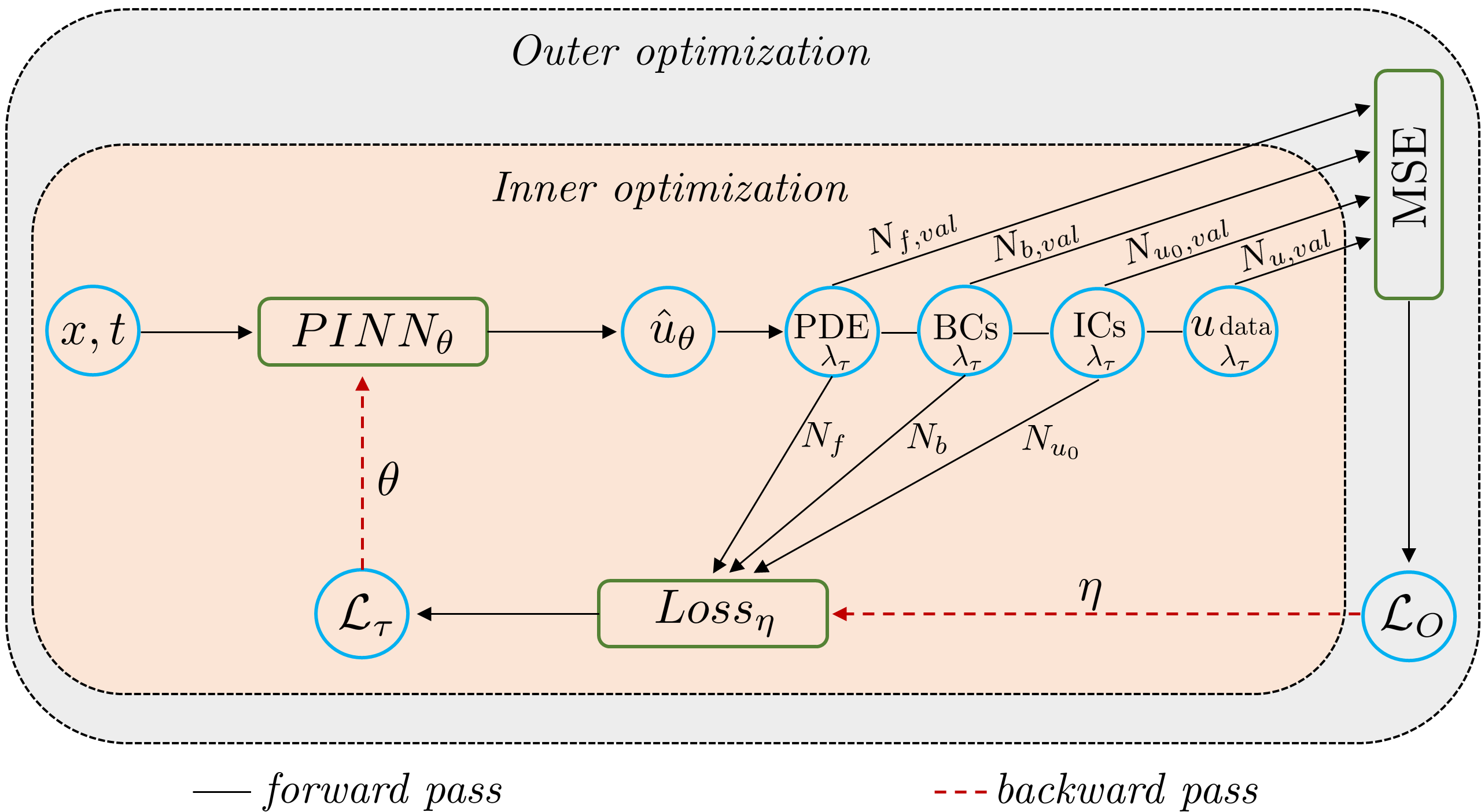}  
	\caption{Schematic illustration of the proposed meta-learning method for discovering PINN loss functions. 
		In the inner optimization part, the PINN parameters $\theta$ are updated based on $\pazocal{L}_{\tau}$, i.e., the current learned loss evaluated on $\{N_f, N_b, N_{u_0}\}$ datapoints corresponding to task $\lambda_{\tau}$; $N_f, N_b, N_{u_0}$ correspond to number of points for evaluating the residuals for PDE, boundary conditions (BCs), and initial conditions (ICs), respectively. 
		In the outer optimization part, the learned loss parameters $\eta$ are updated based on $\pazocal{L}_{O}$, i.e., the MSE of the optimized PINN parameters evaluated on $\{N_{f, val}, N_{b, val}, N_{u_0, val}, N_{u, val}\}$ datapoints; $N_{f, val}, N_{b, val}, N_{u_0, val}, N_{u, val}$ correspond to number of points for evaluating the PDE, BCs, ICs, and solution data (if available) residuals, respectively. 
	}
	\label{fig:overview}
\end{figure}

\subsection{Summary of innovative claims}

\begin{itemize}
	\item We propose a gradient-based meta-learning algorithm for offline discovery of PINN loss functions pertaining to diverse PDE task distributions.
	\item We extend the loss function meta-learning technique of \cite{bechtle2020metalearning} by considering alternative loss parametrizations and various algorithm design options.
	\item By proving two new theorems and a corollary, we identify two desirable properties of learned loss functions and propose a new regularization method to enforce them. 
	We also prove that the loss function parametrization proposed in \cite{barron2019general} is guaranteed to satisfy the two desirable properties.
	\item We define several representative benchmarks for demonstrating the performance of the considered algorithm design options as well as the applicability and the limitations of the proposed method.
\end{itemize}

\subsection{Organization of the paper}

We organize the paper as follows. 
In Section~\ref{sec:prelim} we provide a brief overview of PINNs for solving and discovering PDEs as well as of standard NN training.
In Section~\ref{sec:meta} we summarize meta-learning for PINNs, discuss our PINNs loss function meta-learning technique in detail, and present the theoretical results.
In Section~\ref{sec:examples}, we perform various computational experiments involving diverse PDE task distributions.
Finally, we summarize our findings in Section~\ref{sec:summary}, while the theorem proofs as well as additional design options and computational results are included in Appendices~\ref{app:meta:design:multi}-\ref{app:add:results}. 

\section{Preliminaries}\label{sec:prelim}
\subsection{PINN solution technique overview}
\label{sec:prelim:pinns}

Consider a general problem defined as 
\begin{subequations}\label{eq:param:pde}
	\begin{align}
		\pazocal{F}_{\lambda}[u](t, x) &= 0 \text{, } (t, x) \in [0, T] \times \Omega \label{eq:param:pde:a}\\
		\pazocal{B}_{\lambda}[u](t, x) & = 0 \text{, } (t, x) \in [0, T] \times \partial\Omega \label{eq:param:pde:b}\\
		u(0, x) & = u_{0, \lambda}(x) \text{, } x \in \Omega,   \label{eq:param:pde:c}
	\end{align}
\end{subequations}
where $\Omega \subset \mathbb{R}^{D_x}$ is a bounded domain with boundary $\partial \Omega$, $T > 0$, and $u(t, x) \in \mathbb{R}^{D_u}$ denotes the solution at $(t, x)$.
Eq.~\eqref{eq:param:pde:a} is a PDE expressed with a nonlinear operator $\pazocal{F}_{\lambda}[\cdot]$ that contains identity and differential operators as well as source terms; Eq.~\eqref{eq:param:pde:b} represents the boundary conditions (BCs) expressed with an operator $\pazocal{B}_{\lambda}[\cdot]$, and Eq.~\eqref{eq:param:pde:c} represents the initial conditions (ICs) expressed with a function $u_{0, \lambda}$.
In Eqs.~\eqref{eq:param:pde:a}-\eqref{eq:param:pde:c}, $\lambda$ represents the parametrization of the problem and is considered as shared among the operators $\pazocal{F}_{\lambda}$, $\pazocal{B}_{\lambda}$ and the function $u_{0, \lambda}$.
Furthermore, in practice Eqs.~\eqref{eq:param:pde:a}-\eqref{eq:param:pde:c} are often given in the form of collected data; i.e., they are only specified on discrete sets of locations that are subsets of $[0, T] \times \Omega$, $[0, T] \times \partial \Omega$ and $\Omega$, respectively.

In this regard, one problem scenario pertains to fixing the model parameters $\lambda$ and aiming to obtain the solution $u(t, x)$ for every $(t, x) \in [0, T] \times \Omega$. 
This problem is henceforth referred to as solving the PDE or as forward problem. 
Another problem scenario pertains to having observations from the solution $u(t, x)$ and aiming to obtain the parameters $\lambda$ that best describe the data. 
This problem is henceforth referred to as discovering the PDE or as inverse problem. 
PINNs were proposed in \cite{raissi2019physicsinformed} for addressing both problem scenarios.
In the case of a forward problem, the solution $u$ is represented with a NN $\hat{u}$, which is trained such that a composite objective comprised of the strong-form (PDE), boundary and initial residuals corresponding to Eqs.~\eqref{eq:param:pde:a}-\eqref{eq:param:pde:c}, respectively, on a discrete set of points is minimized.
For addressing the inverse problem with the PINNs solution technique of \cite{raissi2019physicsinformed}, an additional term corresponding to the solution $u$ data misfit is added to the composite objective and $\hat{u}$ is trained simultaneously with $\lambda$. 

Equivalently, we can view PINNs from a data-driven perspective. 
By defining $f(t, x)$ as the output of the left-hand side of Eq.~\eqref{eq:param:pde} for an arbitrary function $u(t, x)$, i.e., 
\begin{equation}\label{}
	f(t, x) := \pazocal{F}_{\lambda}[u](t, x),
\end{equation}
the operator $\pazocal{F}_{\lambda}: u \mapsto f$ can be construed as a map from $V_u$, the set of all admissible functions $u$ to $V_f$, the image of $V_u$ under $\pazocal{F}_{\lambda}$.
In this context, the function $u$ that satisfies Eq.~\eqref{eq:param:pde:a} corresponds to a mapped function $f \in V_f$ that is zero for every $(t, x) \in [0, T] \times \Omega$.
As a result, satisfying Eq.~\eqref{eq:param:pde:a} is equivalent to having observations from this zero-outputting $f$ at every $(t, x) \in [0, T] \times \Omega$ (or in a subset of it). 
Furthermore, satisfying Eq.~\eqref{eq:param:pde:b} is equivalent to having observations in $[0, T] \times \Omega$ from a zero-outputting $b$ function defined as 
\begin{equation}\label{}
	b(t, x) := \pazocal{B}_{\lambda}[u](t, x)
\end{equation}
and satisfying Eq.~\eqref{eq:param:pde:c} to having observations in $\{t = 0\} \times \Omega$ from the solution $u$.
If Eqs.~\eqref{eq:param:pde:a}-\eqref{eq:param:pde:c} are defined analytically for every point of the domains $[0, T] \times \Omega$, $[0, T] \times \partial \Omega$ and $\Omega$, respectively, a finite dataset is considered in practice.  

In this context, PINNs address the forward problem by (a) constructing three NN approximators $\hat{u}$, $\hat{f}$, and $\hat{b}$ that are connected via both the operators $\pazocal{F}_{\lambda}$ and $\pazocal{B}_{\lambda}$ and parameter sharing, i.e., $\hat{f}_{\theta, \lambda} = \pazocal{F}_{\lambda}[\hat{u}_{\theta}]$ and $\hat{b}_{\theta, \lambda} = \pazocal{B}_{\lambda}[\hat{u}_{\theta}]$, where $\theta$ are the shared parameters of the NN approximators, and (b) by training $\hat{u}$, $\hat{f}$, and $\hat{b}$ (simultaneously because of parameter sharing) to fit the complete dataset.
For the inverse problem, in which observations of the solution $u$ for times $t$ other than zero are also available, the same three connected approximators $\hat{u}$, $\hat{f}$, and $\hat{b}$ are constructed, but the additional constraint of fitting the $u$ observations is also included for learning $\lambda$ too. 

Overall, learning the approximators $\hat{u}$, $\hat{f}$, and $\hat{b}$ (and, potentially, $\lambda$ too) takes the form of a minimization problem expressed as
\begin{equation}\label{eq:pinns:loss:1}
	\min_{\theta \ (, \ \lambda)} \pazocal{L}_f(\theta, \lambda) +
	\pazocal{L}_b(\theta, \lambda) 
	+ \pazocal{L}_{u_0}(\theta, \lambda)
	+ \pazocal{L}_u(\theta),
\end{equation}
where $\pazocal{L}_f$ represents the loss related to the $f$ data, i.e., the physics of Eq.~\eqref{eq:param:pde:a}, $\pazocal{L}_b(\theta, \lambda)$ and $\pazocal{L}_{u_0}(\theta, \lambda)$ the losses related to the BCs and ICs, respectively, and $\pazocal{L}_u$ the loss related to the $u$ data. 
Next, considering $\{N_f, N_b, N_{u_0}, N_u\}$ datapoints, the terms $\{\pazocal{L}_f, \pazocal{L}_b, \pazocal{L}_{u_0}, \pazocal{L}_u\}$ in Eq.~\eqref{eq:pinns:loss:1} expressed as the weighted average dataset errors become 
\begin{subequations}\label{eq:pinns:loss:2}
	\begin{align}
		\pazocal{L}_f &= \frac{w_f}{N_f}\sum_{i=1}^{N_f}\ell(\hat{f}_{\theta, \lambda}(t_i, x_i), 0) \label{eq:pinns:loss:2:a}\\
		\pazocal{L}_b & = \frac{w_b}{N_b}\sum_{i=1}^{N_b}\ell(\hat{b}_{\theta, \lambda}(t_i, x_i), 0) \label{eq:pinns:loss:2:b}\\
		\pazocal{L}_{u_0} & = \frac{w_{u_0}}{N_{u_0}}\sum_{i=1}^{N_{u_0}}\ell(\hat{u}_{\theta}(0, x_i), u_{0, \lambda}(x_i))   \label{eq:pinns:loss:2:c} \\
		\pazocal{L}_u & = \frac{w_u}{N_u}\sum_{i=1}^{N_u}\ell(\hat{u}_{\theta}(t_i, x_i), u(t_i, x_i)). \label{eq:pinns:loss:2:d}
	\end{align}
\end{subequations}
In Eq.~\eqref{eq:pinns:loss:2}, $\ell(prediction, target)$, with $\ell: D_u \times D_u \to \mathbb{R}_{\geq 0}$, is a loss function that takes as input the NN prediction at a domain point and the target value, and outputs the corresponding loss; $\hat{u}_{\theta}(t_i, x_i)$, $\hat{f}_{\theta, \lambda}(t_i, x_i)$, $\hat{b}_{\theta, \lambda}(t_i, x_i)$ denote the NN predictions $\hat{u}_{\theta}$ and $\hat{f}_{\theta, \lambda} = \pazocal{F}_{\lambda}[\hat{u}_{\theta}]$, $\hat{b}_{\theta, \lambda} = \pazocal{B}_{\lambda}[\hat{u}_{\theta}]$ evaluated at the $i^{th}$ domain point $(t_i, x_i)$, respectively; $u_{0,\lambda}(x_i)$ denotes $i^{th}$ target value corresponding to the ICs function $u_{0,\lambda}$; and $u(t_i, x_i)$ denotes the $i^{th}$ target value corresponding to solution $u$ (if available). 
Clearly, the point sets $\{t_i, x_i\}_{i=1}^{N_{f}}$, $\{t_i, x_i\}_{i=1}^{N_{b}}$, $\{t_i, x_i\}_{i=1}^{N_{u_0}}$, and $\{t_i, x_i\}_{i=1}^{N_{u}}$ represent domain points at different locations, although the same symbol $(t_i, x_i)$ has been used for all of them, for notation simplicity.
Note that optimal weights $\{w_f, w_b, w_{u_0}, w_u\}$ to be used in Eq.~\eqref{eq:pinns:loss:1} are not known a priori and are often set in practice as equal to one or obtained via trial and error; see \cite{wang2020understanding} for discussion and an adaptive method for addressing this issue.

Finally, by considering as $\ell$ the squared $\ell_2$-norm of the discrepancy between predictions and targets, i.e., MSE if it is averaged over the dataset, Eq.~\eqref{eq:pinns:loss:2} reduces to 
\begin{subequations}\label{eq:pinns:loss:3}
	\begin{align}
		\pazocal{L}_f &= \frac{w_f}{N_f}\sum_{i=1}^{N_f}
		||\hat{f}_{\theta, \lambda}(t_i, x_i)||_2^2 \label{eq:pinns:loss:3:a}\\
		\pazocal{L}_b & = \frac{w_b}{N_b}\sum_{i=1}^{N_b}
		||\hat{b}_{\theta, \lambda}(t_i, x_i)||_2^2 \label{eq:pinns:loss:3:b}\\
		\pazocal{L}_{u_0} & = \frac{w_{u_0}}{N_{u_0}}\sum_{i=1}^{N_{u_0}}
		||\hat{u}_{\theta}(0, x_i) - u_{0, \lambda}(x_i)||_2^2   \label{eq:pinns:loss:3:c} \\
		\pazocal{L}_u & = \frac{w_u}{N_u}\sum_{i=1}^{N_u}
		||\hat{u}_{\theta}(t_i, x_i) - u(t_i, x_i)||_2^2. \label{eq:pinns:loss:3:d}
	\end{align}
\end{subequations}
We note that the terms MSE and squared $\ell_2$-norm of the discrepancy are used interchangeably in this paper depending on the context. 

\subsection{Loss functions in neural network training}\label{sec:prelim:loss}

This section serves as a brief summary of standard NN training and as an introduction to the meta-learning loss functions Section~\ref{sec:meta}.
As described in Section~\ref{sec:prelim:pinns}, addressing forward and inverse PDE problems with PINNs requires solving the minimization problem of Eq.~\eqref{eq:pinns:loss:1} with a composite objective function comprised of the loss terms of Eq.~\eqref{eq:pinns:loss:2}.
All of the functionals $\pazocal{L}_f(\theta, \lambda)$, $\pazocal{L}_b(\theta, \lambda)$, $\pazocal{L}_{u_0}(\theta, \lambda)$ and $\pazocal{L}_u(\theta)$ in Eq.~\eqref{eq:pinns:loss:2} are given as the average discrepancies over $f$, $b$, $u_0$ and $u$ data, respectively, and the total loss $\pazocal{L}$ is expressed as the weighted sum of the individual terms.
Therefore, we can study in this section, without loss of generality, each part of the sum separately by only considering the objective function
\begin{equation}\label{eq:gen:loss}
	\pazocal{L}(\theta) = \frac{1}{N}\sum_{i=1}^{N}\ell(\hat{u}_{\theta}(t_i, x_i), u(t_i, x_i)),
\end{equation}
where $\hat{u}_{\theta}(t_i, x_i)$ denotes the prediction value for each $(t_i, x_i)$ (i.e., either $\hat{f}_{\theta, \lambda}(t_i, x_i)$, $\hat{b}_{\theta, \lambda}(t_i, x_i)$, $\hat{u}_{\theta, \lambda}(0, x_i)$, or $\hat{u}_{\theta}(t_i, x_i)$ in Eq.~\eqref{eq:pinns:loss:2}) and $u(t_i, x_i)$ denotes the target (i.e., either $0$, $u_{0, \lambda}(x_i)$, or $u(t_i, x_i)$ in Eq.~\eqref{eq:pinns:loss:2}).   
For each $(t, x)$ in Eq.~\eqref{eq:param:pde}, $u(t, x)$ as well as $\pazocal{F}_\lambda[u](t,x)$ and $\pazocal{B}_\lambda[u](t,x)$ belong to $\mathbb{R}^{D_u}$; thus, $\hat{u}_{\theta}(t_i, x_i)$ and $u(t_i, x_i)$ in Eq.~\eqref{eq:gen:loss} are also considered $D_u$-dimensional.
Furthermore, $N$ represents the size of the corresponding dataset, i.e., $N_f$, $N_b$, $N_{u_0}$, or $N_u$. 

An optimization technique using only first-order information of the objective function is the (stochastic) gradient descent, which is typically denoted as SGD, whether or not stochastic gradients are used, by considering mini-batches of the data in Eq.~\eqref{eq:gen:loss}. 
Using SGD, the NN parameters $\theta$ are updated based on
\begin{equation}
	\theta \leftarrow \theta - \epsilon \nabla_{\theta}\pazocal{L},
\end{equation}
where $\epsilon$ is the learning rate,
\begin{equation}\label{eq:theta:grad}
	\nabla_{\theta}\pazocal{L} = \frac{\partial  \pazocal{L}}{\partial  \theta} = \left[\frac{\partial  \pazocal{L}}{\partial \theta_1}, \dots, \frac{\partial  \pazocal{L}}{\partial \theta_{D_{\theta}}}\right]
\end{equation}
is the gradient of $\pazocal{L}$ with respect to $\theta$, and $D_{\theta}$ is the number of NN parameters (weights and biases).
Employing the chain rule for each $\frac{\partial  \pazocal{L}}{\partial \theta_j}$, $j \in \{1,\dots,D_{\theta}\}$, Eq.~\eqref{eq:theta:grad} becomes
\begin{equation}\label{eq:theta:grad:2}
	\nabla_{\theta}\pazocal{L} =  \left[\left.\frac{1}{N}\sum_{i=1}^{N}\nabla_{q}\ell(q, u(t_i, x_i))\right\vert_{q=\hat{u}_{\theta}(t_i, x_i)}\right]J_{\hat{u}_{\theta}, \theta},
\end{equation}
where $\nabla_{q}\ell(q, u(t_i, x_i)) \in \mathbb{R}^{1 \times D_u}$ is the gradient of the scalar output $\ell(q, u(t_i, x_i))$ with respect to the prediction $q=\hat{u}_{\theta}(t_i, x_i)$.
Furthermore, $J_{\hat{u}_{\theta}, \theta} \in \mathbb{R}^{D_u \times D_{\theta}}$ is the Jacobian matrix 
\begin{equation}
	J_{\hat{u}_{\theta}, \theta} = \left[\nabla_{\theta}(\hat{u}_{\theta, 1}),\dots,\nabla_{\theta}(\hat{u}_{\theta, D_u})\right]\T
\end{equation} 
of the NN transformation $\theta \mapsto \hat{u}_{\theta}$.
If $\hat{u}_{\theta}$ is one-dimensional, Eq.~\eqref{eq:theta:grad:2} reduces to
\begin{equation}\label{eq:theta:grad:3}
	\nabla_{\theta}\pazocal{L} =  \left[\left.\frac{1}{N}\sum_{i=1}^{N}\frac{\partial \ell(q, u(t_i, x_i))}{\partial q}\right\vert_{q=\hat{u}_{\theta}(t_i, x_i)}\right]\nabla_{\theta}\hat{u}_{\theta}
\end{equation} 
and if, in addition, $\ell$ is the squared $\ell_2$-norm, to
\begin{equation}\label{eq:theta:grad:4}
	\nabla_{\theta}\pazocal{L} = \left[\frac{1}{N}\sum_{i=1}^{N}2(\hat{u}_{\theta}(t_i, x_i)-u(t_i, x_i))\right]\nabla_{\theta}\hat{u}_{\theta}.
\end{equation} 

The term enclosed in brackets in Eqs.~\eqref{eq:theta:grad:2}-\eqref{eq:theta:grad:4}, and more specifically the loss function $\ell$, controls how the objective function behaves for increasing discrepancies from the target.
If $\hat{u}_{\theta}$ is multi-dimensional, the loss function through $\left.\nabla_{q}\ell(q, u(t_i, x_i))\right\vert_{q=\hat{u}_{\theta}(t_i, x_i)}$ also controls how the discrepancy in each dimension affects the final gradient $\nabla_{\theta}\pazocal{L}$; note that each component of $\nabla_{\theta}\pazocal{L}$ is an inner product between the term inside brackets and $\frac{\partial  \hat{u}_{\theta}}{\partial \theta_j}$, $j \in \{1,\dots,D_{\theta}\}$.
For instance, by using the squared $\ell_2$-norm as $\ell$, which is given as the sum of squared discrepancies across dimensions, all dimensions are treated uniformly; see Appendix~\ref{app:meta:design:multi} regarding how a parametrized loss function for multi-dimensional inputs can be constructed. 

For other standard first-order optimization algorithms, such as AdaGrad and Adam, the parameter $\theta$ updates depend not only on the current iteration gradient $\nabla_{\theta}\pazocal{L}$ of Eqs.~\eqref{eq:theta:grad:2}-\eqref{eq:theta:grad:4}, but also on the $\theta$ updates history. 
For standard second-order algorithms, such as Newton's method and BFGS, $\theta$ updates depend not only on $\nabla_{\theta}\pazocal{L}$, but also on the Hessian or the approximate Hessian, respectively, of $\pazocal{L}$ with respect to $\theta$.   

\section{Meta-learning loss functions for PINNs}\label{sec:meta}

\subsection{Defining PDE task distributions}\label{sec:meta:dist}

In this section, we consider only the forward problem scenario as defined in Eq.~\eqref{eq:param:pde}.
As explained in Section~\ref{sec:prelim:pinns}, solving Eq.~\eqref{eq:param:pde} with PINNs for a value of $\lambda$ requires learning the PINN parameters $\theta$ by solving the minimization problem of Eq.~\eqref{eq:pinns:loss:1}.
Overall, a NN is constructed, an optimization strategy is selected and the optimization algorithm is run until some convergence criterion is met. 

However, all of the above steps, from defining the optimization problem to solving it, require the user to make certain design choices, which 
affect the overall training procedure and the final approximation accuracy. 
For example, they affect the convergence rate of the optimizer as well as the training and test error at the obtained minimum. 
For simplicity, all these aspects of training that depend on the selected hyperparameters are henceforth collectively called \textit{performance or efficiency} of the selected hyperparameters and of the training in general. 
For example, we may refer to a set of hyperparameters as being \textit{more efficient} than another set. 

Indicatively, the PINN architecture and activation function, the optimization algorithm, and the number of collocation points (points at which the PDE residual is evaluated) correspond to hyperparameters that must be tuned/selected a priori or be optimized in an online manner (see, e.g., \cite{jagtap2020adaptive} for adaptive activation functions).
In this regard, it is standard practice to experiment with many different hyperparameter settings by performing a few iterations of the optimization algorithm and by evaluating performance based on validation error; i.e., to perform hold-out validation.
In the context of PINNs, the validation error can be computed over collocation points not used in training or testing. 
After performing this trial-and-error procedure, the problem can then be fully solved; i.e., the optimization is run until convergence. 

For solving a novel, different PDE of the form of Eq.~\eqref{eq:param:pde}, hold-out validation is either repeated from scratch, or search is limited to a tight range of hyperparameter settings, depending on the experience of the user with the novel parameter $\lambda$ in Eq.~\eqref{eq:param:pde}.
Thus, it becomes clear that solving novel problems more efficiently requires:
\begin{enumerate}[(a)]
	\item To define families of related PDEs such that hyperparameters selected for solving one member of the family are expected to perform well also for other members.
	\item To effectively utilize information acquired from solving a few representative members of the family in order to solve other members efficiently.
\end{enumerate}

Families of related machine learning problems are typically called task distributions in the literature; see, e.g., \cite{hospedales2020metalearning}. 
For example, approximating the function $y = sin(x + \pi)$ is related to approximating $y = sin(x)$, in the sense that a hyperparameter setting found efficient for the former problem can be used as is, or with minimal modifications, for solving the latter one efficiently.
Therefore, a task distribution can, indicatively, be defined by functions of the form of $y = sin(x + \lambda)$, with $\lambda$ drawn uniformly from $[-\pi, \pi]$.
More generally, it is assumed that tasks are parametrized by $\lambda$, which is drawn from some distribution $p(\lambda)$.
In addition, each $\lambda$ defines a learning task, which can be shown experimentally or theoretically that is related to other learning tasks drawn from $p(\lambda)$.

Note that although the discussion up to this point has been motivated by the burden of repetitive hyperparameter tuning even for related problems, meta-learning has a rich range of applications that goes well beyond selecting NN architectures, learning rates and activation functions.
For example, optimizing the loss function or selecting a shared-among tasks NN initialization are not generally considered hyperparameter tuning. 
It is useful, however, to interpret all the different factors that affect the NN optimization procedure and the final performance as hyperparameters, some of which we try to optimize and some of which we fix.

\subsection{Meta-learning as a bi-level minimization problem}\label{sec:meta:bilevel}

As explained in Section~\ref{sec:meta:dist}, a task can be defined as a fixed PDE problem associated with a parameter $\lambda$ that is drawn from a pre-specified task distribution $p(\lambda)$.
Furthermore, in conjunction with task-specific training data, a task is solved with PINNs until convergence via Eqs.~\eqref{eq:pinns:loss:1}-\eqref{eq:pinns:loss:3}.
Final accuracy or overall performance of the training procedure can, indicatively, be evaluated based on final validation data error after training (e.g., PDE residual on a large set of points in the domain). 
Next, given a task distribution defined via $p(\lambda)$ and a set of optimizable hyperparameters $\eta$, meta-learning seeks for the optimal setting of $\eta$, where optimality is defined in terms of average performance across tasks drawn from $p(\lambda)$.
For example, if performance of PINN training is evaluated based on the $\ell_2$-norm of PDE residual on validation points, average performance of $\eta$ refers to average across $\lambda$ final $\ell_2$ error.

Following \cite{hospedales2020metalearning} and considering a finite set $\Lambda = \{\lambda_{\tau}\}_{\tau = 1}^{T}$ of $\lambda \sim p(\lambda)$ values, meta-learning is commonly formalized as a bi-level minimization problem expressed as
\begin{subequations}\label{eq:bilevel}
	\begin{align}
		\min_{\eta} \pazocal{L}_O(\theta^*(\eta), \eta) \label{eq:bilevel:a}\\
		\text{s.t. } \theta^*(\eta) = \{\theta^*_{\tau}(\eta)\}_{\tau = 1}^{T} \label{eq:bilevel:b}\\
		\text{with  } \theta^*_{\tau}(\eta) = \argmin_{\theta} \pazocal{L}_{\tau}(\theta, \eta), \label{eq:bilevel:c}
	\end{align}
\end{subequations} 
where Eq.~\eqref{eq:bilevel:a} is referred to as outer optimization, whereas Eqs.~\eqref{eq:bilevel:b}-\eqref{eq:bilevel:c} as inner optimization.
In Eq.~\eqref{eq:bilevel}, $\theta^*(\eta) = \{\theta^*_{\tau}(\eta)\}_{\tau = 1}^{T}$ consists of the final NN parameters $\theta^*$ obtained after solving all tasks in the set $\Lambda$ using hyperparameters $\eta$, and $\pazocal{L}_{\tau}(\theta, \eta)$, which is given by Eq.~\eqref{eq:pinns:loss:1}-\eqref{eq:pinns:loss:3}, is called inner- or base-objective.
The subscript $\tau$ and the arguments $\eta$, $\theta$, and $\theta^*$ in Eq.~\eqref{eq:bilevel} are used to signify task-dependent quantities as well as dependencies on (optimizable) $\eta$ and $\theta$, and (optimum) $\theta^*_{\tau}(\eta)$, respectively. 
Furthermore, $\pazocal{L}_O(\theta^*(\eta), \eta)$, which is called outer- or meta-objective and pertains to the ultimate goal of meta-learning, is commonly considered to be the average performance of $\eta$.
For example, if validation error is measured based on final $\ell_2$-norm of PDE residual on a validation set of size $N_{f, val}$, the outer-objective $\pazocal{L}_O(\theta^*(\eta), \eta)$ can be expressed as 
\begin{equation}\label{}
	\pazocal{L}_O(\theta^*(\eta), \eta) = \mathbb{E}_{\lambda_{\tau} \in \Lambda}\left[\frac{1}{N_{f, val}}\sum_{i=1}^{N_{f, val}}||\hat{f}_{\theta^*_{\tau}, \lambda_{\tau}}(t_i, x_i)||_2^2\right],
\end{equation}
where $N_{f, val}$ and domain points $\{(t_i, x_i)\}_{i=1}^{N_{f, val}}$ are considered the same for all tasks for notation simplicity and without loss of generality.
For more design choices regarding outer-objective $\pazocal{L}_O$ see Section~\ref{sec:meta:lossml} and \cite{hospedales2020metalearning}.

Next, for addressing the bi-level minimization problem of Eq.~\eqref{eq:bilevel}, an alternating approach between outer and inner optimization can be considered, which in practice can be achieved by performing one or only a few steps for each optimization. 
Three main families of techniques exist (\cite{hospedales2020metalearning}): gradient-based, reinforcement-learning-based, and evolutionary meta-learning. 
Discussions in this paper are limited to gradient-based approaches.
Although inner optimization corresponds to standard PINN training of Eqs.~\eqref{eq:pinns:loss:1}-\eqref{eq:pinns:loss:3}, gradient-based outer optimization requires computing the total derivative 
\begin{equation}\label{eq:many:steps:derivative}
	\boldsymbol{d}_{\eta}\pazocal{L}_O(\theta^*(\eta), \eta) = \nabla_{\eta} \pazocal{L}_O +\left[\nabla_{\theta^*}\pazocal{L}_O\right] J_{\theta^*(\eta), \eta},
\end{equation}
where $\boldsymbol{d}_{\eta}$ represents total derivative with respect to $\eta$, $\nabla_{\eta}$ and $\nabla_{\theta^*}$ here represent partial derivatives with respect to $\eta$ and $\theta^*(\eta)$, respectively, and $J_{\theta^*(\eta), \eta}$ is the Jacobian matrix of the transformation from $\eta$ to $\theta^*(\eta)$.
The first term on the right-hand side of Eq.~\eqref{eq:many:steps:derivative} refers to direct dependence of $\pazocal{L}_O$ on $\eta$ (e.g., as is the case for neural architecture search; see \cite{liu2019darts}), whereas the second term refers to dependence of $\pazocal{L}_O$ on $\eta$ through the obtained optimal $\theta^*(\eta)$. 
Because $\theta^*(\eta)$ is the result of a number of inner optimization steps, obtaining the Jacobian $J_{\theta^*(\eta), \eta}$ via chain rule is often referred to as \textit{differentiating over the optimization path}. 
In this regard, see Section~\ref{sec:meta:design:inner} for addressing the exploding gradients pathology arising because of path differentiation in loss function meta-learning as well as \cite{nichol2018firstorder,rajeswaran2019metalearning,lorraine2020optimizing} on approximate ways of computing Eq.~\eqref{eq:many:steps:derivative}.
Overall, the computation of Eq.~\eqref{eq:many:steps:derivative} can be performed by designing for this purpose automatic differentiation algorithms; see \cite{grefenstette2019generalized} for more information and open-source code. 
Of course, for utilizing Eq.~\eqref{eq:many:steps:derivative} the outer-objective $\pazocal{L}_O$ and the inner optimization steps must be differentiable.
For instance, a single SGD step given as $\theta \leftarrow \theta - \epsilon \nabla_{\theta}\pazocal{L}(\theta,\eta)$, where $\epsilon$ is the learning rate, is differentiable with respect to $\eta$ if $\pazocal{L}$ is also differentiable with respect to $\eta$, meaning that the Jacobian in Eq.~\eqref{eq:many:steps:derivative} can be computed. 
The same holds for multiple SGD steps as well as for other optimization algorithms, such as AdaGrad and Adam.

Algorithm~\ref{algo:general} is a general gradient-based meta-learning algorithm for arbitrary, admissible $\eta$. 
The input to the meta-learning algorithm includes the number of outer and inner iterations, $I$ and $J$, respectively, and the outer and inner learning rates, $\epsilon_1$, $\epsilon_2$, respectively; see Appendix~\ref{app:meta:design:other} for stopping criteria.
Furthermore, the algorithm input includes the task distribution $p(\lambda)$ to be used for resampling during training and the number of tasks $T$.
As shown in Algorithm~\ref{algo:general}, the task set $\Lambda$ is not required to remain the same during optimization.
Optimizing $\eta$ using a meta-learning algorithm such as Algorithm~\ref{algo:general} is typically called \textit{meta-training}, and using the obtained $\eta$ for solving unseen tasks from the task distribution is called \textit{meta-testing}. 

\begin{algorithm}[H]
	\label{algo:general}
	\SetAlgoLined
	\textbf{input:} $\epsilon_1$, $\epsilon_2$, $I$, $T$, $J$, and task distribution $p(\lambda)$
	
	initialize $\eta$ with $\eta^{(0)}$ 
	
	\For{$i \in \{1,\dots, I\}$}{
		sample set $\Lambda$ of $T$ tasks from $p(\lambda)$
		
		\For{$\tau \in \{1,\dots, T\}$}{
			initialize $\theta_{\tau}$ with $\theta_{\tau}^{(0)}$
			
			\For{$j \in \{1,\dots, J\}$}{
				\Comment*[f]{Inner step for each task}
				
				$\theta_{\tau}^{(j)} = \theta_{\tau}^{(j-1)} - \epsilon_2 \nabla_{\theta}\pazocal{L}_{\tau}(\theta,\eta)\bigg\rvert_{\theta = \theta_{\tau}^{(j-1)}, \ \eta = \eta^{(i-1)}}$	
				\Comment*[f]{PINN step}
				
				\Comment*[f]{$\pazocal{L}_{\tau}$ given by Eq.~\eqref{eq:lossml:inner:loss}}	
			}
			set $\theta^{*(i)}_{\tau} = \theta_{\tau}^{(J)}$ 
		}	
		$\eta^{(i)} = \eta^{(i-1)} - \epsilon_1 \boldsymbol{d}_{\eta}\pazocal{L}_O(\theta^*(\eta),\eta)\bigg\rvert_{\theta^* = \{\theta^{*(i)}_{\tau}\}_{\tau = 1}^{T}, \ \eta = \eta^{(i-1)}}$
		\Comment*[f]{Outer step}	
		
		\Comment*[f]{$\boldsymbol{d}_{\eta}\pazocal{L}_O$ given by Eqs.~\eqref{eq:many:steps:derivative} and \eqref{eq:lossml:outer:loss}}
	}
	
	\textbf{return} $\eta^{(I)}$
	\caption{General gradient-based meta-learning algorithm for PINNs}
\end{algorithm}

\subsection{An algorithm for meta-learning PINN loss functions}\label{sec:meta:lossml}

Meta-learning PINN loss functions by utilizing the concepts of Section~\ref{sec:meta:bilevel} requires defining an admissible hyperparameter $\eta$ that can be used in conjunction with Algorithm~\ref{algo:general}.
In this regard, a parametrization $\eta$ can be used for the loss function $\ell$ of Eq.~\eqref{eq:pinns:loss:2} for which $\pazocal{L}_O$ and the inner optimization steps are differentiable.
Indicatively, $\ell$ can be represented with a feed-forward NN (FFN) and thus $\eta$ represents the weights and biases of the FFN.
Such a parametrization in conjunction with Algorithm~\ref{algo:general} has been proposed in \cite{bechtle2020metalearning} for supervised learning and reinforcement learning problems.
Alternatively, $\ell$ can be the adaptive loss function proposed in \cite{barron2019general} (see Eqs.~\eqref{eq:barron:loss:1} and \eqref{eq:barron:loss:2}) and thus $\eta$ can be the robustness and scale parameters. 
More discussion and options regarding loss function parametrization can be found in Section~\ref{sec:meta:design}.

Following parametrization, the PINN objective function for each task $\tau$ is the same as in standard PINNs training and given as
\begin{equation}\label{eq:lossml:inner:loss}
	\pazocal{L}_{\tau}(\theta, \eta) =  \pazocal{L}_f(\theta, \lambda_{\tau}) +
	\pazocal{L}_b(\theta, \lambda_{\tau}) 
	+ \pazocal{L}_{u_0}(\theta, \lambda_{\tau}),
\end{equation}
with the terms $\{\pazocal{L}_f, \pazocal{L}_b, \pazocal{L}_{u_0}\}$ given from Eq.~\eqref{eq:pinns:loss:2} and evaluated on the training datasets of sizes $\{N_{f}, N_{b}, N_{u_0}\}$ by using the parametrized loss function $\ell_{\eta}$ instead of a fixed $\ell$.
The objective function $\pazocal{L}_{\tau}(\theta, \eta)$ of Eq.~\eqref{eq:lossml:inner:loss} is optimized with respect to $\theta$ in the inner optimization step of Algorithm~\ref{algo:general}, while also tracking the optimization path dependence on $\eta$ (see \cite{grefenstette2019generalized}).
Next, the outer-objective $\pazocal{L}_O$, which is used for optimizing the learned loss $\ell_{\eta}$, can be defined as the MSE on validation data, i.e.,  
\begin{equation}\label{eq:lossml:outer:loss}
	\pazocal{L}_O(\theta^*(\eta), \eta) = \mathbb{E}_{\lambda_{\tau} \in \Lambda}\left[
	\pazocal{L}_f(\theta^*_{\tau}, \lambda_{\tau}) +
	\pazocal{L}_b(\theta^*_{\tau}, \lambda_{\tau}) 
	+ \pazocal{L}_{u_0}(\theta^*_{\tau}, \lambda_{\tau})
	\right],
\end{equation}
with the terms $\{\pazocal{L}_f, \pazocal{L}_b, \pazocal{L}_{u_0}\}$ given from Eq.~\eqref{eq:pinns:loss:2} and evaluated on the validation datasets of sizes $\{N_{f, val}, N_{b, val}, N_{u_0, val}, N_{u, val}\}$ by using the task parameters $\lambda_{\tau}$ and the optimal task-specific parameters $\theta^*_{\tau}$ for every $\tau$. 
Clearly, Eq.~\eqref{eq:lossml:outer:loss} has the same form as Eq.~\eqref{eq:lossml:inner:loss}, except for the fact that (a) the number of validation points $\{N_{f, val}, N_{b, val}, N_{u_0, val}, N_{u, val}\}$ can be different from $\{N_{f}, N_{b}, N_{u_0}, N_{u}\}$, (b) MSE is used in Eq.~\eqref{eq:lossml:outer:loss} as a loss function, whereas $\ell_{\eta}$ is used in Eq.~\eqref{eq:lossml:inner:loss}, and (c) $\pazocal{L}_O$ is an average loss across tasks, whereas $\pazocal{L}_{\tau}$ is task-specific. 
Optimizing $\ell_{\eta}$ utilizing Algorithm~\ref{algo:general} with $\pazocal{L}_O$ defined based on Eq.~\eqref{eq:lossml:outer:loss} aims to answer the following question: \textit{Is there a loss function $\ell_{\eta}$ which if used for PINN training (Eq.~\eqref{eq:lossml:inner:loss}) will perform better in terms of average across tasks MSE error (Eq.~\eqref{eq:lossml:outer:loss})?}

Note that in Eq.~\eqref{eq:lossml:outer:loss} additional data not used for inner training can also be utilized. 
For example, we may have solution data $u_{\tau}$ corresponding to $\lambda_{\tau}$ values sampled from the task distribution $p(\lambda)$ (e.g., produced by traditional numerical solvers or measurements), which can be used in the outer optimization step. 
By utilizing such additional data, the aforementioned question that loss function meta-learning aims to answer is augmented with the following: \textit{Is there a loss function $\ell_{\eta}$ that, in addition, leads to better solution $u$ performance error (Eq.~\eqref{eq:lossml:outer:loss}), although training has been performed based only on PDE residual and BCs/ICs data (Eq.~\eqref{eq:lossml:inner:loss})?} 
Finally, see Section~\ref{sec:meta:design:properties} for more information regarding imposing additional properties to the loss function $\ell_{\eta}$ through penalties in Eq.~\eqref{eq:lossml:outer:loss}.  

\subsection{Algorithm design and theory}\label{sec:meta:design}

\subsubsection{Loss function parametrization and initialization}
\label{sec:meta:design:param}

\paragraph{Adaptive loss function.}\label{sec:meta:design:param:LAL}
A loss function parametrization that can be used in conjunction with Algorithm~\ref{algo:general} has been proposed in \cite{barron2019general}.
In this regard, the one-dimensional loss function is parametrized by the shape parameter $\alpha \in \mathbb{R}$, which controls robustness to outliers (see Section~\ref{sec:prelim:loss}) and the scale parameter $c>0$ that controls the size of the quadratic bowl near zero.
Specifically, the loss function is expressed as 
\begin{equation}\label{eq:barron:loss:1}
	\rho_{\alpha,c}(d) = \frac{|\alpha - 2|}{\alpha}\left(\left(\frac{(d/c)^2}{|\alpha - 2|} + 1\right)^{\alpha/2} - 1\right), 
\end{equation}
where $d$ denotes the discrepancy between each dimension of the prediction and the target; e.g., $d=\hat{u}_{\theta, j}(t,x)-u(t,x)$ or $d = \hat{f}_{\theta, \lambda, j}(t,x)$ for each $j\in\{1,\dots,D_u\}$ in PINNs.
For fixed values of $\alpha$, Eq.~\eqref{eq:barron:loss:1} yields known losses; see \cite{barron2019general} and Table~\ref{tab:losses}.
An extension to multi-dimensional inputs can be achieved via Eqs~\eqref{eq:g:sum:loss}-\eqref{eq:g:sum:loss:2}. 

Nevertheless, the loss function of Eq.~\eqref{eq:barron:loss:1} cannot be used directly as an adaptive loss function to be optimized in the online manner (i.e., simultaneously with NN parameters) proposed in \cite{barron2019general}. 
Specifically, $\rho_{\alpha,c}(d)$ in Eq.~\eqref{eq:barron:loss:1} is monotonic with respect to $\alpha$, and thus attempting to optimize $\alpha$ by minimizing Eq.~\eqref{eq:barron:loss:1} trivially sets $\alpha$ to be as small as possible.
To address this issue, \cite{barron2019general} defined also the corresponding probability density function, i.e., 
\begin{equation}\label{eq:barron:pdf}
	p_{\alpha,c}(d) = \frac{1}{c Z(\alpha)}\exp(-\rho_{\alpha,c}(d)),
\end{equation}
which is valid only for $\alpha \geq 0 $ as $Z(\alpha)$ is divergent for $\alpha <0$.
Furthermore, \cite{barron2019general} defined a loss function based on the negative log likelihood of $p_{\alpha,c}(d)$, i.e., 
\begin{equation}\label{eq:barron:loss:2}
	\hat{\ell}_{\alpha,c}(d) = \log(c) + \log(Z(\alpha)) + \rho_{\alpha,c}(d),
\end{equation}
which is simply a shifted version of Eq.~\eqref{eq:barron:loss:1}.
Because the partition function $Z(\alpha)$ is difficult to evaluate and differentiate, $\log(Z(\alpha))$ is approximated with a cubic Hermite spline, which induces an added computational cost.

The loss function of Eq.~\eqref{eq:barron:loss:2} has been used in \cite{barron2019general} in conjunction with Eqs.~\eqref{eq:g:sum:loss}-\eqref{eq:g:sum:loss:2} as an adaptive loss function that is optimized online.
Specifically, either the same pair $(\alpha, c)$ is used for each dimension, i.e., Eq.~\eqref{eq:g:sum:loss} is employed with unit weights and $\eta = \{\alpha, c\}$ or a different pair is used, i.e., Eq.~\eqref{eq:g:sum:loss} with $\eta = \{\alpha_1, c_1,\dots, \alpha_{D_u}, c_{D_u}\}$ is employed.
Regarding implementation of the constraints $\alpha \geq 0$ and $c>0$, the parameters $\alpha$ and $c$ for every dimension (if applicable) can be expressed as
\begin{equation}\label{eq:barron:imple}
	\begin{aligned}
		\alpha &= (\alpha_{max}-\alpha_{min})\ sigmoid(\hat{\alpha}) + \alpha_{min} \\
		c &= softplus(\hat{c}) + c_{min} 
	\end{aligned}
\end{equation}
and $\hat{\alpha}$, $\hat{c}$ are optimized simultaneously with the NN parameters. 
In Eq.~\eqref{eq:barron:imple}, the sigmoid function limits $\alpha$ in the range $[\alpha_{min}, \alpha_{max}]$, whereas the softplus function constrains $c$ to being greater than $c_{min}$; e.g., $c_{min} = 10^{-8}$ for avoiding degenerate optima.

Note that Algorithm~\ref{algo:general} uses the outer objective $\pazocal{L}_O$ of Eq.~\eqref{eq:lossml:outer:loss} for optimizing the loss function, which is different from the inner objectives $\pazocal{L}_{\tau}$ of Eq.~\eqref{eq:lossml:inner:loss}.
As a result, using Eq.~\eqref{eq:barron:loss:1} as a loss function parametrization in Algorithm~\ref{algo:general} does not lead to trivial $\alpha$ solutions as in \cite{barron2019general}. 
Thus, either Eq.~\eqref{eq:barron:loss:1} or Eq.~\eqref{eq:barron:loss:2} can be considered as loss function parametrizations for Algorithm~\ref{algo:general}.

In Section~\ref{sec:meta:design:properties}, we prove that the loss function of Eq.~\eqref{eq:barron:loss:1} automatically satisfies the specific conditions required for successful training according to our new theorems, without adding any regularization terms in meta-training.  
In the present paper, the loss function parametrization of this section is referred to as \textit{LAL} when used as a meta-learning parametrization, and as \textit{OAL} when used as an online adaptive loss. 
For initialization, we consider the values $\alpha = 2.01$ and $c = 1/\sqrt{2}$, which approximate the squared $\ell_2$-norm.

\paragraph{NN-parametrized loss function.}\label{sec:meta:design:param:FFN}
An alternative parametrization based on FFN has been proposed in \cite{bechtle2020metalearning}.
Specifically, for the one-dimensional supervised learning regression problem considered in \cite{bechtle2020metalearning}, the most expressive representation $\ell_{\eta} = \hat{\ell}_{\eta}(\hat{u}_{\theta}, u)$ of Fig.~\ref{fig:gNN:higher} has been utilized.
In terms of parametrizing $\hat{\ell}_{\eta}$, $2$ hidden layers with $40$ neurons each without biases and with ReLU activations functions have been used, while the output is also passed through a softplus activation function for producing the final loss output.
Finally, the NN parameters are initialized using the Xavier uniform initializer. 

In this regard, we  note that ensuring positivity of the loss function does not affect NN parameter optimization; i.e., the softplus output activation function affects the results of \cite{bechtle2020metalearning} only through its nonlinearity and not by dictating positive loss outputs. 
Furthermore, instead of randomly initializing NN parameters $\eta$, one can alternatively initialize them so that $\ell_{\eta}$ approximates a known loss function such as the squared $\ell_2$-norm.
For obtaining such an initialization, it suffices to perform even a few Adam iterations with synthetic data obtained by computing the $\ell_2$-norm of randomly sampled values in the considered domain; see computational examples in Section~\ref{sec:examples} for more information.  

\paragraph{Meta-learning the composite objective function weights.}\label{sec:meta:design:param:weights}

Finally, the composite objective function weights $\{w_f, w_b, w_{u_0}\}$, corresponding to the PDE residual, BCs and ICs loss terms, respectively, in Eqs.~\eqref{eq:pinns:loss:2}-\eqref{eq:pinns:loss:3}, can also be included in the meta-learned parameters $\eta$. 
As a result, three different loss functions are learned that are equivalent up to a scaling factor; see computational examples in Section~\ref{sec:examples} for experiments. 
For restricting $\{w_f, w_b, w_{u_0}\}$ to values greater than zero or a minimum value, the softplus activation function can be used similarly to the $c$ parameter in Eq.~\eqref{eq:barron:imple}.

\subsubsection{Inner optimization steps}\label{sec:meta:design:inner}

As discussed in Sections~\ref{sec:meta:bilevel}-\ref{sec:meta:lossml}, the gradient $\nabla_{\eta}\pazocal{L}_O$ required to update $\eta$ in Algorithm~\ref{algo:general} is obtained through inner optimization path differentiation, i.e., via Eq.~\eqref{eq:many:steps:derivative}.
For each outer iteration $i$, the obtained NN parameters $\theta^{*(i)}_{\tau}$ for each task $\tau$ following $J$ inner SGD steps with hyperparameters $\eta^{(i)}$ are given as
\begin{equation}\label{eq:theta:many:steps}
	\theta^{*(i)}_{\tau} = \theta_{\tau}^{(J)} = \theta_{\tau}^{(0)} - \epsilon_2 \sum_{j=1}^{J}\nabla_{\theta}\pazocal{L}_{\tau}(\theta,\eta)\bigg\rvert_{\theta = \theta_{\tau}^{(j-1)}, \ \eta = \eta^{(i-1)}},
\end{equation} 
where $\pazocal{L}_{\tau}$ is given by Eq.~\eqref{eq:lossml:inner:loss}.
As a result, the Jacobian $J_{\theta^*(\eta), \eta}$ in Eq.~\eqref{eq:many:steps:derivative}, which is the average over tasks of the derivative of Eq.~\eqref{eq:theta:many:steps} with respect to $\eta$, is given as a sum of $J$ gradient terms, i.e.,  
\begin{equation}\label{eq:jacobian:theta:star}
	J_{\theta^*(\eta), \eta} = \mathbb{E}_{\lambda_{\tau} \in \Lambda}\left[- \epsilon_2 \sum_{j=1}^{J}\nabla_{\eta}\left(\nabla_{\theta}\pazocal{L}_{\tau}(\theta,\eta)\bigg\rvert_{\theta = \theta_{\tau}^{(j-1)}, \ \eta = \eta^{(i-1)}}\right)\T\right].
\end{equation}

Clearly, using a large number of inner steps $J$ enables $\eta$ optimization to take into account larger parts of the $\theta$ optimization history, which can be construed as enriching the meta-learning dataset.  
In this regard, the increased computational cost associated with large $J$ values can be addressed by considering approximations of Eq.~\eqref{eq:jacobian:theta:star}; see, e.g., \cite{nichol2018firstorder} for an introduction.
However, if the $\eta$ gradients for subsequent $j$ values point to similar directions, the summation of Eq.~\eqref{eq:jacobian:theta:star} can lead to large Jacobian values.
In turn, large Jacobian values in Eq.~\eqref{eq:jacobian:theta:star} lead to large $\boldsymbol{d}_{\eta}\pazocal{L}_O(\theta^*(\eta), \eta)$ values in Eq.~\eqref{eq:many:steps:derivative}, which make the optimization of $\eta$ unstable.
For the computational example of Section~\ref{sec:examples:regression} we demonstrate in Appendix~\ref{app:add:results} the effect of the number of inner steps as well as the exploding gradients pathology with an experiment.

Finally, to address this exploding $\eta$ gradients issue, we have tested (a) dividing $\boldsymbol{d}_{\eta}\pazocal{L}_O(\theta^*(\eta), \eta)$ values by the number of inner iterations $J$, (b) normalizing the gradient by its norm, and (c) performing gradient clipping, i.e., setting a cap for the gradient norm.
The fact that the norm of the $\eta$ gradient does not explode in every outer iteration makes dividing by $J$ too strict, while normalizing by the gradient norm deprives the $\eta$ gradient of its capability to provide also an update magnitude apart from a direction. 
For these reasons, gradient clipping is expected to be a better option.
This is corroborated by the experiments of Section~\ref{sec:examples}.

\subsubsection{Theoretical derivation of desirable loss function properties}\label{sec:meta:design:properties}

In general, meta-learning $\ell_{\eta}$ via Algorithm~\ref{algo:general} in conjunction with Eqs.~\eqref{eq:lossml:inner:loss}-\eqref{eq:lossml:outer:loss} aims to maximize meta-testing performance by considering an expressive $\ell_{\eta}$, which is learned during meta-training.
However, because the loss function plays a central role in optimization as explained in Section~\ref{sec:prelim:loss}, it would be important for the learned loss $\ell_{\eta}$ to satisfy certain conditions to allow efficient gradient-based training.
In this section, we theoretically identify \textit{\nc}\ and \textit{the \msec}\ as the two desirable conditions that enable efficient training in our problems. Moreover, we propose a novel regularization method to impose the conditions. The LAL parametrization of Section~\ref{sec:meta:design:param:LAL} is then proven to satisfy the two conditions without any regularization.

The results presented in this section are general and pertain to regression problems as well.
For this reason, we consider the more general than a PINN scenario of having a training dataset $\{(x_i,u_i)\}_{i=1}^N$ of $N$ samples, where the pair $(x_i, u_i)$ with $x_i \in \pazocal{X} \subseteq  \RR^{D_x}$ and $u_i \in\pazocal{U}\subseteq  \RR^{D_u}$, for $i \in \{1,\dots,N\}$, corresponds to the $i$-th (input, target) pair.
For learning a NN approximator $\hat{u}_{\theta}$, we minimize the objective 
\begin{align}
	\pazocal{L}(\theta)= \frac{1}{N} \sum_{i=1}^N \ell(\hat{u}_{\theta}(x_{i}),u_{i}), 
\end{align} 
over $\theta \in \RR^{D_{\theta}}$, where $\ell: \RR^{D_{u}} \times\pazocal{U} \rightarrow \RR_{\ge 0}$ is the selected or learned loss function.
Note that in the following, the function $\hat{u}_{\theta}$ is allowed to represent a wide range of network architectures, including ones with batch normalization, convolutions, and skip connections. In this section, we assume that the map $\hat{u}_{i}:\theta \mapsto \hat{u}_{\theta}(x_i)$ is differentiable for every $i \in \{1,\dots, N\}$.

We define the output vector for all $N$ data points by
\begin{equation}
	\hat{u}_{X}(\theta)= \vect((\hat{u}_{\theta}(x_{1}),\dots,\hat{u}_{\theta}(x_{N}))) \in \RR^{N D_u},
\end{equation}
and let $\{\theta^{(r)}\}_{r=0}^\infty$ be the optimization sequence defined by
\begin{equation}
	\theta^{(r+1)} = \theta^{(r)}-\epsilon^{(r)}\bg^{(r)},
\end{equation}
with an initial parameter vector $\theta^{(0)}$, a learning rate $\epsilon^{(r)}$, and an update vector $\bg^{(r)}$. One of the theorems in this section relies on the following assumption on the update vector $\bg^{(r)}$: 
\begin{assumption} \label{assump:5}
	There exist $\bc,\uc>0$ such that 
	$
	\uc \|\nabla_{\theta}\pazocal{L}(\theta_{}^{(r)})\|^2  \allowbreak \le\nabla_{\theta}\pazocal{L}(\theta_{}^{(r)})\T \bg^{(r)} 
	$
	and
	$
	\|\bg^{(r)}\|^2 \le \bc \|\nabla_{\theta}\pazocal{L}(\theta_{}^{(r)})\|^2
	$ for any $r \ge 0$.
\end{assumption}
\noindent
It is noted that Assumption \ref{assump:5} is satisfied by using  
$
\bg^{(r)} =  D^{(r)}\nabla_{\theta}\pazocal{L}(\theta^{(r)}), 
$
where $D^{(r)}$ is any positive definite symmetric matrix with eigenvalues in the interval  $[\uc, \sqrt{\bc}]$. 
Setting $D^{(r)}=I$ corresponds to SGD and Assumption \ref{assump:5} is satisfied with $\uc=\bc = 1$. 
Next, we define \textit{\nc} as well as \textit{the \msec} and provide the main Theorems~\ref{thm:1}-\ref{thm:2} and Corollary \ref{corollary:1} of this section; corresponding proofs can be found in Appendix~\ref{app:proofs}.
\begin{definition}\label{def:1} 
	The learned loss $\ell$ is said to satisfy \textit{\nc} if the following holds: for all $u \in \pazocal{U}$ and $q \in \RR^{D_u}$, $\nabla_{q} \ell(q,u)$ exists and  $\nabla_{q} \ell(q,u)=0$ implies that $\ell(q,u) \le \ell(q',u)$ for all $q' \in \RR^{D_u}$. 
\end{definition}    

\begin{theorem} \label{thm:1}
	If  the learned loss $\ell$ satisfies \nc,  then any stationary point $\theta$ of $\pazocal{L}$  is a global minimum of $\pazocal{L}$ when $\rank(\frac{\partial \hat{u}_X (\theta)}{\partial \theta})=ND_u$. If   the learned loss $\ell$ does not satisfy \nc\ by having a  point $q \in \RR^{D_u}$ such that $\nabla_{q} \ell(q,u)=0$ and   $\ell(q,u) > \ell(q',u)$ for some $q'\in \RR^{D_u}$, then there exists a stationary point $\theta$ of $\pazocal{L}$  that is not a global minimum of $\pazocal{L}$ when $\{\hat{u}_X (\theta)\in \RR^{N D_u}:\theta\in \RR^{D_{\theta}}\}=\RR^{N D_u}$.  
\end{theorem}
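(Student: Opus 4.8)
The plan is to reparametrize the objective through the stacked output vector $Q := \hat{u}_X(\theta) \in \RR^{ND_u}$. Writing $\Phi(Q) = \frac{1}{N}\sum_{i=1}^N \ell(Q_i, u_i)$, where $Q_i \in \RR^{D_u}$ is the $i$-th block of $Q$, we have $\pazocal{L}(\theta) = \Phi(\hat{u}_X(\theta))$, and the chain rule gives
\begin{equation}
\nabla_\theta \pazocal{L}(\theta) = \left(\frac{\partial \hat{u}_X(\theta)}{\partial \theta}\right)\T \nabla_Q \Phi(Q)\big\vert_{Q = \hat{u}_X(\theta)},
\end{equation}
with $\nabla_Q\Phi(Q)$ the stacking of the blocks $\frac{1}{N}\nabla_{Q_i}\ell(Q_i, u_i)$. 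This factorization cleanly separates the loss geometry (captured by $\Phi$, to which \nc\ applies blockwise) from the network geometry (captured by the Jacobian $J := \frac{\partial \hat{u}_X(\theta)}{\partial\theta} \in \RR^{ND_u \times D_\theta}$), and it is the engine for both directions.

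For the first (sufficiency) claim I would start from a stationary point $\theta$, so that $J\T \nabla_Q\Phi(Q) = 0$. Since $\rank(J) = ND_u$, the matrix $J\T$ has full column rank and hence $\ker(J\T)=\{0\}$, forcing $\nabla_Q\Phi(Q)=0$, i.e.\ $\nabla_{Q_i}\ell(Q_i, u_i)=0$ for every $i$. Applying \nc\ blockwise then yields $\ell(Q_i, u_i) \le \ell(q', u_i)$ for all $q'\in\RR^{D_u}$ and all $i$, so $Q$ is a global minimizer of $\Phi$ over all of $\RR^{ND_u}$. Because every attainable value $\pazocal{L}(\theta') = \Phi(\hat{u}_X(\theta'))$ is a value of $\Phi$, global minimality of $Q$ for $\Phi$ immediately transfers to global minimality of $\theta$ for $\pazocal{L}$; note that this direction uses only full row rank at the stationary point and not surjectivity of $\hat{u}_X$.

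For the second (necessity) claim I would run the construction in reverse. Starting from the assumed bad point $q$ for target $u$ (with $\nabla_q\ell(q,u)=0$ but $\ell(q,u) > \ell(q',u)$ for some $q'$), I build a vector $Q^*$ that is a critical point of $\Phi$ by placing the coordinate whose target is $u$ at $q$ and placing each remaining coordinate at a critical point of its own $\ell(\cdot, u_i)$, so that $\nabla_Q\Phi(Q^*)=0$. Crucially $Q^*$ is not a global minimizer of $\Phi$, since replacing the $u$-coordinate of $Q^*$ by $q'$ strictly lowers $\Phi$. The surjectivity hypothesis $\{\hat{u}_X(\theta):\theta\in\RR^{D_\theta}\} = \RR^{ND_u}$ then supplies a preimage $\theta^*$ with $\hat{u}_X(\theta^*)=Q^*$; by the factorization, $\nabla_\theta\pazocal{L}(\theta^*) = J(\theta^*)\T\, \nabla_Q\Phi(Q^*) = J(\theta^*)\T\cdot 0 = 0$, so $\theta^*$ is stationary \emph{regardless} of the rank of $J(\theta^*)$. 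Finally, surjectivity also gives $\inf_\theta \pazocal{L} = \inf_Q \Phi$, so the non-minimality of $Q^*$ for $\Phi$ transfers to non-minimality of $\theta^*$ for $\pazocal{L}$.

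The routine part is the chain-rule factorization together with the linear-algebra step $\rank(J)=ND_u \Rightarrow \ker(J\T)=\{0\}$. The part needing the most care is the necessity construction: I must realize a genuine critical point $Q^*$ of $\Phi$, which requires exhibiting a critical point of $\ell(\cdot,u_i)$ for every data index while forcing the problematic block to sit at the non-minimal critical point $q$, and then argue that non-global-minimality transfers correctly from $\Phi$ back to $\pazocal{L}$. This is exactly where the global surjectivity assumption, rather than a merely pointwise rank condition, is indispensable: it is needed both to produce a preimage $\theta^*$ and to equate the infima of $\pazocal{L}$ and $\Phi$.
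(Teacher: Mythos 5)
Your proof is correct and takes essentially the same route as the paper: the chain-rule factorization $\nabla_\theta\pazocal{L}(\theta) = \left(\frac{\partial \hat{u}_X(\theta)}{\partial\theta}\right)\T \nabla_Q\Phi(Q)$ with the full-rank condition forcing every per-sample gradient $\nabla_{q}\ell(q,u_i)\vert_{q=\hat u_\theta(x_i)}$ to vanish for the sufficiency direction, and surjectivity of $\hat u_X$ both to realize a non-minimal critical output vector and to equate $\inf_\theta\pazocal{L}$ with $\inf_Q\Phi$ for the necessity direction. The only (cosmetic) difference is that you place the bad critical point $q$ at a single block and arbitrary critical points of $\ell(\cdot,u_i)$ at the remaining blocks, whereas the paper places a non-minimal critical point at every block; both constructions rely on the same implicit existence of per-sample critical points.
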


\begin{theorem} \label{thm:2}
	Suppose Assumption \ref{assump:5} holds. Assume that the learned loss $\ell$ satisfies \nc,    $\|\nabla_{\theta}\pazocal{L}(\theta)-\nabla_{\theta}\pazocal{L}(\theta')\|\le L \|\theta-\theta'\|$ for all $\theta,\theta'$ in the domain of $\pazocal{L}$ for some $L \ge 0$, and the learning rate sequence $\{\epsilon^{(r)}\}_{r}$ satisfies either (i) $\zeta \le \epsilon^{(r)} \le \frac{\uc (2-\zeta)}{L\bc}$ for some $\zeta>0$, 
	or (ii) $\lim_{r \rightarrow \infty}\epsilon^{(r)} =0$ and $\sum_{r=0}^\infty \epsilon^{(r)} = \infty$. 
	Then, for any limit point $\theta$ of the sequence $\{\theta^{(r)}\}_{r}$, the limit point $\theta$ is a global minimum of $\pazocal{L}$ if $\rank(\frac{\partial \hat{u}_X (\theta)}{\partial \theta})=ND_u$. 
\end{theorem}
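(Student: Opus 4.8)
The plan is to combine a standard convergence result for gradient-based methods under an $L$-smoothness assumption with the rank condition that drives Theorem~\ref{thm:1}. First I would establish that under Assumption~\ref{assump:5} and the $L$-Lipschitz gradient hypothesis, every limit point $\theta$ of the sequence $\{\theta^{(r)}\}_r$ is a stationary point of $\pazocal{L}$, i.e.\ $\nabla_{\theta}\pazocal{L}(\theta)=0$. Once stationarity at the limit point is secured, the conclusion follows immediately by invoking Theorem~\ref{thm:1}: since $\ell$ satisfies \nc\ and $\rank\bigl(\frac{\partial \hat{u}_X(\theta)}{\partial \theta}\bigr)=ND_u$ at the limit point, $\theta$ is a global minimum of $\pazocal{L}$. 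So the real content is the first step, and the structure of the argument is to reduce Theorem~\ref{thm:2} to Theorem~\ref{thm:1} via a convergence-to-stationarity lemma.

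To prove convergence to stationarity I would use the descent lemma. The $L$-smoothness of $\pazocal{L}$ gives
\begin{equation}
	\pazocal{L}(\theta^{(r+1)}) \le \pazocal{L}(\theta^{(r)}) - \epsilon^{(r)}\nabla_{\theta}\pazocal{L}(\theta^{(r)})\T\bg^{(r)} + \frac{L(\epsilon^{(r)})^2}{2}\|\bg^{(r)}\|^2.
\end{equation}
Substituting the two inequalities of Assumption~\ref{assump:5} yields a bound of the form
\begin{equation}
	\pazocal{L}(\theta^{(r+1)}) \le \pazocal{L}(\theta^{(r)}) - \epsilon^{(r)}\left(\uc - \frac{L\bc\epsilon^{(r)}}{2}\right)\|\nabla_{\theta}\pazocal{L}(\theta^{(r)})\|^2.
\end{equation}
Under case~(i), the stepsize bounds $\zeta \le \epsilon^{(r)} \le \frac{\uc(2-\zeta)}{L\bc}$ make the coefficient $\epsilon^{(r)}\bigl(\uc - \frac{L\bc\epsilon^{(r)}}{2}\bigr)$ uniformly bounded below by a positive constant, so $\pazocal{L}(\theta^{(r)})$ is nonincreasing and, being bounded below by $0$, the telescoped sum of $\|\nabla_{\theta}\pazocal{L}(\theta^{(r)})\|^2$ converges; hence $\nabla_{\theta}\pazocal{L}(\theta^{(r)})\to 0$. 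Under case~(ii), the diminishing-but-nonsummable stepsize regime, the same descent inequality together with $\sum_r \epsilon^{(r)}=\infty$ forces $\liminf_r \|\nabla_{\theta}\pazocal{L}(\theta^{(r)})\|=0$, and a standard argument upgrades this to the statement that any limit point is stationary.

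The main obstacle I anticipate lies in case~(ii) and, more generally, in handling the \emph{limit point} quantifier cleanly. Establishing $\nabla_{\theta}\pazocal{L}(\theta^{(r)})\to 0$ (case~(i)) or $\liminf=0$ (case~(ii)) is about the sequence of gradient norms, but the theorem asserts that \emph{any particular} limit point $\theta$ satisfies $\nabla_{\theta}\pazocal{L}(\theta)=0$; bridging this requires continuity of $\nabla_{\theta}\pazocal{L}$, which is guaranteed by the $L$-Lipschitz hypothesis, so that along the subsequence $\theta^{(r_k)}\to\theta$ one has $\nabla_{\theta}\pazocal{L}(\theta)=\lim_k \nabla_{\theta}\pazocal{L}(\theta^{(r_k)})$. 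The delicate part is case~(ii), where only a $\liminf$ is available for the full sequence; I would argue by contradiction that if $\|\nabla_{\theta}\pazocal{L}(\theta)\|>0$ at some limit point, then by continuity the gradient norm stays bounded away from zero on a neighborhood visited infinitely often, and the nonsummability $\sum_r\epsilon^{(r)}=\infty$ would then drive $\pazocal{L}$ to $-\infty$, contradicting the lower bound $\pazocal{L}\ge 0$. Finally, I note that the rank condition $\rank\bigl(\frac{\partial\hat{u}_X(\theta)}{\partial\theta}\bigr)=ND_u$ is imposed only at the limit point $\theta$ (exactly as in Theorem~\ref{thm:1}), so no global rank assumption along the trajectory is needed, and the reduction to Theorem~\ref{thm:1} closes the proof.
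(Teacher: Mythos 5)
Your overall architecture — descent lemma under Assumption~\ref{assump:5} and $L$-smoothness, then reduction to Theorem~\ref{thm:1} via continuity of $\nabla_\theta\pazocal{L}$ at the limit point — is exactly the paper's, and your treatment of case~(i) is correct and essentially identical to the paper's (monotone decrease, boundedness below by $0$, telescoping, hence $\nabla_\theta\pazocal{L}(\theta^{(r)})\to 0$). You also correctly identify that the theorem's ``any limit point'' quantifier is the delicate part in case~(ii), since liminf-vanishing of the gradient along the full sequence says nothing about a limit point reached along a different subsequence.

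However, your proposed fix for case~(ii) has a genuine gap. Suppose $\|\nabla_\theta\pazocal{L}(\theta)\|=2\delta>0$ at a limit point $\theta$, and let $B$ be a ball around $\theta$ on which $\|\nabla_\theta\pazocal{L}\|\ge\delta$ by continuity. The descent inequality only charges a decrease $c\,\epsilon^{(r)}\delta^2$ at the iterations $r$ with $\theta^{(r)}\in B$, and since $\pazocal{L}\ge 0$ and $(\pazocal{L}(\theta^{(r)}))_r$ converges, this yields merely
\begin{equation}
	\sum_{r:\ \theta^{(r)}\in B}\epsilon^{(r)} \;\le\; \frac{1}{c\,\delta^{2}}\Bigl(\pazocal{L}(\theta^{(\bar r)})-\lim_{r\to\infty}\pazocal{L}(\theta^{(r)})\Bigr)\;<\;\infty.
\end{equation}
Because $\epsilon^{(r)}\to 0$, this finite sum is perfectly consistent with the sequence visiting $B$ infinitely often; the divergence $\sum_r\epsilon^{(r)}=\infty$ is over \emph{all} iterations, including those spent outside $B$, so $\pazocal{L}$ is not driven to $-\infty$ and no contradiction arises. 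What is missing is the upgrade from $\liminf_r\|\nabla_\theta\pazocal{L}(\theta^{(r)})\|=0$ to $\lim_r\|\nabla_\theta\pazocal{L}(\theta^{(r)})\|=0$, which the paper carries out with a two-threshold excursion argument: assuming $\limsup_r\|\nabla_\theta\pazocal{L}(\theta^{(r)})\|\ge\delta$, it partitions the iterations into blocks where the gradient norm exceeds $\delta/3$ and blocks where it is at most $\delta/3$, and then uses the Lipschitz continuity of $\nabla_\theta\pazocal{L}$ together with $\|\bg^{(r)}\|\le\sqrt{\bc}\,\|\nabla_\theta\pazocal{L}(\theta^{(r)})\|$ and the already-established summability $\sum_r\epsilon^{(r)}\|\nabla_\theta\pazocal{L}(\theta^{(r)})\|^{2}<\infty$ to bound the growth of the gradient norm across each excursion by $\delta/3$, forcing $\|\nabla_\theta\pazocal{L}(\theta^{(\rho)})\|\le 2\delta/3$ for all large $\rho$ — a contradiction. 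Once full-sequence convergence of the gradient to zero is in hand, continuity transfers stationarity to every limit point and Theorem~\ref{thm:1} finishes, as you intended; without that excursion argument (or an equivalent), your case~(ii) does not close.
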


\begin{definition}\label{def:2} 
	The learned loss $\ell$ is said to satisfy \textit{the \msec} if the following holds: for all $u \in \pazocal{U}$ and $q \in \RR^{D_u}$, $\nabla_{q} \ell(q,u)=0$ if and only if $q=u$. 
\end{definition} 

\begin{corollary}  \label{corollary:1}
	If the learned loss $\ell$ satisfies \nc\ and the \msec,  then any stationary point $\theta$ of $\pazocal{L}$  is a global minimum of the MSE loss $\pazocal{L}_{\mathrm{MSE}}$ when $\rank(\frac{\partial \hat{u}_X (\theta)}{\partial \theta})=ND_u$, where $\pazocal{L}_{\mathrm{MSE}}(\theta)=\frac{1}{N}\sum_{i=1}^N\|\hat u_\theta(x_{i})-u_{i}\|_2^2$. 
\end{corollary}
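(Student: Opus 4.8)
The plan is to mirror the opening step of the proof of Theorem~\ref{thm:1}, which forces the per-sample gradients of $\ell$ to vanish at a stationary point under the full-rank hypothesis, and then to close the argument with the \msec\ (rather than \nc) to upgrade vanishing gradients into exact interpolation $\hat{u}_\theta(x_i)=u_i$, from which $\pazocal{L}_{\mathrm{MSE}}(\theta)=0$ follows at once. Since the learned objective and the MSE objective share the same map $\theta \mapsto \hat{u}_X(\theta)$, the entire reduction is driven by the structure of this map together with the two pointwise conditions on $\ell$.

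First I would write $\pazocal{L}$ as a composition $\pazocal{L}(\theta)=G(\hat{u}_X(\theta))$, where $G(v)=\frac{1}{N}\sum_{i=1}^N \ell(v_i,u_i)$ for $v=\vect((v_1,\dots,v_N))\in\RR^{ND_u}$ with blocks $v_i\in\RR^{D_u}$. Because $G$ is a block-separable sum in which the $i$-th block of $v$ enters only the $i$-th summand, its gradient is the row vector $\nabla_v G(v)=\frac{1}{N}[\nabla_q\ell(v_1,u_1),\dots,\nabla_q\ell(v_N,u_N)]$, whose $i$-th block equals $\frac{1}{N}\nabla_q\ell(v_i,u_i)$. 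The chain rule then turns stationarity of $\pazocal{L}$ at $\theta$ into the identity $\nabla_v G(v)|_{v=\hat{u}_X(\theta)}\,\frac{\partial \hat{u}_X(\theta)}{\partial\theta}=0$.

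The key step, which I expect to be the main (and essentially the only) obstacle, is to cancel the Jacobian from this identity. Since $\frac{\partial \hat{u}_X(\theta)}{\partial\theta}\in\RR^{ND_u\times D_\theta}$ satisfies $\operatorname{rank}(\frac{\partial \hat{u}_X(\theta)}{\partial\theta})=ND_u$ by hypothesis, it has full row rank, so its left null space is trivial; hence any row vector $w$ with $w\,\frac{\partial \hat{u}_X(\theta)}{\partial\theta}=0$ must be zero. Applying this to $w=\nabla_v G(v)|_{v=\hat{u}_X(\theta)}$ yields $\nabla_v G=0$, and reading off each block gives $\nabla_q\ell(q,u_i)|_{q=\hat{u}_\theta(x_i)}=0$ for every $i\in\{1,\dots,N\}$. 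This is exactly the point where the rank assumption is indispensable: without full row rank the gradient of $\ell$ could be nonzero yet lie in the left null space of the Jacobian, and the conclusion would fail.

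Finally I would invoke the \msec. By Definition~\ref{def:2}, $\nabla_q\ell(q,u_i)=0$ holds if and only if $q=u_i$; combined with the block-wise vanishing just established, this forces $\hat{u}_\theta(x_i)=u_i$ for every $i$. Substituting into $\pazocal{L}_{\mathrm{MSE}}(\theta)=\frac{1}{N}\sum_{i=1}^N\|\hat{u}_\theta(x_i)-u_i\|_2^2$ gives $\pazocal{L}_{\mathrm{MSE}}(\theta)=0$, and since $\pazocal{L}_{\mathrm{MSE}}\ge 0$ everywhere, $\theta$ is a global minimum of $\pazocal{L}_{\mathrm{MSE}}$, as claimed. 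I note that only the \msec\ is actually used in this chain; \nc\ appears in the hypothesis so that Theorem~\ref{thm:1} simultaneously certifies that the same $\theta$ is a global minimum of the learned objective $\pazocal{L}$ itself.
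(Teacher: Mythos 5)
Your proof is correct and follows essentially the same route as the paper's: the paper likewise invokes the first part of the proof of Theorem~\ref{thm:1} (which uses only the rank condition) to obtain $\nabla_q \ell(q,u_i)\vert_{q=\hat{u}_\theta(x_i)}=0$ for all $i$, then applies the \msec\ to conclude $\hat{u}_\theta(x_i)=u_i$ and hence $\pazocal{L}_{\mathrm{MSE}}(\theta)=0$. Your added observations --- spelling out the full-row-rank/trivial-left-null-space cancellation explicitly, and noting that \nc\ is not needed for this particular conclusion but certifies global optimality of $\pazocal{L}$ itself --- are both accurate and consistent with the paper.
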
  

For example, the rank condition $\rank(\frac{\partial \hat{u}_X (\theta)}{\partial \theta})=ND_u$ (as well as the expressivity condition of $\{\hat{u}_X (\theta)\in \RR^{N D_u}:\theta\in \RR^{D_\theta}\}=\RR^{N D_u}$) is guaranteed to be satisfied by using wide NNs (e.g., see \citealp{kawaguchi2019gradient,huang2020dynamics,kawaguchi2021recipe}).
Nevertheless, the rank condition $\rank(\frac{\partial \hat{u}_X (\theta)}{\partial \theta})=ND_u$ is more general than the condition of using wide NNs, in the sense that the latter implies the former but not vice versa. Moreover, the standard loss functions used in PINNs, such as squared $\ell_2$-norm, satisfy the differentiability condition of Definition~\ref{def:1}, and are convex with respect to $q$: i.e., $\ell_{u} :q\mapsto \ell(q,u)$ is convex for all $u \in \pazocal{U}$. It is known that for any differentiable convex function $\ell_{u}:\RR^{D_u} \rightarrow \RR$, we have $\ell_u(q')\ge\ell_u(q)+\nabla_q\ell_u(q)\T(q'-q)$ for all $q,q' \in \RR^{D_u}$. Because the latter implies \nc, we conclude that standard loss functions typically used in PINNs satisfy \nc.

Unlike standard loss functions, the flexibility provided by meta-learning the loss function allows the learned loss $\ell_{u} :q\mapsto \ell(q,u)$ to be non-convex in $q$. This flexibility allows the learned loss to be well tailored to the task distribution considered, while we can  still check or impose \nc; see also Section~\ref{sec:examples}.
Corollary \ref{corollary:1} shows that for our PINN problems with the MSE measure, it is desirable for the learned loss to satisfy \nc~and the \msec.
The \msec~imposes the existence  of the desired stationary point related to MSE, whereas \nc~ ensures the global optimality. Since \nc~does not guarantee the existence of a stationary point, it is possible that there is no stationary point without the \msec\ or this type of an additional condition. As a pathological example, we may have $\ell_{\eta}(q, u)=q-u$, for which there is no stationary point and Theorems \ref{thm:1}--\ref{thm:2} vacuously hold true.
Therefore, depending on the measures used in applications (i.e., MSE for our case), it is beneficial to impose this type of an additional condition along with \nc~in order to impose an existence of a desirable stationary point.

Using this theoretical result, we now propose a  novel penalty term $\pazocal{L}_{O, add}$   to be added to the outer objective $\pazocal{L}_O$ of Eq.~\eqref{eq:lossml:outer:loss:penalty}  in order to penalize the deviation of the learned loss from the conditions in Corollary \ref{corollary:1}.
More concretely, the novel penalty term $\pazocal{L}_{O, add}$ based on Corollary \ref{corollary:1} is expressed as
\begin{equation}\label{eq:lossml:outer:loss:penalty}
	\pazocal{L}_{O,add}(\eta) = \mathbb{E}_{q}[\norm{\nabla_{q} \ell_{\eta}(q, q)}_2^2] + \mathbb{E}_{q \neq q'}[\max(0, c-\norm{\nabla_{q} \ell_{\eta}(q, q')}_2^2)],
\end{equation}
where $q, q' \in \RR^{D_u}$ are arbitrary inputs to the loss function, and $c$ is a hyperparameter that can be set equal to a small value (e.g., $10^{-2}$).
The first term on the right-hand side of Eq.~\eqref{eq:lossml:outer:loss:penalty} promotes the first-order derivative of the learned loss to be zero for zero discrepancy, for obtaining a learned loss that  satisfies the \msec. 
Furthermore, in the second term on the right-hand side of Eq.~\eqref{eq:lossml:outer:loss:penalty},  the additional penalty term maximizes the derivative away from zero up to a constant $c$, for obtaining a learned loss that satisfies \nc. 
The terms $\mathbb{E}_{q}[\norm{\nabla_{q} \ell_{\eta}(q, q)}_2^2]$ and $\mathbb{E}_{q \neq q'}[\max(0, c-\norm{\nabla_{q} \ell_{\eta}(q, q')}_2^2)]$ can, in practice, be computed by drawing  some $q$ and a random pair $(q,q')$ such that $q \neq q'$, in each outer iteration, and by computing $\norm{\nabla_{q} \ell_{\eta}(q, q)}_2^2$ and $\max(0, c-\norm{\nabla_{q} \ell_{\eta}(q, q')}_2^2)$, respectively. 
Alternatively, we can define an empirical distribution on $q$ and on $(q,q')$, and replace the expectations by summations over finite points.  
By following the same rationale and by augmenting the outer objective $\pazocal{L}_O$ of Eq.~\eqref{eq:lossml:outer:loss:penalty}, other problem-specific constraints can be imposed to the loss function as well.

Finally, we prove that a learned loss with the LAL parametrization of Section~\ref{sec:meta:design:param:LAL} automatically satisfies \nc~and the \msec\ without adding the regularization term: 

\begin{proposition} \label{prop:1}
	Any  LAL loss of the form $\ell(q,u)=\rho_{\alpha,c}(q-u) = \frac{|\alpha - 2|}{\alpha}((\frac{((q-u)/c)^2}{|\alpha - 2|} + 1)^{\alpha/2} - 1)$ satisfies \nc~and the \msec\ if $c>0$, $\alpha\neq0$, and  $\alpha\neq 2$.     
\end{proposition}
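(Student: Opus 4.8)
The plan is to reduce both conditions to one gradient computation together with a sign analysis of $\rho_{\alpha,c}$ itself. Writing $d=q-u$, I would first record that the inner expression $g(d):=\frac{(d/c)^2}{|\alpha-2|}+1$ is well defined and satisfies $g(d)\ge 1>0$ for every $d$, which uses exactly the hypotheses $c>0$ and $\alpha\ne 2$ (so that $|\alpha-2|>0$). Since $g$ is smooth and bounded away from zero, $g^{\alpha/2}$ is smooth, so $\rho_{\alpha,c}$ is differentiable everywhere; this already supplies the differentiability requirement built into Definition~\ref{def:1}.

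Next I would compute the derivative by the chain rule. The factor $|\alpha-2|$ and the power $\alpha/2$ cancel cleanly against the prefactor $\frac{|\alpha-2|}{\alpha}$ (this is where $\alpha\ne 0$ is needed), yielding the compact form
\begin{equation*}
	\nabla_q \ell(q,u)=\rho_{\alpha,c}'(q-u)=\frac{q-u}{c^2}\,g(q-u)^{\alpha/2-1}.
\end{equation*}
Because $g(q-u)\ge 1>0$, the factor $g(q-u)^{\alpha/2-1}$ is strictly positive for any real exponent, and $c^2>0$. Hence $\nabla_q \ell(q,u)=0$ holds if and only if $q-u=0$, i.e.\ $q=u$. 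This is precisely the \msec\ of Definition~\ref{def:2}, and it simultaneously identifies $q=u$ as the unique stationary point.

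To finish \nc\ (Definition~\ref{def:1}) it then suffices to show that this unique stationary point $q=u$ is a global minimizer, i.e.\ that $\rho_{\alpha,c}(d)\ge 0=\rho_{\alpha,c}(0)$ for all $d$. I would argue this by a short sign analysis of $\rho_{\alpha,c}(d)=\frac{|\alpha-2|}{\alpha}\bigl(g(d)^{\alpha/2}-1\bigr)$, splitting on the sign of $\alpha$: if $\alpha>0$ the prefactor is positive and $g(d)^{\alpha/2}\ge 1$ since $g(d)\ge 1$, while if $\alpha<0$ the prefactor is negative and $g(d)^{\alpha/2}\le 1$; in both cases the product is nonnegative, with equality only at $d=0$. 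Therefore $\ell(u,u)=0\le \ell(q',u)$ for all $q'$, and since $q=u$ is the only point with vanishing gradient, Definition~\ref{def:1} holds.

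I do not expect a genuine obstacle here; the only care required is the casework on $\operatorname{sign}(\alpha)$ in the nonnegativity step and the bookkeeping confirming that each of the three hypotheses is genuinely used — $c>0$ and $\alpha\ne 2$ to keep $g\ge 1>0$ (so the power is smooth and its sign is controlled), and $\alpha\ne 0$ to make the prefactor and the derivative well defined. For the multi-dimensional reading in which $(d/c)^2$ denotes $\|d\|_2^2/c^2$, or the coordinatewise sum construction used for multi-dimensional inputs, the same argument carries over verbatim: the strictly positive scalar factor simply multiplies the vector $q-u$, and the loss remains a nonnegative function of $d$ vanishing only at $q=u$.
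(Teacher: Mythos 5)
Your proof is correct and follows essentially the same route as the paper's: the identical chain-rule computation giving $\nabla_q\ell(q,u)=\frac{1}{c^2}\bigl(\frac{((q-u)/c)^2}{|\alpha-2|}+1\bigr)^{\alpha/2-1}(q-u)$, the strict positivity of the scalar factor yielding the \msec, and the same sign casework on $\alpha$ to show $\ell\ge 0$ with $\ell(u,u)=0$, which establishes \nc. Your explicit remarks on differentiability and the multi-dimensional extension are harmless additions the paper leaves implicit.
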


\section{Computational examples}\label{sec:examples}

We consider four computational examples in order to demonstrate the applicability and the performance of Algorithm~\ref{algo:general} for meta-learning PINN loss functions.
In Section~\ref{sec:examples:regression}, we address the problem of discontinuous function approximation with varying frequencies. Because the function approximation problem is conceptually simpler than solving PDEs and computationally cheaper, Section~\ref{sec:examples:regression} serves not only as a pedagogical example but also as a guide for understanding the behavior of Algorithm~\ref{algo:general} when different loss function parametrizations and initializations, inner optimizers and other design options are used; see Appendices~\ref{app:meta:design:other} and \ref{app:add:results}.
In Section~\ref{sec:examples:ad} we address the problem of solving the advection equation with varying initial conditions and discontinuous solutions, in Section~\ref{sec:examples:rd} we solve a steady-state version of the reaction-diffusion equation with varying source term, and in Section~\ref{sec:examples:burgers} we solve the Burgers equation with varying viscosity in two regimes. 

For each computational example, both FFN and LAL parametrizations from Section~\ref{sec:meta:design:param} are studied. For the FFN parametrization, we perform meta-training with and without the theoretically-driven regularization terms of Eq.~\eqref{eq:lossml:outer:loss:penalty} as developed in Section~\ref{sec:meta:design:properties}. We present the regularization results explicitly only when the terms of Eq.~\eqref{eq:lossml:outer:loss:penalty} are not zero, otherwise the respective results are identical. For the LAL parametrization, the regularization is not required because the desirable conditions of Section~\ref{sec:meta:design:properties} are automatically satisfied according to Proposition~\ref{prop:1}.

In all the examples, meta-training is performed using Adam as the outer optimizer for $10{,}000$ iterations. During meta-training, 6 snapshots of the learned loss are captured, with 0 corresponding to initialization. 
Furthermore, only one task is used in each outer step throughout this section ($T = 1$ in Algorithm~\ref{algo:general}); increasing this number up to 5 does not provide any significant performance increase in the considered cases. In meta-testing, we compare the performance of the snapshots of the learned loss captured during meta-training with standard loss functions from Table~\ref{tab:losses}. 
Specifically, we compare $12$ learned losses ($6$ snapshots of the FFN and $6$ of the LAL parametrization; see Sections~\ref{sec:meta:design:param:LAL}-\ref{sec:meta:design:param:FFN}) with the squared $\ell_2$-norm (MSE), the absolute error (L1), the Cauchy, and the Geman-McClure (GMC) loss functions. 
In addition, we compare with the OAL of \cite{barron2019general} (see Section~\ref{sec:meta:design:param:LAL}) with $2$ learning rates ($0.01$ and $0.1$, denoted as OAL 1 and OAL 2) for its trainable parameters (robustness and scale); only the robustness parameter is trained in the computational examples because this setting yielded better performance. For evaluating the performance of the considered loss functions we use them for meta-testing on either $5$ or $10$ unseen tasks, either in-distribution (ID) or out-of-distribution (OOD), and record the relative $\ell_2$ test error (rl2) on exact solution datapoints averaged over tasks.

\subsection{Discontinuous function approximation with varying frequencies and heteroscedastic noise}\label{sec:examples:regression}

We first consider a distribution of functions in $[0, 4\pi ]$ defined as
\begin{equation}\label{eq:regex:udef}
	u(x)= 
	\begin{cases}
		\sin(\omega_1 x) + \epsilon,& \text{if } 0 \leq x \leq 2\pi\\
		k (1 + \sin(\omega_2 (x - 2\pi))),              & \text{if } 2\pi < x \leq 4\pi,
	\end{cases}
\end{equation}
where $k$ denotes the magnitude of discontinuity and $\epsilon$ represents a zero-mean Gaussian noise term defined only in $[0, 2\pi]$ with standard deviation $\sigma_{\epsilon}$.
In this regard, a task distribution $p(\lambda)$ can be defined by drawing randomly frequency values $\lambda = \{\omega_1, \omega_2\}$ from $\mathcal{U}_{[\omega_{1, min}, \omega_{1, max}]}$ and $\mathcal{U}_{[\omega_{2, min}, \omega_{2, max}]}$, respectively, where $\mathcal{U}$ denotes a uniform distribution.
The defined task distribution is used in this section for function approximation.
The values of the fixed parameters used in this example can be found in Table~\ref{tab:reg:params}.
\begin{table}[H]
	\centering
	\caption{Function approximation: Task distribution values pertaining to Eq.~\eqref{eq:regex:udef}, used in meta-training and OOD meta-testing. 
	}
	\begin{tabular}{ccccccc}
		\toprule
		&$k$     & $\omega_{1,min}$     & $\omega_{1,max}$ & $\omega_{2,min}$ & $\omega_{2,max}$ & $\sigma_{\epsilon}$\\
		\midrule
		meta-training&$1$ & $1$ & $3$ & $5$ & $6$ & $0.2$\\
		OOD meta-testing&$1$ & $0.5$ & $4$ & $6$ & $7$ & $0.2$\\
		\bottomrule
	\end{tabular}
	\label{tab:reg:params}
\end{table}

\paragraph*{Meta-training.}
Following the design options experiment in Appendix~\ref{app:add:results}, we fix the design options of Algorithm~\ref{algo:general} to $J=20$ inner steps and to resampling and re-initializing every outer iteration.
The approximator NN architecture consists of $3$ hidden layers with $40$ neurons each and $tanh$ activation function.
In the inner objective we consider $N_u = 100$ noisy datapoints with $\sigma_{\epsilon} = 0.2$, whereas in the outer objective $N_{u, val} = 1{,}000$ and $\sigma_{\epsilon} = 0$. 
This can be interpreted as leveraging in the offline phase clean historical data that we synthetically corrupt with noise in order to meta-learn a loss function that can work well at test time also for the case of noisy data.
Moreover, both parametrizations (FFN and LAL) are used for comparison, and Adam is used as both inner and outer optimizer, with learning rates $10^{-3}$ and $10^{-4}$, respectively. 
In Fig.~\ref{fig:reg:snaps} we show the learned loss snapshots and their first-order derivatives and compared with MSE for the FFN and LAL parametrizations. 
Both FFN and LAL parametrizations with MSE initialization yield highly different learned losses as compared to MSE.
Being more flexible than LAL, FFN leads to more complex learned losses as depicted especially in the first-order derivative plots (Figs.~\ref{fig:reg:snaps:ffn:lossder} and \ref{fig:reg:snaps:lal:lossder}).
\begin{figure}[H]
	\centering
	\begin{subfigure}[t]{0.48\textwidth}
		\centering
		\includegraphics[width=\textwidth]{./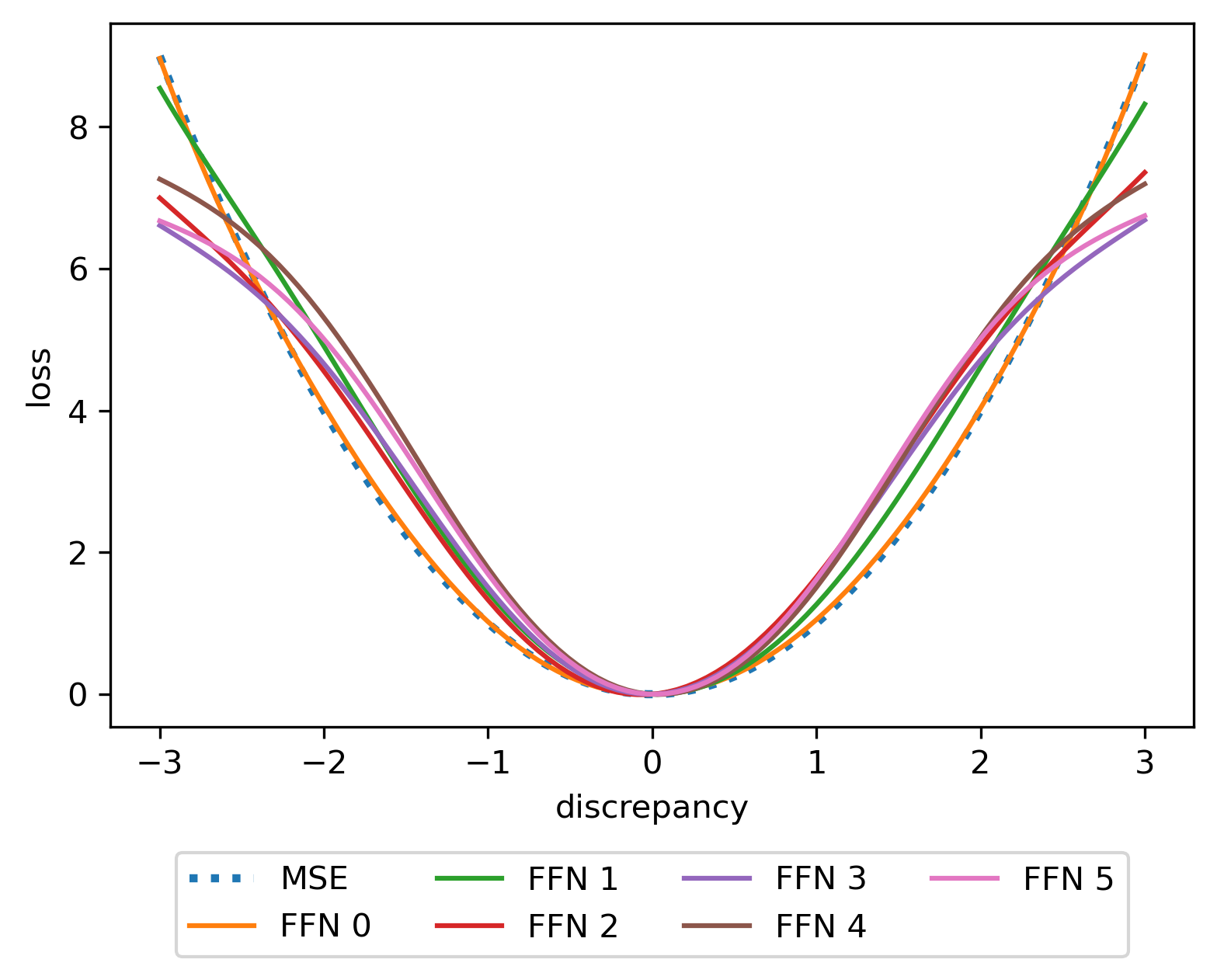}
		\caption{}
		\label{fig:reg:snaps:ffn:loss}
	\end{subfigure}
	\hfill
	\begin{subfigure}[t]{0.48\textwidth}
		\centering
		\includegraphics[width=\textwidth]{./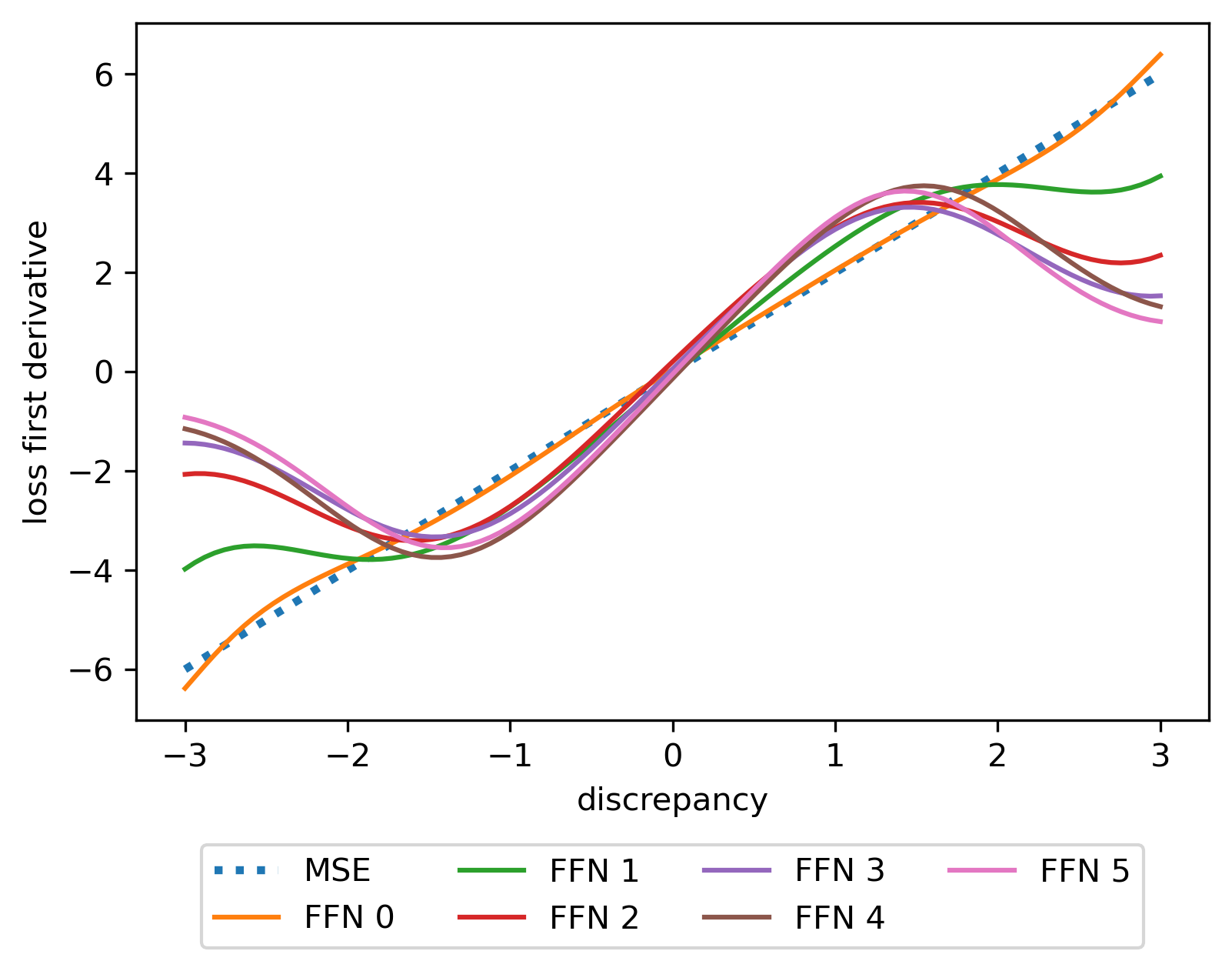}
		\caption{}
		\label{fig:reg:snaps:ffn:lossder}
	\end{subfigure}
	\hfill
	\begin{subfigure}[t]{0.48\textwidth}
		\centering
		\includegraphics[width=\textwidth]{./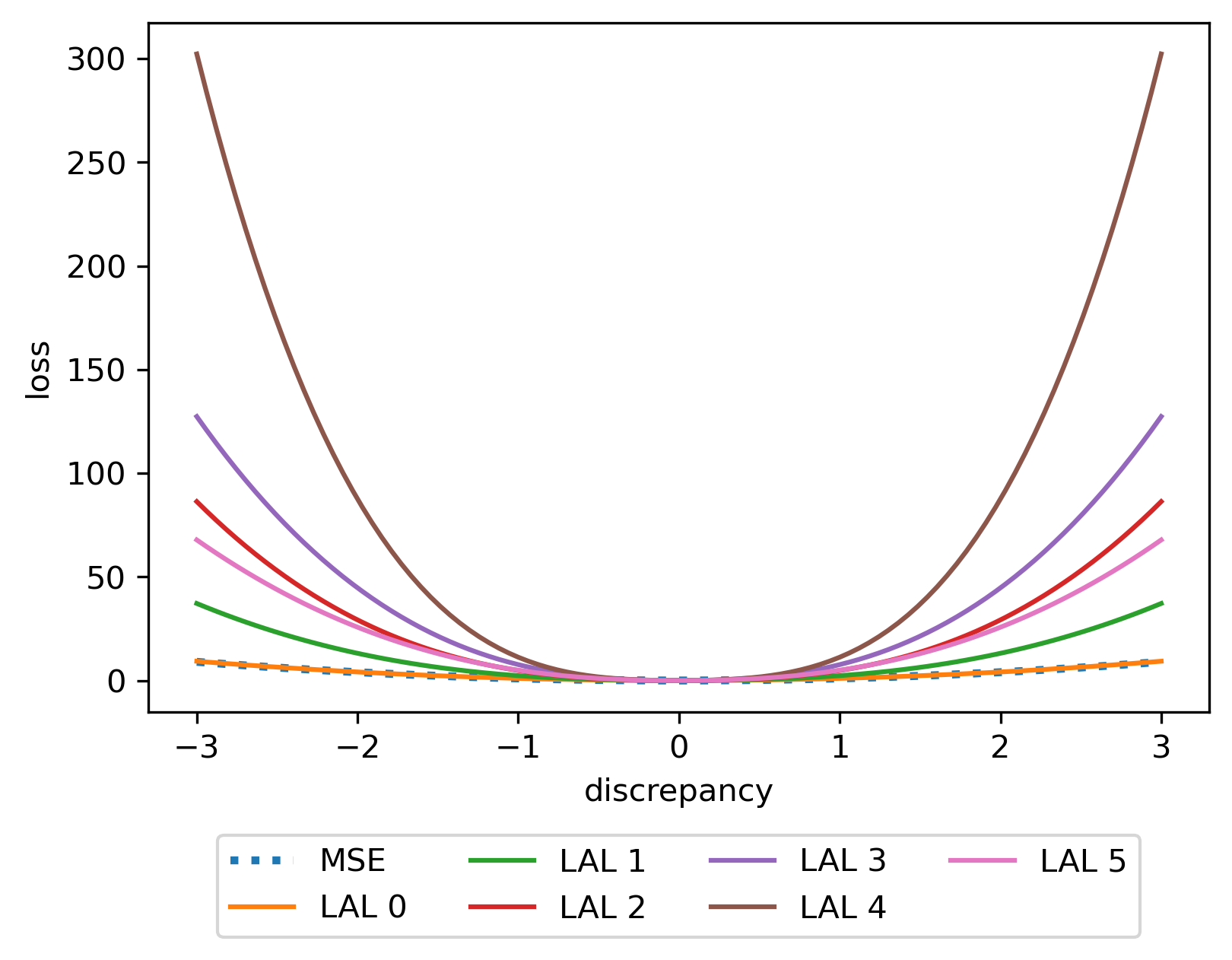}
		\caption{}
		\label{fig:reg:snaps:lal:loss}
	\end{subfigure}
	\hfill
	\begin{subfigure}[t]{0.48\textwidth}
		\centering
		\includegraphics[width=\textwidth]{./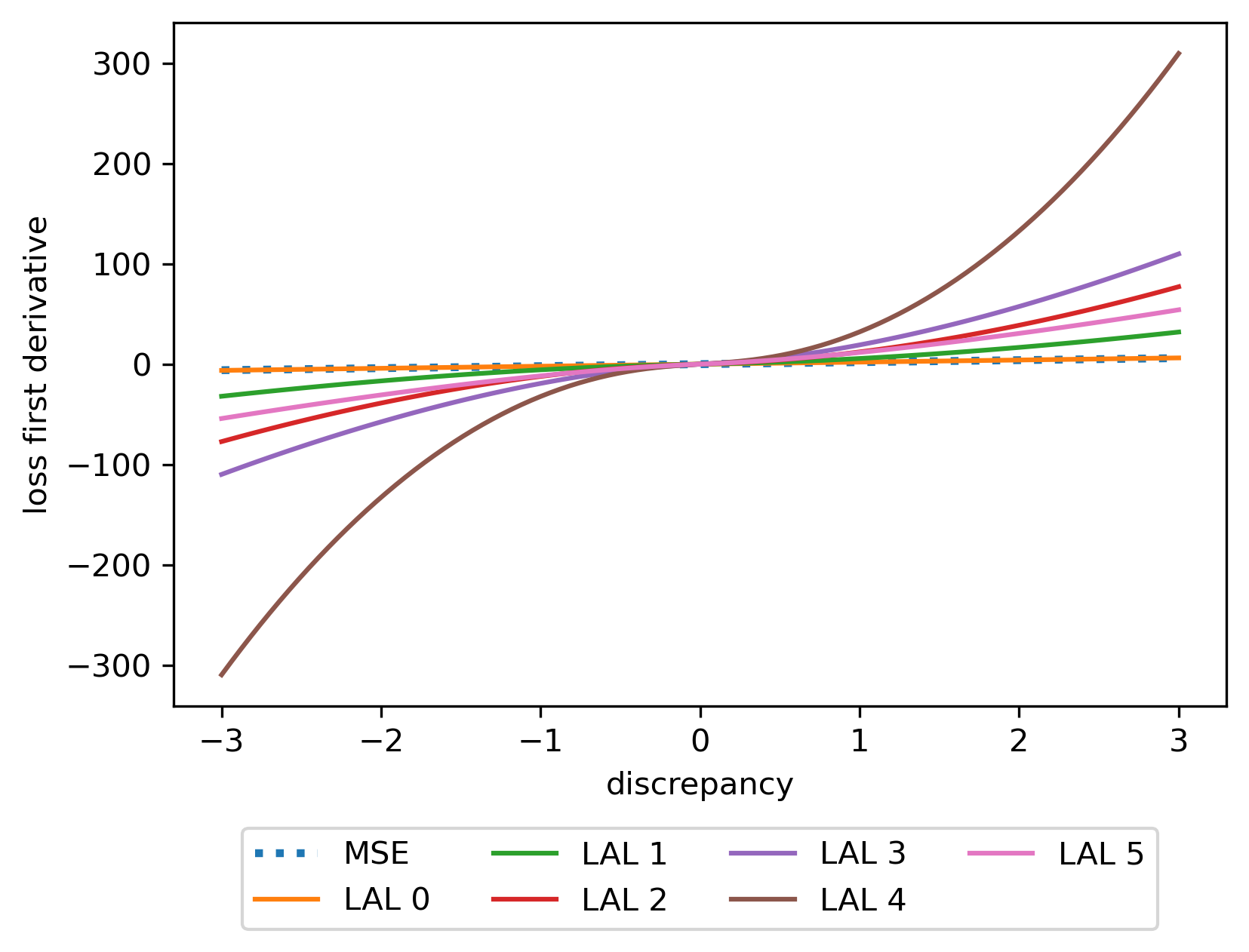}
		\caption{}
		\label{fig:reg:snaps:lal:lossder}
	\end{subfigure}
	\caption{Function approximation: Learned loss snapshots (a, c) and corresponding first-order derivatives (b, d), as captured during meta-training (distributed evenly in $10{,}000$ outer iterations with 0 referring to initialization).
		Results obtained with FFN (a, b) and LAL (c, d) parametrizations.
		Both FFN and LAL parametrizations with MSE initialization yield highly different learned losses as compared to MSE.
		Being more flexible than LAL, FFN leads to more complex learned losses as depicted especially in the first-order derivative plots (b, d).}
	\label{fig:reg:snaps}
\end{figure}

\paragraph*{Meta-testing.}
For evaluating the performance of the captured learned loss snapshots, we use them for meta-testing on $10$ OOD tasks and compare with standard loss functions from Table~\ref{tab:losses}.
Specifically, we train with Adam for $50{,}000$ iterations $10$ tasks using $18$ different loss functions ($6$ FFN, $6$ LAL and $6$ standard) and record the rl2 error on $1{,}000$ exact solution datapoints. 
The test tasks are sampled from a distribution defined by combining Eq.~\eqref{eq:regex:udef} with $k=1$ and $\sigma_{\epsilon} = 0.2$, and with the uniform distributions $\mathcal{U}_{[\omega_{1, min}, \omega_{1, max}]}$ and $\mathcal{U}_{[\omega_{2, min}, \omega_{2, max}]}$, where $\{\omega_{1, min}, \omega_{1, max}, \omega_{2, min}, \omega_{2, max}\}$ are shown in Table~\ref{tab:reg:params}. 
The minimum rl2 error results are shown in Fig.~\ref{fig:reg:test:min:stats}. 
We see that the loss functions learned with the FFN parametrization do not generalize well, whereas the ones learned with the LAL parametrization achieve an average minimum rl2 error that is smaller than the error corresponding to all the other considered loss functions by at least $15 \%$. 
For example, the average minimum rl2 error for LAL 4 is approximately $17 \%$ and is obtained (on average) close to iteration $20{,}000$, whereas the corresponding error for MSE is approximately $20 \%$ and is obtained (on average) close to iteration $50{,}000$.
Despite the improvement demonstrated in this example, in Fig.~\ref{fig:ad:traintest:Adam:experiment} we show for the advection equation example that the performance of the learned loss depends on the exponential decay parameters of Adam that control the dependence of the updates on the gradient history.  
\begin{figure}[H]
	\centering
	\includegraphics[width=.7\linewidth]{./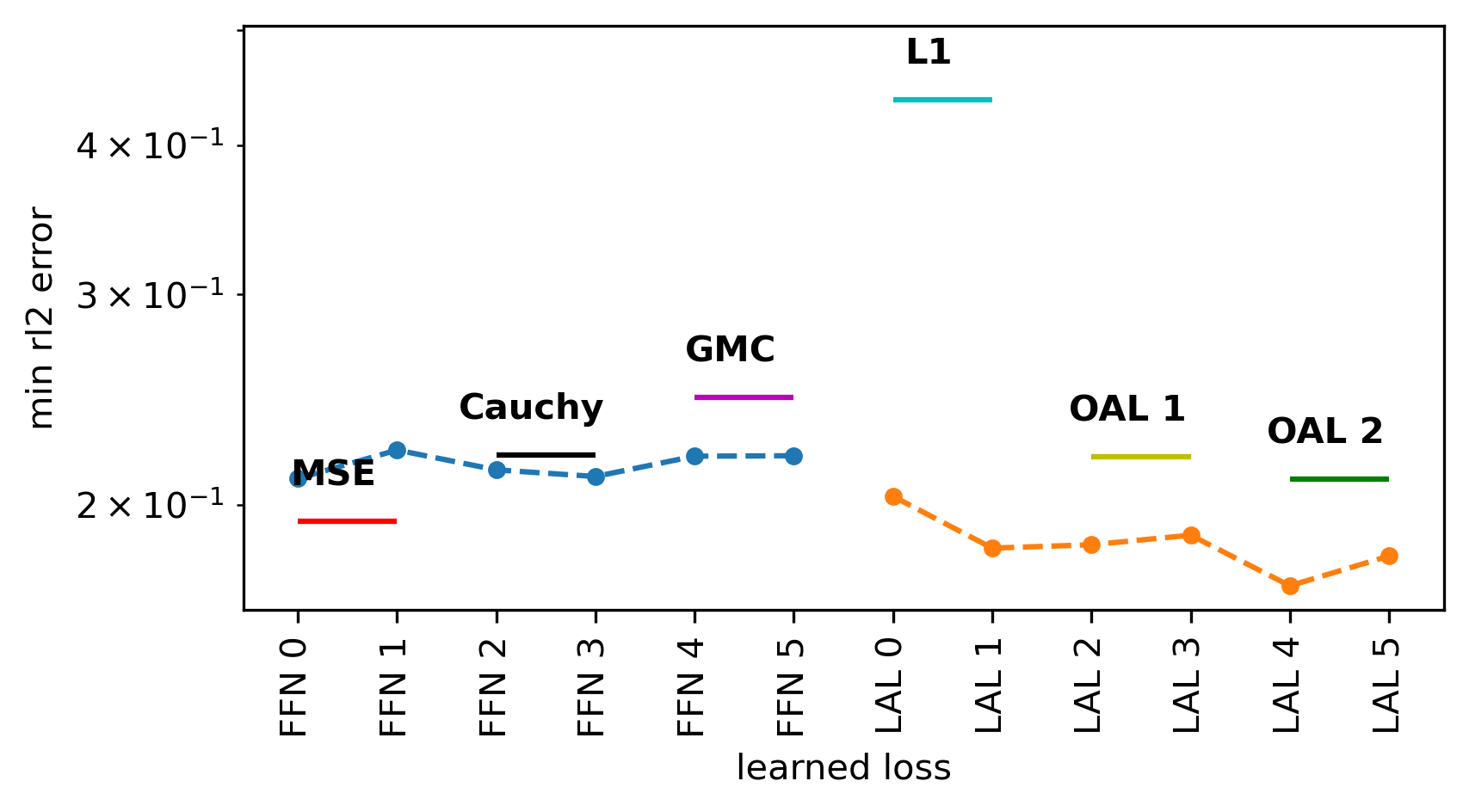}  
	\caption{Function approximation: Minimum relative test $\ell_2$ error (rl2) averaged over $10$ OOD tasks during meta-testing with Adam for $50{,}000$ iterations.
		Learned loss snapshots (FFN 0-6 and LAL 0-6) are compared with standard loss functions of Table~\ref{tab:losses} and with online adaptive loss functions OAL 1 and OAL 2 (2 loss-specific learning rates).
		The loss functions learned with the FFN parametrization do not generalize well, whereas the ones learned with the LAL parametrization achieve an average minimum rl2 error that is smaller than the error corresponding to all the other considered loss functions by at least $15 \%$.}
	\label{fig:reg:test:min:stats}
\end{figure}

\subsection{Task distributions defined based on advection equation with varying initial conditions and discontinuous solutions}\label{sec:examples:ad}

Next, we consider the $(1 + 1)$-dimensional advection equation given as
\begin{equation}\label{eq:advection}
	\partial_t u + V \partial_x u = 0,
\end{equation}
where $V$ is the constant advection velocity, $x \in [-1, 1]$ and $t \in [0, 1]$.
The considered Dirichlet BCs are given as 
\begin{equation}\label{eq:advection:bcs}
	u(-1, t) = u(1, t) = 0
\end{equation}
and the ICs as
\begin{equation}\label{eq:advection:ics}
	u(x, 0) = u_{0, \lambda}(x) =
	\begin{cases}
		\frac{1}{\lambda},& \text{if } -1 \leq x \leq -1 + \lambda\\
		0,              & \text{if } -1 + \lambda < x \leq 1,
	\end{cases}
\end{equation}
i.e., $u_{0, \lambda}(x)$ is a normalized box function of length $\lambda$. 
The exact solution for this problem is given as $u_{0, \lambda}(x - V t)$, which is also a box function that advects in time. 
In this regard, we can define a PDE task distribution comprised of problems of the form of Eq.~\eqref{eq:advection} with ICs given by Eq.~\eqref{eq:advection:ics} with varying $\lambda$. 
A task distribution $p(\lambda)$ can be defined by drawing randomly $\lambda$ values from $\mathcal{U}_{[\lambda_{min}, \lambda_{max}]}$, which correspond to different initial conditions $u_{0, \lambda}(x)$ in Eq.~\eqref{eq:advection:ics}.
In this example, $V = 1$, $\lambda_{min} = 0.5$, and the maximum value of $\lambda$ that can be considered so that the BCs are not violated is $1$; thus we consider $\lambda_{max} = 1$.

\paragraph*{Meta-training.}
During meta-training, Algorithm~\ref{algo:general} is employed with either SGD or Adam as inner optimizer with learning rate $10^{-2}$ or $10^{-3}$, respectively. 
Both FFN and LAL are initialized as MSE approximations, the number of inner iterations is $20$, and tasks are resampled and approximator NNs (PINNs in this case) are randomly re-initialized in every outer iteration.
The number of datapoints $\{N_{f}, N_{b}, N_{u_0}\}$ and $\{N_{f, val}, N_{b, val}, N_{u_0, val}\}$ used for evaluating both the inner objective of Eq.~\eqref{eq:lossml:inner:loss} and the outer objective of Eq.~\eqref{eq:lossml:outer:loss}, respectively, are the same and equal to $\{1{,}000, 100, 200\}$, and $N_{u, val} = 0$. 
Furthermore, the PINN architecture consists of $4$ hidden layers with $20$ neurons each and $tanh$ activation function.

We show in Fig.~\ref{fig:ad:comp}, the final loss functions (FFN and LAL parametrizations) as obtained with SGD as inner optimizer, with and without meta-learning the objective function weights (Section~\ref{sec:meta:design:param:weights}), and as obtained with Adam as inner optimizer.
For SGD as inner optimizer, both FFN and LAL parametrizations with MSE initialization yield highly different learned losses as compared to MSE, with FFN yielding more complex learned losses.
Furthermore, objective function weights meta-learning leads to an asymmetric final learned loss and is found in meta-testing to deteriorate performance.
For Adam as inner optimizer, the final learned losses are close to MSE.
The corresponding objective function weights trajectories are shown in Fig.~\ref{fig:ad:params}.
All objective function weights increase for both parametrizations, which translates into learning rate increase, while FFN and LAL disagree on how they balance ICs. 
The meta-testing results ($100$ test iterations) obtained while meta-training for the case of SGD as inner optimizer are shown in Fig.~\ref{fig:ad:traintest}.
Although initially objective function weights meta-learning improves performance, the corresponding final learned losses in conjunction with the final learned weights eventually deteriorate performance. 
\begin{figure}[H]
	\centering
	\begin{subfigure}[t]{0.48\textwidth}
		\centering
		\includegraphics[width=\textwidth]{./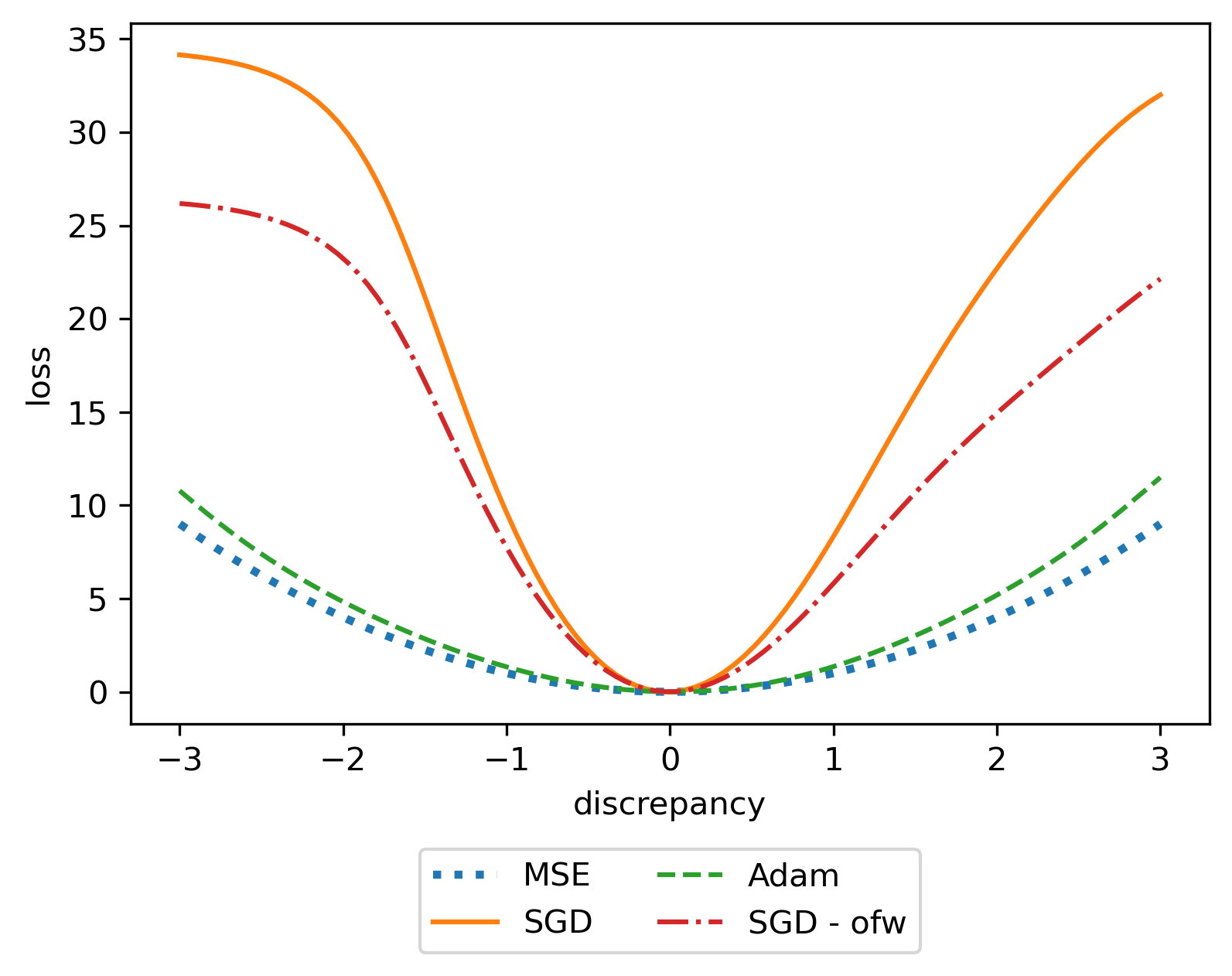}
		\caption{}
		\label{fig:ad:comp:ffn:loss}
	\end{subfigure}
	\hfill
	\begin{subfigure}[t]{0.48\textwidth}
		\centering
		\includegraphics[width=\textwidth]{./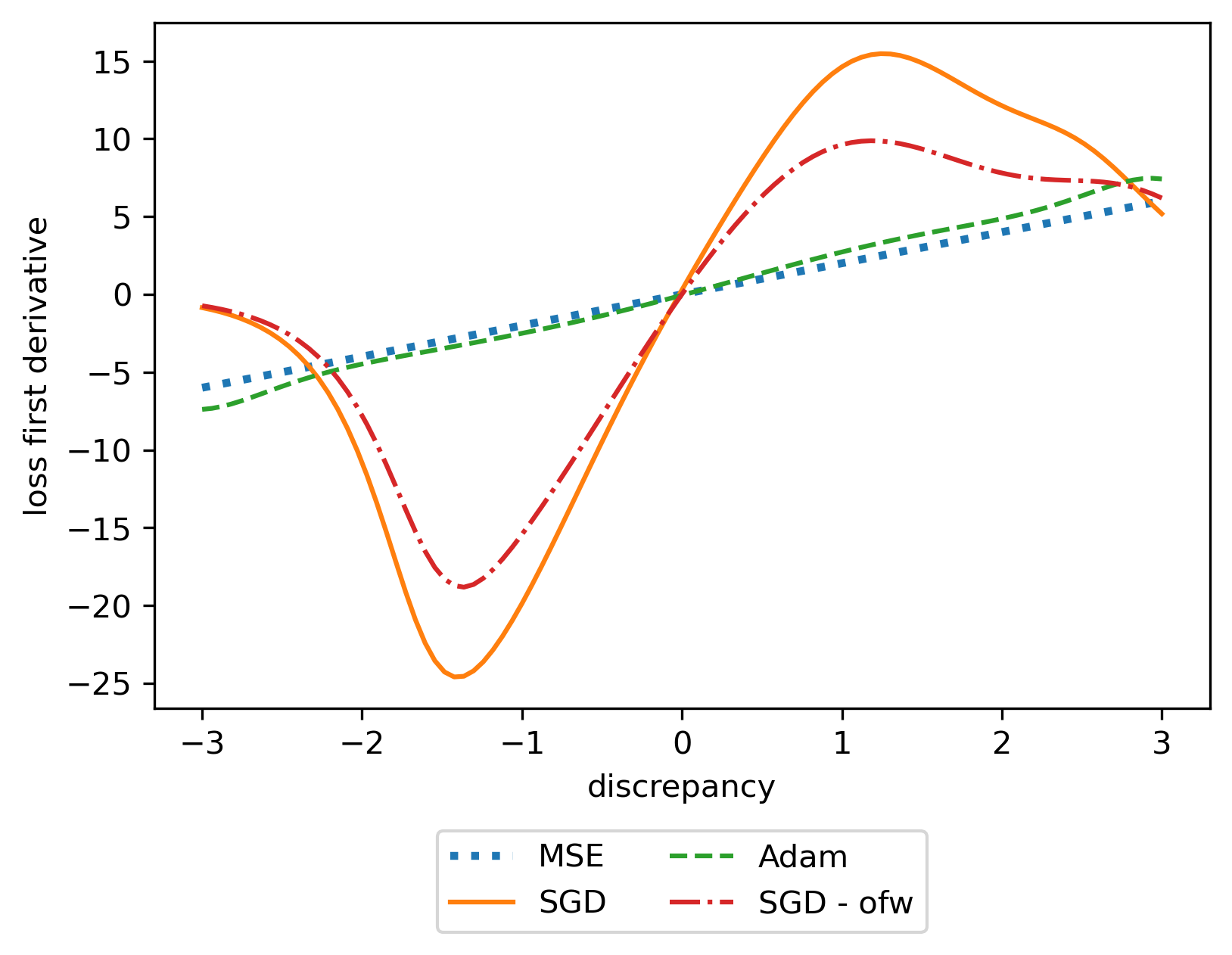}
		\caption{}
		\label{fig:ad:comp:ffn:lossder}
	\end{subfigure}
	\hfill
	\begin{subfigure}[t]{0.48\textwidth}
		\centering
		\includegraphics[width=\textwidth]{./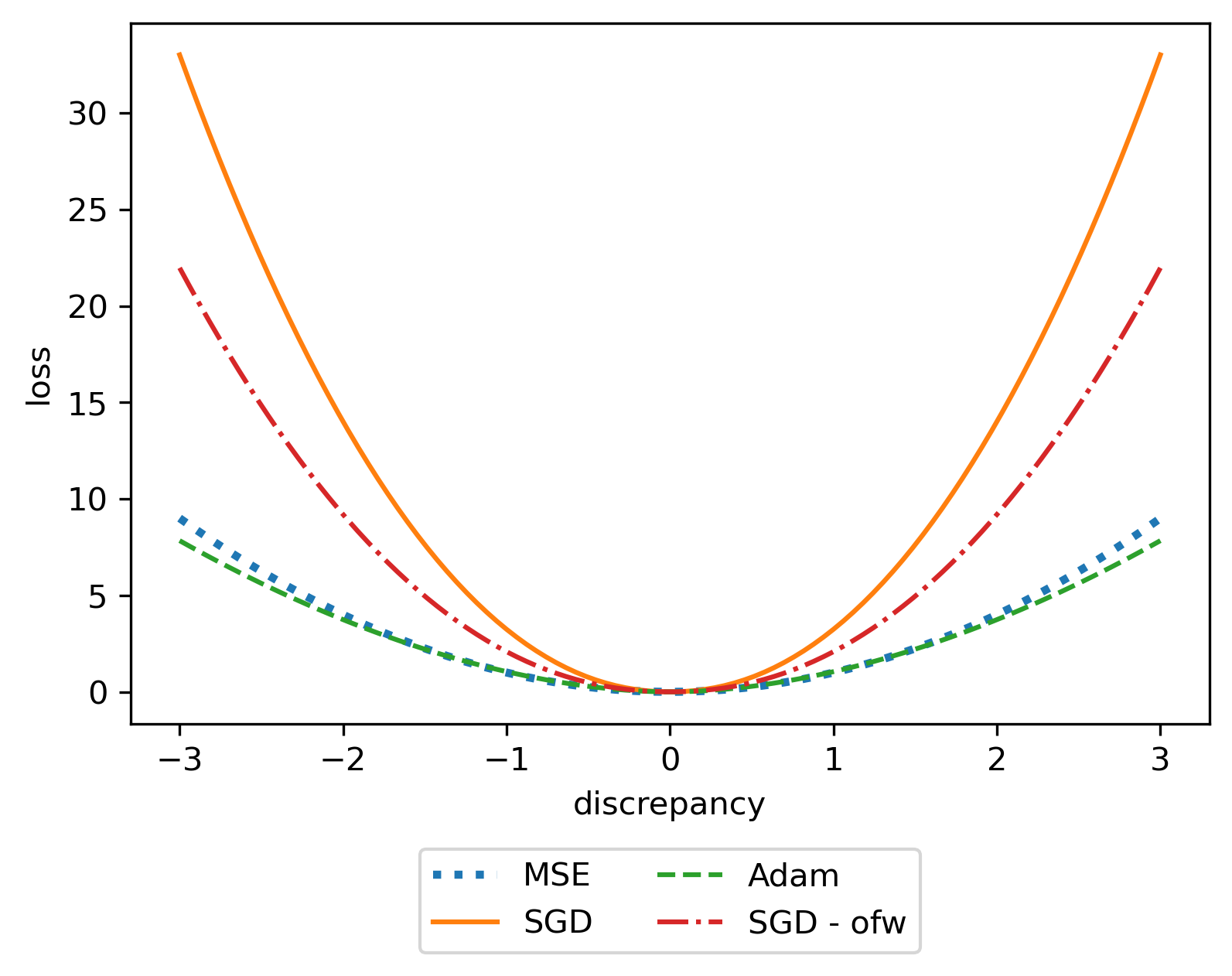}
		\caption{}
		\label{fig:ad:comp:lal:loss}
	\end{subfigure}
	\hfill
	\begin{subfigure}[t]{0.48\textwidth}
		\centering
		\includegraphics[width=\textwidth]{./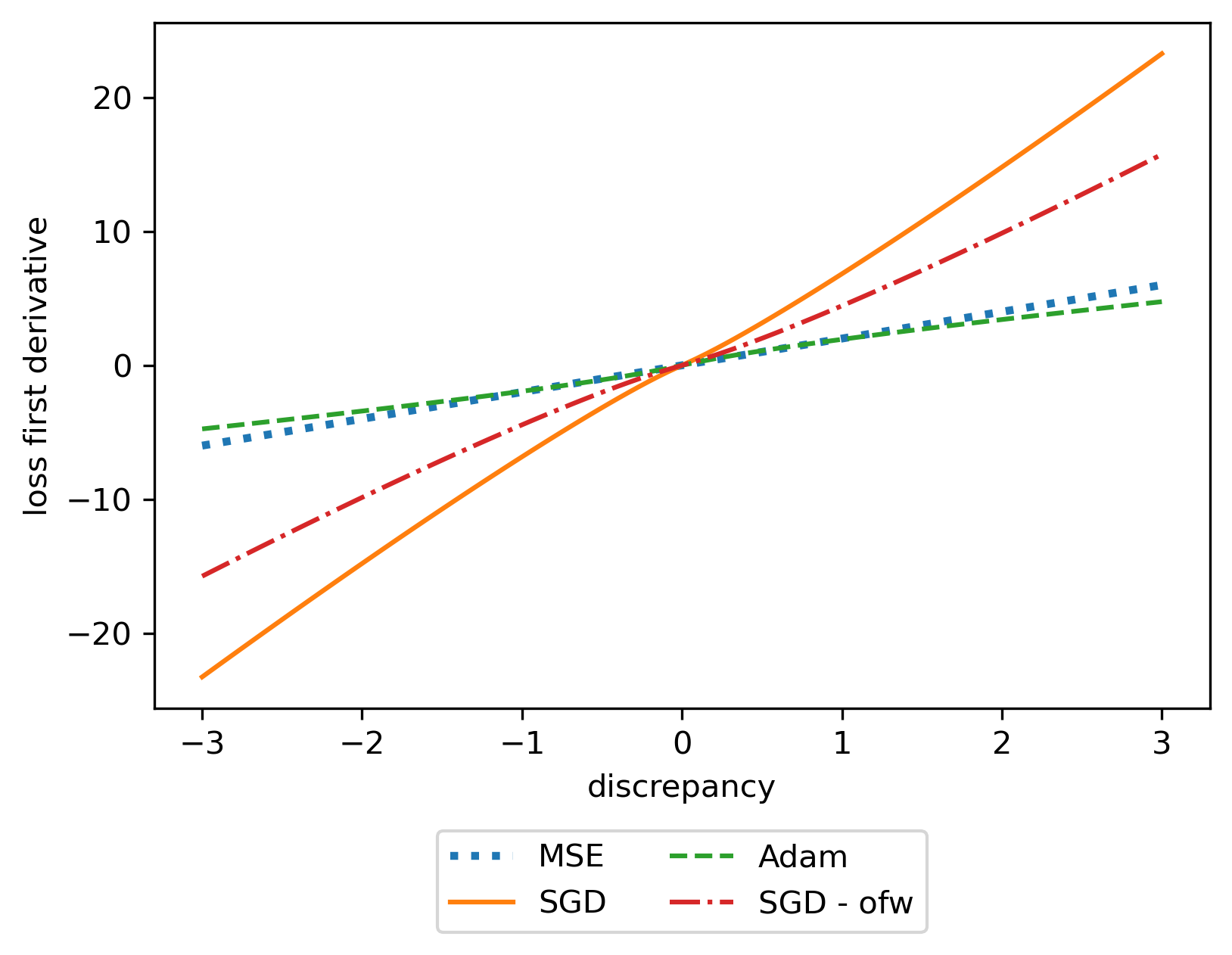}
		\caption{}
		\label{fig:ad:comp:lal:lossder}
	\end{subfigure}
	\caption{Advection equation: Final learned losses (a, c) and corresponding first-order derivatives (b, d), with FFN (a, b) and LAL (c, d) parametrizations. 
		Results obtained via meta-training with SGD as inner optimizer (without and with meta-learning the objective function weights; SGD and SGD - ofw) and with Adam optimizer; comparisons with MSE are also included.
		For SGD as inner optimizer, both FFN and LAL parametrizations with MSE initialization yield highly different learned losses as compared to MSE, with FFN yielding more complex learned losses.
		SGD - ofw leads to an asymmetric final learned loss and is found in meta-testing to deteriorate performance.
		For Adam as inner optimizer, final learned losses are close to MSE.
	}
	\label{fig:ad:comp}
\end{figure}
\begin{figure}[H]
	\centering
	\begin{subfigure}[t]{0.48\textwidth}
		\centering
		\includegraphics[width=\textwidth]{./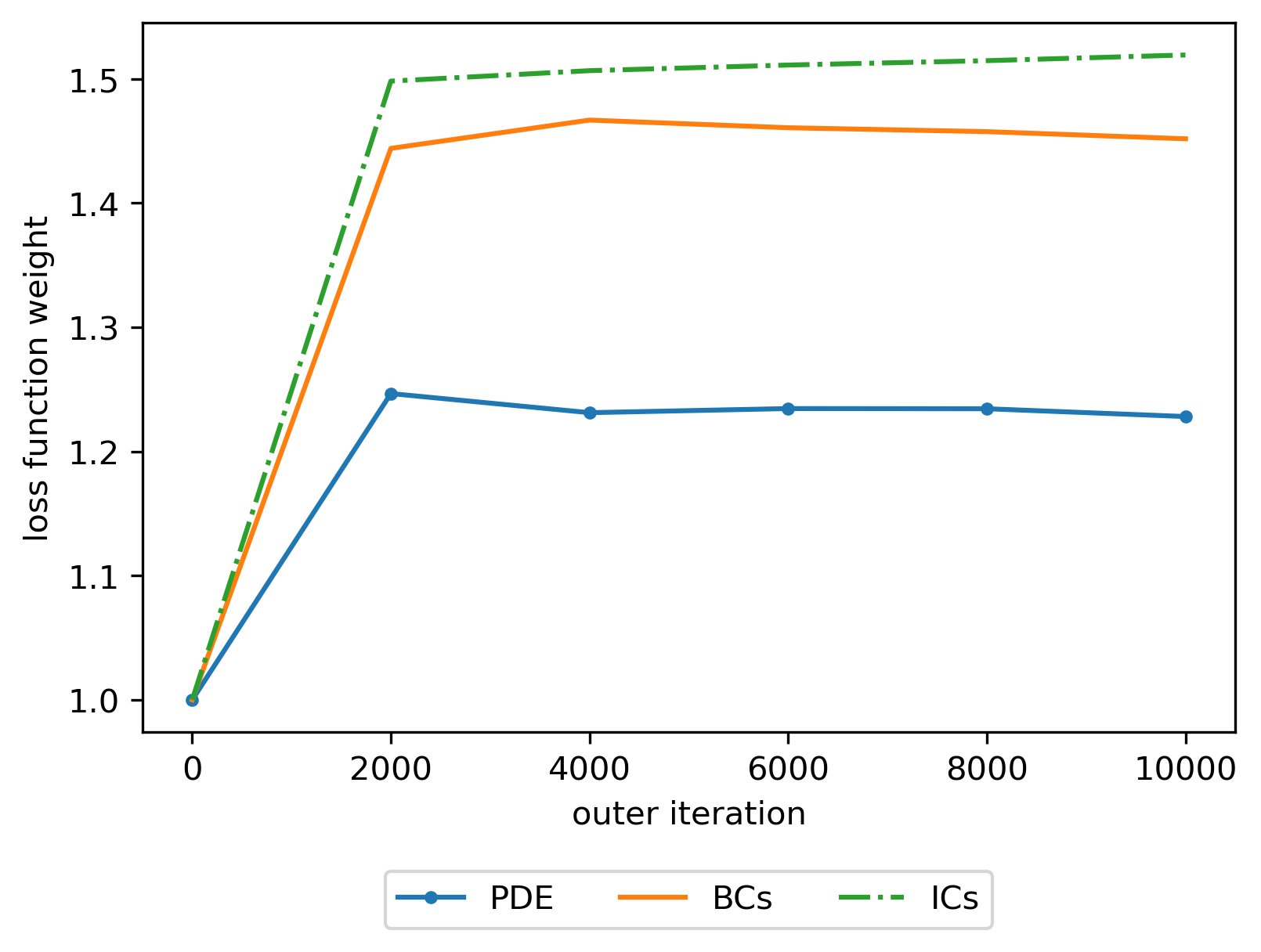}
		\caption{}
		\label{fig:ad:params:ffn}
	\end{subfigure}
	\hfill
	\begin{subfigure}[t]{0.48\textwidth}
		\centering
		\includegraphics[width=\textwidth]{./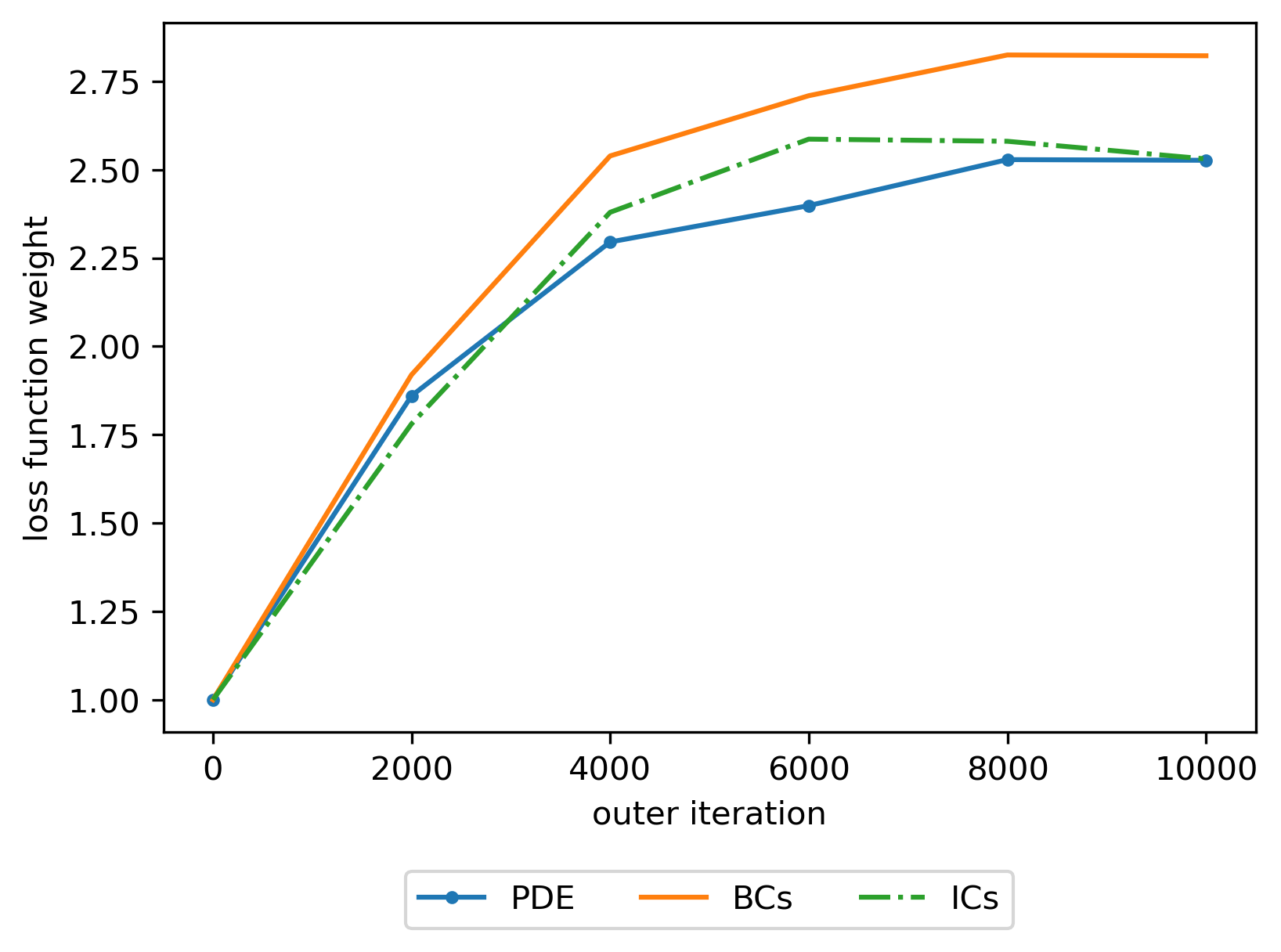}
		\caption{}
		\label{fig:ad:params:lal}
	\end{subfigure}
	\caption{Advection equation: Learned objective function weights, pertaining to PDE, BCs, and ICs residuals, as a function of outer iteration in meta-training; see Section~\ref{sec:meta:design:param:weights}.
		Results obtained with FFN (a) and LAL (b) parametrizations.
		All objective function weights increase for both FFN and LAL parametrizations, which translates into learning rate increase, while FFN and LAL disagree on how they balance ICs. 	
	}
	\label{fig:ad:params}
\end{figure}
\begin{figure}[H]
	\centering
	\includegraphics[width=.6\linewidth]{./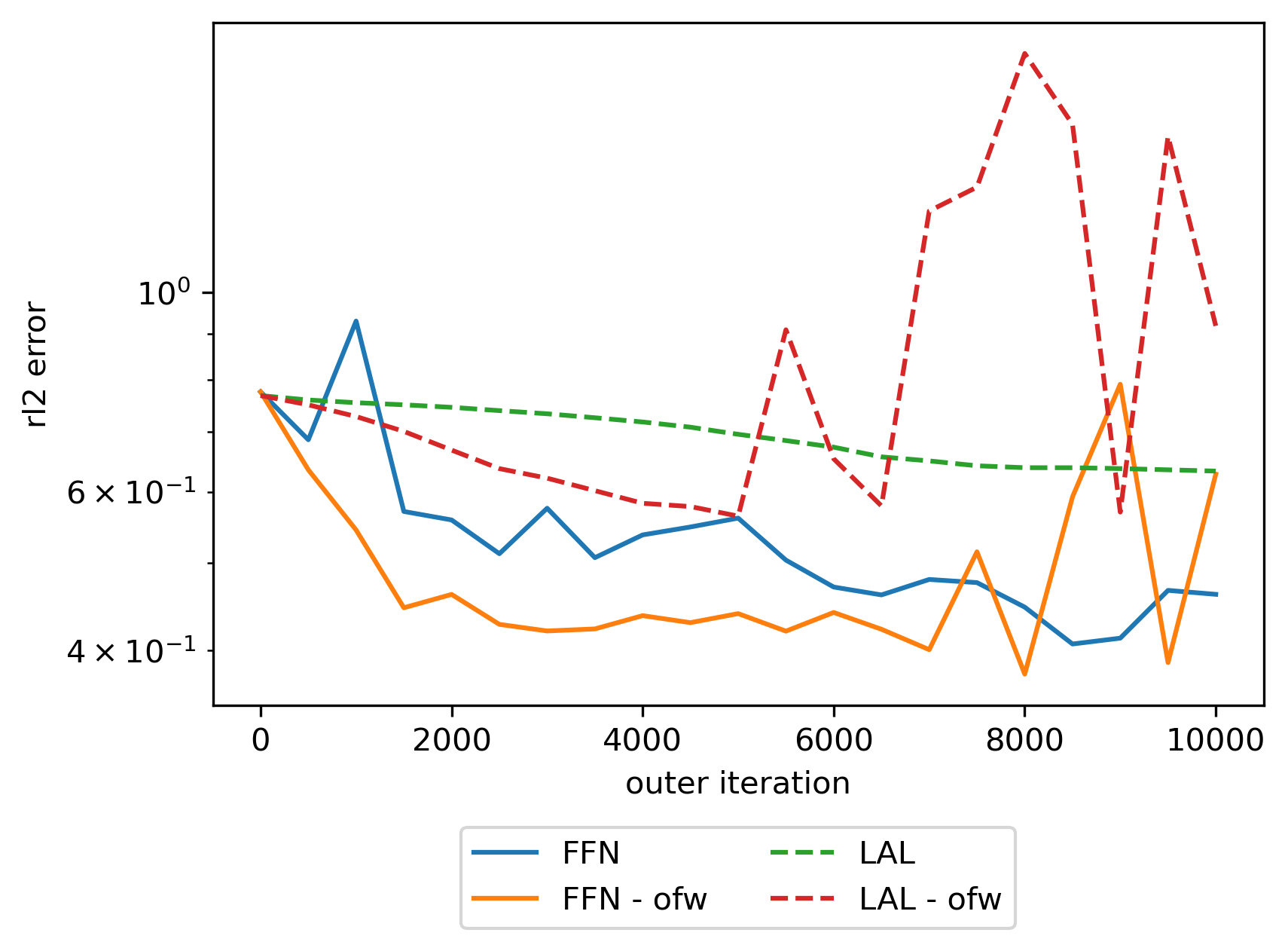}  
	\caption{Advection equation: Meta-testing results (relative $\ell_2$ test error on $1$ unseen task after $100$ iterations) obtained using learned loss snapshots and performed during meta-training (every $500$ outer iterations); these can be construed as \textit{meta-validation error} trajectories.
		Results related to FFN (without and with objective function weights meta-learning; FFN and FFN - ofw) and to LAL (without and with objective function weights meta-learning; LAL and LAL - ofw) are included.
		In this experiment, although initially objective function weights meta-learning improves performance, the corresponding final learned losses in conjunction with the final learned weights eventually deteriorate performance. 
	}
	\label{fig:ad:traintest}
\end{figure}

\paragraph*{Meta-testing with SGD.}
For evaluating the performance of the captured learned loss snapshots, we employ them for meta-testing on $5$ ID tasks and compare with standard loss functions from Table~\ref{tab:losses}.
Specifically, we train with SGD for $10{,}000$ iterations with learning rate $0.01$ (same as in meta-training), $5$ tasks using learned and standard loss functions, and record the rl2 error on $10{,}000$ exact solution datapoints. 
As learned losses, we use the ones obtained with SGD as inner optimizer and without objective function weights meta-learning.
The test error histories as well as the OAL parameters during training are shown in Fig.~\ref{fig:ad:test:misc}, and the minimum rl2 error results are shown in Fig.~\ref{fig:ad:test:min:stats}. 
As shown in Fig.~\ref{fig:ad:test:oal:params}, the final learned loss LAL 5 is much different than both OAL 1 and 2, which converge to a robustness parameter value close to 3 for all tasks. 
In Figs.~\ref{fig:ad:test:train:hists}-\ref{fig:ad:test:min:stats}, we see that the loss functions learned with both parametrizations achieve an average minimum rl2 error during $10{,}000$ iterations that is significantly smaller than all the considered losses (even the online adaptive ones) although they have been meta-trained with only $20$ iterations. 
\begin{figure}[H]
	\centering
	\begin{subfigure}[t]{0.48\textwidth}
		\centering
		\includegraphics[width=\textwidth]{./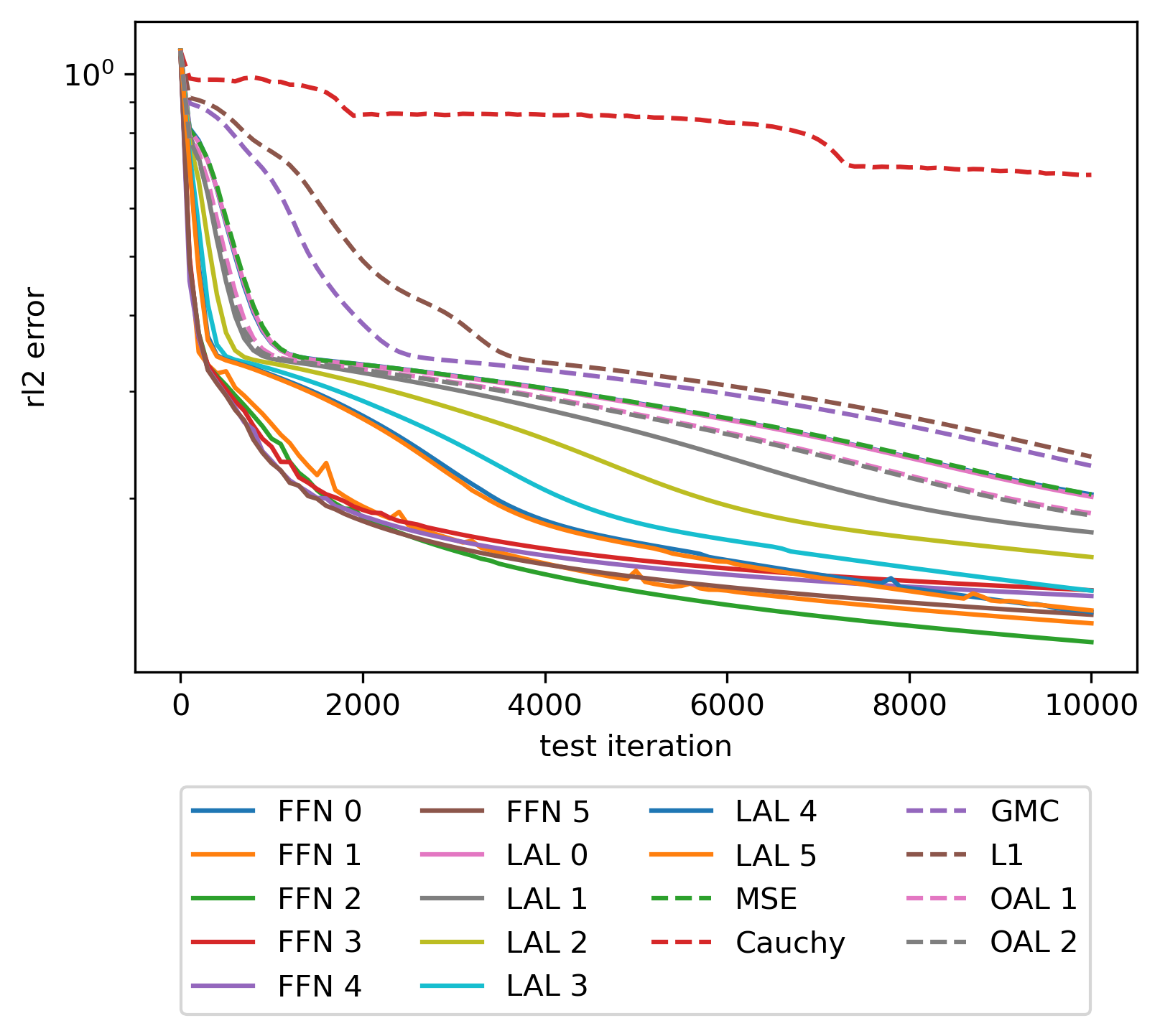}
		\caption{}
		\label{fig:ad:test:train:hists}
	\end{subfigure}
	\hfill
	\begin{subfigure}[t]{0.48\textwidth}
		\centering
		\includegraphics[width=\textwidth]{./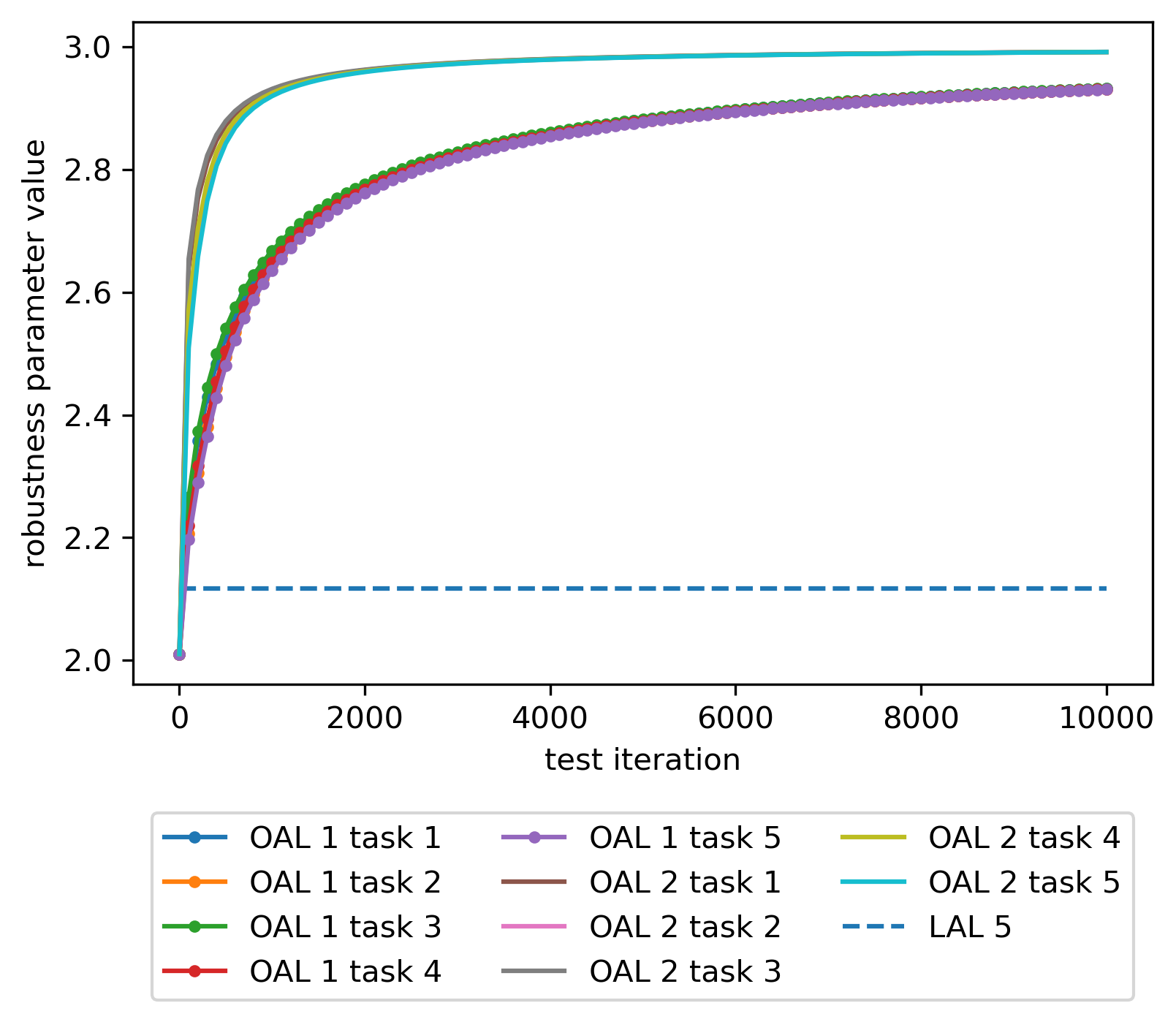}
		\caption{}
		\label{fig:ad:test:oal:params}
	\end{subfigure}
	\caption{Advection equation: In-distribution (ID) meta-testing results.
		Results obtained using SGD with learning rate $0.01$ for $10{,}000$ iterations.
		(a) shows the meta-testing relative $\ell_2$ test error (rl2) trajectories for all loss functions considered.
		(b) shows the robustness parameter trajectories for online adaptive loss functions OAL 1 and OAL 2, with loss-specific learning rates $0.01$ and $0.1$, respectively (see Section~\ref{sec:meta:design:param:LAL}); comparison with final learned loss obtained via meta-training with LAL parametrization also included.
		As shown in (a), learned losses perform better than considered standard and adaptive losses.
		Furthermore, as shown in (b), final learned loss LAL 5 is much different than both OAL 1 and 2, which converge to a robustness parameter value close to 3 for all tasks. 
	}	
	\label{fig:ad:test:misc}
\end{figure}
\begin{figure}[H]
	\centering
	\includegraphics[width=.7\linewidth]{./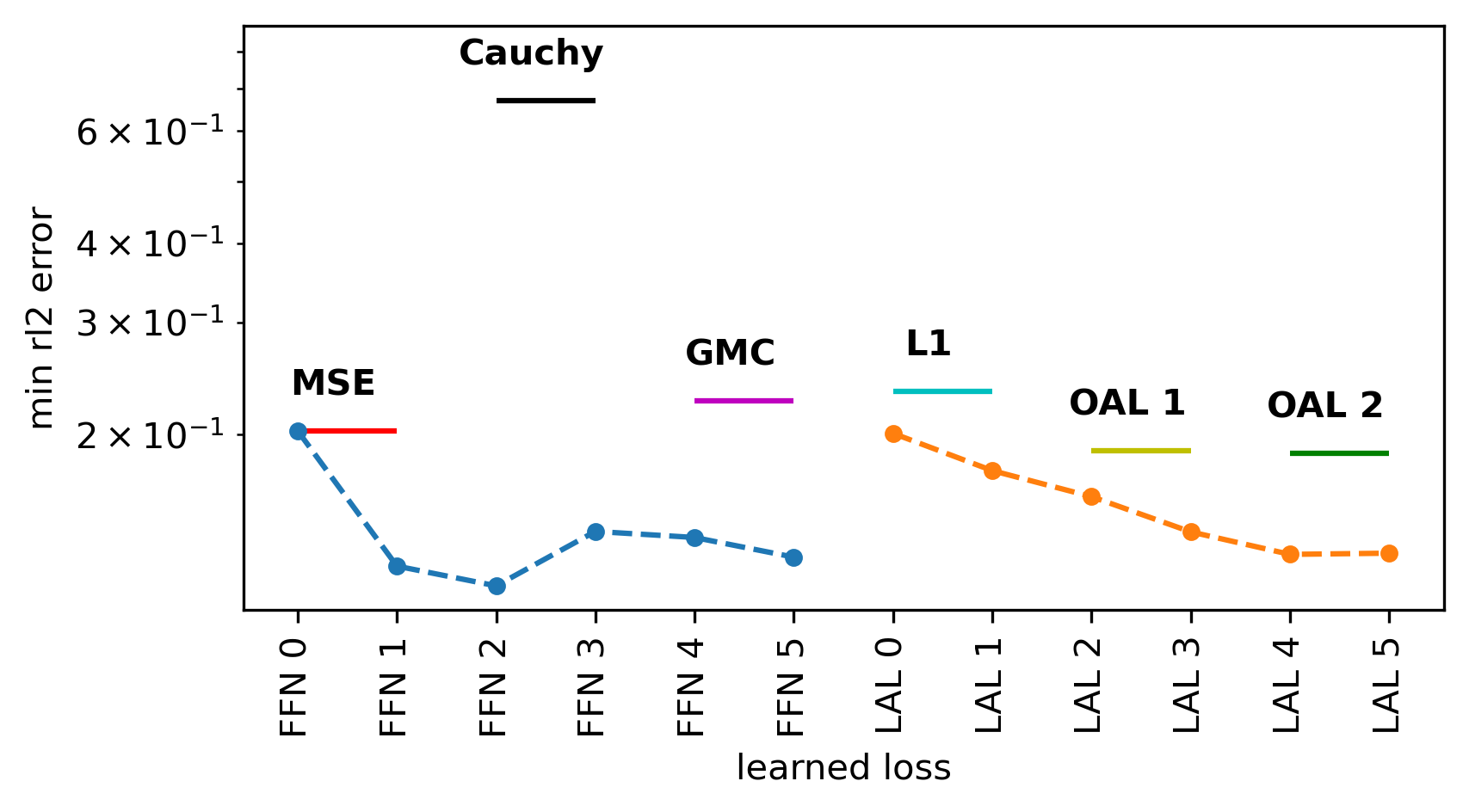}  
	\caption{Advection equation: Minimum relative test $\ell_2$ error (rl2) averaged over $5$ ID tasks during meta-testing with SGD for $10{,}000$ iterations.
		Learned loss snapshots (FFN 0-6 and LAL 0-6, obtained with SGD as inner optimizer) are compared with standard loss functions of Table~\ref{tab:losses} and with online adaptive loss functions OAL 1 and OAL 2 (2 loss-specific learning rates).
		Loss functions learned with both FFN and LAL parametrizations achieve an average minimum rl2 error during $10{,}000$ iterations that is significantly smaller than all the considered losses (even the online adaptive ones), although they have been meta-trained with only $20$ iterations. 
	}
	\label{fig:ad:test:min:stats}
\end{figure}

\paragraph*{Meta-testing with Adam.}
Next, we employ the learned losses obtained using Adam as inner optimizer in the same meta-testing experiment as the one of Figs.~\ref{fig:ad:test:misc}-\ref{fig:ad:test:min:stats}; except for the fact that Adam with learning rate $10^{-3}$ is used in meta-testing instead of SGD.
The minimum rl2 error results are shown in Fig.~\ref{fig:ad:adam:test:min:stats}, where we see that the learned losses do not improve performance as compared to MSE; same results have been observed for the examples of Sections~\ref{sec:examples:rd}-\ref{sec:examples:burgers} but these results are not included in this paper.
One reason for this result is the fact that Adam depends on the whole history of gradients during optimization through an exponentially decaying average that only discards far in the past gradients.
However, our learned losses have been meta-trained with only $20$ inner iterations, and thus it is unlikely that they could have learned this memory property of Adam. 
To illustrate the validity of the above explanation, we perform meta-training with Adam as inner optimizer with varying exponential decay parameters and subsequently meta-testing with the obtained learned loss snapshots (see Fig.~\ref{fig:ad:traintest:Adam:experiment}). 
Specifically, we use values for the pair ($\beta_1, \beta_2$), corresponding to the decay parameters for the first and second moment estimates in Adam (see \cite{kingma2014adam}) in the set $\{(0.5, 0.5), (0.8, 0.8), (0.9, 0.999) = \text{default}, (0.99, 0.9999)\}$ with higher numbers corresponding to higher dependency on the far past.
Note that the decay factors multiplying the $21st$ gradient in the past (i.e., 1 gradient beyond the history used in meta-training) are approximately $10^{-6}$ and $10^{-2}$ for the pairs $(0.5, 0.5)$ and $(0.8, 0.8)$, respectively.   
As expected and shown in Fig.~\ref{fig:ad:traintest:Adam:experiment}, higher $\beta$ values corresponding to higher dependency on the far past yield deteriorating performance of the learned losses.  
In this regard, we use only SGD as inner optimizer in the rest of the computational examples and leave the task of improving the performance of the technique for addressing inner optimizers with memory, such as SGD with momentum, AdaGrad, RMSProp and Adam, as future work.
\begin{figure}[H]
	\centering
	\includegraphics[width=.7\linewidth]{./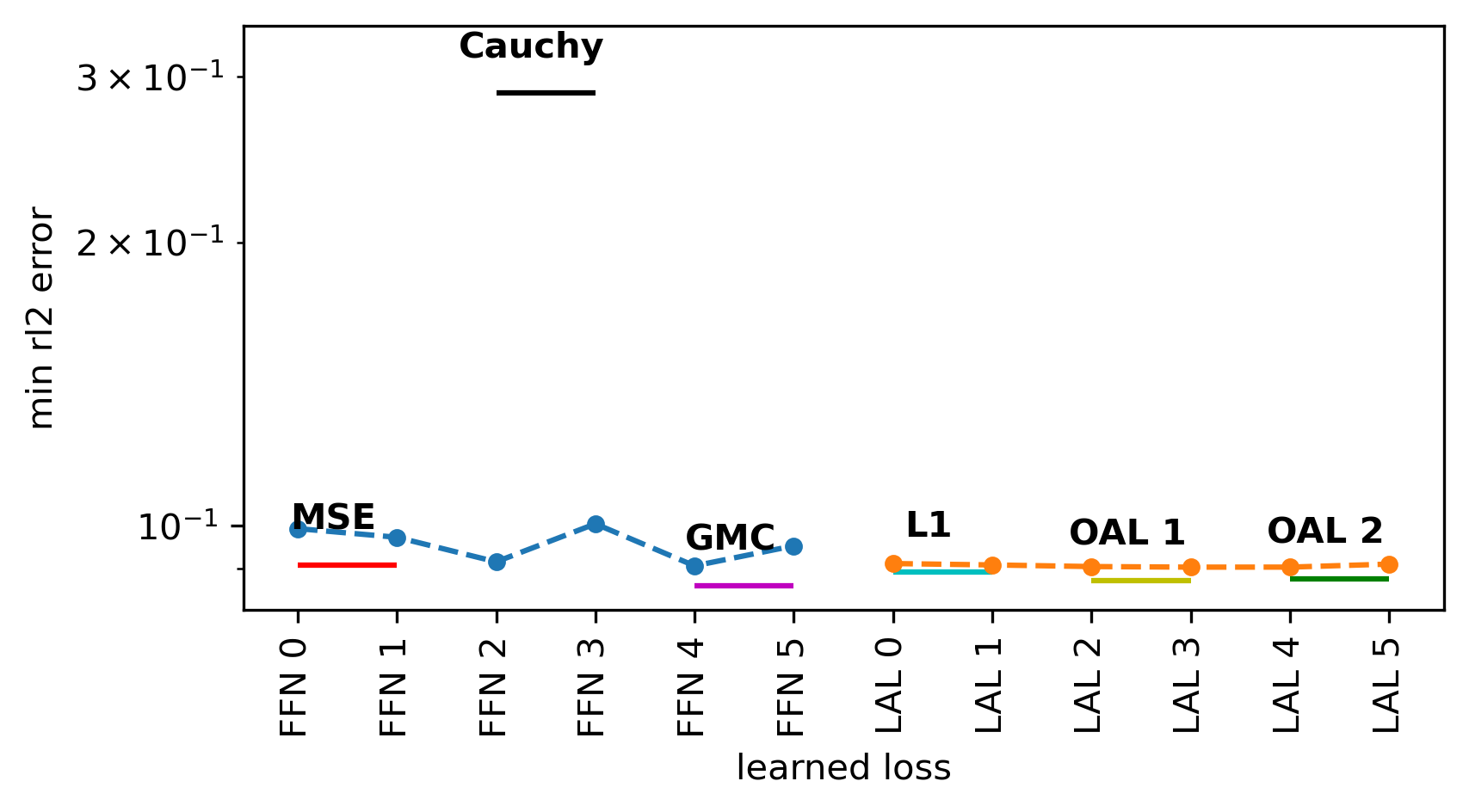}  
	\caption{Advection equation: Minimum relative test $\ell_2$ error (rl2) averaged over $5$ ID tasks during meta-testing with Adam for $10{,}000$ iterations.
		Learned loss snapshots (FFN 0-6 and LAL 0-6, obtained with Adam as inner optimizer) are compared with standard loss functions of Table~\ref{tab:losses} and with online adaptive loss functions OAL 1 and OAL 2 (2 loss-specific learning rates).
		Loss functions learned with both FFN and LAL parametrizations do not improve performance as compared to MSE.
		This is attributed to the fact that our learned losses have been meta-trained with only $20$ inner iterations, whereas Adam depends on the whole history of gradients during optimization through an exponentially decaying average; see Fig.~\ref{fig:ad:traintest:Adam:experiment} for an experiment with various exponential decay parameters. 
	}
	\label{fig:ad:adam:test:min:stats}
\end{figure}
\begin{figure}[H]
	\centering
	\includegraphics[width=.7\linewidth]{./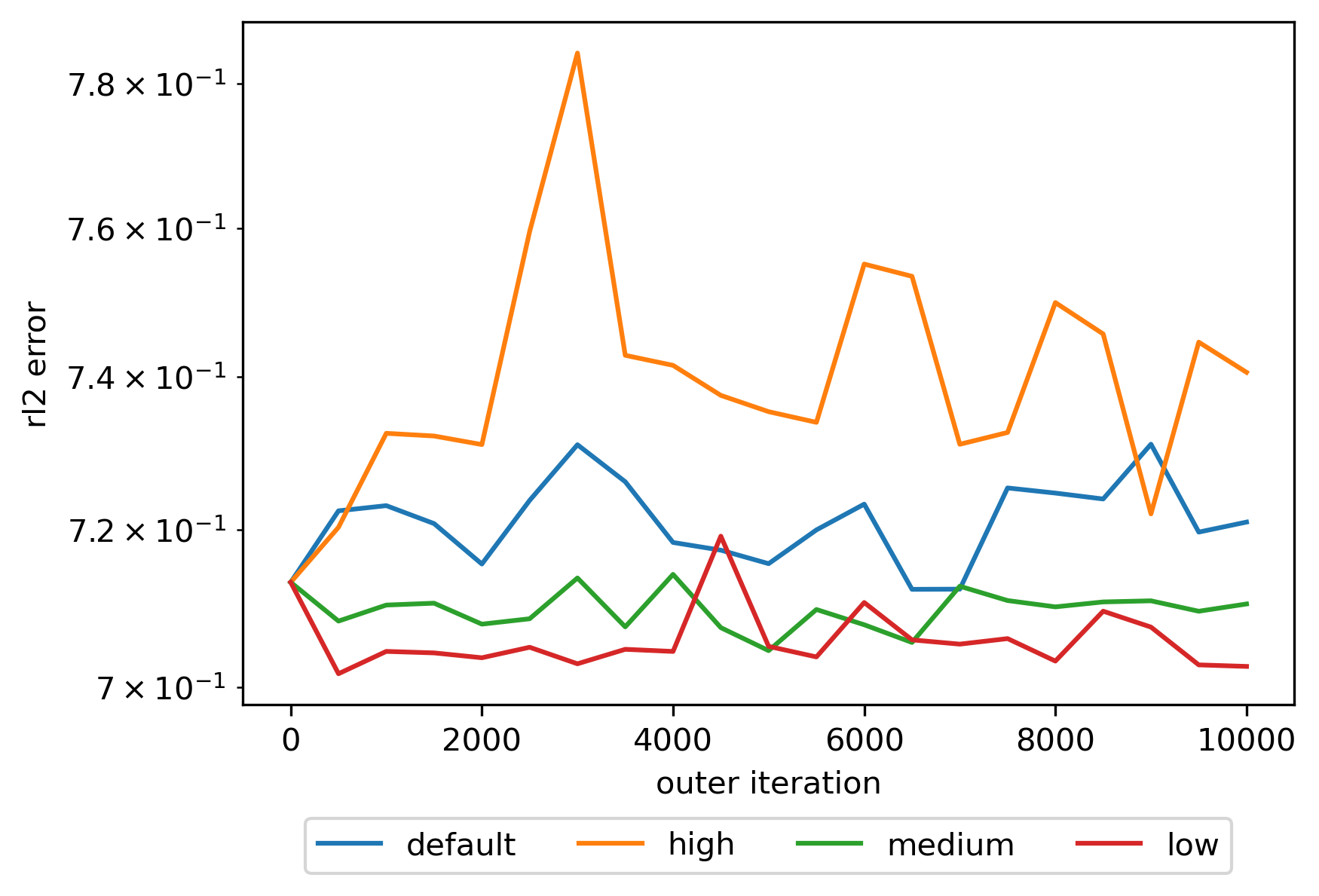}  
	\caption{Advection equation: Meta-testing results (relative $\ell_2$ test error on $10$ unseen task after $100$ iterations) obtained using learned loss snapshots and performed during meta-training (every $500$ outer iterations).
		Results are related to meta-training with FFN parametrization and Adam as inner optimizer with 4 different $\beta$ pair values: 
		$\text{low} = (0.5, 0.5)$, $\text{medium} = (0.8, 0.8)$, $\text{default} = (0.9, 0.999)$, and $\text{high} = (0.99, 0.9999)$; levels low, medium, high indicate degree of optimizer dependence on gradient history older than $20$ iterations, where $20$ is the number of inner iterations used in meta-training.
		Higher $\beta$ values corresponding to higher dependency on the far past yield deteriorating performance of the learned losses.  
	}
	\label{fig:ad:traintest:Adam:experiment}
\end{figure}

\subsection{Task distributions defined based on reaction-diffusion equation with varying source term}
\label{sec:examples:rd}

Reaction-diffusion equations are used to describe diverse systems ranging from population dynamics to chemical reactions and have the general form
\begin{equation}\label{eq:react}
	\partial_tu = D\Delta u + z(x, u, \nabla u),
\end{equation}
where $\Delta$ denotes the Laplace operator and $D$ is called diffusion coefficient.
In Eq.~\eqref{eq:react}, $D\Delta u$ represents the diffusion term whereas $z(x, u, \nabla u)$ the reaction term. 
In the following, without loss of generality, we consider a two-dimensional nonlinear, steady-state version of Eq.~\eqref{eq:react} given as
\begin{equation}\label{eq:react:ex}
	k (\partial^2_{x_1} u + \partial^2_{x_2} u) +  u(1 - u^2) = z,
\end{equation}
where $x_1, x_2 \in [-1, 1]$ refer to space dimensions, $z$ can be interpreted as a source term, and $u$ is considered as known at all boundaries. 

To demonstrate the role of task distributions in the context of varying excitation terms, we consider a family of fabricated solutions $u$ that after differentiation produce a family of $z$ terms in Eq.~\eqref{eq:react:ex}.
As an illustrative example, the task distribution $p(z_{\lambda})$ can be defined by drawing $\lambda = \{\alpha_1, \alpha_2, \omega_1, \omega_2, \omega_3, \omega_4\}$, with $\lambda \sim p(\lambda)$, constructing an analytical solution $u = \alpha_1 \tanh(\omega_1 x_1) \tanh(\omega_2 x_2) + \alpha_2 \sin(\omega_3 x_1) \sin(\omega_4 x_2)$, and constructing $z_{\lambda}$ via Eq.~\eqref{eq:react:ex}.
Obviously, in practice the opposite is true; different excitation terms $z$ pose a novel problem of the form of Eq.~\eqref{eq:react:ex} to be solved. 
Nevertheless, defining $z$ by using fabricated $u$ solutions that are by construction related to each other helps to demonstrate the concepts of this work in a more straightforward manner. 

\paragraph*{Meta-training.}
The task distribution parameters used in meta-training are shown in Table~\ref{tab:rd:params}.
The meta-training design options are the same as in Section~\ref{sec:examples:ad} except for the fact that objective function weights are not meta-learned.
Furthermore, the PINN architecture consists of $3$ hidden layers with $20$ neurons each and $tanh$ activation function.

In Fig.~\ref{fig:rd:comp}, we show the final loss functions (FFN and LAL parametrizations) as obtained with SGD and Adam as inner optimizers.
In addition, in Fig.~\ref{fig:rd:comp} we include the loss functions obtained using the exact solution data in the outer objective instead of the composite PINNs loss of Eq.~\eqref{eq:lossml:outer:loss}. 
In the considered example, this data is available because a fabricated solution is used, whereas in practice this data may be originating from a numerical solver or from measurements.   
This is referred to as \textit{double data (DD)} in the plots because different data is used for the inner and for the outer objective; single data is denoted as SD.
The number of datapoints used for evaluating the outer objective of Eq.~\eqref{eq:lossml:outer:loss} is shown in Table~\ref{tab:rd:outer:data}, whereas the corresponding number for the inner objective of Eq.~\eqref{eq:lossml:inner:loss} is the same as the single-data case in Table~\ref{tab:rd:outer:data}.

\paragraph*{Connection with the theory of Section~\ref{sec:meta:design:properties}.}

Whereas regularization is not required for LAL (see Proposition~\ref{prop:1}), as shown in Fig.~\ref{fig:rd:comp} the loss function corresponding to SGD with FFN and no regularization is shifted to the right; i.e., its first-order derivative at $0$ discrepancy is not zero.
As a result, the \msec\ of Corollary~\ref{corollary:1} is not satisfied and the learned loss leads to divergence in optimization when used for meta-testing.
Thus, we also include in Fig.~\ref{fig:rd:comp} the regularized loss function obtained via meta-training with the theoretically-driven gradient penalty of Eq.~\eqref{eq:lossml:outer:loss:penalty}, which solves this issue; see also Fig.~\ref{fig:rd:snaps:loss:ffn} for the loss function snapshots captured during meta-training for the non-regularized and regularized cases with the FFN parametrization.

\begin{table}[H]
	\centering
	\caption{Reaction-diffusion equation: Task distribution values used in meta-training and OOD meta-testing. 
		Parameters $\alpha_1$, $\alpha_2$ and parameters $\omega_1$, $\omega_2$, $\omega_3$, $\omega_4$ share the same limits $\alpha_{min}$, $\alpha_{max}$ and $\omega_{min}$, $\omega_{max}$, respectively.
	}
	\begin{tabular}{ccccc}
		\toprule
		&$\alpha_{min}$& $\alpha_{max}$& $\omega_{min}$& $\omega_{max}$\\
		\midrule
		meta-training&$0.1$& $1$& $1$& $5$\\
		OOD meta-testing&$0.1$& $2$& $0.5$& $7$\\
		\bottomrule
	\end{tabular}
	\label{tab:rd:params}
\end{table}
\begin{table}[H]
	\centering
	\caption{Reaction-diffusion equation: PDE, BCs, ICs, and solution data considered in the outer objective of Eq.~\eqref{eq:lossml:outer:loss} for single data (outer objective data same as inner objective) and for double data (solution data considered available in the outer objective) in meta-training ($\{N_{f, val}, N_{b, val}, N_{u_0, val}, N_{u, val}\}$), as well as for OOD meta-testing ($\{N_{f}, N_{b}, N_{u_0}, N_{u}\}$).
	}
	\begin{tabular}{c|cccc}
		\toprule
		& $N_{f(, val)}$     & $N_{b(, val)}$     & $N_{u_0(, val)}$ & $N_{u(, val)}$ \\
		single data (SD) meta-training & $1{,}600$ & $160$ & NA & $0$ \\
		double data (DD) meta-training & $0$ & $0$ & NA & $1{,}600$ \\
		OOD meta-testing & $2{,}500$ & $200$ & NA & $0$ \\
		\bottomrule
	\end{tabular}
	\label{tab:rd:outer:data}
\end{table}
\begin{figure}[H]
	\centering
	\begin{subfigure}[t]{0.48\textwidth}
		\centering
		\includegraphics[width=\textwidth]{./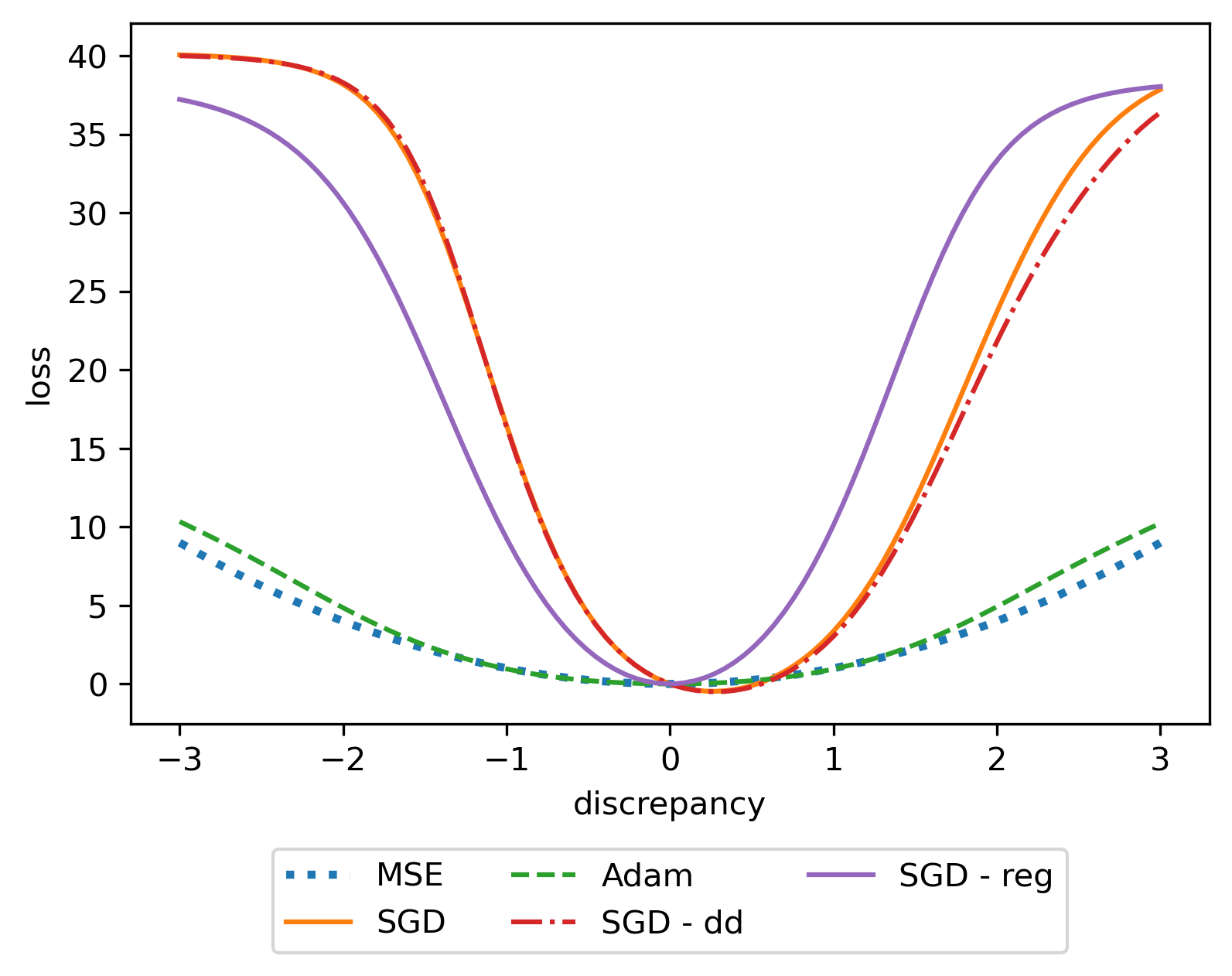}
		\caption{}
		\label{fig:rd:comp:ffn:loss}
	\end{subfigure}
	\hfill
	\begin{subfigure}[t]{0.48\textwidth}
		\centering
		\includegraphics[width=\textwidth]{./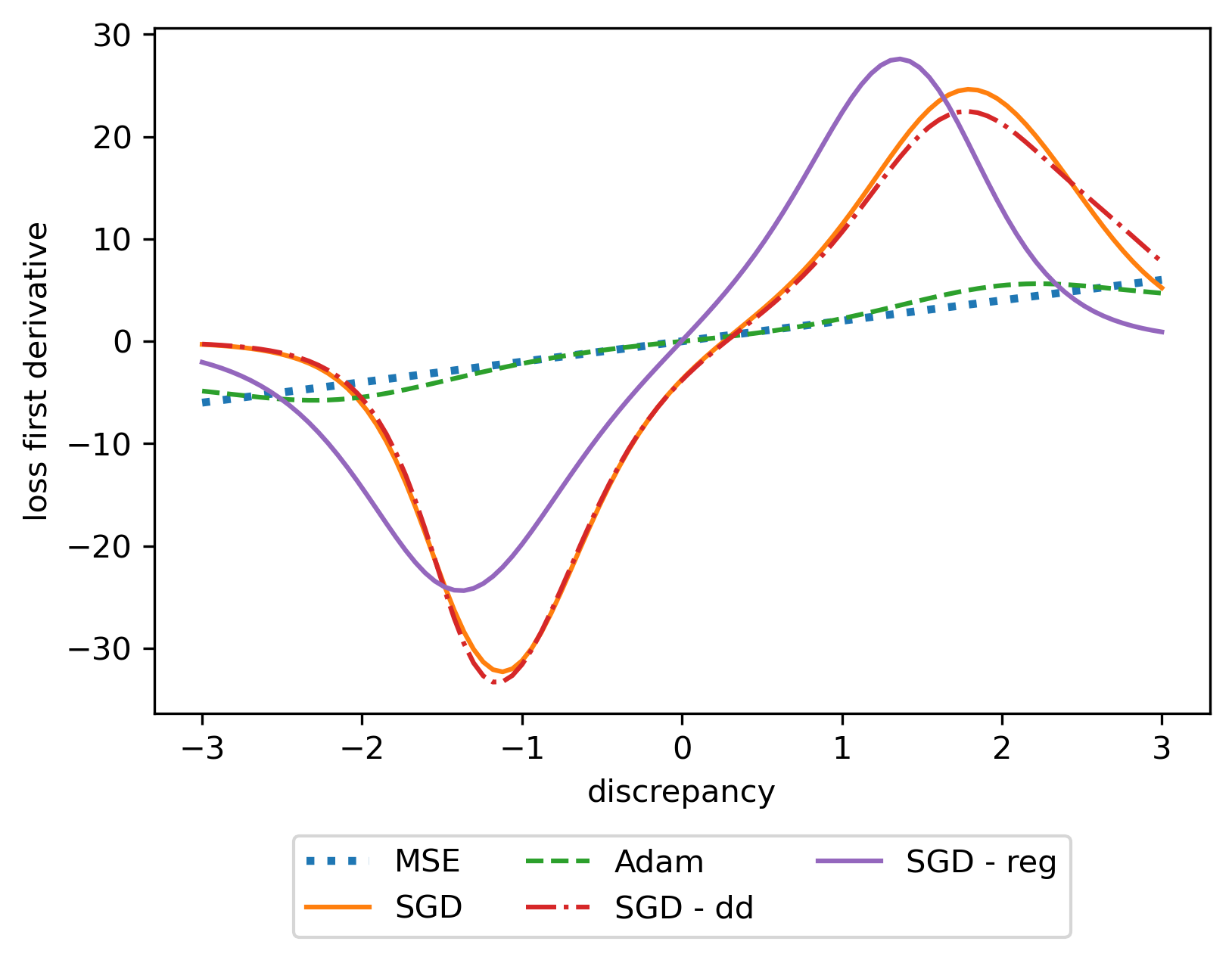}
		\caption{}
		\label{fig:rd:comp:ffn:lossder}
	\end{subfigure}
	\hfill
	\begin{subfigure}[t]{0.48\textwidth}
		\centering
		\includegraphics[width=\textwidth]{./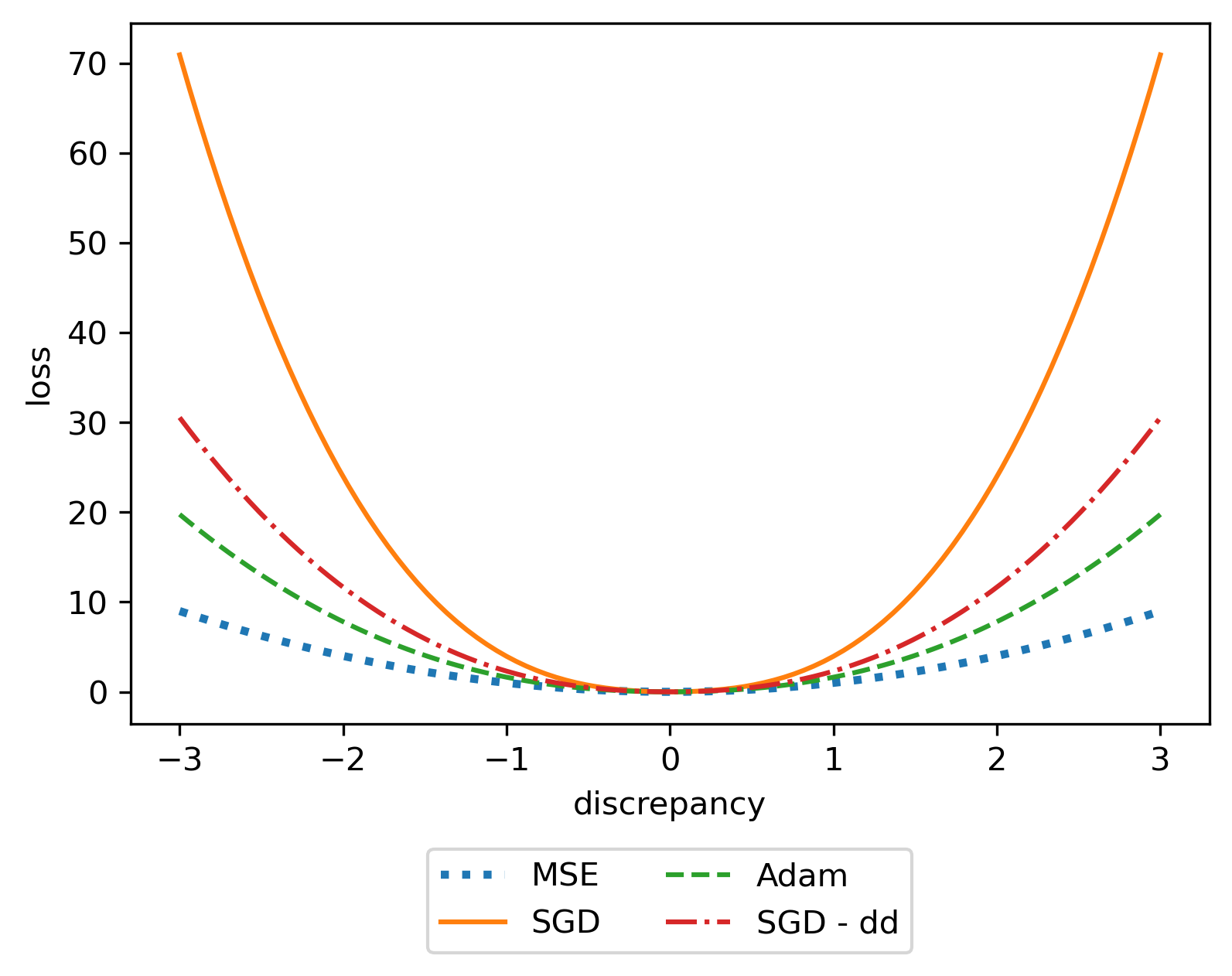}
		\caption{}
		\label{fig:rd:comp:lal:loss}
	\end{subfigure}
	\hfill
	\begin{subfigure}[t]{0.48\textwidth}
		\centering
		\includegraphics[width=\textwidth]{./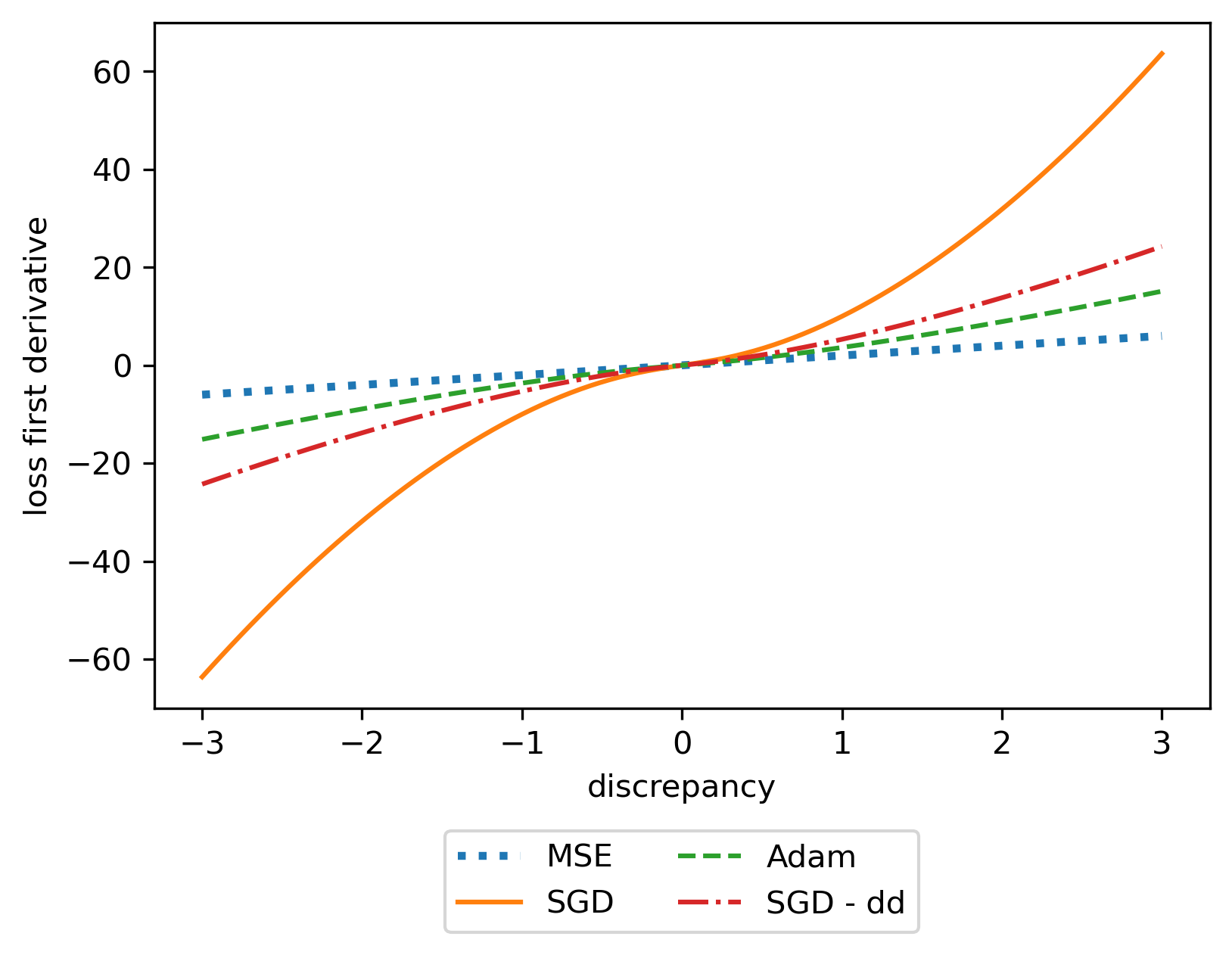}
		\caption{}
		\label{fig:rd:comp:lal:lossder}
	\end{subfigure}
	\caption{Reaction-diffusion equation: Final learned losses (a, c) and corresponding first-order derivatives (b, d), with FFN (a, b) and LAL (c, d) parametrizations. 
		Results obtained via meta-training with SGD as inner optimizer (with single data, without and with regularization, and with double data; SGD, SGD - reg, SGD - dd) and with Adam optimizer; comparisons with MSE are also included.
		Whereas regularization is not required for LAL (Proposition~\ref{prop:1}), the loss function corresponding to SGD with FFN and no regularization is shifted to the right, i.e., its first-order derivative at $0$ discrepancy is not zero.
		Theory-driven regularization as discussed in Section~\ref{sec:meta:design:properties} fixes this issue.
	}
	\label{fig:rd:comp}
\end{figure}
\begin{figure}[H]
	\centering
	\begin{subfigure}[t]{0.48\textwidth}
		\centering
		\includegraphics[width=\textwidth]{./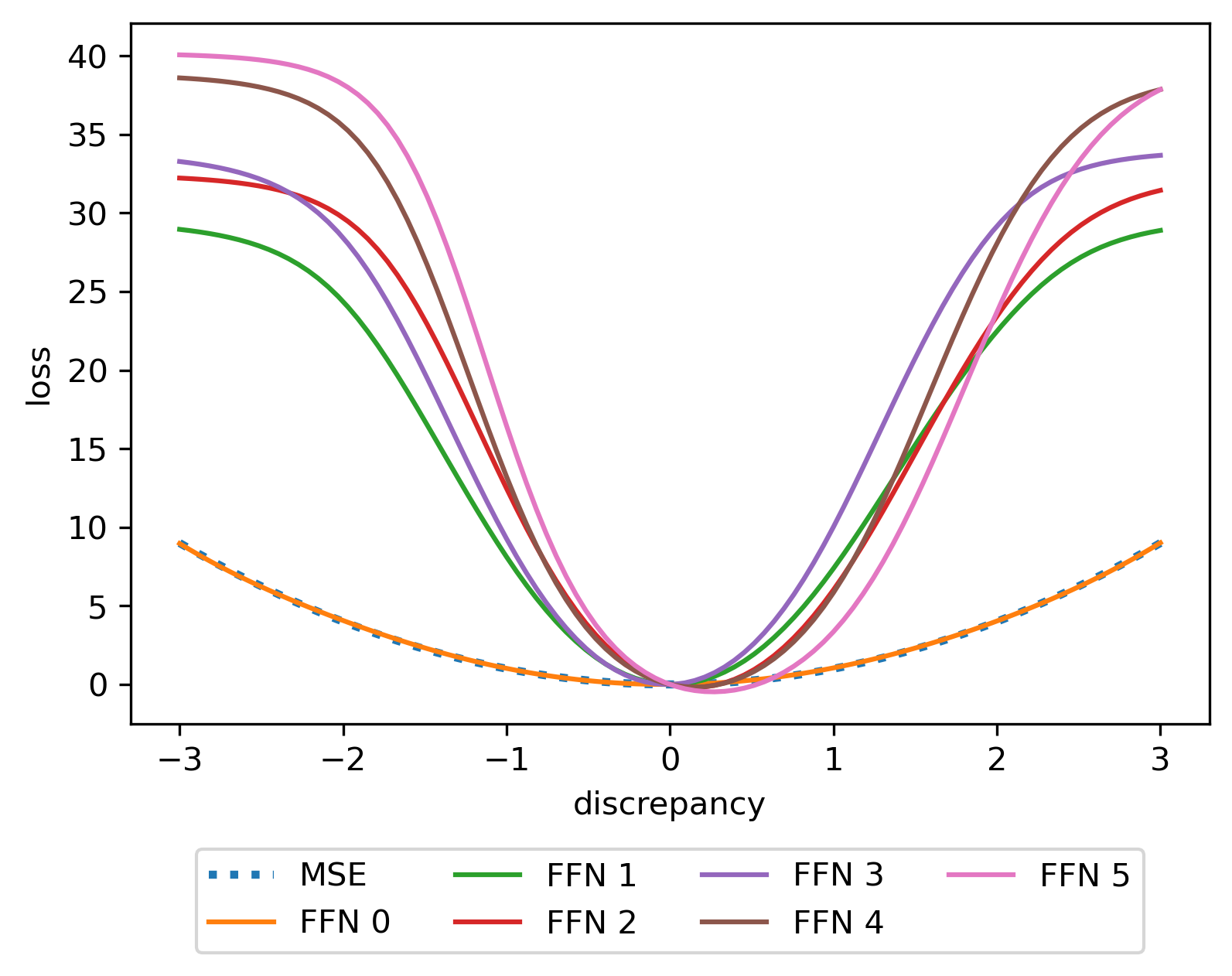}
		\caption{}
		\label{fig:rd:snaps:loss:ffn:unreg}
	\end{subfigure}
	\hfill
	\begin{subfigure}[t]{0.48\textwidth}
		\centering
		\includegraphics[width=\textwidth]{./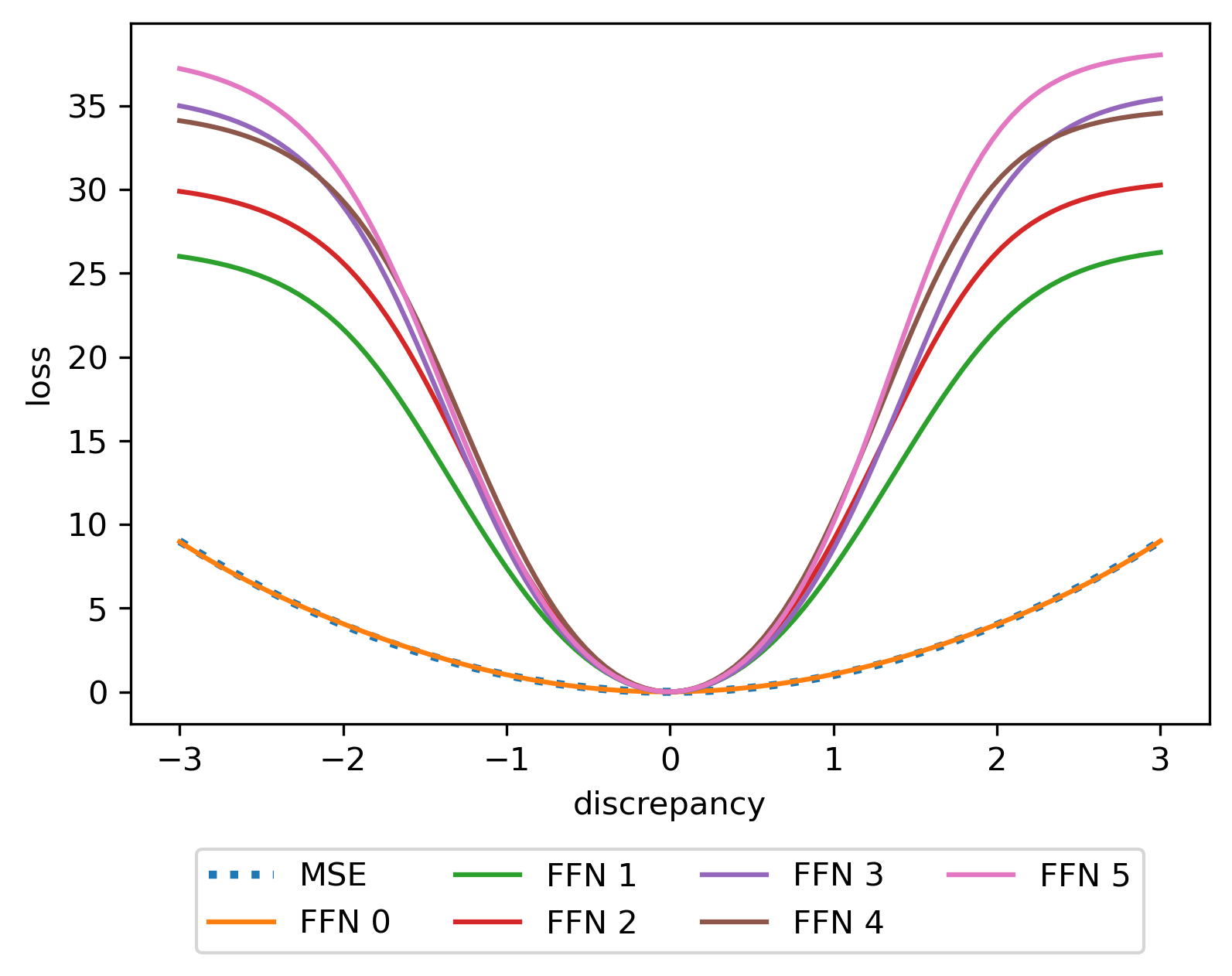}
		\caption{}
		\label{fig:rd:snaps:loss:ffn:reg}
	\end{subfigure}
	\caption{Reaction-diffusion equation: Learned loss snapshots captured during meta-training (distributed evenly in $10{,}000$ outer iterations and 0 corresponds to initialization), without (a) and with regularization (b) via the penalty of Eq.~\eqref{eq:lossml:outer:loss:penalty}, with FFN parametrization.
		Learned losses corresponding to SGD with FFN and no regularization are shifted to the right, i.e., their first-order derivative at $0$ discrepancy is not zero.
		Theory-driven regularization as discussed in Section~\ref{sec:meta:design:properties} fixes this issue.}
	\label{fig:rd:snaps:loss:ffn}
\end{figure}

\paragraph*{Meta-testing.}
For evaluating the performance of the captured learned loss snapshots, we employ them for meta-testing on $5$ OOD tasks and compare with standard loss functions from Table~\ref{tab:losses}.
Specifically, we train with SGD for $20{,}000$ iterations with learning rate $0.01$ (same as in meta-training) $5$ tasks using learned and standard loss functions and record the rl2 error on $2{,}500$ exact solution datapoints. 
The OOD test tasks are drawn based on the parameter limits shown in Table~\ref{tab:rd:params}. 
In addition, to increase the difficulty of OOD meta-testing, we also draw random architectures to be used in meta-testing that are different than the meta-training architecture; the number of hidden layers is drawn from $\mathcal{U}_{[2, 5]}$ and the number of neurons in each layer from $\mathcal{U}_{[15, 55]}$.
A learned loss that performs equally well for architectures not used in meta-training is desirable if, for example, we seek to optimize the PINN architecture with a fixed learned loss.

The test error histories as well as the OAL parameters during meta-training are shown in Fig.~\ref{fig:rd:test:misc}, and the minimum rl2 error results are shown in Fig.~\ref{fig:rd:test:min:stats}. 
As shown in Fig.~\ref{fig:rd:test:oal:params}, the online adaptive losses OAL converge to different loss functions for each task, whereas LAL provides a shared learned loss for all tasks. 
In Fig.~\ref{fig:rd:test:min:stats} we see that the loss functions learned with the FFN parametrization achieve an average minimum rl2 error during $20{,}000$ iterations that is significantly smaller than most considered standard losses although they have been meta-trained with only $20$ iterations, with a different PINN architecture and on a different task distribution. 
On the other hand, LAL does not generalize well.
This is attributed to the fact that, for this example, potentially a task-specific loss would be more appropriate (as suggested by Fig.~\ref{fig:rd:test:oal:params}), and the LAL parametrization is not flexible enough to provide a shared loss function that performs well across tasks (as opposed to FFN).
\begin{figure}[H]
	\centering
	\begin{subfigure}[t]{0.48\textwidth}
		\centering
		\includegraphics[width=\textwidth]{./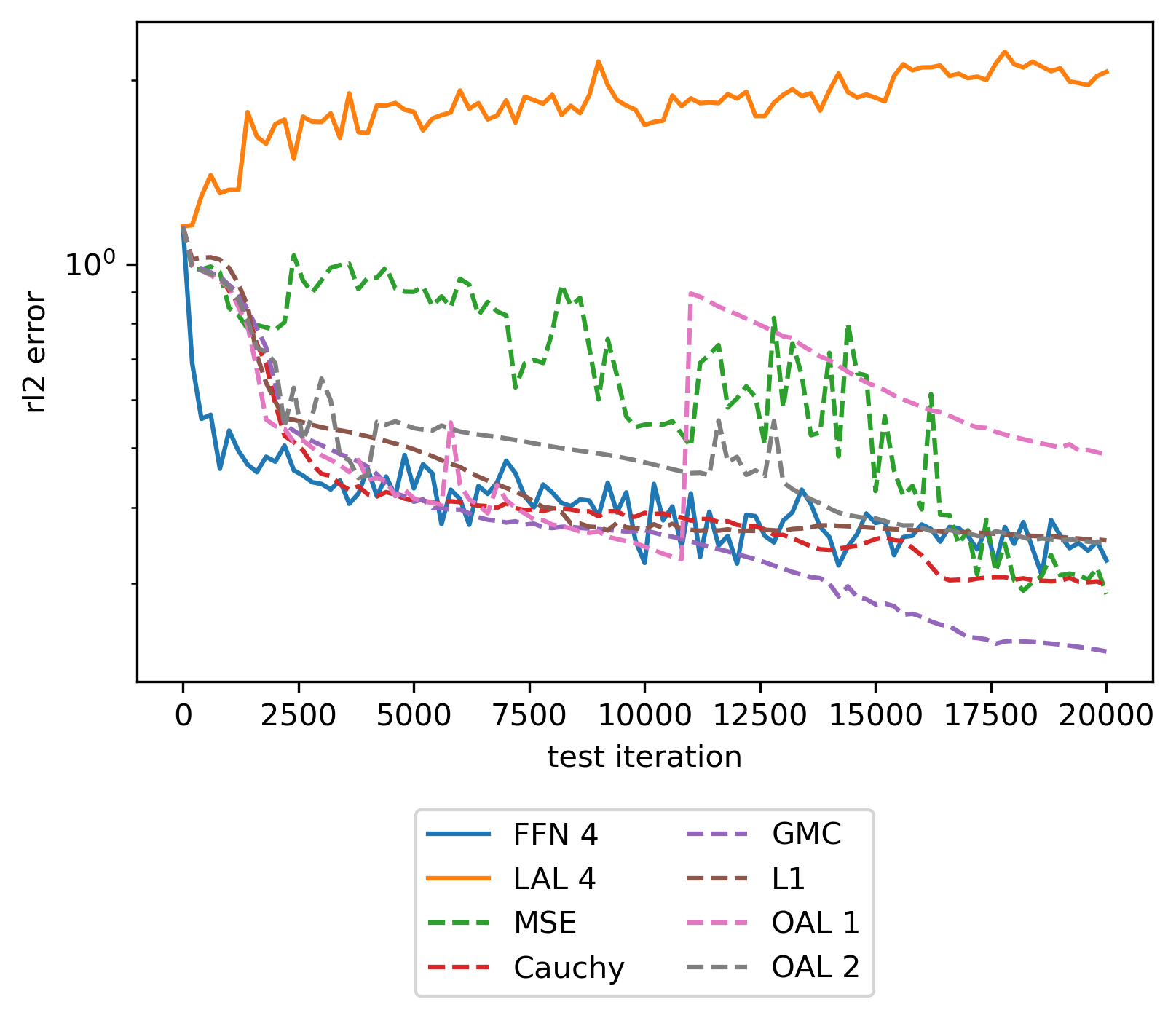}
		\caption{}
		\label{fig:rd:test:train:hists}
	\end{subfigure}
	\hfill
	\begin{subfigure}[t]{0.48\textwidth}
		\centering
		\includegraphics[width=\textwidth]{./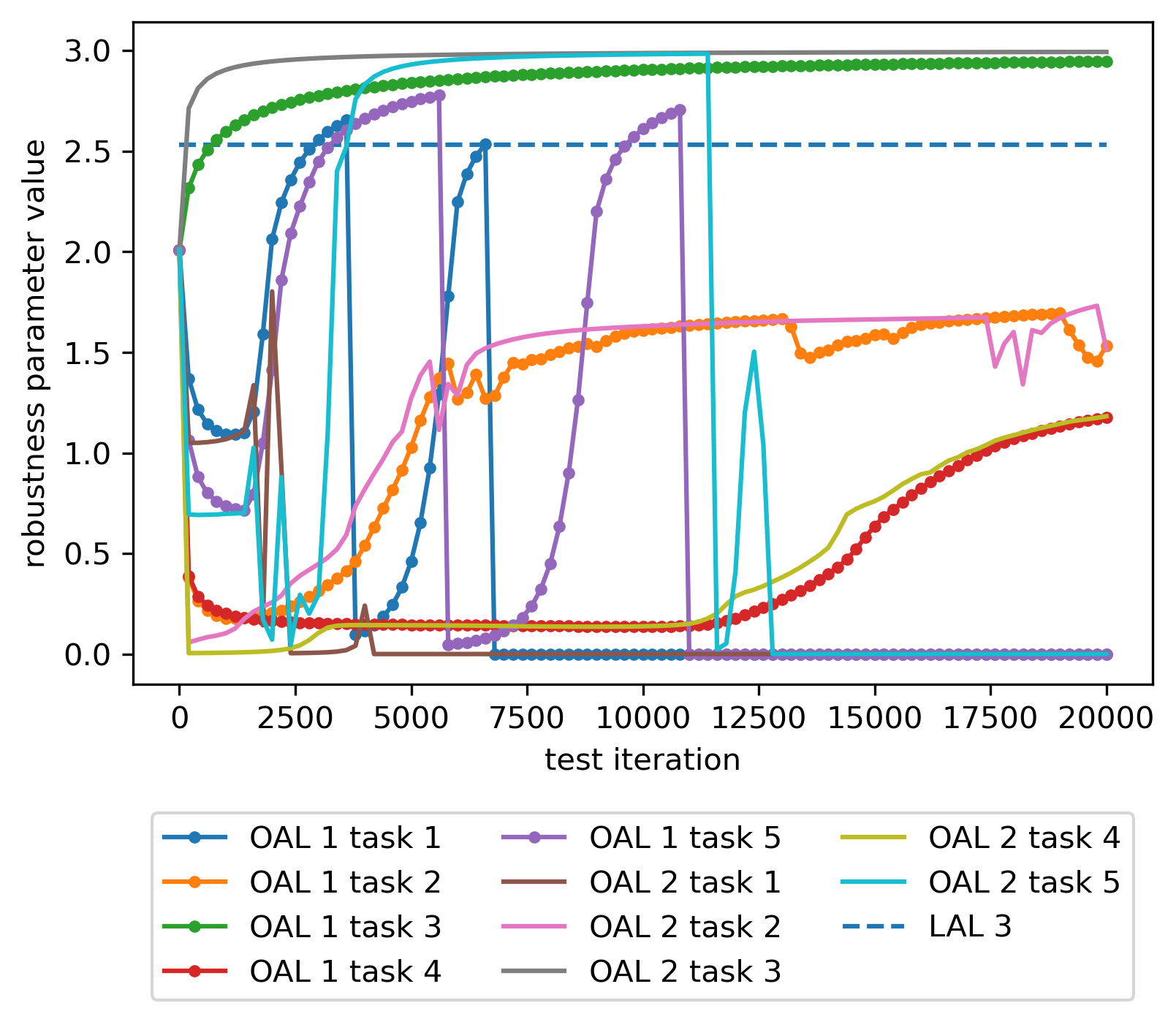}
		\caption{}
		\label{fig:rd:test:oal:params}
	\end{subfigure}
	\caption{Reaction-diffusion equation: Out-of-distribution meta-testing results obtained using SGD with learning rate $0.01$ for $20{,}000$ iterations.
		(a) shows the meta-testing relative $\ell_2$ test error (rl2) trajectories for all standard loss functions considered and for selected learned loss snapshots captured during meta-training.
		(b) shows the robustness parameter trajectories for online adaptive loss functions OAL 1 and OAL 2, with loss-specific learning rates $0.01$ and $0.1$, respectively (see Section~\ref{sec:meta:design:param:LAL}); comparison with LAL 3 also included.
		As shown in (a), the FFN learned loss performs better than most considered standard and adaptive losses, whereas LAL does not generalize well.
		Furthermore, as shown in (b), the online adaptive losses OAL converge to different loss functions for each task, whereas LAL provides a shared learned loss for all tasks. 
	}
	\label{fig:rd:test:misc}
\end{figure}
\begin{figure}[H]
	\centering
	\includegraphics[width=.7\linewidth]{./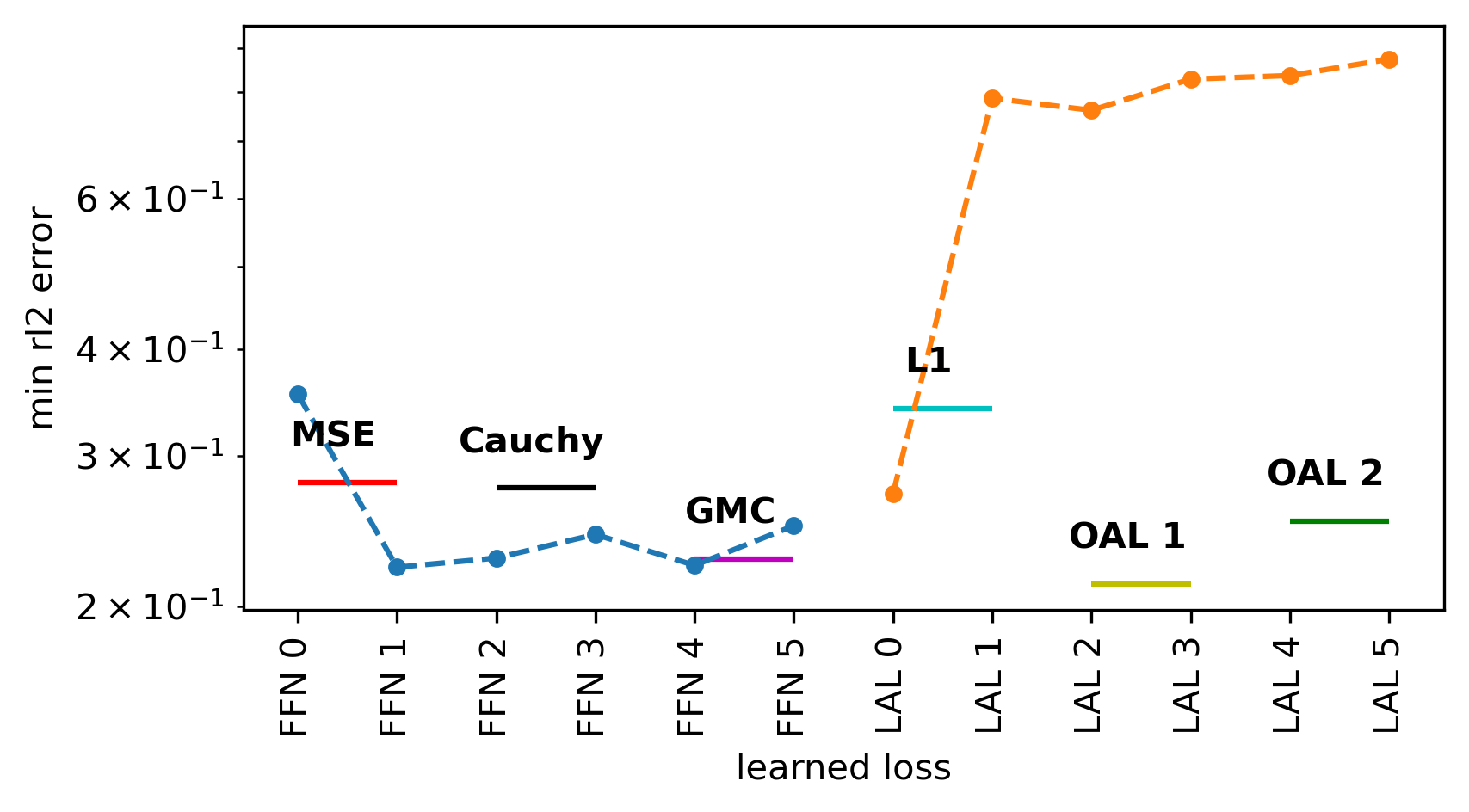}  
	\caption{Reaction-diffusion equation: Minimum relative test $\ell_2$ error (rl2) averaged over $5$ OOD tasks during meta-testing with SGD for $20{,}000$ iterations.
		Learned loss snapshots (FFN 0-6 and LAL 0-6) are compared with standard loss functions of Table~\ref{tab:losses} and with online adaptive loss functions OAL 1 and OAL 2 (2 loss-specific learning rates).
		Although FFN learned losses perform better than most standard losses, LAL does not generalize well.
		This is attributed to the fact that, for this example, potentially a task-specific loss would be more appropriate (as suggested by Fig.~\ref{fig:rd:test:oal:params}), and the LAL parametrization is not flexible enough to provide a shared loss function that performs well across tasks (as opposed to FFN).
	}
	\label{fig:rd:test:min:stats}
\end{figure}

\subsection{Task distributions defined based on Burgers equation with varying viscocity} 
\label{sec:examples:burgers}

Finally, we consider the Burgers equation defined by
\begin{align}\label{eq:burgers}
	&\partial_t u + u \partial_x u = \lambda \partial^2_x u, ~ x \in [-1, 1], t \in [0, 1],\\
	&u(x, 0) = -\sin(\pi x), u(-1, t) = u(1, t) = 0,
\end{align}
where $u$ denotes the flow velocity and $\lambda$ the viscosity of the fluid. 
From a function approximation point of view, solutions corresponding to values of $\lambda$ very close to zero (e.g., $\lambda \approx 10^{-3}$), are expected to have some common characteristics such as steep gradients after some time $t$. 
This fact justifies defining a task distribution comprised of PDEs of the form of Eq.~\eqref{eq:burgers} with, indicatively, $\lambda < 2\times 10^{-3}$ and with the same ICs/BCs.  
A similar explanation can be given for smoother solutions corresponding to values of $\lambda > 10^{-1}$, for example.  

\paragraph*{Meta-training.}
For meta-training, we use two regimes for $\lambda$ as shown in Table~\ref{tab:burgers:params}.
The design options are the same as in Section~\ref{sec:examples:ad} except for the fact that only SGD is used in this example as an inner optimizer candidate.
The number of datapoints used for evaluating the outer objective of Eq.~\eqref{eq:lossml:outer:loss} is shown in Table~\ref{tab:burgers:outer:data}, whereas the corresponding number for the inner objective of Eq.~\eqref{eq:lossml:inner:loss} is the same as the single-data case in Table~\ref{tab:burgers:outer:data}.
Furthermore, the PINN architecture consists of $3$ hidden layers with $20$ neurons each and $tanh$ activation function.

In Fig.~\ref{fig:burgers:comp} we show the final loss functions (FFN and LAL parametrizations) as obtained with SGD as inner optimizer with single data with and without objective function weights meta-learning, and with double data.
The corresponding objective function weights trajectories are shown in Fig.~\ref{fig:burgers:params}.
The learned losses with single data have steeper derivatives because they lack the objective function weights, which are shown in Fig.~\ref{fig:burgers:params} to be greater than 1.
Furthermore, for single data the loss functions obtained for the two regimes are slightly different when objective function weights are not meta-learned but almost identical when they are. 
This means that the meta-learning algorithm compensates for the difference in the two regimes by yielding the same learned loss with different balancing of the PDE, BCs, and ICs terms in the PINNs objective function.
On the other hand, when solution data is used in the outer objective (available via the analytical solution), the obtained loss functions are highly different; see also Fig.~\ref{fig:burgers:snaps:loss:ffn} for the loss function snapshots captured during meta-training for both regimes and for the single and double data cases with the FFN parametrization.

\paragraph*{Connection with theory Section~\ref{sec:meta:design:properties}.}

Finally, see Fig.~\ref{fig:burgers:trainloss:ffn} for the outer objective trajectories for both single and double data training with the FFN parametrization, as well as Fig.~\ref{fig:burgers:traintest} for the the corresponding test performance on $5$ unseen tasks while meta-training for $20$ iterations.
It is shown in Fig.~\ref{fig:burgers:trainloss:ffn:dd:r1} that the outer objective drops significantly during training for double data and regime 1; this can be construed as \textit{meta-training error}.
Furthermore, for the same case it is shown in Fig.~\ref{fig:burgers:traintest:dd} that rl2 drops significantly during training; this can be construed as \textit{meta-validation error}.
However, the learned loss obtained during this training with no regularization does not satisfy \nc~of Section~\ref{sec:meta:design:properties}; see Figs.~\ref{fig:burgers:comp}, \ref{fig:burgers:snaps:loss:ffn} and notice that the stationary point at 0 is not a global minimum.
In line with our theoretical results of Section~\ref{sec:meta:design:properties}, these learned losses have also been found in our experiments to lead to divergence if used for full meta-testing ($20{,}000$ iterations), i.e., they do not generalize well although meta-training and meta-validation performance is satisfactory.
Finally, we also include in Fig.~\ref{fig:rd:snaps:loss:ffn} the regularized loss functions obtained via meta-training with the theoretically-driven gradient penalty of Eq.~\eqref{fig:burgers:final:ffn:dd}, which solves this issue with the FFN parametrization and double data meta-training.

\begin{table}[H]
	\centering
	\caption{Burgers equation: Task distribution values used in meta-training and OOD meta-testing for both task regimes r1 and r2. 
	}
	\begin{tabular}{ccccc}
		\toprule
		&r1 $\lambda_{min}$& r1 $\lambda_{max}$& r2 $\lambda_{min}$& r2 $\lambda_{max}$\\
		\midrule
		meta-training&$10^{-3}$& $2\times 10^{-3}$& $10^{-1}$& $1$\\
		OOD meta-testing&$10^{-3}$& $10^{-2}$& $10^{-2}$& $2$\\
		\bottomrule
	\end{tabular}
	\label{tab:burgers:params}
\end{table}
\begin{table}[H]
	\centering
	\caption{Burgers equation: PDE, BCs, ICs, and solution data considered in the outer objective of Eq.~\eqref{eq:lossml:outer:loss} for single data (outer objective data same as inner objective) and for double data (solution data considered available in the outer objective) in meta-training ($\{N_{f, val}, N_{b, val}, N_{u_0, val}, N_{u, val}\}$), as well as for OOD meta-testing ($\{N_{f}, N_{b}, N_{u_0}, N_{u}\}$).
	}
	\begin{tabular}{c|cccc}
		\toprule
		& $N_{f(, val)}$     & $N_{b(, val)}$     & $N_{u_0(, val)}$ & $N_{u(, val)}$ \\
		single data (SD) meta-training & $1{,}000$ & $200$ & $100$ & $0$ \\
		double data (DD) meta-training & $0$ & $0$ & $0$ & $10{,}000$ \\
		OOD meta-testing & $2{,}000$ & $200$ & $100$ & $0$ \\
		\bottomrule
	\end{tabular}
	\label{tab:burgers:outer:data}
\end{table}
\begin{figure}[H]
	\centering
	\begin{subfigure}[t]{0.48\textwidth}
		\centering
		\includegraphics[width=\textwidth]{./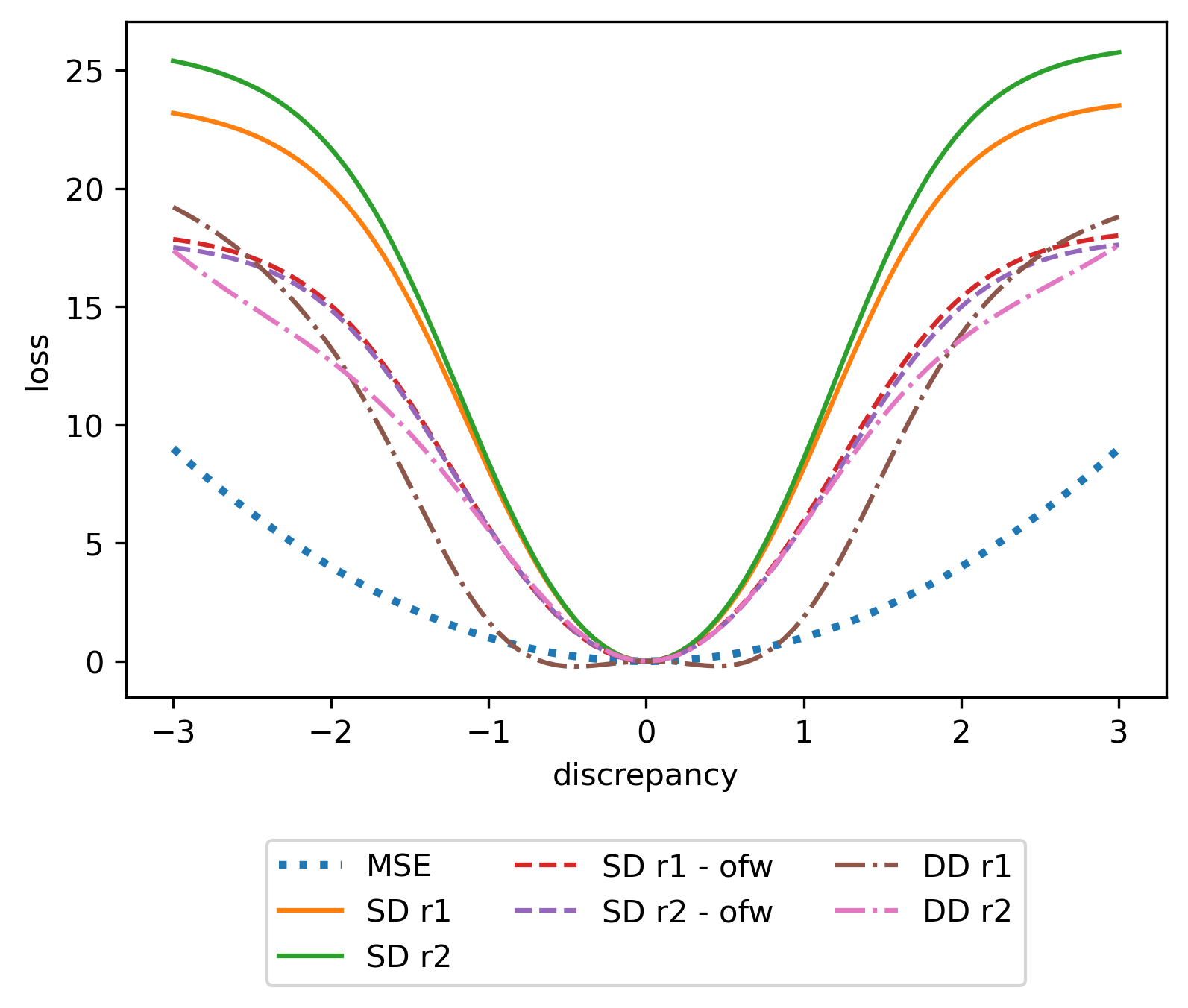}
		\caption{}
		\label{fig:burgers:comp:ffn:loss}
	\end{subfigure}
	\hfill
	\begin{subfigure}[t]{0.48\textwidth}
		\centering
		\includegraphics[width=\textwidth]{./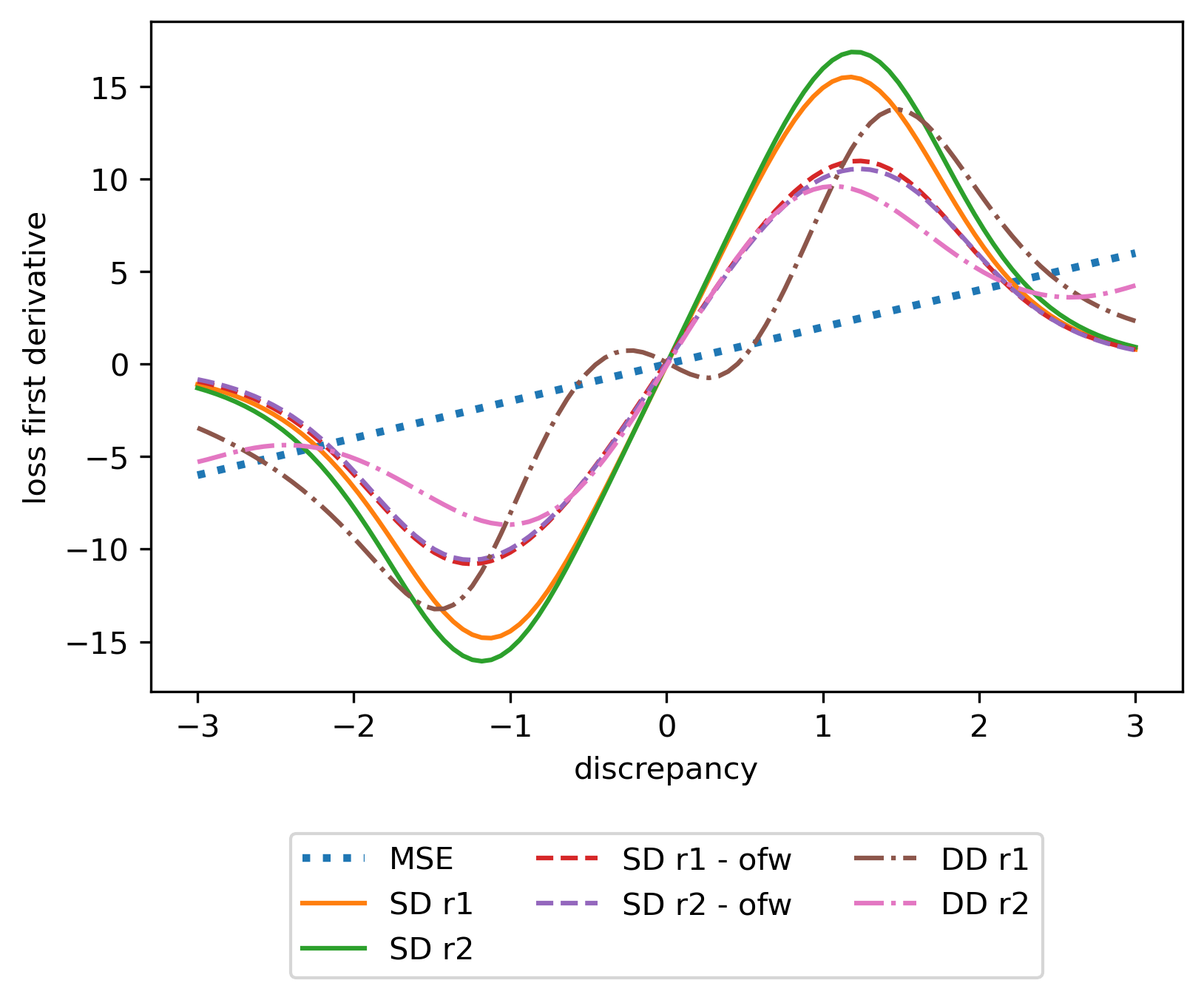}
		\caption{}
		\label{fig:burgers:comp:ffn:lossder}
	\end{subfigure}
	\hfill
	\centering
	\begin{subfigure}[t]{0.48\textwidth}
		\centering
		\includegraphics[width=\textwidth]{./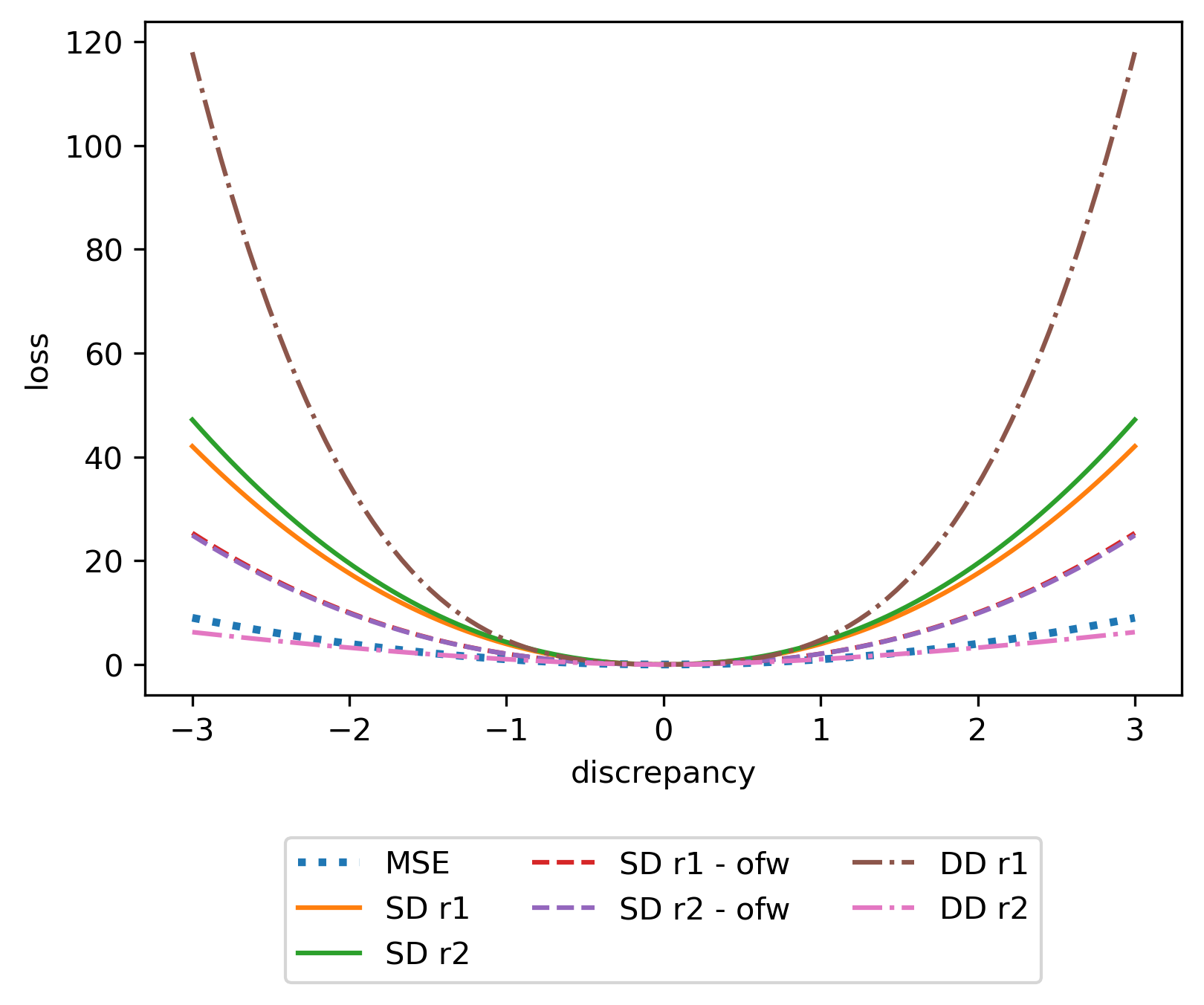}
		\caption{}
		\label{fig:burgers:comp:lal:loss}
	\end{subfigure}
	\hfill
	\begin{subfigure}[t]{0.48\textwidth}
		\centering
		\includegraphics[width=\textwidth]{./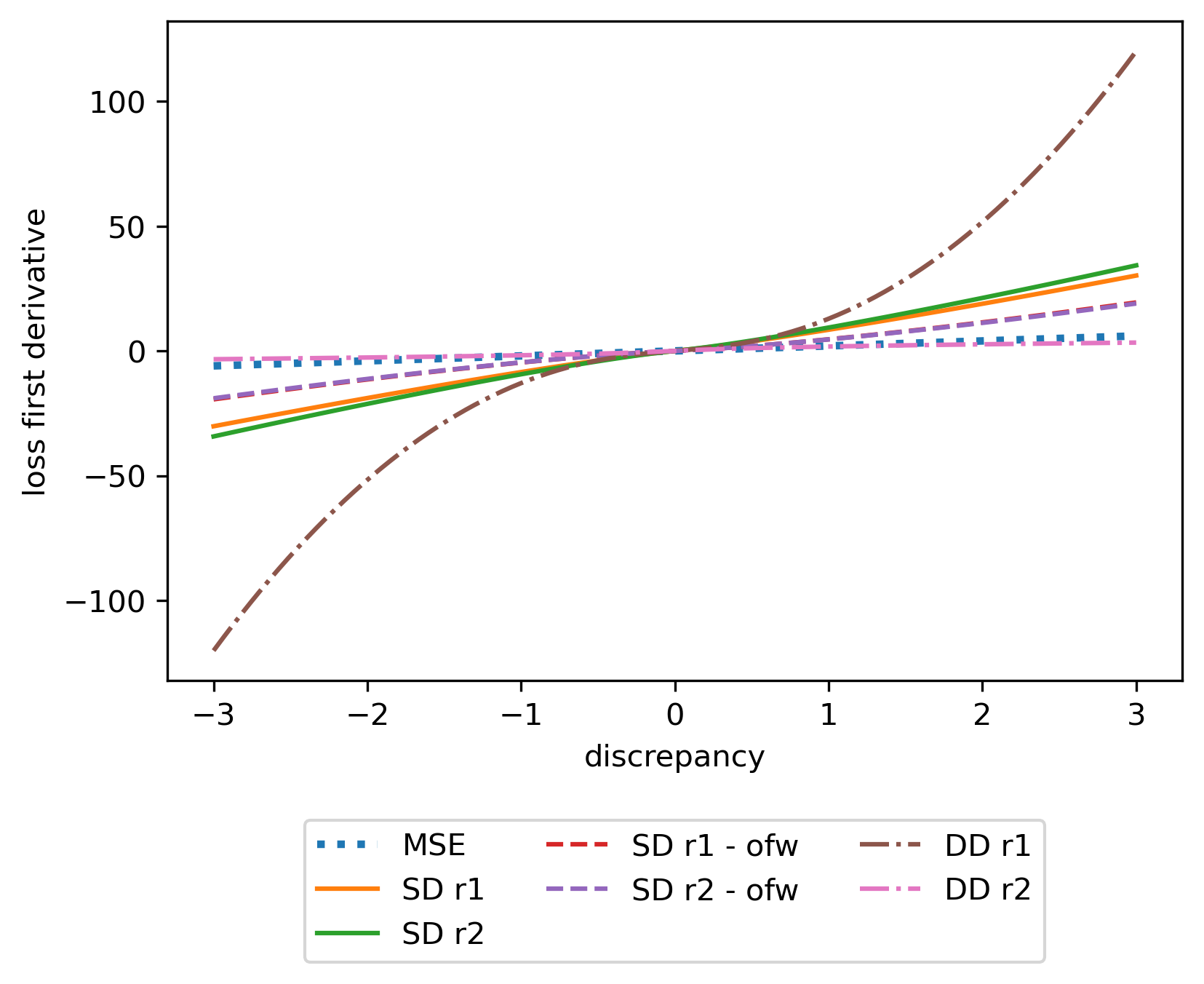}
		\caption{}
		\label{fig:burgers:comp:lal:lossder}
	\end{subfigure}
	\caption{Burgers equation: Final learned losses (a, c) and corresponding first-order derivatives (b, d), with FFN (a, b) and LAL (c, d) parametrizations for task regimes $1$ and $2$ (r1 and r2). 
		Results obtained via meta-training with SGD as inner optimizer (with single data, with and without objective function weights meta-learning, and with double data; SD r1-r2, SD r1-r2 - ofw, DD r1-r2); comparisons with MSE are also included.
		Whereas regularization is not required for LAL (Proposition~\ref{prop:1}), the final learned loss with FFN parametrization and double data (DD r1) does not satisfy \nc~(notice that the stationary point at 0 is not a global minimum).
		Theory-driven regularization as discussed in Section~\ref{sec:meta:design:properties} fixes this issue; see Fig.~\ref{fig:burgers:final:ffn:dd}.
		Furthermore, learned losses with single data (SD r1-r2) have steeper derivatives because they lack the objective function weights, which are shown in Fig.~\ref{fig:burgers:params} to be greater than 1.
	}
	\label{fig:burgers:comp}
\end{figure}
\begin{figure}[H]
	\centering
	\begin{subfigure}[t]{0.48\textwidth}
		\centering
		\includegraphics[width=\textwidth]{./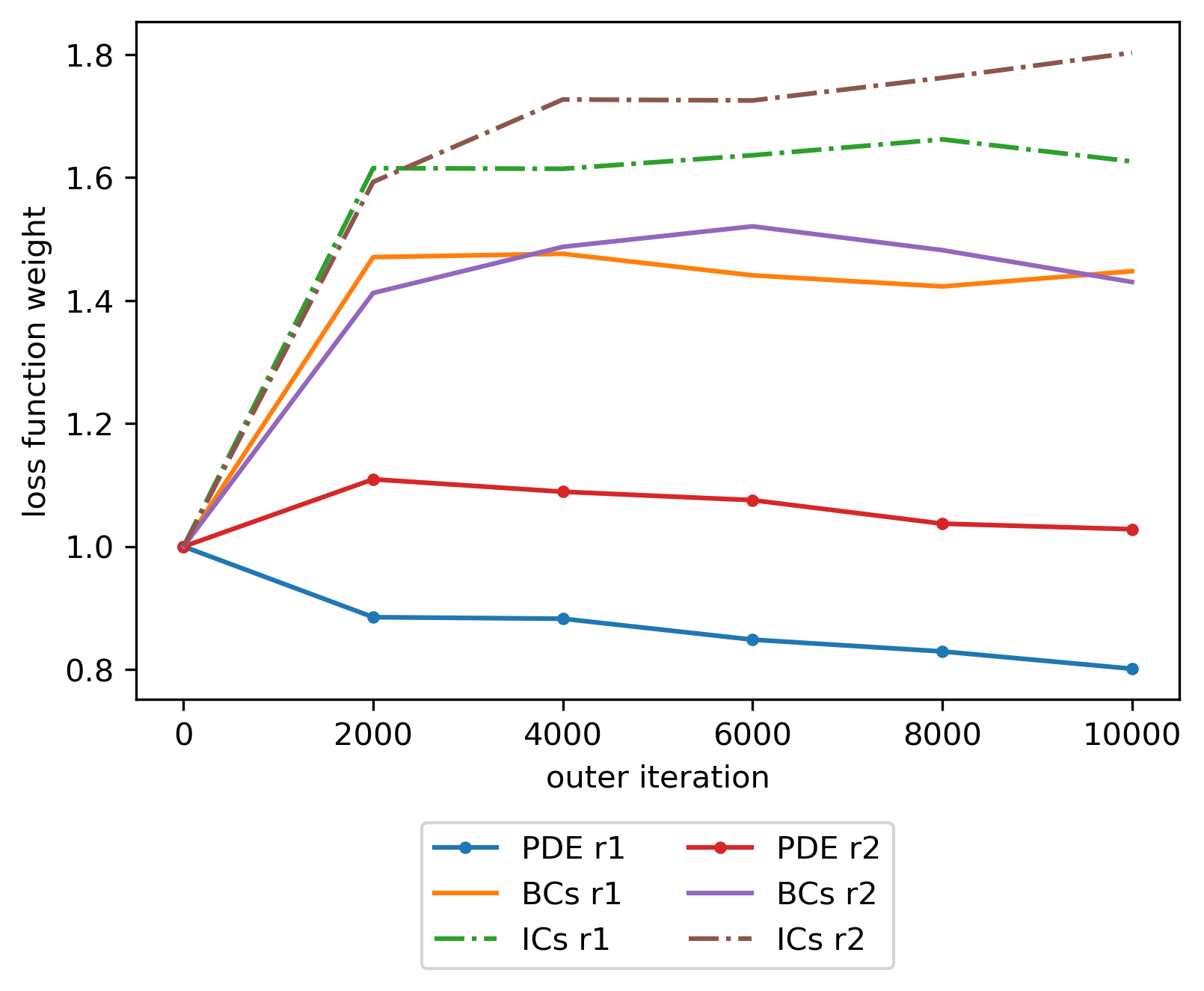}
		\caption{}
		\label{fig:burgers:params:ffn}
	\end{subfigure}
	\hfill
	\begin{subfigure}[t]{0.48\textwidth}
		\centering
		\includegraphics[width=\textwidth]{./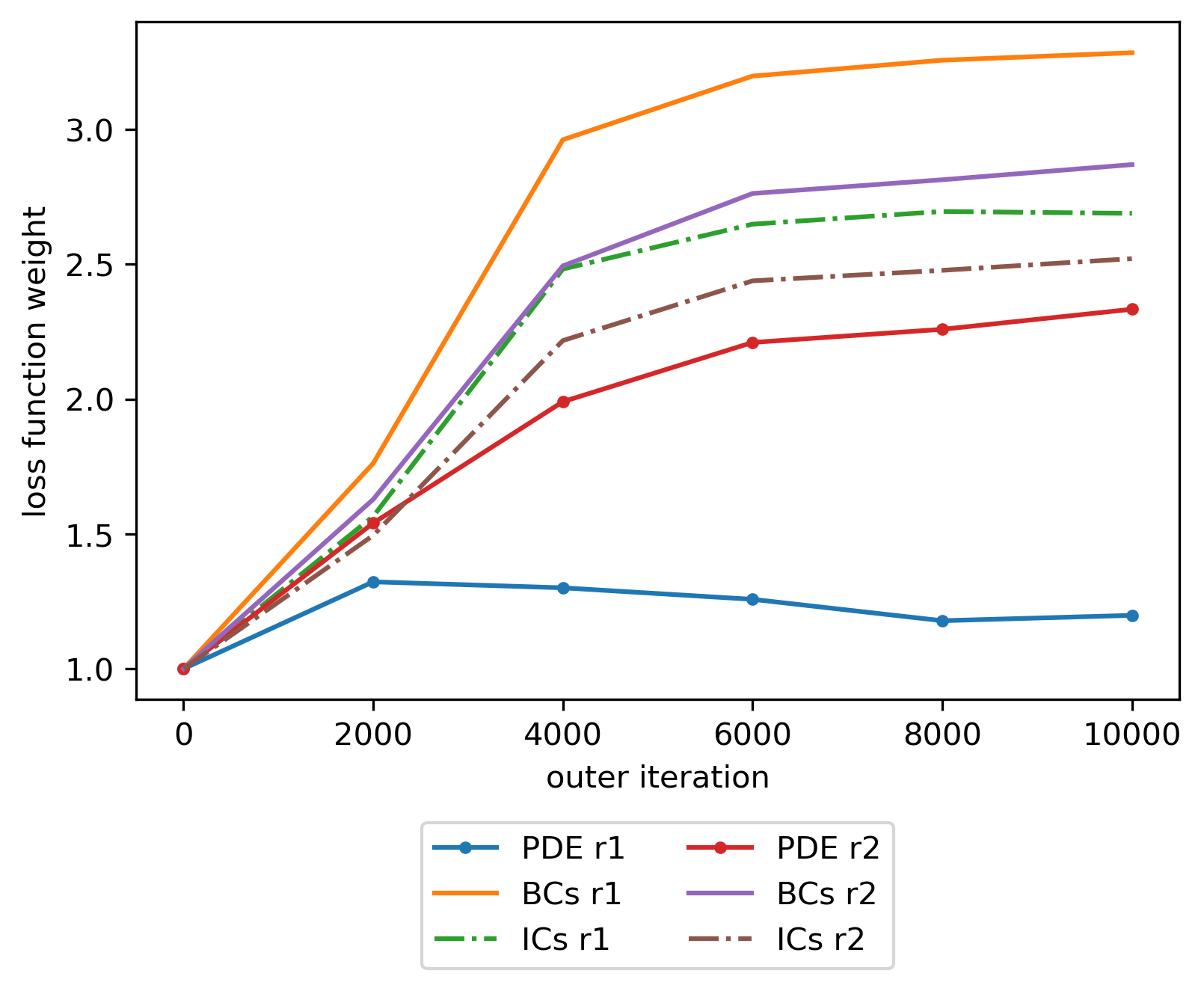}
		\caption{}
		\label{fig:burgers:params:lal}
	\end{subfigure}
	\caption{Burgers equation: Learned objective function weights, pertaining to PDE, BCs, and ICs residuals, as a function of outer iteration in meta-training and task regimes (r1-r2); see Section~\ref{sec:meta:design:param:weights}.
		Results obtained with FFN (a) and LAL (b) parametrizations.
		Most objective function weights increase for both FFN and LAL parametrizations, which translates into learning rate increase, while FFN and LAL disagree on how they balance the various terms.
	}
	\label{fig:burgers:params}
\end{figure}
\begin{figure}[H]
	\centering
	\begin{subfigure}[t]{0.48\textwidth}
		\centering
		\includegraphics[width=\textwidth]{./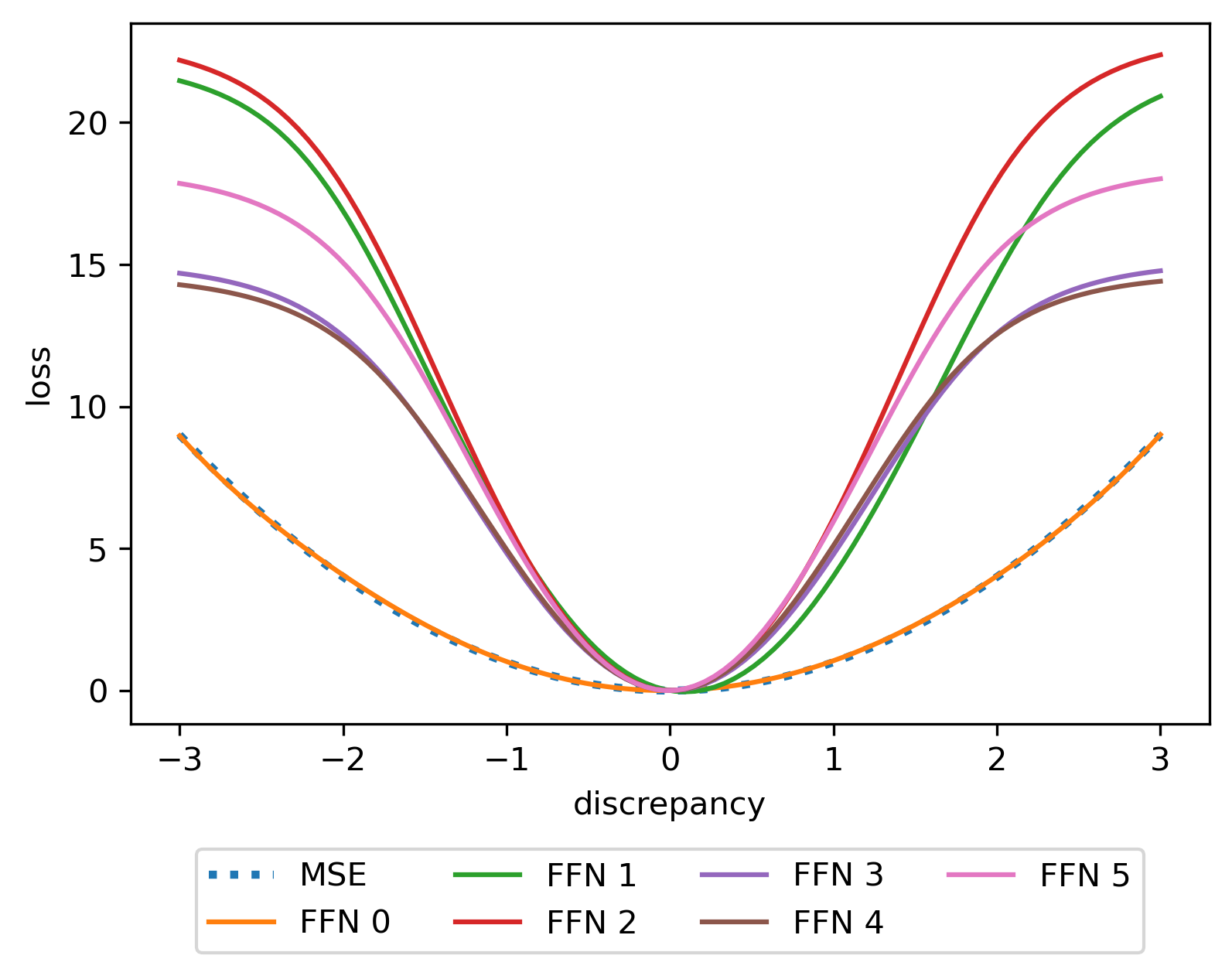}
		\caption{}		
		\label{fig:burgers:snaps:loss:ffn:sdlfw:r1}
	\end{subfigure}
	\hfill
	\begin{subfigure}[t]{0.48\textwidth}
		\centering
		\includegraphics[width=\textwidth]{./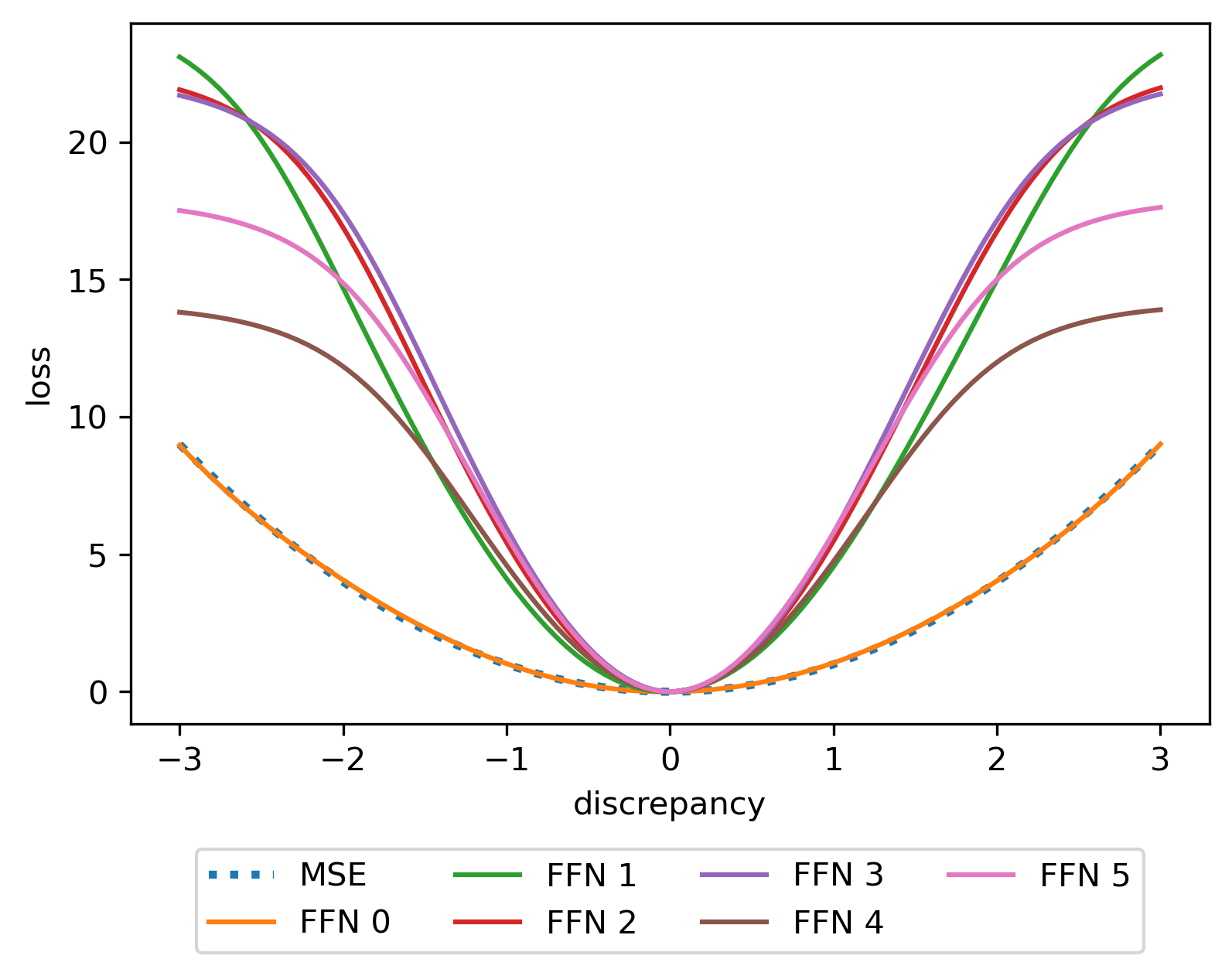}
		\caption{}
		\label{fig:burgers:snaps:loss:ffn:sdlfw:r2}
	\end{subfigure}
	\hfill
	\begin{subfigure}[t]{0.48\textwidth}
		\centering
		\includegraphics[width=\textwidth]{./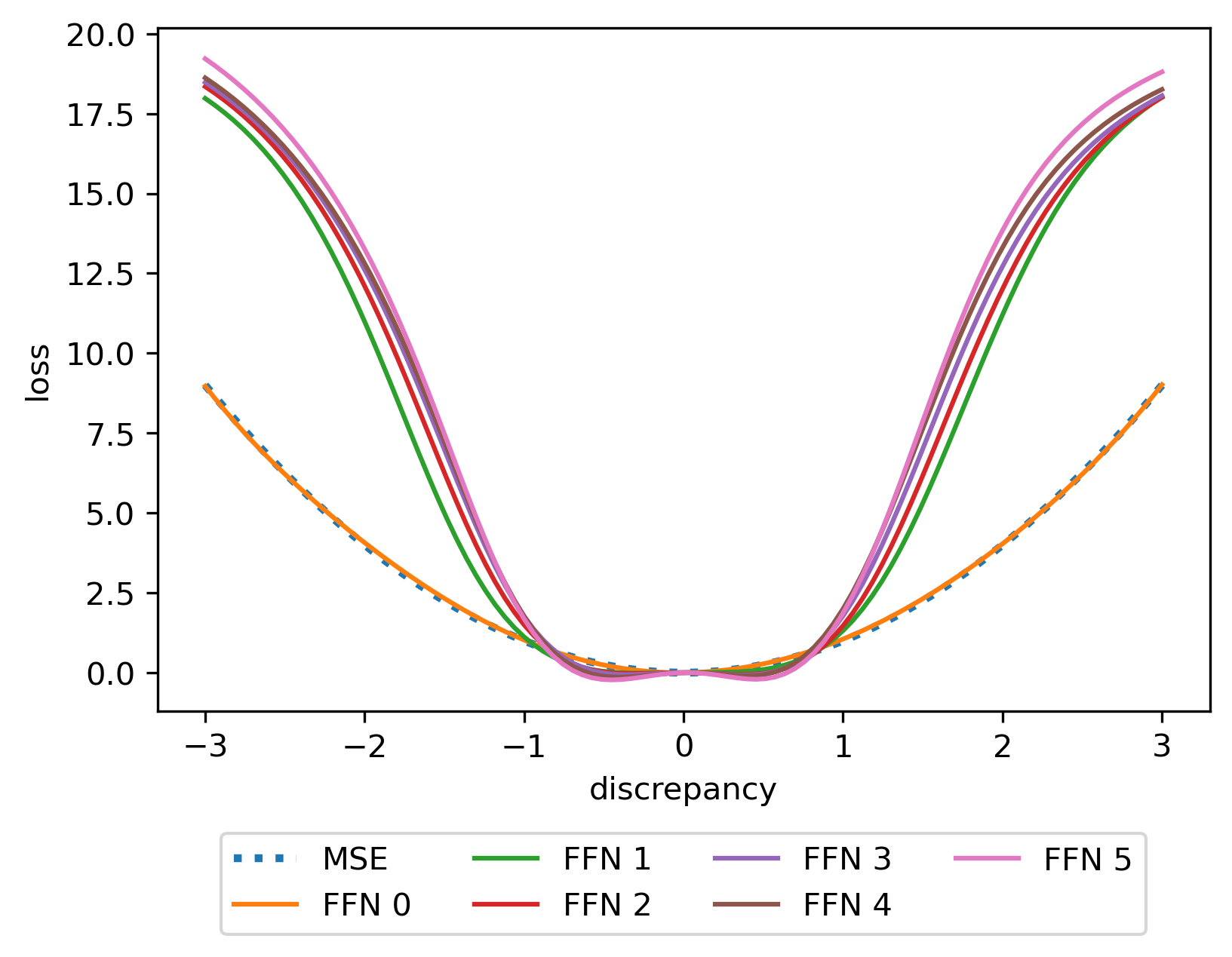}
		\caption{}
		\label{fig:burgers:snaps:loss:ffn:dd:r1}
	\end{subfigure}
	\hfill
	\begin{subfigure}[t]{0.48\textwidth}
		\centering
		\includegraphics[width=\textwidth]{./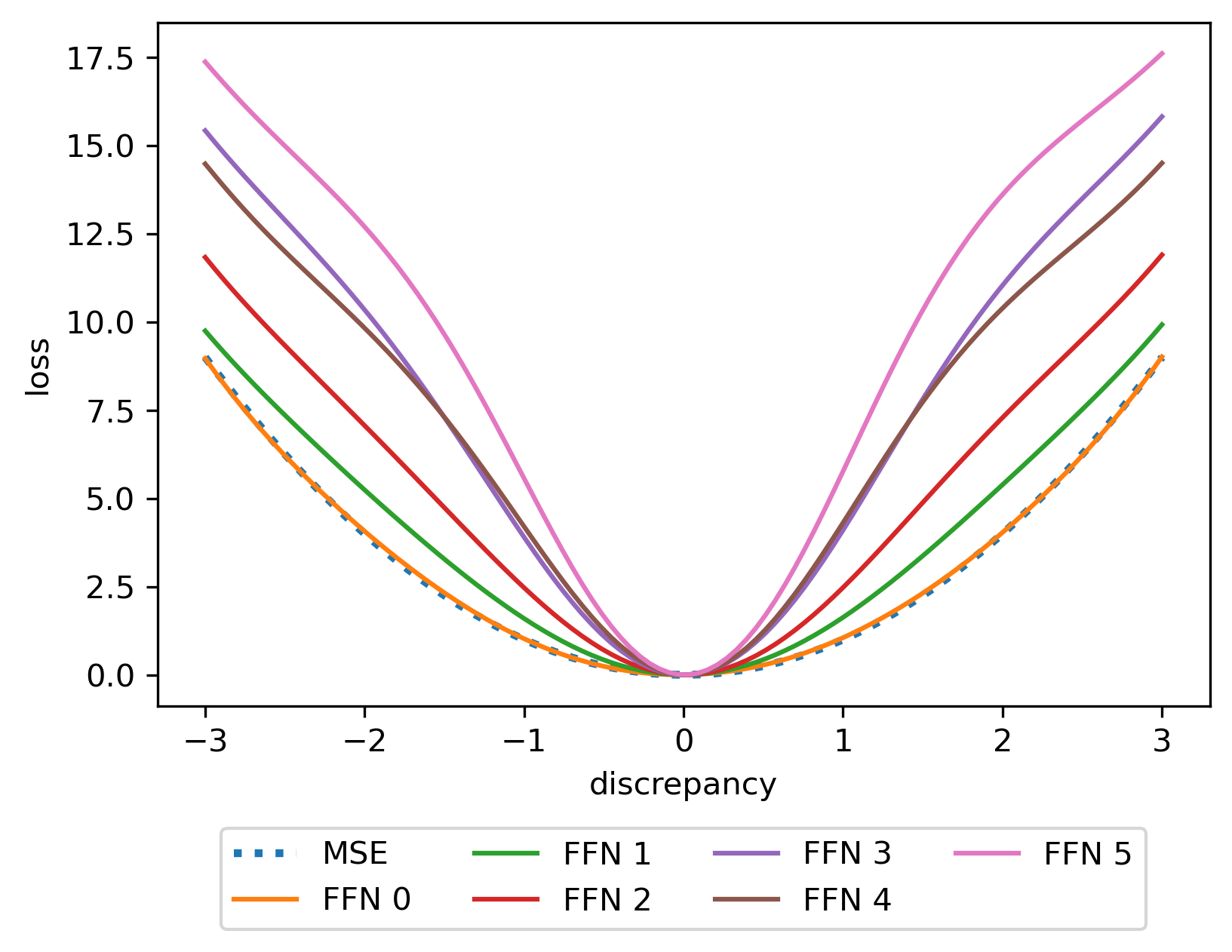}
		\caption{}
		\label{fig:burgers:snaps:loss:ffn:dd:r2}
	\end{subfigure}
	\caption{Burgers equation: Learned loss snapshots captured during meta-training (distributed evenly in $10{,}000$ outer iterations with 0 referring to initialization), with single (a, b) and double data (c, d; see Section~\ref{sec:meta:lossml}).
		FFN parametrization only and both task regimes r1 (a, c) and r2 (b, d) are included.
		The learned losses for single and double data are in general different and the learned losses in part (c) do not satisfy \nc~of Section~\ref{sec:meta:design:properties} (notice that the stationary point at 0 is not a global minimum).
		Theory-driven regularization as discussed in Section~\ref{sec:meta:design:properties} fixes this issue; see Fig.~\ref{fig:burgers:final:ffn:dd}.}
	\label{fig:burgers:snaps:loss:ffn}
\end{figure}
\begin{figure}[H]
	\centering
	\begin{subfigure}[t]{0.48\textwidth}
		\centering
		\includegraphics[width=\textwidth]{./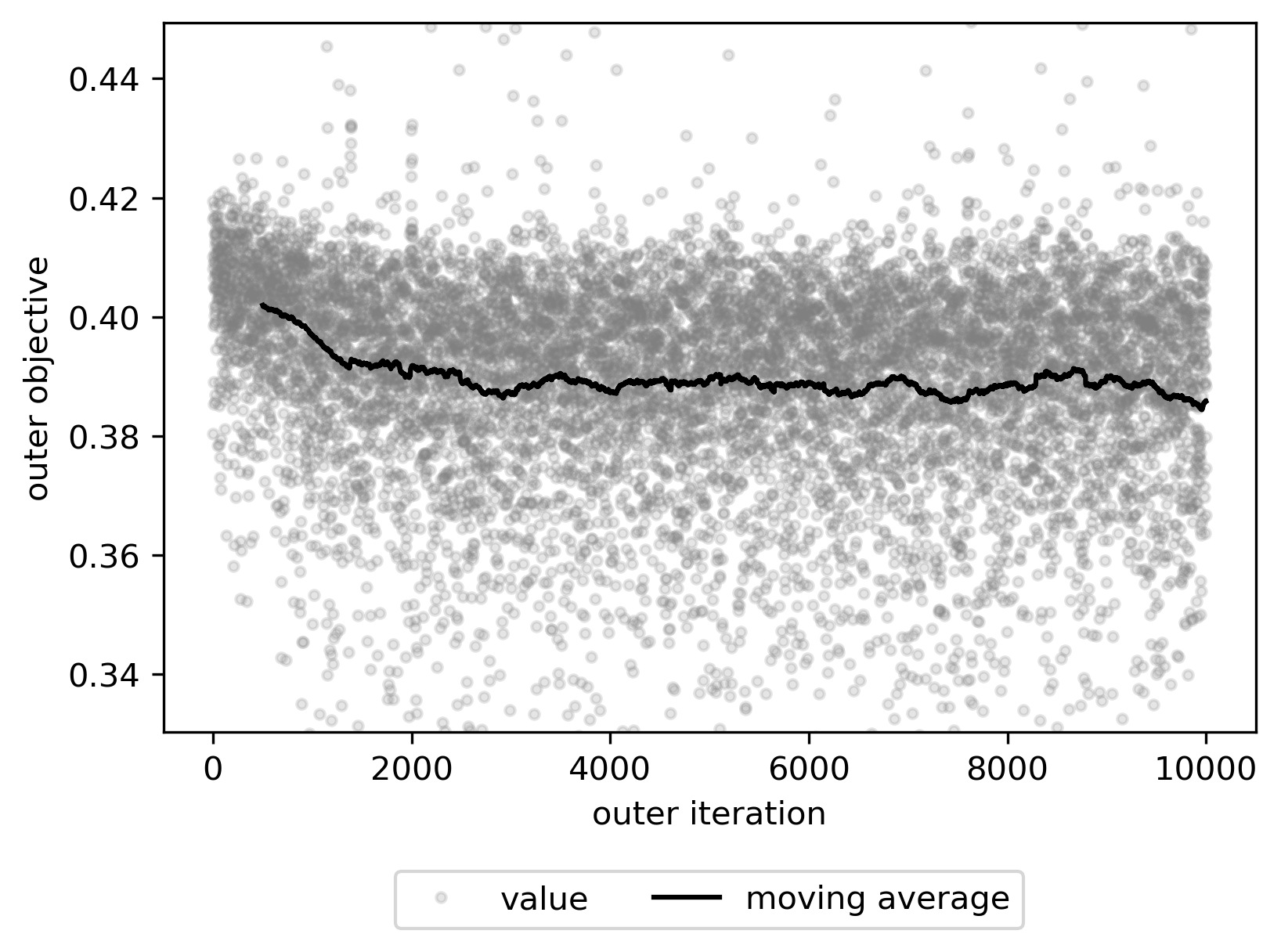}
		\caption{}
		\label{fig:burgers:trainloss:ffn:sdlfw:r1}
	\end{subfigure}
	\hfill
	\begin{subfigure}[t]{0.48\textwidth}
		\centering
		\includegraphics[width=\textwidth]{./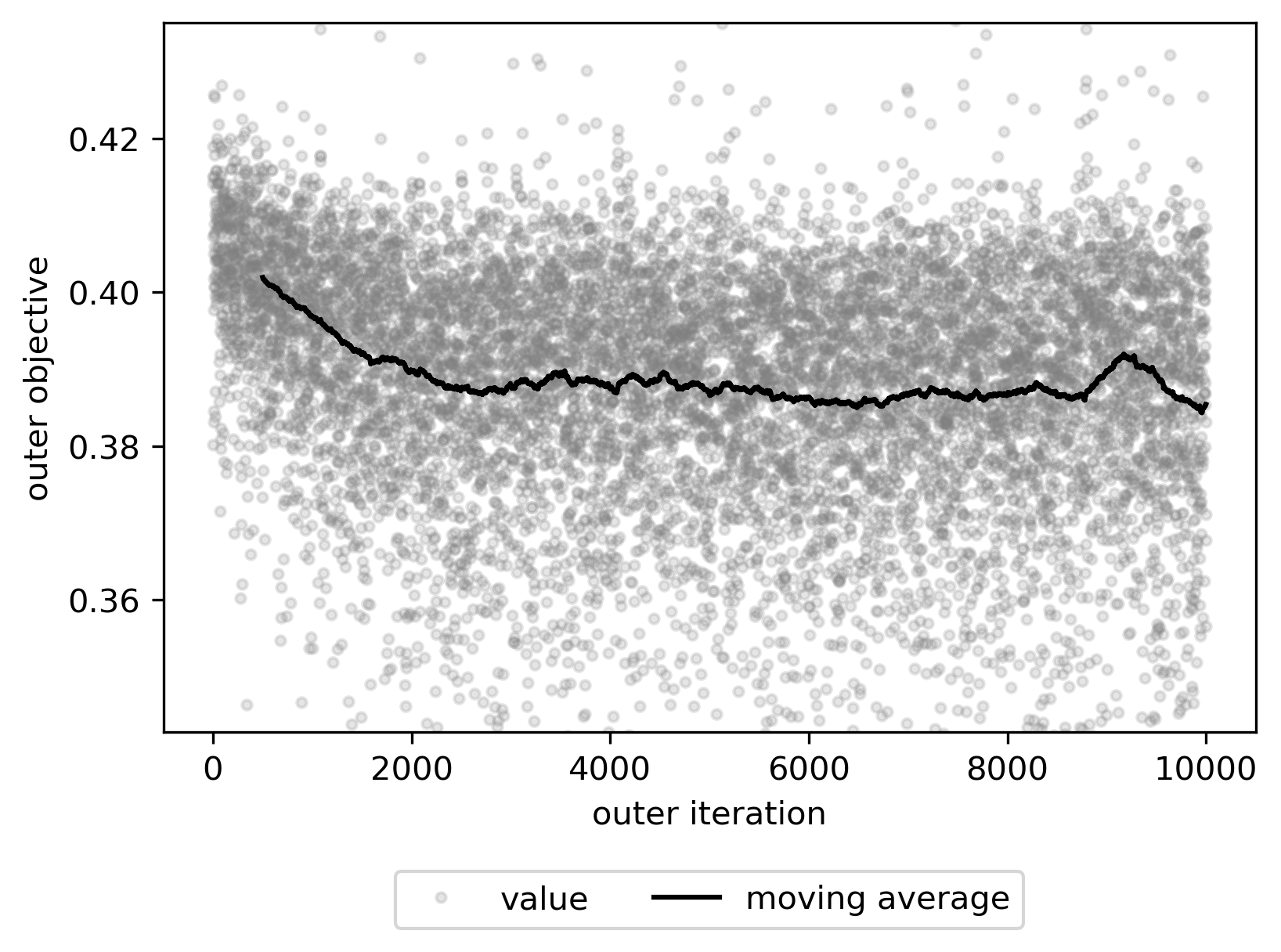}
		\caption{}
		\label{fig:burgers:trainloss:ffn:sdlfw:r2}
	\end{subfigure}
	\hfill
	\begin{subfigure}[t]{0.48\textwidth}
		\centering
		\includegraphics[width=\textwidth]{./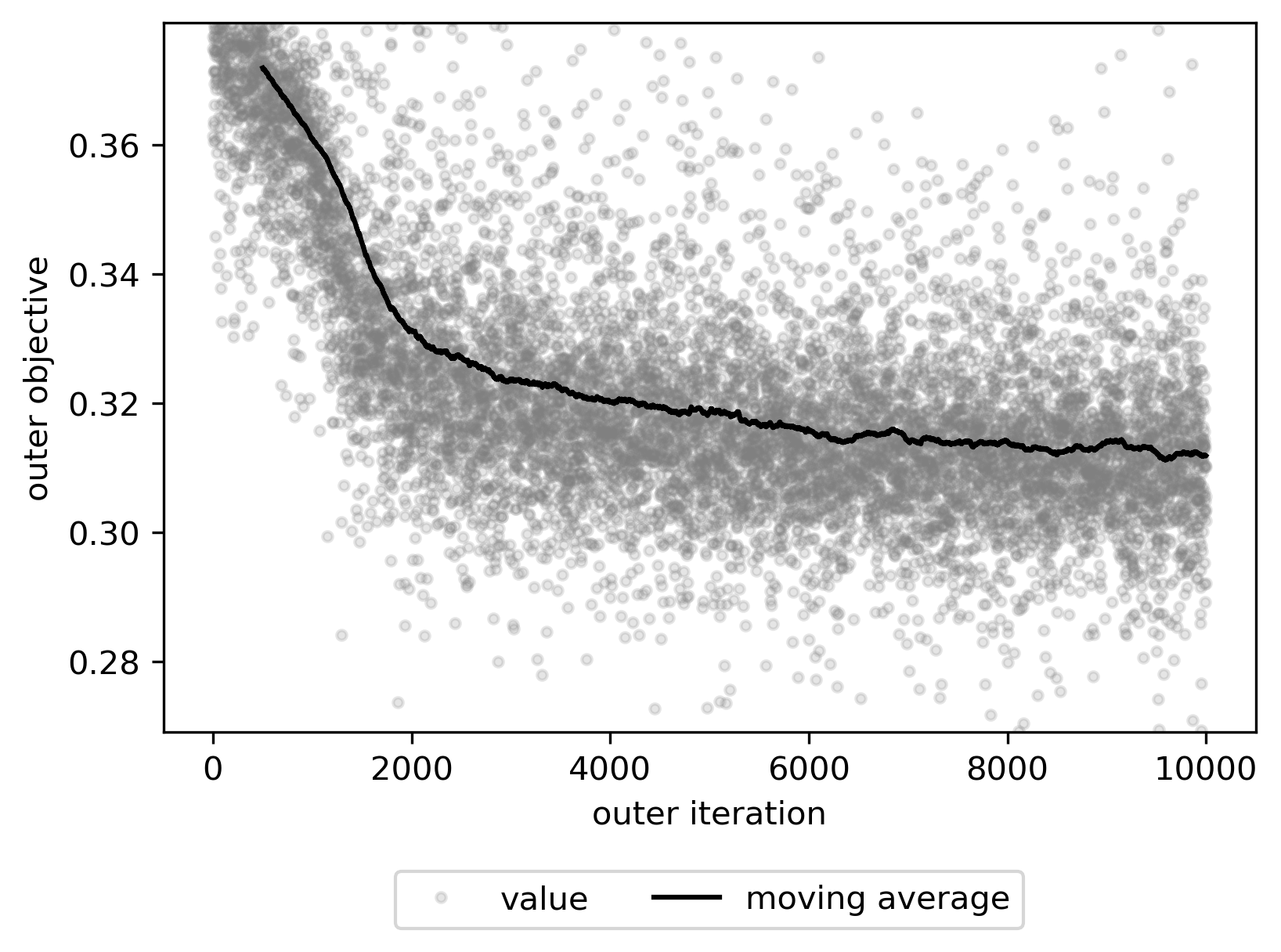}
		\caption{}
		\label{fig:burgers:trainloss:ffn:dd:r1}
	\end{subfigure}
	\hfill
	\begin{subfigure}[t]{0.48\textwidth}
		\centering
		\includegraphics[width=\textwidth]{./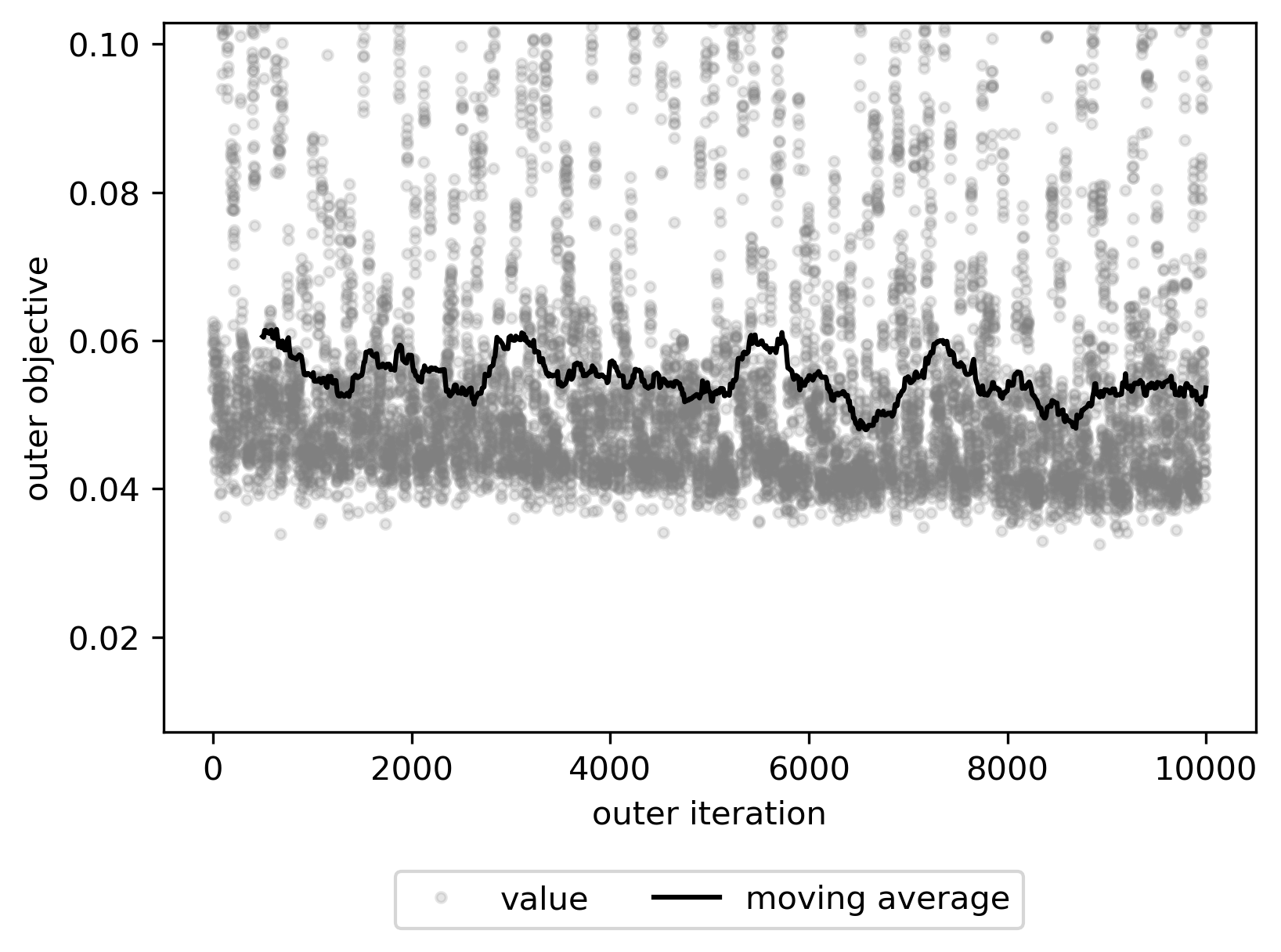}
		\caption{}
		\label{fig:burgers:trainloss:ffn:dd:r2}
	\end{subfigure}
	\caption{Burgers equation: Outer objective values recorded during meta-training as well as corresponding moving averages ($500$ iterations window size); these can be construed as \textit{meta-training error} trajectories. 
		Results related to single data with objective function weights meta-learning (a, b) and double data without objective function weights meta-learning (c, d).
		FFN parametrization only and both task regimes r1 (a, c) and r2 (b, d) are included.
		It is shown in part (c) that the outer objective drops significantly during training for double data and r1; recall that it corresponds to loss after 20 iterations and thus, it cannot drop to very small values as typical machine learning objectives do.
		The learned loss obtained during the training of part (c) does not satisfy \nc; theory-driven regularization as discussed in Section~\ref{sec:meta:design:properties} fixes this issue.
	}
	\label{fig:burgers:trainloss:ffn}
\end{figure}
\begin{figure}[H]
	\centering
	\begin{subfigure}[t]{0.48\textwidth}
		\centering
		\includegraphics[width=\textwidth]{./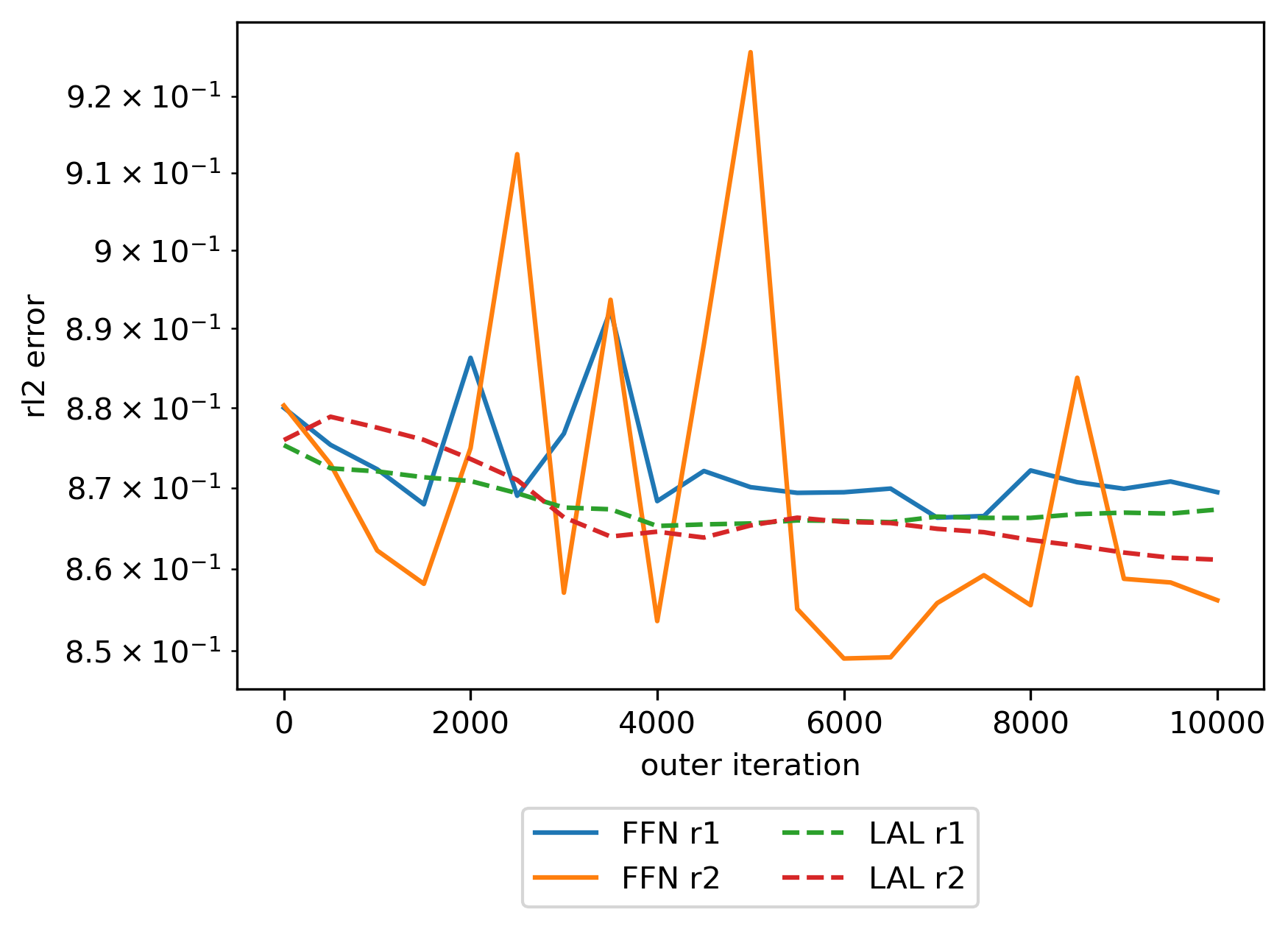}
		\caption{}
		\label{fig:burgers:traintest:lfw}
	\end{subfigure}
	\hfill
	\begin{subfigure}[t]{0.48\textwidth}
		\centering
		\includegraphics[width=\textwidth]{./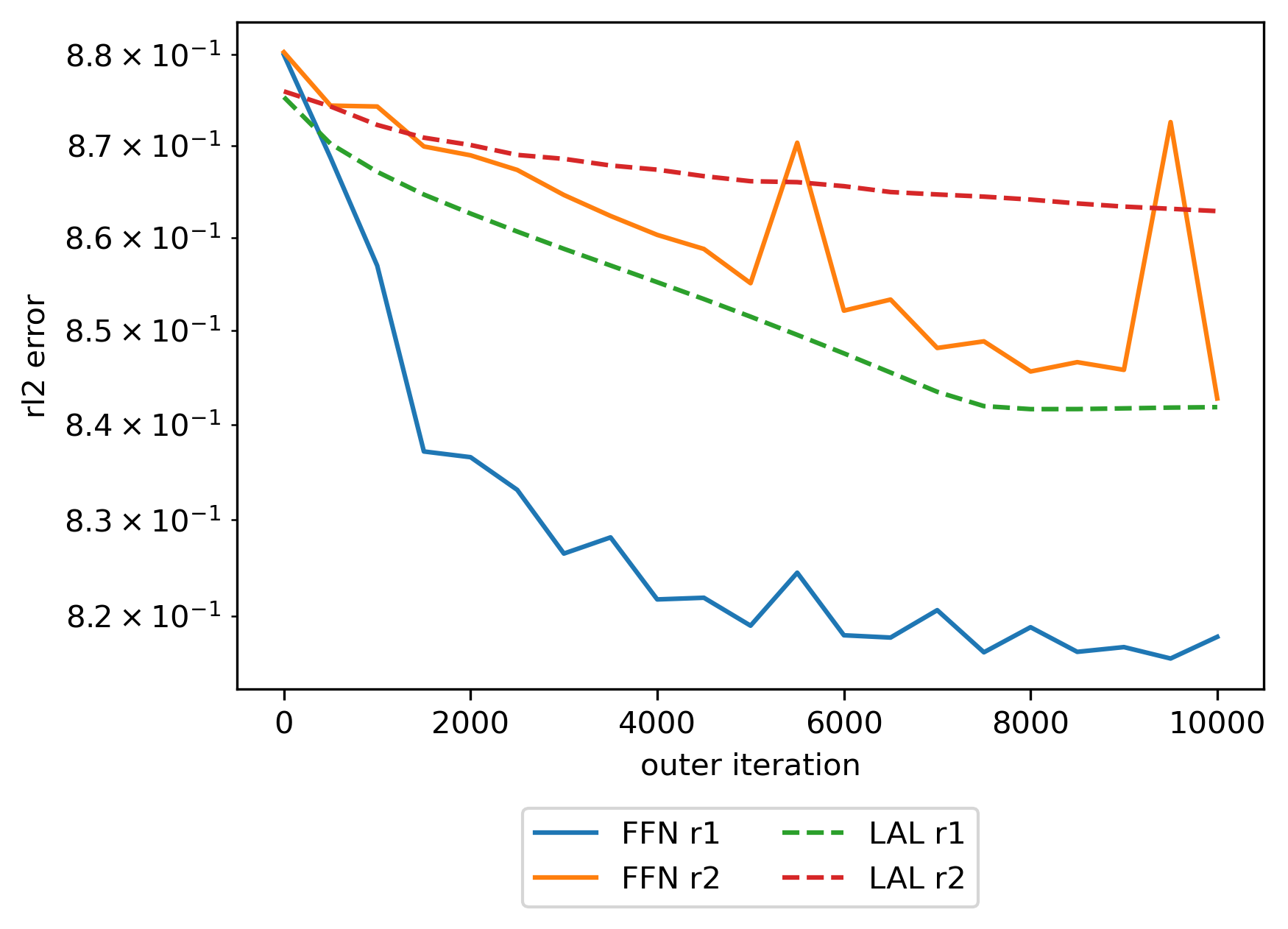}
		\caption{}
		\label{fig:burgers:traintest:dd}
	\end{subfigure}
	\caption{Burgers equation: Meta-testing results (relative $\ell_2$ test error on $5$ unseen tasks after $20$ iterations) obtained using learned loss snapshots and performed during meta-training (every $500$ outer iterations); these can be construed as \textit{meta-validation error} trajectories. 
		Results related to single data with objective function weights meta-learning (a) and double data without objective function weights meta-learning (b).
		FFN and LAL parametrizations and both task regimes r1 and r2 are included.
		It is shown in part (b) that rl2 drops significantly during training for double data and r1.
		In line with our theoretical results of Section~\ref{sec:meta:design:properties}, this learned loss does not satisfy \nc~and leads to divergence if used for full meta-testing ($20{,}000$ iterations), i.e., it does not generalize well although training (Fig.~\ref{fig:burgers:trainloss:ffn:dd:r1}) and validation (part b of present figure) performance is satisfactory.
		Theory-driven regularization as discussed in Section~\ref{sec:meta:design:properties} fixes this issue.}
	\label{fig:burgers:traintest}
\end{figure}
\begin{figure}[H]
	\centering
	\begin{subfigure}[t]{0.48\textwidth}
		\centering
		\includegraphics[width=\textwidth]{./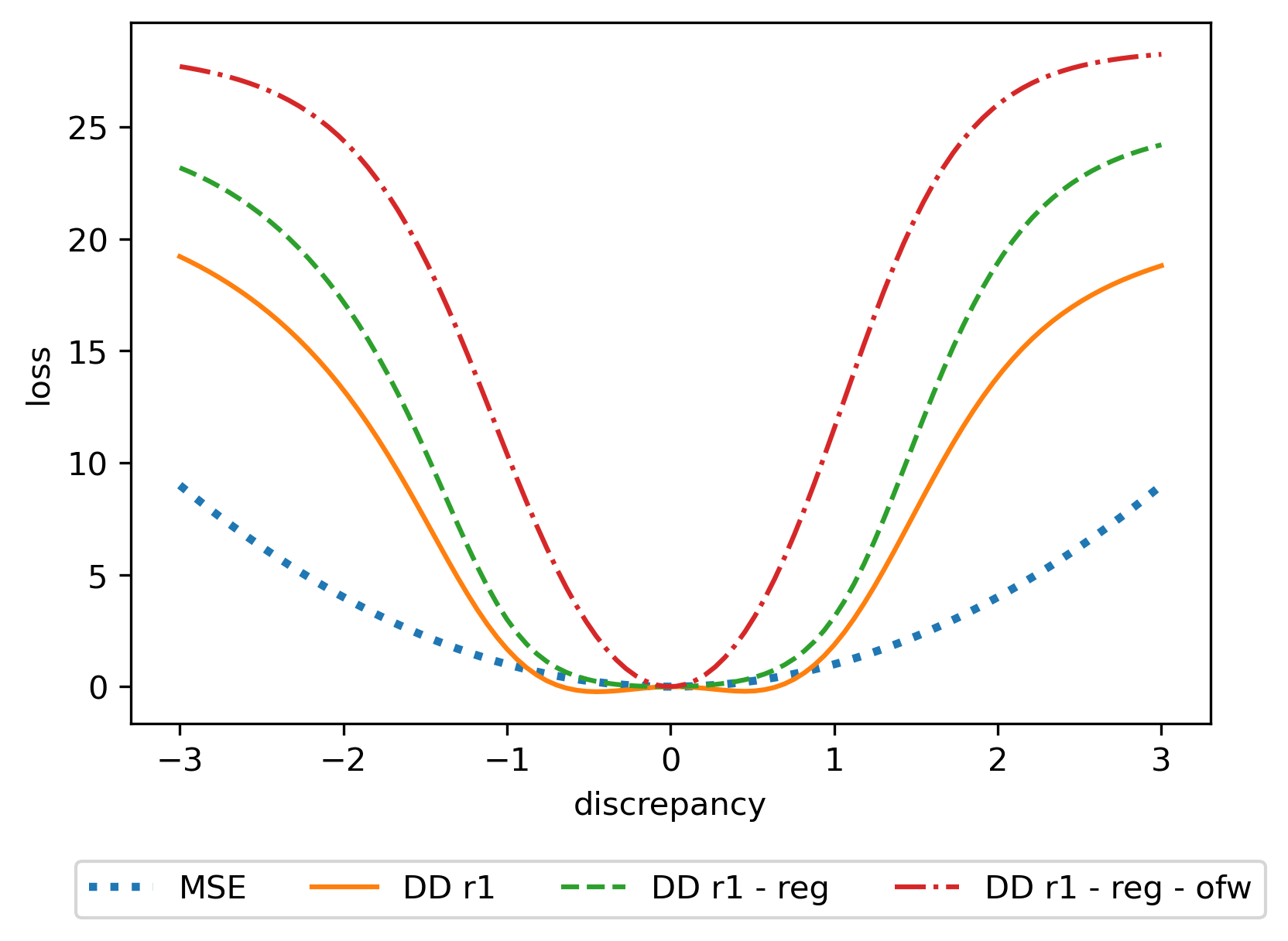}
		\caption{}
		\label{fig:burgers:final:loss:ffn:dd}
	\end{subfigure}
	\hfill
	\begin{subfigure}[t]{0.48\textwidth}
		\centering
		\includegraphics[width=\textwidth]{./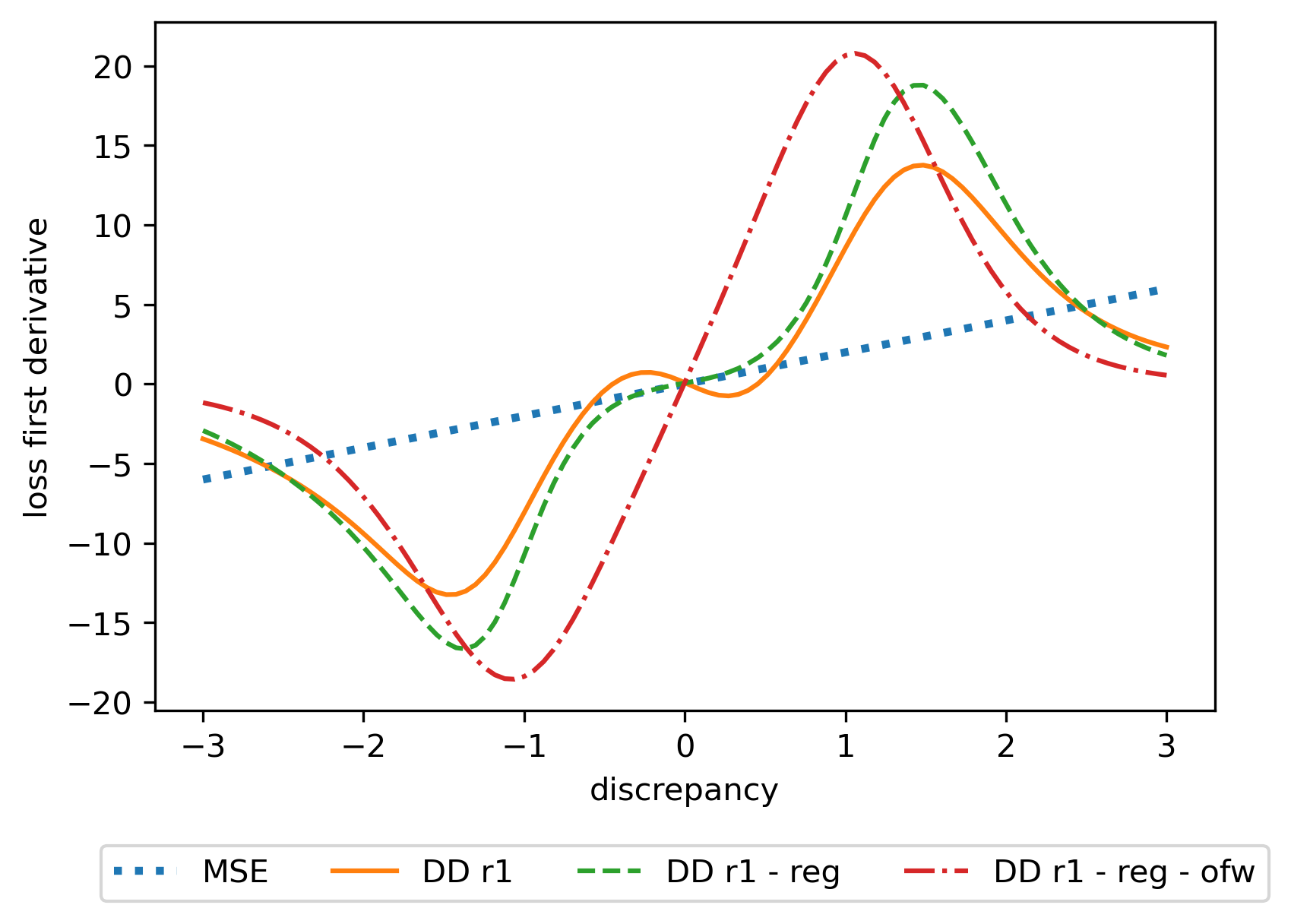}
		\caption{}
		\label{fig:burgers:final:lossder:ffn:dd}
	\end{subfigure}
	\caption{Burgers equation: Final learned losses (a) and corresponding first-order derivatives (b), with FFN parametrization for task regime $1$ with double data meta-training. 
		Results obtained with and without regularization (reg) and with and without objective function weights meta-learning (ofw); comparisons with MSE are also included.
		Whereas regularization is not required for LAL (Proposition~\ref{prop:1}), the final learned loss with FFN parametrization and double data (DD r1) does not satisfy \nc~and leads to divergence in optimization.
		Theory-driven regularization as discussed in Section~\ref{sec:meta:design:properties} fixes this issue.}
	\label{fig:burgers:final:ffn:dd}
\end{figure}

\paragraph*{Meta-testing.}
For evaluating the performance of the captured learned loss snapshots, we train with SGD for $20{,}000$ iterations with learning rate $0.01$ (same as in meta-training) $5$ OOD tasks using learned and standard loss functions and record the rl2 error on $10{,}000$ exact solution datapoints. 
Regarding learned losses, we employ the ones obtained with single data and with objective function weights meta-learning, as they performed better in our experiments.
The OOD test tasks are drawn based on the parameter limits shown in Table~\ref{tab:rd:params} and the random architectures used are drawn in the same way as in Section~\ref{sec:examples:rd}. 

The OAL parameters during training are shown in Fig.~\ref{fig:burgers:test:oal:params} and the minimum rl2 error results are shown in Fig.~\ref{fig:burgers:test:min:stats}. 
For both regimes, the final learned loss LAL 5 is different than both OAL 1 and 2, which converge to a robustness parameter value close to 3 for all tasks.
Finally, the loss functions learned with both parametrizations achieve an average minimum rl2 error during $20{,}000$ iterations that is significantly smaller than most considered losses (even the online adaptive ones), although they have been meta-trained with only $20$ iterations with a different PINN architecture and on a different task distribution.
\begin{figure}[H]
	\centering
	\begin{subfigure}[t]{0.48\textwidth}
		\centering
		\includegraphics[width=\textwidth]{./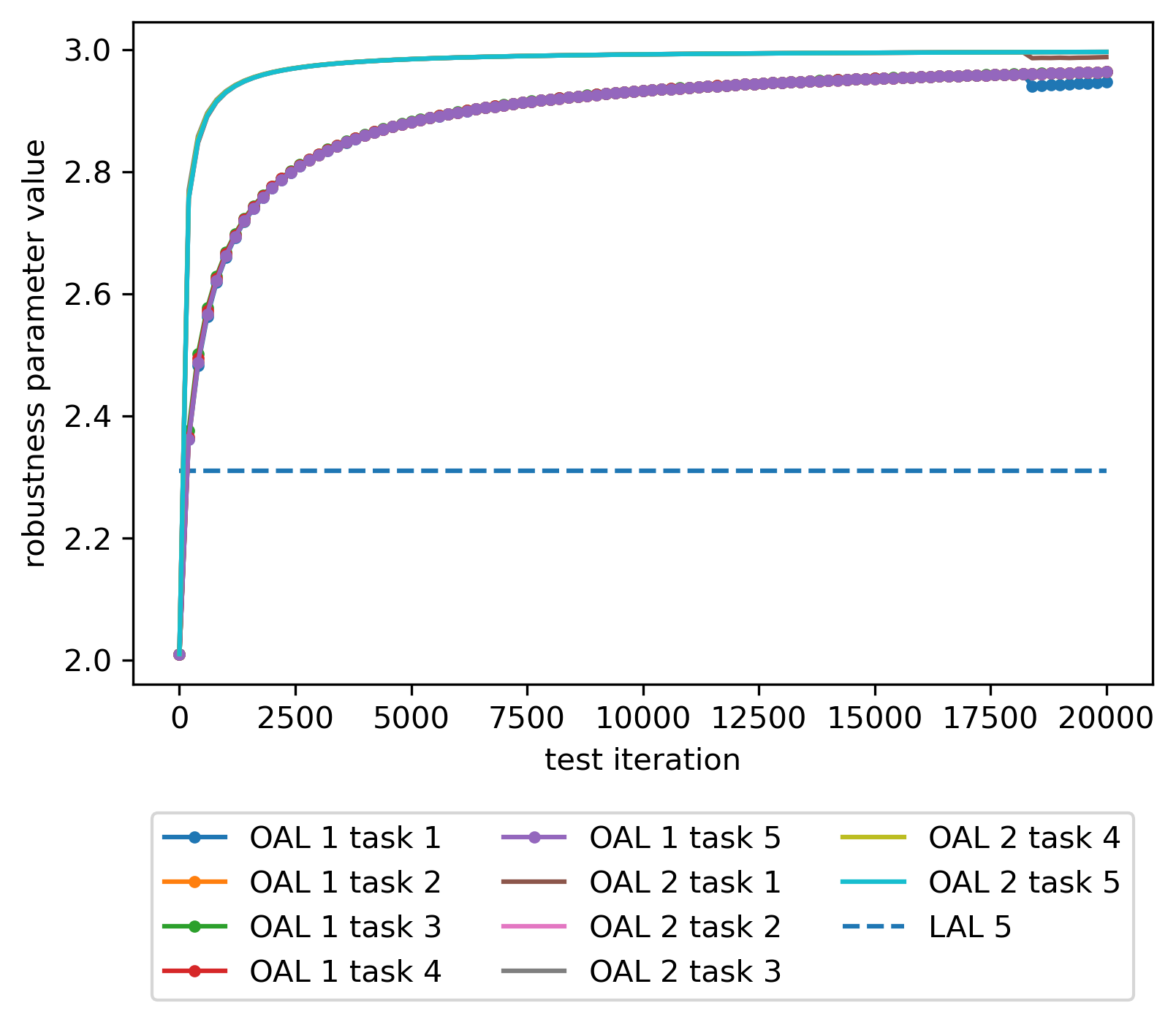}
		\caption{}
		\label{fig:burgers1:test:oal:params}
	\end{subfigure}
	\hfill
	\begin{subfigure}[t]{0.48\textwidth}
		\centering
		\includegraphics[width=\textwidth]{./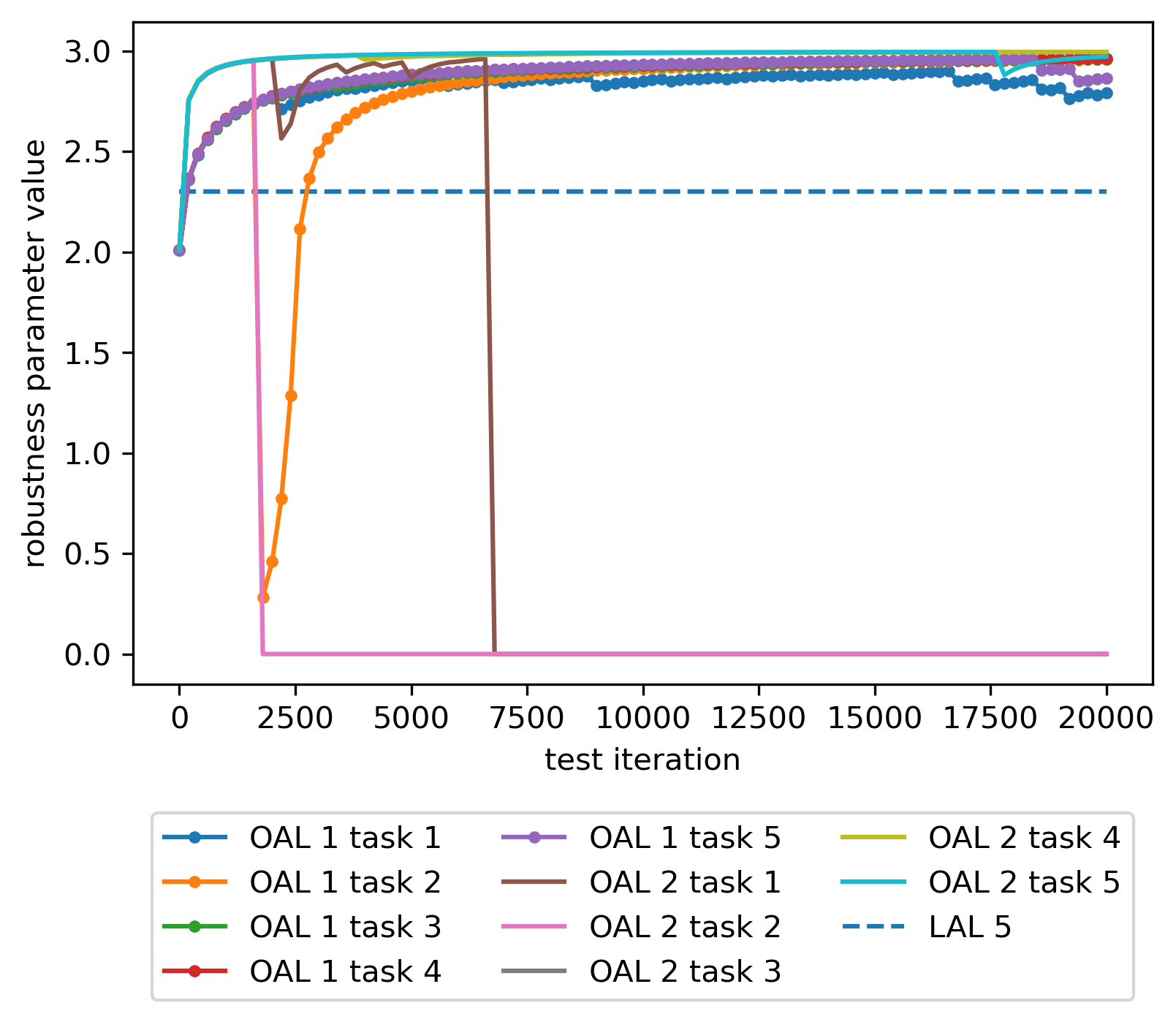}
		\caption{}
		\label{fig:burgers2:test:oal:params}
	\end{subfigure}
	\caption{Burgers equation: Robustness parameter trajectories for online adaptive loss functions OAL 1 and OAL 2 and for both test task regimes. 
		Loss-specific learning rates $0.01$ (OAL 1) and $0.1$ (OAL 2); see Section~\ref{sec:meta:design:param:LAL}.
		Comparison with final learned loss obtained via meta-training with LAL parametrization also included.
		Results correspond to test regimes 1 (a; $10^{-4} \leq  \lambda \leq 10^{-2}$) and 2 (b; $10^{-2} \leq  \lambda \leq 2$).
		For both regimes, final learned loss LAL 5 is different than both OAL 1 and 2, which converge to a robustness parameter value close to 3 for all tasks.}
	\label{fig:burgers:test:oal:params}
\end{figure}

\begin{figure}[H]
	\centering
	\begin{subfigure}[t]{1\textwidth}
		\centering
		\includegraphics[width=.7\linewidth]{./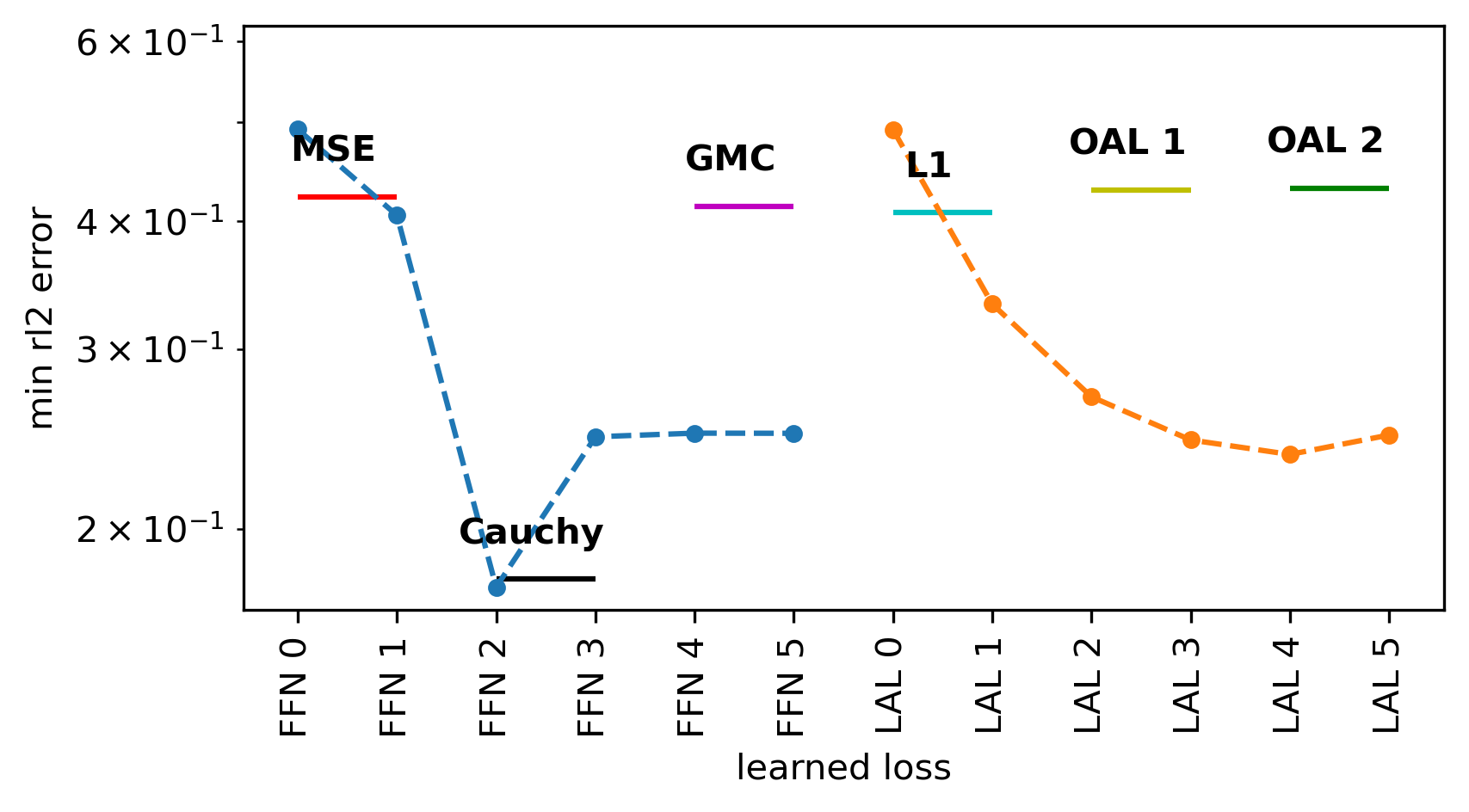}  
		\caption{}
		\label{fig:burgers1:test:min:stats}
	\end{subfigure}
	\hfill
	\begin{subfigure}[t]{1\textwidth}
		\centering
		\includegraphics[width=.7\linewidth]{./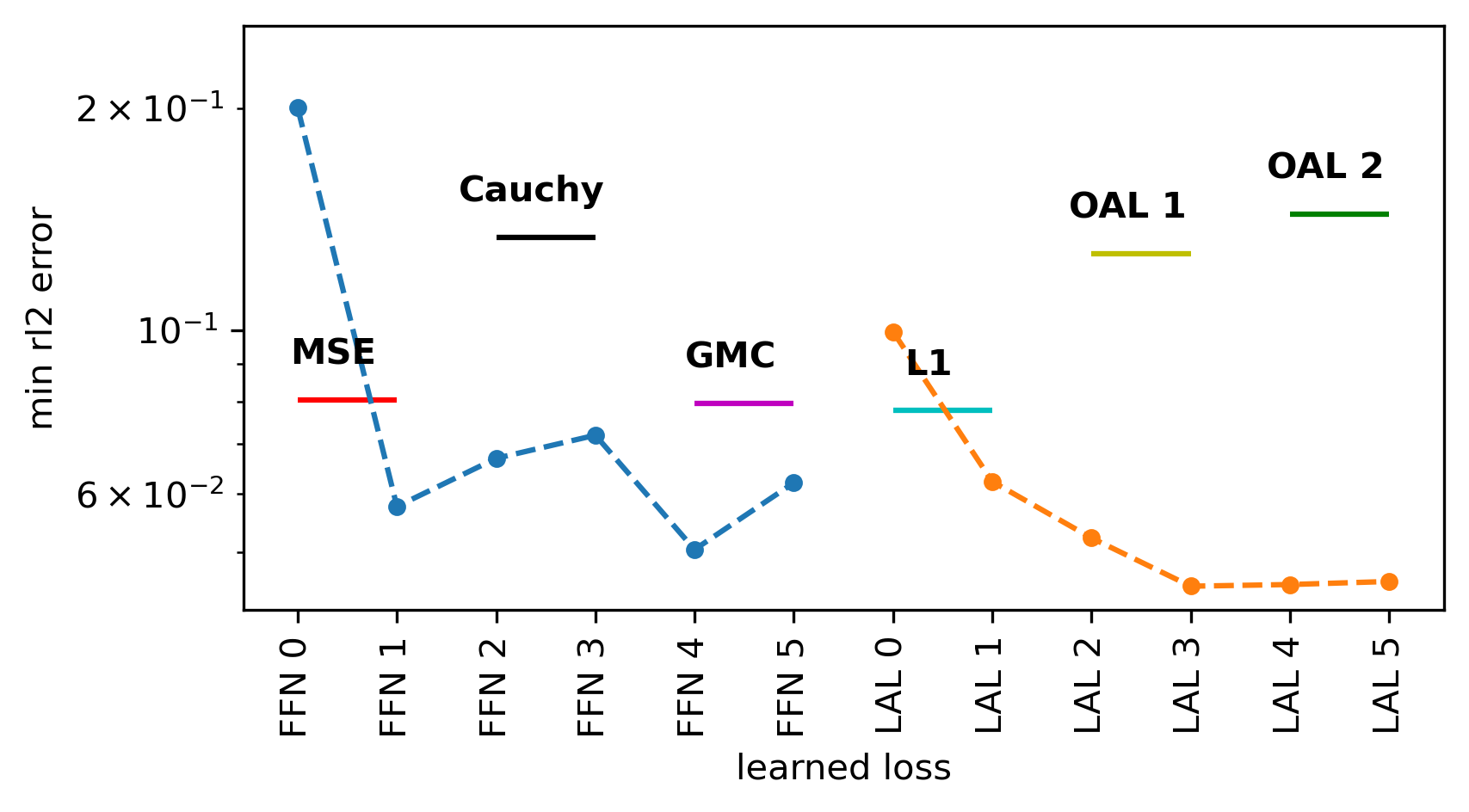}  
		\caption{}
		\label{fig:burgers2:test:min:stats}
	\end{subfigure}
	\caption{Burgers equation: Minimum relative test $\ell_2$ error (rl2) averaged over $5$ OOD tasks during meta-testing with SGD for $20{,}000$ iterations.
		Learned loss snapshots (FFN 0-6 and LAL 0-6) are compared with standard loss functions of Table~\ref{tab:losses} and with online adaptive loss functions OAL 1 and OAL 2 (2 loss-specific learning rates).
		Results correspond to test regimes 1 (a; $10^{-4} \leq  \lambda \leq 10^{-2}$) and 2 (b; $10^{-2} \leq  \lambda \leq 2$).
		For both regimes, the learned loss functions with both FFN and LAL parametrizations achieve an average minimum rl2 error that is significantly smaller than most considered losses (even the online adaptive ones), although they have been meta-trained with only $20$ iterations, with a different PINN architecture and on a different task distribution.
	}
	\label{fig:burgers:test:min:stats}
\end{figure}

\section{Summary}\label{sec:summary}

We have presented a meta-learning method for offline discovery of physics-informed neural network (PINN) loss functions, addressing diverse task distributions defined based on parametrized partial differential equations (PDEs) that are solved with PINNs.
For employing our technique given a PDE task distribution definition, we parametrize and optimize the learned loss via meta-training by following an alternating optimization procedure until a stopping criterion is met. 
Specifically, in each loss function update step, (a) PDE tasks are drawn from the task distribution; 
(b) in the inner optimization, they are solved with PINNs for a few iterations using the current learned loss and the gradient of the learned loss parameters is tracked throughout optimization; 
and (c) in the outer optimization, the learned loss parameters are updated based on MSE of the final (semi-optimized) PINN parameters. 

Furthermore, we have presented and proven two new theorems, involving a condition, namely \nc, that the learned loss should satisfy for successful training. If satisfied, this condition assures that under certain assumptions, a global minimum is reached by using gradient descent with the learned loss. In addition, we have proven that under a mean squared error (MSE) relation condition combined with \nc, any stationary point obtained based on the learned loss function is a global minimum of the MSE-based loss as well. Driven by these theoretical results, we have also proposed a novel regularization method for imposing the above desirable conditions. Finally, we have proven that one of the two parametrizations used in this paper for the learned loss, namely the learned adaptive loss (LAL) discussed in Section~\ref{sec:meta:design:param:LAL} and proposed in \cite{barron2019general}, satisfies automatically these two conditions without any regularization.

In the computational examples, the learned losses have been employed at test time for addressing regression and PDE task distributions. Our results have demonstrated that significant performance improvement can be achieved by using a shared-among-tasks offline-learned loss function even for out-of-distribution meta-testing; i.e., solving test tasks not belonging to the task distribution used in meta-training and utilizing PINN architectures that are different than the PINN architecture used in meta-training.
Note that a learned loss that performs equally well for architectures not used in meta-training is desirable if, for example, we seek to optimize the PINN architecture with a fixed learned loss.
Moreover, improved performance has been demonstrated even compared with adapting the loss function online as proposed in \cite{barron2019general}.
In this regard, we have considered the problems of discontinuous function approximation with varying frequencies, of the advection equation with varying initial conditions and discontinuous solutions, of the reaction-diffusion equation with varying source term, and of the Burgers equation with varying viscosity in two parametric regimes.
We have demonstrated the importance of different loss function parametrizations, as well as of other meta-learning algorithm design options discussed in Section~\ref{sec:meta:design}. 

As of future work, an interesting direction pertains to improving the performance of the technique for addressing inner optimizers with memory, such as RMSProp and Adam. 
Specifically, although the LAL learned losses coupled with Adam performed better than MSE in the function approximation case of Section~\ref{sec:examples:regression}, neither feed-forward NN (FFN) nor LAL learned losses exhibited satisfactory generalization capabilities in the PINN examples of Sections~\ref{sec:examples:ad}-\ref{sec:examples:burgers}.
One reason for this result is the fact that Adam depends on the whole history of gradients during optimization through an exponentially decaying average that only discards far in the past gradients.
To illustrate this, we have shown in Section~\ref{sec:examples:ad} that higher $\beta$ values in Adam corresponding to higher dependency on the far past yield deteriorating performance of the learned losses.  

\section{Acknowledgment}

This work was supported by OSD/AFOSR MURI grant FA9550-20-1-0358.

\bibliographystyle{elsarticle-num}	
\bibliography{refs}

\begin{thebibliography}{10}
\expandafter\ifx\csname url\endcsname\relax
  \def\url#1{\texttt{#1}}\fi
\expandafter\ifx\csname urlprefix\endcsname\relax\def\urlprefix{URL }\fi
\expandafter\ifx\csname href\endcsname\relax
  \def\href#1#2{#2} \def\path#1{#1}\fi

\bibitem{raissi2019physicsinformed}
M.~Raissi, P.~Perdikaris, G.~E. Karniadakis, Physics-informed neural networks:
  {{A}} deep learning framework for solving forward and inverse problems
  involving nonlinear partial differential equations, Journal of Computational
  Physics 378 (2019) 686--707.

\bibitem{lu2021deepxde}
L.~Lu, X.~Meng, Z.~Mao, G.~E. Karniadakis, {{DeepXDE}}: {{A}} deep learning
  library for solving differential equations, SIAM Review 63 (2021) 208--228.

\bibitem{jagtap2020adaptive}
A.~D. Jagtap, K.~Kawaguchi, G.~E. Karniadakis, Adaptive activation functions
  accelerate convergence in deep and physics-informed neural networks, Journal
  of Computational Physics 404 (2020) 109136.

\bibitem{meng2020composite}
X.~Meng, G.~E. Karniadakis, A composite neural network that learns from
  multi-fidelity data: {{Application}} to function approximation and inverse
  {{PDE}} problems, Journal of Computational Physics 401 (2020) 109020.

\bibitem{pang2020npinns}
G.~Pang, M.~D'Elia, M.~Parks, G.~E. Karniadakis, {{nPINNs}}: {{Nonlocal}}
  physics-informed neural networks for a parametrized nonlocal universal
  {{Laplacian}} operator. {{Algorithms}} and applications, Journal of
  Computational Physics 422 (2020) 109760.

\bibitem{kharazmi2021hpvpinns}
E.~Kharazmi, Z.~Zhang, G.~E.~M. Karniadakis, Hp-{{VPINNs}}: {{Variational}}
  physics-informed neural networks with domain decomposition, Computer Methods
  in Applied Mechanics and Engineering 374 (2021) 113547.

\bibitem{yang2021bpinns}
L.~Yang, X.~Meng, G.~E. Karniadakis, B-{{PINNs}}: {{Bayesian}} physics-informed
  neural networks for forward and inverse {{PDE}} problems with noisy data,
  Journal of Computational Physics 425 (2021) 109913.

\bibitem{cai2021physicsinformed}
S.~Cai, Z.~Wang, S.~Wang, P.~Perdikaris, G.~E. Karniadakis, Physics-{{Informed
  Neural Networks}} for {{Heat Transfer Problems}}, Journal of Heat Transfer
  143 (2021).

\bibitem{shukla2021parallel}
K.~Shukla, A.~D. Jagtap, G.~E. Karniadakis, Parallel {{Physics}}-{{Informed
  Neural Networks}} via {{Domain Decomposition}}, arXiv:2104.10013 [cs] (2021).
\newblock \href {http://arxiv.org/abs/2104.10013} {\path{arXiv:2104.10013}}.

\bibitem{jin2021nsfnets}
X.~Jin, S.~Cai, H.~Li, G.~E. Karniadakis, {{NSFnets}} ({{Navier}}-{{Stokes}}
  flow nets): {{Physics}}-informed neural networks for the incompressible
  {{Navier}}-{{Stokes}} equations, Journal of Computational Physics 426 (2021)
  109951.

\bibitem{wang2020understanding}
S.~Wang, Y.~Teng, P.~Perdikaris, Understanding and mitigating gradient
  pathologies in physics-informed neural networks, arXiv preprint
  arXiv:2001.04536 (2020).
\newblock \href {http://arxiv.org/abs/2001.04536} {\path{arXiv:2001.04536}}.

\bibitem{barron2019general}
J.~T. Barron, A general and adaptive robust loss function, in: Proceedings of
  the {{IEEE Conference}} on {{Computer Vision}} and {{Pattern Recognition}},
  2019, pp. 4331--4339.

\bibitem{hospedales2020metalearning}
T.~Hospedales, A.~Antoniou, P.~Micaelli, A.~Storkey, Meta-{{Learning}} in
  {{Neural Networks}}: {{A Survey}}, arXiv:2004.05439 [cs, stat] (2020).
\newblock \href {http://arxiv.org/abs/2004.05439} {\path{arXiv:2004.05439}}.

\bibitem{grefenstette2019generalized}
E.~Grefenstette, B.~Amos, D.~Yarats, P.~M. Htut, A.~Molchanov, F.~Meier,
  D.~Kiela, K.~Cho, S.~Chintala, Generalized {{Inner Loop Meta}}-{{Learning}},
  arXiv:1910.01727 [cs, stat] (2019).
\newblock \href {http://arxiv.org/abs/1910.01727} {\path{arXiv:1910.01727}}.

\bibitem{nichol2018firstorder}
A.~Nichol, J.~Achiam, J.~Schulman, On {{First}}-{{Order Meta}}-{{Learning
  Algorithms}}, arXiv:1803.02999 [cs] (2018).
\newblock \href {http://arxiv.org/abs/1803.02999} {\path{arXiv:1803.02999}}.

\bibitem{rajeswaran2019metalearning}
A.~Rajeswaran, C.~Finn, S.~Kakade, S.~Levine, Meta-{{Learning}} with {{Implicit
  Gradients}}, arXiv:1909.04630 [cs, math, stat] (2019).
\newblock \href {http://arxiv.org/abs/1909.04630} {\path{arXiv:1909.04630}}.

\bibitem{lorraine2020optimizing}
J.~Lorraine, P.~Vicol, D.~Duvenaud, Optimizing {{Millions}} of
  {{Hyperparameters}} by {{Implicit Differentiation}}, International conference
  on machine learning (2020) 12.

\bibitem{sung2017learning}
F.~Sung, L.~Zhang, T.~Xiang, T.~Hospedales, Y.~Yang, Learning to learn:
  {{Meta}}-critic networks for sample efficient learning, arXiv preprint
  arXiv:1706.09529 (2017).
\newblock \href {http://arxiv.org/abs/1706.09529} {\path{arXiv:1706.09529}}.

\bibitem{houthooft2018evolved}
R.~Houthooft, R.~Y. Chen, P.~Isola, B.~C. Stadie, F.~Wolski, J.~Ho, P.~Abbeel,
  Evolved {{Policy Gradients}}, arXiv:1802.04821 [cs] (2018).
\newblock \href {http://arxiv.org/abs/1802.04821} {\path{arXiv:1802.04821}}.

\bibitem{wu2018learning}
L.~Wu, F.~Tian, Y.~Xia, Y.~Fan, T.~Qin, L.~{Jian-Huang}, T.-Y. Liu, Learning to
  {{Teach}} with {{Dynamic Loss Functions}}, in: Advances in {{Neural
  Information Processing Systems}} 32, 2018, p.~12.

\bibitem{xu2018metagradient}
Z.~Xu, H.~{van Hasselt}, D.~Silver, Meta-{{Gradient Reinforcement Learning}},
  in: Advances in {{Neural Information Processing Systems}} 31, 2018, p.~12.

\bibitem{zheng2018learning}
Z.~Zheng, J.~Oh, S.~Singh, On {{Learning Intrinsic Rewards}} for {{Policy
  Gradient Methods}}, arXiv:1804.06459 [cs, stat] (2018).
\newblock \href {http://arxiv.org/abs/1804.06459} {\path{arXiv:1804.06459}}.

\bibitem{antoniou2019learning}
A.~Antoniou, A.~Storkey, Learning to {{Learn}} via {{Self}}-{{Critique}},
  Advances in Neural Information Processing Systems 33 (2019) 11.

\bibitem{grabocka2019learning}
J.~Grabocka, R.~Scholz, L.~{Schmidt-Thieme}, Learning {{Surrogate Losses}},
  arXiv:1905.10108 [cs, stat] (2019).
\newblock \href {http://arxiv.org/abs/1905.10108} {\path{arXiv:1905.10108}}.

\bibitem{huang2019addressing}
C.~Huang, S.~Zhai, W.~Talbott, M.~A. Bautista, S.-Y. Sun, C.~Guestrin,
  J.~Susskind, Addressing the {{Loss}}-{{Metric Mismatch}} with {{Adaptive Loss
  Alignment}}, International conference on machine learning (2019) 10.

\bibitem{zou2019reward}
H.~Zou, T.~Ren, D.~Yan, H.~Su, J.~Zhu, Reward shaping via meta-learning, arXiv
  preprint arXiv:1901.09330 (2019).
\newblock \href {http://arxiv.org/abs/1901.09330} {\path{arXiv:1901.09330}}.

\bibitem{bechtle2020metalearning}
S.~Bechtle, A.~Molchanov, Y.~Chebotar, E.~Grefenstette, L.~Righetti,
  G.~Sukhatme, F.~Meier, Meta-{{Learning}} via {{Learned Loss}},
  arXiv:1906.05374 [cs, stat] (2020).
\newblock \href {http://arxiv.org/abs/1906.05374} {\path{arXiv:1906.05374}}.

\bibitem{gonzalez2020effective}
S.~Gonzalez, R.~Miikkulainen, Effective {{Regularization Through
  Loss}}-{{Function Metalearning}}, arXiv:2010.00788 [cs, stat] (2020).
\newblock \href {http://arxiv.org/abs/2010.00788} {\path{arXiv:2010.00788}}.

\bibitem{gonzalez2020improved}
S.~Gonzalez, R.~Miikkulainen, Improved {{Training Speed}}, {{Accuracy}}, and
  {{Data Utilization Through Loss Function Optimization}}, arXiv:1905.11528
  [cs, stat] (2020).
\newblock \href {http://arxiv.org/abs/1905.11528} {\path{arXiv:1905.11528}}.

\bibitem{kirsch2020improving}
L.~Kirsch, S.~{van Steenkiste}, J.~Schmidhuber, Improving {{Generalization}} in
  {{Meta Reinforcement Learning}} using {{Learned Objectives}},
  arXiv:1910.04098 [cs, stat] (2020).
\newblock \href {http://arxiv.org/abs/1910.04098} {\path{arXiv:1910.04098}}.

\bibitem{sicilia2021multidomain}
A.~Sicilia, X.~Zhao, D.~Minhas, E.~O'Connor, H.~Aizenstein, W.~Klunk,
  D.~Tudorascu, S.~J. Hwang, Multi-{{Domain Learning}} by
  {{Meta}}-{{Learning}}: {{Taking Optimal Steps}} in {{Multi}}-{{Domain Loss
  Landscapes}} by {{Inner}}-{{Loop Learning}}, arXiv:2102.13147 [cs, eess]
  (2021).
\newblock \href {http://arxiv.org/abs/2102.13147} {\path{arXiv:2102.13147}}.

\bibitem{liu2019darts}
H.~Liu, K.~Simonyan, Y.~Yang, {{DARTS}}: {{Differentiable Architecture
  Search}}, arXiv:1806.09055 [cs, stat] (2019).
\newblock \href {http://arxiv.org/abs/1806.09055} {\path{arXiv:1806.09055}}.

\bibitem{kawaguchi2019gradient}
K.~Kawaguchi, J.~Huang, Gradient descent finds global minima for generalizable
  deep neural networks of practical sizes, in: 2019 57th {{Annual Allerton
  Conference}} on {{Communication}}, {{Control}}, and {{Computing}}
  ({{Allerton}}), {IEEE}, 2019, pp. 92--99.

\bibitem{huang2020dynamics}
J.~Huang, H.-T. Yau, Dynamics of deep neural networks and neural tangent
  hierarchy, in: International {{Conference}} on {{Machine Learning}}, {PMLR},
  2020, pp. 4542--4551.

\bibitem{kawaguchi2021recipe}
K.~Kawaguchi, Q.~Sun, A {{Recipe}} for {{Global Convergence Guarantee}} in
  {{Deep Neural Networks}}, in: Proceedings of the {{AAAI Conference}} on
  {{Artificial Intelligence}}, Vol.~35, 2021, pp. 8074--8082.

\bibitem{kingma2014adam}
D.~P. Kingma, J.~Ba, Adam: {{A}} method for stochastic optimization, arXiv
  preprint arXiv:1412.6980 (2014).
\newblock \href {http://arxiv.org/abs/1412.6980} {\path{arXiv:1412.6980}}.

\end{thebibliography}

\newpage
\counterwithin{figure}{section}
\counterwithin{table}{section}
\counterwithin{equation}{section}
\begin{appendices}
	\section{Loss functions with multi-dimensional inputs}\label{app:meta:design:multi}
	
	For each $(t, x)$ in Eq.~\eqref{eq:param:pde}, $u(t, x)$, $\pazocal{F}_\lambda[u](t,x)$, and $\pazocal{B}_\lambda[u](t,x)$ belong to $\mathbb{R}^{D_u}$.
	The same holds for the outputs of the NN approximators $\hat{u}_{\theta}$, $\hat{f}_{\theta, \lambda}$ and $\hat{b}_{\theta, \lambda}$.
	As a result, the loss function $\ell_{\eta}$, with $\ell_{\eta}: D_u \times D_u \to \mathbb{R}_{\geq 0}$, outputting the distance between the predicted $\pazocal{F}_\lambda[\hat{u}_{\theta}](t,x)$ and $0$, between the predicted $\pazocal{B}_\lambda[\hat{u}_{\theta}](t,x)$ and $0$, and between the predicted $u_{\theta}(0,x)$ and $u_{0, \lambda}(x)$ in Eq.~\eqref{eq:pinns:loss:2}, takes as input two $D_u$-dimensional vectors and outputs a scalar loss value. 
	Considering for simplicity, as in Section~\ref{sec:prelim:loss}, each part of the objective function of Eqs.~\eqref{eq:pinns:loss:1}-\eqref{eq:pinns:loss:3} separately, the squared $\ell_2$-norm loss between $\hat{u}_{\theta}(t, x)$ and $u(t, x)$ for each datapoint $((t, x), u(t, x))$ is given as 
	\begin{equation}\label{eq:l2:loss}
		||\hat{u}_{\theta}(t, x) - u(t, x)||_2^2 = \sum_{j=1}^{D_u}(\hat{u}_{\theta, j}(t, x)-u_j(t, x))^2.
	\end{equation}   
	In Eq.~\eqref{eq:l2:loss}, the loss for each $j \in \{1,\dots,D_u\}$ depends only on the discrepancy between $\hat{u}_{\theta, j}(t, x)$ and $u_j(t, x)$, and the total loss $||\hat{u}_{\theta}(t, x) - u(t, x)||_2^2$ is given as the sum of the one-dimensional losses (i.e., of the losses pertaining to one-dimensional inputs).
	In this regard, see Table~\ref{tab:losses} for other than squared error candidates for the one-dimensional loss of Eq.~\eqref{eq:l2:loss}.
	
	As a first attempt towards constructing a loss function with multi-dimensional inputs, one can generalize Eq.~\eqref{eq:l2:loss} by considering a parametrized function $\hat{\ell}_{\eta}$ instead of the squared error in the summation.
	This gives rise to a parametrized loss function given as
	\begin{equation}\label{eq:g:sum:loss}
		\ell_{\eta}(\hat{u}_{\theta}(t, x), u(t, x)) = \sum_{j=1}^{D_u}a_j\hat{\ell}_{\eta}(\hat{u}_{\theta, j}(t, x)-u_j(t, x)),
	\end{equation} 
	where, in addition, each directional loss is multiplied by a weight $a_j$ for making the loss function $\ell_{\eta}$ more flexible/expressive. 
	The weighted sum of Eq.~\eqref{eq:g:sum:loss} in conjunction with the meta-learning Algorithm~\ref{algo:general} can lead to a loss function that normalizes each directional loss optimally.
	Instead of using the same $\hat{\ell}_{\eta}$ for each dimension as in Eq.~\eqref{eq:g:sum:loss}, more expressive loss functions can be constructed by considering a different loss function $\hat{\ell}_{\eta}(\hat{u}_{\theta}(t, x) - u(t, x))$ for each dimension, i.e., 
	\begin{equation}\label{eq:g:sum:loss:2}
		\ell_{\eta}(\hat{u}_{\theta}(t, x), u(t, x)) = \sum_{j=1}^{D_u}\hat{\ell}_{\eta, j}(\hat{u}_{\theta, j}(t, x)-u_j(t, x)).
	\end{equation} 
	The latter can be extended further by parametrizing $\hat{\ell}_{\eta}$ in such a way that both $\hat{u}_{\theta}(t, x)$, $u(t, x)$ are given as inputs, and not only the discrepancy $\hat{u}_{\theta}(t, x) - u(t, x)$. 
	For obtaining even more expressive loss functions $\ell_{\eta}$, one can replace the summation formulas of Eqs.~\eqref{eq:g:sum:loss}-\eqref{eq:g:sum:loss:2} by a more general function $\hat{\ell}_{\eta}$ that takes as inputs the vectors $\hat{u}_{\theta}(t, x)$, $u(t, x)$ instead of the corresponding values in each dimension.
	See Fig.~\ref{fig:gNN:higher} for a schematic illustration of the aforementioned indicative options.
	
	\begin{figure}[H]
		\centering
		\includegraphics[width=.8\linewidth]{./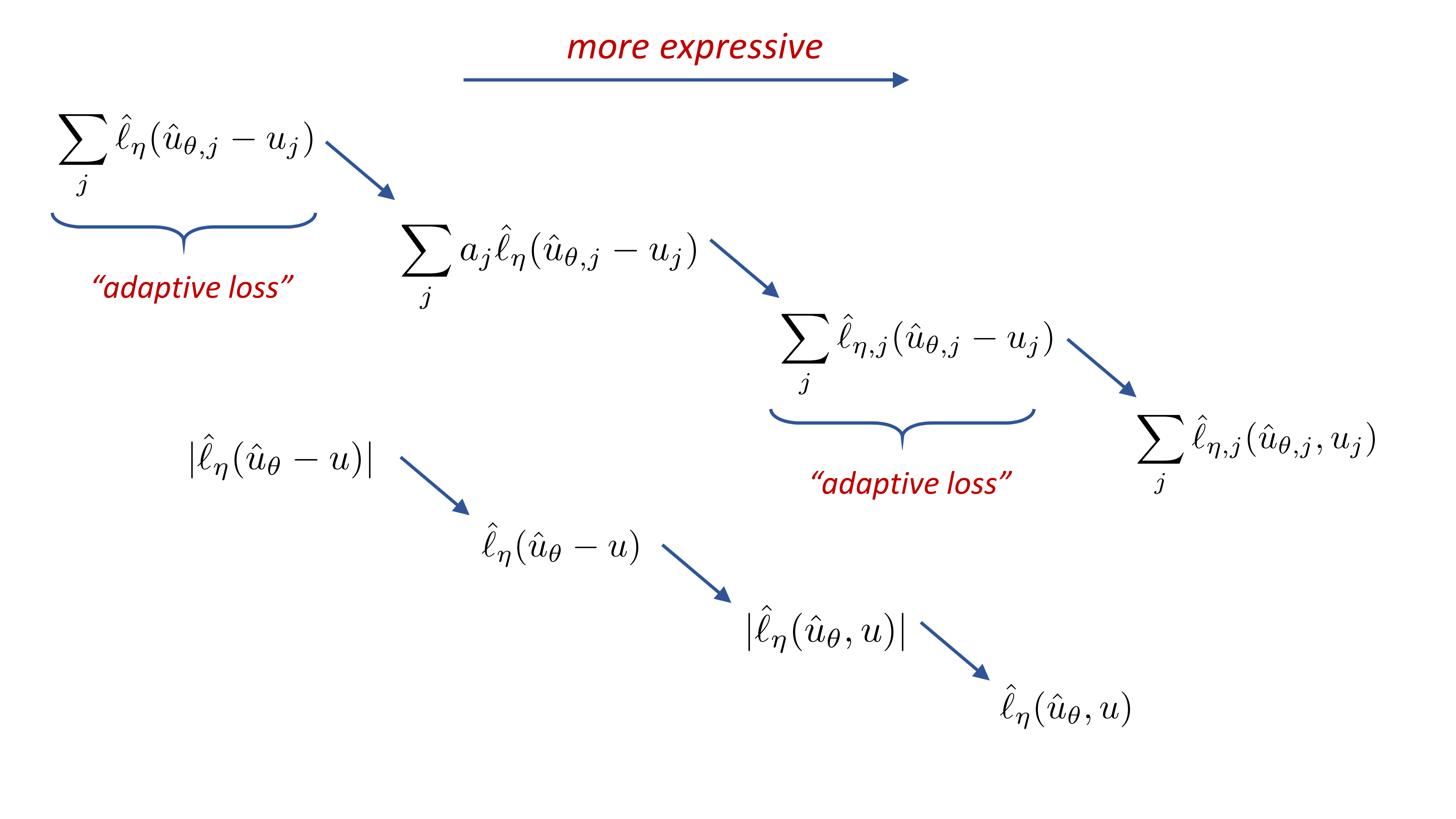}  
		\caption{Various indicative ways to represent loss function $\ell_{\eta}(\hat{u}_{\theta}(t, x)$, $u(t, x))$, ordered based on increasing expressiveness.
			``Adaptive loss'' corresponds to the loss function of \cite{barron2019general}; see also Section~\ref{sec:meta:design:param}.}
		\label{fig:gNN:higher}
	\end{figure}
	\begin{center}
		\begin{table}[H]
			\caption{Standard one-dimensional loss functions (i.e., pertaining to one-dimensional inputs $d=\hat{u}_{\theta, j}(t, x)-u_j(t, x)$ for each $j\in\{1,\dots,D_u\}$), accompanied by the corresponding first-order derivatives.
				Loss function general forms are adopted by \cite{barron2019general}; see more information in Section~\ref{sec:meta:design:param}.
			}
			\centering
			\begin{tabular}{ | c | c | c | c |} 
				\hline
				\textbf{Name} & \textbf{General form} & 
				\textbf{Loss sketch} & \textbf{Derivative sketch} \\
				\hline
				&&&\\[-1.67em]
				Squared error & $\frac{1}{2}(d/c)^2$  & 
				\begin{minipage}[]{.1\textwidth}
					\includegraphics[width=\linewidth, height=10mm]{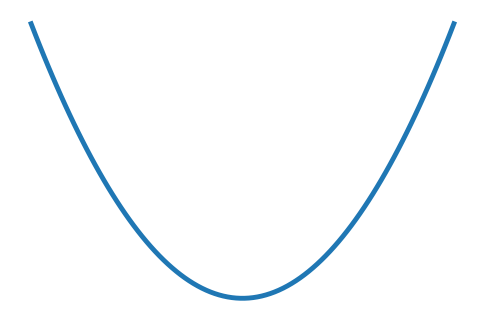}
				\end{minipage}&  
				\begin{minipage}[]{.1\textwidth}
					\includegraphics[width=\linewidth, height=10mm]{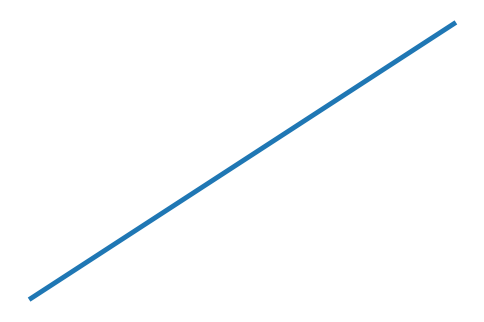}
				\end{minipage} \\
				\hline
				&&&\\[-1.67em]
				Absolute error & $|d|$  
				&  \begin{minipage}[]{.1\textwidth}
					\includegraphics[width=\linewidth, height=10mm]{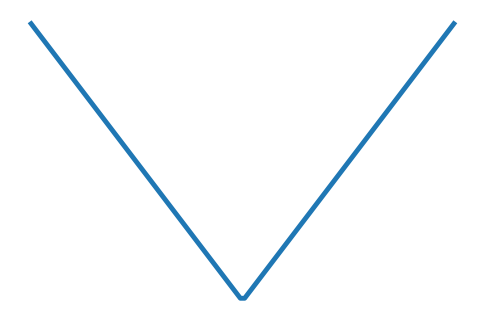}
				\end{minipage}&  
				\begin{minipage}[]{.1\textwidth}
					\includegraphics[width=\linewidth, height=10mm]{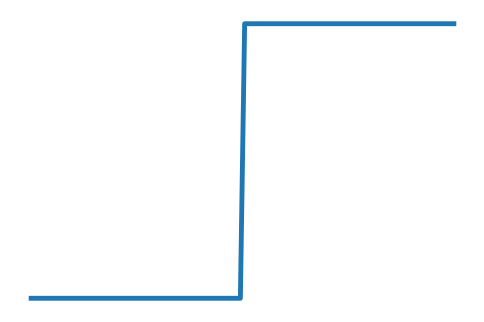}
				\end{minipage} \\
				\hline
				&&&\\[-1.67em]
				Huber & 
				\makecell{
					\shortstack{$0.5d^2 / c, \ \text{if } |x| < c$ \\ $\text{else} \ |x| - 0.5c$}
				}
				&  \begin{minipage}[]{.1\textwidth}
					\includegraphics[width=\linewidth, height=10mm]{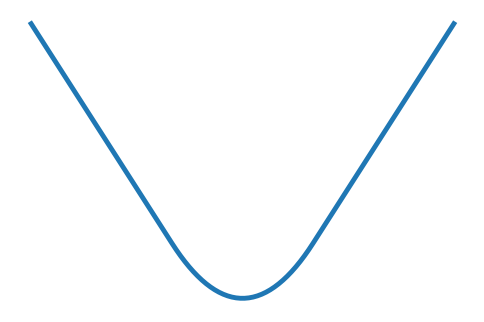}
				\end{minipage}&  
				\begin{minipage}[]{.1\textwidth}
					\includegraphics[width=\linewidth, height=10mm]{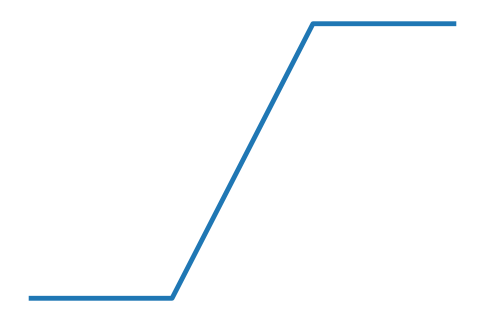}
				\end{minipage} \\
				\hline
				&&&\\[-1.67em]
				pseudo-Huber & $\sqrt{(d/c)^2 + 1} - 1$  
				&  \begin{minipage}[]{.1\textwidth}
					\includegraphics[width=\linewidth, height=10mm]{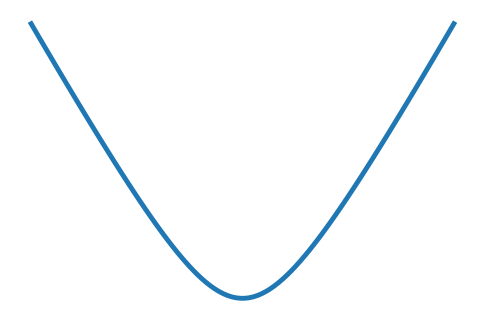}
				\end{minipage}&  
				\begin{minipage}[]{.1\textwidth}
					\includegraphics[width=\linewidth, height=10mm]{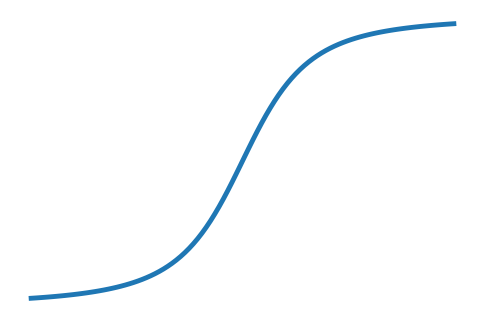}
				\end{minipage} \\
				\hline
				&&&\\[-1.67em]
				Cauchy & $\log\left(\frac{1}{2}(d/c)^2 + 1\right)$ 
				&  \begin{minipage}[]{.1\textwidth}
					\includegraphics[width=\linewidth, height=10mm]{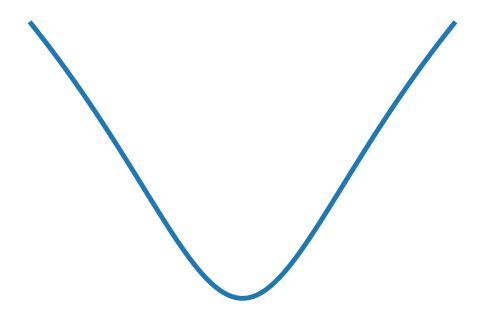}
				\end{minipage}&  
				\begin{minipage}[]{.1\textwidth}
					\includegraphics[width=\linewidth, height=10mm]{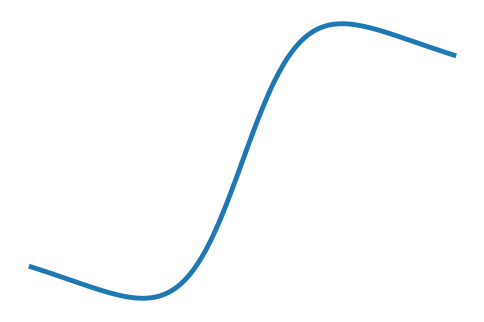}
				\end{minipage} \\
				\hline
				&&&\\[-1.67em]
				Geman-McClure & $\frac{2(d/c)^2}{(d/c)^2 + 4}$    
				&  \begin{minipage}[]{.1\textwidth}
					\includegraphics[width=\linewidth, height=10mm]{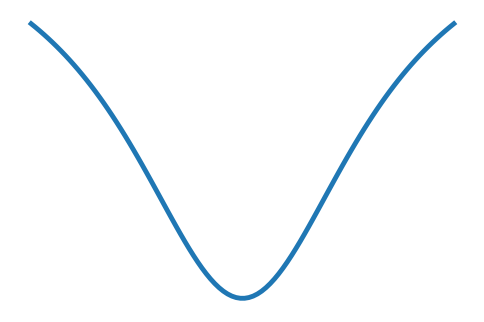}
				\end{minipage}&  
				\begin{minipage}[]{.1\textwidth}
					\includegraphics[width=\linewidth, height=10mm]{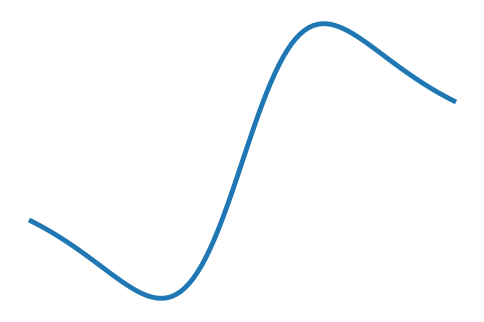}
				\end{minipage} \\
				\hline
				&&&\\[-1.67em]
				Welsch & $1 - \exp\left(-\frac{1}{2}(d/c)^2\right)$  
				&  \begin{minipage}[]{.1\textwidth}
					\includegraphics[width=\linewidth, height=10mm]{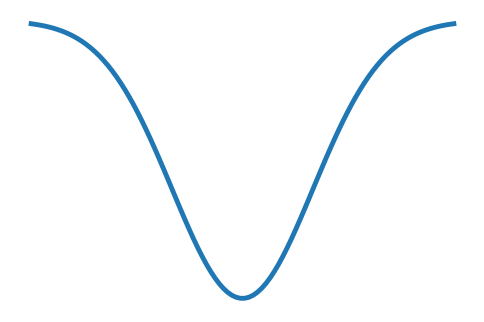}
				\end{minipage}&  
				\begin{minipage}[]{.1\textwidth}
					\includegraphics[width=\linewidth, height=10mm]{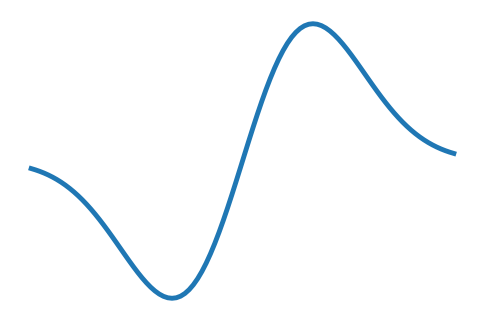}
				\end{minipage} \\
				\hline
			\end{tabular}
			\label{tab:losses}
		\end{table}
	\end{center}

\section{Proofs}\label{app:proofs}

\subsection{Proof of Theorem \ref{thm:1}}
\begin{proof}
	We now prove the first statement  of this theorem.
	Let $\theta$ be an arbitrary stationary point.
	Then, we have that 
	\begin{align}
		0=\frac{\partial \pazocal{L}(\theta)}{\partial \theta} &=\frac{1}{N} \sum_{i=1}^N \left(\frac{\partial \ell(q,u_{i})}{\partial q} \Big\vert_{q=\hat{u}_{\theta}(x_{i})} \right) \frac{\partial \hat{u}_{\theta}(x_{i})}{\partial \theta}
		\\ & =\frac{1}{N} \sum_{i=1}^N \sum_{j=1}^{D_u} \left(\frac{\partial \ell(q,u_{i})}{\partial q_j} \Big\vert_{q=\hat{u}_{\theta}(x_{i})} \right) \frac{\partial \hat{u}_{\theta}(x_{i})_{j}}{\partial \theta} .
	\end{align}
	By rearranging for the gradient, 
	\begin{equation} \label{eq:new:1}
		0=N\nabla\pazocal{L}(\theta)=\sum_{i=1}^N \sum_{j=1}^{D_u}  \left(\frac{\partial \hat{u}_{\theta}(x_{i})_{j}}{\partial \theta}\right)\T \left(\frac{\partial \ell(q,u_{i})}{\partial q_j} \Big\vert_{q=\hat{u}_{\theta}(x_{i})} \right)\T.
	\end{equation}    
	By rearranging the double sum into the matrix-vector product, 
	\begin{equation}
		0 =\left(\frac{\partial \hat{u}_X (\theta)}{\partial \theta}\right)\T v,
	\end{equation}
	where 
	\begin{equation}
		v=\vect\left(\left(\frac{\partial \ell(q,u_{1})}{\partial q} \Big\vert_{q=\hat{u}_{\theta}(x_{1})} \right)\T,\dots,\left(\frac{\partial \ell(q,u_{N})}{\partial q} \Big\vert_{q=\hat{u}_{\theta}(x_{N})} \right)\T \right)\in \RR^{N D_u}.
	\end{equation}
	Therefore, if $\rank(\frac{\partial \hat{u}_X (\theta)}{\partial \theta})=ND_u$, we have that $v=0$. 
	Here, by the definition of the $v$, the fact of $v=0$ implies that 
	\begin{equation}
		\frac{\partial \ell(q,u_{i})}{\partial q} \Big\vert_{q=\hat{u}_{\theta}(x_{i})}=0,
	\end{equation}
	for all $i =1,\dots, N$. 
	Since the loss $\ell$ satisfies \nc, this implies that for all $i =1,\dots, N$,
	\begin{equation}
		\ell(\hat{u}_{\theta}(x_{i}),u_{i})\le \ell(q',u_i), \qquad \forall q' \in \RR^{D_u}.
	\end{equation}  
	Since the objective function $\pazocal{L}$ is the sum of these terms, 
	\begin{equation}
		\pazocal{L}(\theta)=\frac{1}{N} \sum_{i=1}^N \ell(\hat{u}_{\theta}(x_{i}),u_{i})\le\frac{1}{N} \sum_{i=1}^N \ell(q'_{i},u_i), \qquad \forall q'_{1},\dots, q'_{N}\in \RR^{D_u}.
	\end{equation}
	This implies that 
	\begin{equation}
		\pazocal{L}(\theta)\le \inf_{ q'_{1},\dots, q'_{N}\in \RR^{D_u}}\frac{1}{N} \sum_{i=1}^N \ell(q'_{i},u_i) \le\pazocal{L}(\theta'), \qquad \forall \theta \in \RR^{D_{\theta}},
	\end{equation}
	where $D_{\theta}$ is the dimension of $\theta$.
	This shows that an arbitrary stationary point $\theta$ is a global minimum of $\pazocal{L}$ if $\rank(\frac{\partial \hat{u}_X (\theta)}{\partial \theta})=ND_u$. This proves the first statement of this theorem. 
	
	We now proceed to prove the second statement  of this theorem. Using Eq.~\eqref{eq:new:1}, for any $\theta$ such that $(\frac{\partial \ell(q,u_{i})}{\partial q_j} \vert_{q=\hat{u}_{\theta}(x_{i})})=0$ for all $i \in \{1,\dots,N\}$,
	we have $\nabla\pazocal{L}(\theta)=0$. In other words, every  $\theta$ such that $(\frac{\partial \ell(q,u_{i})}{\partial q_j} \vert_{q=\hat{u}_{\theta}(x_{i})})=0$ for all $i \in \{1,\dots,N\}$ is a stationary point of  $\pazocal{L}$. Using the assumption of  $\{\hat{u}_X (\theta)\in \RR^{N D_u}:\theta\in \RR^{D_{\theta}}\}=\RR^{N D_u}$,  if there exists a global minimum of $\pazocal{L}$ (it is possible that a global minimum does not exist), then it achieves the global minimum values of $\ell(\cdot,u_i)$ for all $i \in \{1,\dots,N\}$. 
	Thus, all we need to show is the existence of a stationary point of $\pazocal{L}$  that does not achieve the global minimum values of $\ell(\cdot,u_i)$ for all $i \in \{1,\dots,N\}$. Using the assumption of  $\{\hat{u}_X (\theta)\in \RR^{N D_u}:\theta\in \RR^{D_{\theta}}\}=\RR^{N D_u}$ and the assumption of having a point $q \in \RR^{D_u}$ such that $\nabla_{q} \ell(q,u)=0$ and   $\ell(q,u) > \ell(q',u)$ for some $q' \in \RR^{D_u}$, there exists a $\bar \theta$ such that for all $i \in \{1,\dots,N\},$  
	$$
	\frac{\partial \ell(q,u_{i})}{\partial q_j} \Big\vert_{q=\hat{u}_{\bar \theta}(x_{i})}=0 \text{ and } \ell(\hat{u}_{\bar \theta}(x_{i}),u_{i}) > \ell(q'_{i},u_{i}) \text{ for some } q'_{i} \in \RR^{D_u}.
	$$ 
	Here,  $\bar \theta$ is a stationary point of $\pazocal{L}$   since  $\frac{\partial \ell(q,u_{i})}{\partial q_j} \vert_{q=\hat{u}_{\bar \theta}(x_{i})}=0$  for all $i \in \{1,\dots,N\}$. Moreover,   $\bar \theta$ is not a global minimum  of $\pazocal{L}$   since $\ell(\hat{u}_{\bar \theta}(x_{i}),u_{i}) > \ell(q'_{i},u_{i}) \text{ for some } q'_{i} \in \RR^{D_u}$. This proves  the second statement of this theorem. 
\end{proof} 

\subsection{Proof of Theorem \ref{thm:2}}
To prove this theorem, we utilize the following known lemma:

\begin{lemma} \label{lemma:known_1}
	For any differentiable function $\varphi: \dom(\varphi) \rightarrow \RR$ with an open  convex domain $\dom(\varphi) \subseteq \RR^{D_\varphi}$, if   $\|\nabla \varphi(z') - \nabla \varphi(z)\| \le L_{ \varphi} \|z'-z\|$ for all $z,z' \in \dom(\varphi)$, then
	\begin{align}
		\varphi(z') \le \varphi(z) + \nabla \varphi(z)\T (z'-z) + \frac{L_{ \varphi}}{2} \|z'-z\|^2 \quad   \text{for all $z,z' \in \dom(\varphi) $}.
	\end{align}
\end{lemma}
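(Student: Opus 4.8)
The plan is to reduce the multivariate inequality to a one-dimensional integral along the segment joining $z$ and $z'$, and then to apply the Lipschitz hypothesis together with the Cauchy--Schwarz inequality. Since $\dom(\varphi)$ is convex, for fixed $z, z' \in \dom(\varphi)$ the segment $\{z + t(z'-z) : t \in [0,1]\}$ lies entirely in the domain, so I may introduce the auxiliary scalar function $g(t) = \varphi(z + t(z'-z))$ on $[0,1]$. Differentiability of $\varphi$ gives, by the chain rule, $g'(t) = \nabla\varphi(z + t(z'-z))\T (z'-z)$, and openness of $\dom(\varphi)$ guarantees that $g$ is differentiable on an open interval containing $[0,1]$.

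First I would apply the fundamental theorem of calculus to $g$ to write $\varphi(z') - \varphi(z) = g(1) - g(0) = \int_0^1 \nabla\varphi(z + t(z'-z))\T (z'-z)\, dt$. Subtracting the linear term, written as $\nabla\varphi(z)\T(z'-z) = \int_0^1 \nabla\varphi(z)\T(z'-z)\,dt$, from both sides isolates the remainder $\varphi(z') - \varphi(z) - \nabla\varphi(z)\T(z'-z)$ as the single integral $\int_0^1 \bigl(\nabla\varphi(z + t(z'-z)) - \nabla\varphi(z)\bigr)\T (z'-z)\, dt$.

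Next I would bound the remainder pointwise in $t$. By Cauchy--Schwarz the integrand is at most $\|\nabla\varphi(z+t(z'-z)) - \nabla\varphi(z)\|\,\|z'-z\|$, and the Lipschitz hypothesis applied to the pair $\bigl(z,\ z+t(z'-z)\bigr)$ controls the first factor by $L_{\varphi}\, t\,\|z'-z\|$. Collecting factors and using $\int_0^1 t\,dt = \tfrac12$ yields the bound $\tfrac{L_{\varphi}}{2}\|z'-z\|^2$ on the remainder, which is exactly the claimed inequality after rearrangement.

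Since this is a classical descent lemma, there is no genuine obstacle; the only points requiring care are invoking convexity so that the whole segment remains in $\dom(\varphi)$ (needed for $g$ to be well defined and for the Lipschitz bound to be applicable at every interpolated point), and being explicit that the Lipschitz estimate is evaluated at the interpolated argument $z + t(z'-z)$, which supplies the extra factor $t$ and hence produces the constant $\tfrac12$ rather than $1$.
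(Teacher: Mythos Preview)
Your proposal is correct and follows essentially the same route as the paper: both reduce to the one-dimensional function $t\mapsto\varphi(z+t(z'-z))$ along the segment (using convexity and openness of the domain), apply the fundamental theorem of calculus, subtract the linear term, and bound the remainder via Cauchy--Schwarz together with the Lipschitz estimate at the interpolated point to obtain the factor $\int_0^1 t\,dt=\tfrac12$. The paper additionally verifies explicitly that the derivative of the auxiliary scalar function is Lipschitz, but this is not needed for the argument as you have written it.
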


\begin{proof}[Proof of Lemma \ref{lemma:known_1}]
	Fix $z,z'\in \dom(\varphi) \subseteq \RR^{D_{\varphi}}$. Since $\dom(\varphi)$ is a convex set,  $z+r(z'-z)\in\dom(\varphi)$ for all $r \in [0, 1]$.
	Since $\dom(\varphi) $ is open, there exists $\zeta >0$ such that $z+(1+\zeta')(z'-z)\in\dom(\varphi)$ and $z+(0-\zeta')(z'-z)\in\dom(\varphi)$ for all $\zeta'\le \zeta$. Fix $\zeta>0$ to be such a number. Combining these, $z+r(z'-z)\in\dom(\varphi)$ for all $r \in [0-\zeta, 1+\zeta]$.
	
	Accordingly, we can define  a function $\bar \varphi: [0-\zeta, 1+\zeta] \rightarrow \RR$ by $\bar \varphi(r)=\varphi(z+r(z'-z))$. Then, $\bar \varphi(1)=\varphi(z')$, $\bar \varphi(0)=\varphi(z)$, and $\nabla \bar \varphi(r)=\nabla\varphi(z+r(z'-z))\T (z'-z)$ for $r \in [0, 1] \subset (0-\zeta,1+\zeta)$. Since $\|\nabla \varphi(z') - \nabla \varphi(z)\| \le L_{ \varphi} \|z'-z\|$, 
	\begin{align}
		\|\nabla\bar  \varphi(r')-\nabla\bar  \varphi(r) \| &=\|[\nabla\varphi(z+r'(z'-z)) -\nabla\varphi(z+r(z'-z))\T (z'-z) \|
		\\ &\le \|z'-z\|\|\nabla\varphi(z+r'(z'-z)) -\nabla\varphi(z+r(z'-z))  \| \\ & \le  L_{ \varphi}\|z'-z\|\|(r'-r)(z'-z)   \|
		\\ & \le L_{ \varphi}\|z'-z\|^{2}\|r'-r   \|.
	\end{align}
	Thus, $\nabla \bar \varphi:[0, 1]\rightarrow \RR$ is Lipschitz continuous with the Lipschitz constant $L_{ \varphi}\|z'-z\|^{2}$, and hence $\nabla\bar  \varphi$ is continuous. 
	
	By using the fundamental theorem of calculus with the continuous function $\nabla\bar  \varphi:[0, 1]  \rightarrow \RR$,
	\begin{align}
		\varphi(z')&=\varphi(z)+ \int_0^1 \nabla\varphi(z+r(z'-z))\T (z'-z)dr
		\\ &=\varphi(z)+\nabla\varphi(z)\T (z'-z)+ \int_0^1 [\nabla\varphi(z+r(z'-z))-\nabla\varphi(z)]\T (z'-z)dr
		\\ & \le \varphi(z)+\nabla\varphi(z)\T (z'-z)+ \int_0^1 \|\nabla\varphi(z+r(z'-z))-\nabla\varphi(z)\| \|z'-z \|dr
		\\ & \le \varphi(z)+\nabla\varphi(z)\T (z'-z)+ \int_0^1 r L_{ \varphi}\|z'-z\|^{2}dr
		\\ & =  \varphi(z)+\nabla\varphi(z)\T (z'-z)+\frac{L_{ \varphi}}{2}\|z'-z\|^{2}. 
	\end{align}
	
\end{proof} 

\begin{proof}[Proof of Theorem \ref{thm:2}] 
	The function $\pazocal{L}$ is differentiable since $\ell_{i}$ is differentiable, $\theta_{}\mapsto \hat{u}_{\theta}(x_{i})$ is differentiable, and a composition of differentiable functions is differentiable. We will first show that in both cases of (i) and (ii) for the learning rates, we have $\lim_{r \rightarrow \infty}\nabla\pazocal{L}(\theta_{}^{(r)}) = 0$. If $\nabla\pazocal{L}(\theta_{}^{(r)})=0$ at any $r\ge 0$, then
	Assumption \ref{assump:5} ($\|\bg^{(r)}\|_2^2 \le \bc \|\nabla\pazocal{L}(\theta_{}^{(r)})\|_2^2$) implies  $\bg^{(r)}=0$, which implies 
	\begin{equation}
		\theta_{}^{(r+1)}= \theta^{(r)}_{} \text{ and } \nabla\pazocal{L}(\theta_{}^{(r+1)})=\nabla\pazocal{L}(\theta_{}^{(r)})=0.
	\end{equation}
	This means that  if $\nabla\pazocal{L}(\theta_{}^{(r)})=0$ at any $r\ge 0$, we have that  $\bg^{(r)}=0$ and $\nabla\pazocal{L}(\theta_{}^{(r')})=0$ for all $r '\ge r$ and hence
	\begin{align} 
		\lim_{r \rightarrow \infty}\nabla\pazocal{L}(\theta_{}^{(r)}) = 0,
	\end{align}
	as desired. Therefore, we now focus on the remaining scenario where $\nabla\pazocal{L}(\theta_{}^{(r)})\neq 0$ for all $r \ge0$. 
	
	By using Lemma \ref{lemma:known_1}, 
	\begin{equation}
		\pazocal{L}(\theta_{}^{(r+1)})\le \pazocal{L}(\theta_{}^{(r)})-\epsilon^{(r)}  \nabla\pazocal{L}(\theta_{}^{(r)})  \T\bg^{(r)}  + \frac{L(\epsilon^{(r)})^2  }{2} \|\bg^{(r)} \|^2.
	\end{equation}
	By rearranging and using Assumption \ref{assump:5},
	\begin{align}
		\pazocal{L}(\theta_{}^{(r)})-\pazocal{L}(\theta_{}^{(r+1)}) &\ge \epsilon^{(r)} \nabla\pazocal{L}(\theta_{}^{(r)})  \T\bg^{(r)}  - \frac{L(\epsilon^{(r)})^2  }{2} \|\bg^{(r)} \|^2
		\\ & \ge\epsilon^{(r)}  \uc \|\nabla\pazocal{L}(\theta_{}^{(r)})\|^2 -\frac{L(\epsilon^{(r)})^2  }{2}\bc \|\nabla\pazocal{L}(\theta_{}^{(r)})\|^2.
	\end{align}
	By simplifying the right-hand-side, 
	\begin{align} \label{eq:9_2}
		\pazocal{L}(\theta_{}^{(r)})-\pazocal{L}(\theta_{}^{(r+1)}) &\ge\epsilon^{(r)} \|\nabla\pazocal{L}(\theta_{}^{(r)})\|^2 (\uc-\frac{L\epsilon^{(r)}  }{2}\bc).
	\end{align}
	
	Let us now focus on case (i). Then, using $ \epsilon^{(r)} \le \frac{\uc\ (2-\zeta)}{L\bc}$, 
	\begin{equation}
		\frac{L\epsilon^{(r)}}{2}\bc\le\frac{L\uc\ (2-\zeta)}{2L\bc}\bc =\uc-\frac{\zeta}{2}\uc \text{ .}
	\end{equation}
	Using this inequality and using $\zeta\le \epsilon^{(r)}$ in Eq.~\eqref{eq:9_2},
	\begin{align} \label{eq:10_2}
		\pazocal{L}(\theta_{}^{(r)})-\pazocal{L}(\theta_{}^{(r+1)}) &\ge\frac{\uc\zeta^{2}}{2}
		\|\nabla\pazocal{L}(\theta_{}^{(r)})\|^2.
	\end{align}
	Since $\nabla\pazocal{L}(\theta_{}^{(r)})\neq 0$ for any $r\ge 0$ (see above) and $\zeta >0 $, this means that the sequence $(\pazocal{L}(\theta_{}^{(r)}))_{r}$ is monotonically decreasing. Since $\pazocal{L}(q) \ge 0$ for any $q$ in its domain, this implies that the sequence $(\pazocal{L}(\theta_{}^{(r)}))_{r}$ converges. Therefore, $\pazocal{L}(\theta_{}^{(r)})-\pazocal{L}(\theta_{}^{(r+1)}) \rightarrow 0$  as $r \rightarrow \infty$. Using Eq.~\eqref{eq:10_2}, this implies that 
	\begin{equation}
		\lim_{r \rightarrow \infty}\nabla\pazocal{L}(\theta_{}^{(r)}) = 0,
	\end{equation}
	which proves the desired result for the case (i). 
	
	We now focus on the case (ii). Then, we still have Eq.~\eqref{eq:9_2}. Since $\lim_{r \rightarrow \infty}\epsilon^{(r)} =0$ in Eq.~\eqref{eq:9_2}, the first order term in $\epsilon^{(r)}$ dominates after sufficiently large $r$: i.e., there exists $\bar r \ge0$ such that for any $r\ge \bar r$, 
	\begin{align} \label{eq:11_2}
		\pazocal{L}(\theta_{}^{(r)})-\pazocal{L}(\theta_{}^{(r+1)}) \ge c \epsilon^{(r)}   \|\nabla\pazocal{L}(\theta_{}^{(r)})\|^2
	\end{align}
	for some constant $c>0$. Since $\nabla\pazocal{L}(\theta_{}^{(r)})\neq 0$ for any $r\ge 0$ (see above) and $c \epsilon^{(r)}>0$, this means that the sequence $(\pazocal{L}(\theta_{}^{(r)}))_{r}$ is monotonically decreasing.  Since $\pazocal{L}(q) \ge 0$ for any $q$ in its domain, this implies that the sequence $(\pazocal{L}(\theta_{}^{(r)}))_{r}$ converges to a finite value.
	Thus, by adding Eq. \eqref{eq:11_2} both sides over all $r \ge \bar r$, 
	\begin{align} 
		\infty >  \pazocal{L}(\theta_{}^{(\bar r)})-\lim_{r \rightarrow \infty}\pazocal{L}(\theta_{}^{(r)}) \ge c  \sum _{r=\bar r}^\infty \epsilon^{(r)}   \|\nabla\pazocal{L}(\theta_{}^{(r)})\|^2.
	\end{align}        
	Since $\sum _{r=0}^\infty \epsilon^{(r)}  = \infty$, this implies that  $\liminf _{r\to \infty }\|\nabla\pazocal{L}(\theta^{(r)})\|=0$. We now show that $ \limsup_{r\to \infty }\|\nabla\pazocal{L}(\allowbreak \theta^{(r)})\|=0$ by contradiction. Suppose that $\limsup_{r\to \infty }\|\nabla\pazocal{L}(\theta^{(r)})\| > 0$. Then, there exists $\delta>0$ such that $\limsup_{r\to \infty }\|\nabla\pazocal{L}(\theta^{(r)})\|\ge \delta$. Since $\liminf _{r\to \infty }\|\nabla\pazocal{L}(\theta^{(r)})\|=0$ and $\limsup_{r\to \infty }\|\nabla\pazocal{L}(\theta^{(r)})\|\ge \delta$, let ${\rho_j}_{j}$ and ${\rho'_j}_j$ be  sequences of indexes such that $\rho_j<\rho'_j<\rho_{j+1}$, $\|\nabla\pazocal{L}(\theta^{(r)})\|>\frac{\delta}{3}$ for $\rho_j \le r < \rho_j'$, and  $\|\nabla\pazocal{L}(\theta^{(r)})\|\le \frac{\delta}{3}$ for $\rho_j '\le r < \rho_{j+1}$. Since $\sum _{r=\bar r}^\infty \epsilon^{(r)}   \|\nabla\pazocal{L}(\theta_{}^{(r)})\|^2< \infty$, let $\bar j$ be sufficiently large such that  $\sum _{r=\rho_{\bar j}}^\infty \epsilon^{(r)}   \|\nabla\pazocal{L}(\theta_{}^{(r)})\|^2<\frac{\delta^2}{9L\sqrt{\bc}}$. Then, for any $j \ge \bar j$ and any $\rho$ such that $\rho_j \le \rho \le \rho'_j -1$, we have that 
	\begin{align}
		\|\nabla\pazocal{L}(\theta_{}^{(\rho)})\|-\|\nabla\pazocal{L}(\theta_{}^{(\rho_j')})\|&\le\|\nabla\pazocal{L}(\theta_{}^{(\rho_j')})-\nabla\pazocal{L}(\theta_{}^{(\rho)})\|
		\\ &=\left\|\sum_{r=\rho}^{\rho'_j-1}\nabla\pazocal{L}(\theta_{}^{(r+1)})-\nabla\pazocal{L}(\theta_{}^{(r)})\right\|
		\\ & \le \sum_{r=\rho}^{\rho'_j-1}\left\|\nabla\pazocal{L}(\theta_{}^{(r+1)})-\nabla\pazocal{L}(\theta_{}^{(r)})\right\|  \\ & \le L  \sum_{r=\rho}^{\rho'_j-1}\left\|\theta_{}^{(r+1)}-\theta_{}^{(r)}\right\|
		\\ & \le L\sqrt{\bc}  \sum_{r=\rho}^{\rho'_j-1} \epsilon^{(r)}   \left\|\nabla\pazocal{L}(\theta_{}^{(r)})\right\|,
	\end{align}
	where the first and third lines use the triangle inequality (and symmetry), the forth line uses the assumption that     $\|\nabla\pazocal{L}(\theta)-\nabla\pazocal{L}(\theta')\|\le L \|\theta-\theta'\|$, and the last line follows the  definition of  $\theta_{}^{(r+1)}-\theta_{}^{(r)}=-\epsilon^{(r)} \bg$ and the assumption of $\|\bg^{(r)}\|^2 \le \bc \|\nabla\pazocal{L}(\theta_{}^{(r)})\|^2
	$.  Then, by using the definition of the sequences of the indexes, 
	\begin{align}
		\|\nabla\pazocal{L}(\theta_{}^{(\rho)})\|-\|\nabla\pazocal{L}(\theta_{}^{(\rho_j')})\|&\le \frac{3L\sqrt{\bc}}{\delta}  \sum_{r=\rho}^{\rho'_j-1} \epsilon^{(r)}   \left\|\nabla\pazocal{L}(\theta_{}^{(r)})\right\|^2 \le \frac{\delta}{3}.
	\end{align}
	Here, since  $\|\nabla\pazocal{L}(\theta_{}^{(\rho_j')})\|\le \frac{\delta}{3}$, by rearranging the inequality, we have that for any $\rho\ge \rho_{\bar j}$, 
	\begin{align}
		\|\nabla\pazocal{L}(\theta_{}^{(\rho)})\|\le\frac{2\delta}{3}.
	\end{align}
	This contradicts the inequality of  $\limsup_{r\to \infty }\|\nabla\pazocal{L}(\theta^{(r)})\|\ge \delta$. Thus, we have 
	\begin{align}
		\limsup_{r\to \infty }\|\nabla\pazocal{L}(\theta^{(r)})\| =  \liminf _{r\to \infty }\|\nabla\pazocal{L}(\theta^{(r)})\|= 0.
	\end{align}
	This implies that 
	\begin{equation}
		\lim_{r \rightarrow \infty}\nabla\pazocal{L}(\theta_{}^{(r)}) = 0,
	\end{equation}
	which proves the desired result for the case (ii). 
	Therefore, in both cases of (i) and (ii) for the learning rates, we have $\lim_{r \rightarrow \infty}\nabla\pazocal{L}(\theta^{(r)}) = 0$. From Theorem \ref{thm:1}, this implies that an arbitrary limit point $\theta$ of the sequence $\{\theta^{(r)}\}_{r=0}$ is a global minimum of $\pazocal{L}$ if $\rank(\frac{\partial \hat{u}_X (\theta)}{\partial \theta})=ND_u$.   
	
\end{proof} 

\subsection{Proof of Corollary \ref{corollary:1}}
\begin{proof}
	By following  the proof of the first statement of Theorem \ref{thm:1} (see Eq. \eqref{eq:new:2}), we have that at any stationary point $\theta$ of $\pazocal{L}$,  
	\begin{equation} \label{eq:new:2}
		\frac{\partial \ell(q,u_{i})}{\partial q} \Big\vert_{q=\hat{u}_{\theta}(x_{i})}=0 \;\text{ for all } i =1,\dots, N.
	\end{equation}
	By the assumption of  $\nabla_{q} \ell(q,u)=0$ if and only if $q=u$, this implies that at any stationary point $\theta$ of $\pazocal{L}$, 
	$$
	\hat{u}_{\theta}(x_{i})=u_{i}  \;\text{ for all } i =1,\dots, N.
	$$
	This implies that at any stationary point $\theta$ of $\pazocal{L}$, we have that  $\pazocal{L}_{\mathrm{MSE}}(\theta)=\frac{1}{N}\sum_{i=1}^N\|\hat u_\theta(x_{i})-u_{i}\|_2^2=0$, which is the global minimum value of $\pazocal{L}_{\mathrm{MSE}}$.
\end{proof}

\subsection{Proof of Proposition \ref{prop:1}}
\begin{proof} Let $\ell(q,u)=\rho_{\alpha,c}(q-u) = \frac{|\alpha - 2|}{\alpha}((\frac{((q-u)/c)^2}{|\alpha - 2|} + 1)^{\alpha/2} - 1)$.
	Let $c$ and $\alpha$ be real numbers such that $c>0$, $\alpha\neq0$, and  $\alpha\neq 2$. Then,\begin{align}
		\frac{\partial \ell(q,u)}{\partial q} &=\frac{|\alpha - 2|}{\alpha} \left(\frac{\alpha}{2} \left(\frac{((q-u)/c)^2}{|\alpha - 2|} + 1\right)^{(\alpha/2)-1} \right)\frac{1}{|\alpha - 2|} \frac{1}{c^2} 2(q-u)
		\\ & =\left(\frac{1}{c^2}\left(\frac{((q-u)/c)^2}{|\alpha - 2|} + 1\right)^{(\alpha/2)-1}\right) (q-u). \label{eq:new:3}
	\end{align}
	Here, since any (real) power of strictly positive real number is strictly positive, we have 
	\begin{align} \label{eq:new:4}
		\frac{1}{c^2}\left(\frac{((q-u)/c)^2}{|\alpha - 2|} + 1\right)^{(\alpha/2)-1} >0.
	\end{align}
	By combining Eq. \eqref{eq:new:3} and Eq. \eqref{eq:new:4}, we have that  $\frac{\partial \ell(q,u)}{\partial q}=0$ implies $q-u =0$. On the other hand, using Eq. \eqref{eq:new:3}, we have that   $q-u =0$ implies   $\frac{\partial \ell(q,u)}{\partial q}=0$. In other words, 
	\begin{align} \label{eq:new:5}
		\frac{\partial \ell(q,u)}{\partial q}=0  \qquad \Longleftrightarrow \qquad  q = u.
	\end{align}
	This proves the statement for  the second condition that   $\nabla_{q} \ell(q,u)=0$ if and only if $q=u$. We now prove the  statement for \nc\ by showing that $q = u$ implies that  $\ell(q,u) \le \ell(q',u)$ for all $q' \in \RR$. If $q = u$, then 
	\begin{align}
		\ell(q,u)=\frac{|\alpha - 2|}{\alpha}\left(\left( 1\right)^{\alpha/2} - 1\right)=0.
	\end{align}  
	On the other hand, we have that
	\begin{align}
		\ell(q,u) \ge 0, \qquad \forall q,u\in \RR,     
	\end{align}
	since 
	$
	((\frac{(d/c)^2}{|\alpha - 2|} + 1)^{\alpha/2} - 1) \ge 0
	$ if $\alpha \ge 0$ and  
	$
	((\frac{(d/c)^2}{|\alpha - 2|} + 1)^{\alpha/2} - 1) \le 0
	$ if  $\alpha \le 0$. Therefore, for any $q,u \in \RR$, having $q = u$ implies that   $\ell(q,u) \le \ell(q',u)$ for all $q' \in \RR$. By using Eq. \eqref{eq:new:5}, this proves the statement for \nc. 
\end{proof}

\section{Other algorithm design options}\label{app:meta:design:other}

Apart from the most important options presented in Section~\ref{sec:meta:design} and Appendix~\ref{app:meta:design:multi} that pertain to the loss function parametrization, the number of inner optimization steps and imposing desirable loss function properties, some additional design options are presented in this section. 
First, consider the options of sampling new tasks $\Lambda$ from the task distribution $p(\lambda)$ (line 4 in Algorithm~\ref{algo:general}) and initializing the approximator NN parameters $\theta_{\tau}$ for $\tau \in \{1, \dots, T\}$ (line 6 in Algorithm~\ref{algo:general}). 
Resampling a set $\Lambda$ of $T$ tasks in every outer iteration exposes the learned loss to more samples from the task distribution. 
Similarly, solving these tasks with $T$ new randomly initialized NNs exposes the loss function to more samples from the NN initialization distribution.  
Although such introduced randomness is generally expected to improve test performance, it leads to unstable training that depends also on the number of inner optimization steps. 
As a result, we have the option to resample new tasks and new NN parameter initializations every $I'$ and $I''$ outer iterations, respectively, instead of every single iteration; i.e., $\theta_{\tau}$ is re-initialized in line 6 of Algorithm~\ref{algo:general} with a setting $\theta_{\tau}^{(0)}$ that is replaced every $I''$ iterations. 
An indicative experiment for demonstrating the effect of these options on training and test performance of the learned loss is performed in Appendix~\ref{app:add:results} for the regression example of Section~\ref{sec:examples:regression}. 

Finally, as a stopping criterion for Algorithm~\ref{algo:general}, i.e., for selecting the maximum number of outer iterations $I$, a performance metric can be recorded during meta-training and the algorithm can be stopped if no progress is observed for a number of iterations.  
Two candidate options for this metric are the following: (a) the outer objective of Eq.~\eqref{eq:lossml:outer:loss}, corresponding to a \textit{meta-training error}, and (b) the meta-test performance on a few test iterations with learned loss snapshots captured during meta-training, corresponding to a \textit{meta-validation error}. 
However, because of the aforementioned induced randomness during meta-training, the outer objective may be too noisy for it to be a useful metric, especially when only one task is used for each outer update ($T = 1$ in Algorithm~\ref{algo:general}); thus, either a less noisy option can be utilized (see Fig.~\ref{fig:reg:exp:train}) or a moving average can be recorded (see Fig.~\eqref{fig:burgers:trainloss:ffn}).
Furthermore, depending on the parametrization and other design options, the meta-validation error can also be noisy (see Figs.~\ref{fig:reg:exp:test}, \ref{fig:ad:traintest}, and \ref{fig:burgers:traintest}).
For this reason, in the computational examples of Section~\ref{sec:examples}, we meta-train for a sufficiently large number of outer iterations ($10{,}000$) that works reasonably well according to both metrics. 
We also capture snapshots of the learned loss and use all of them in meta-testing for gaining a deeper understanding of the algorithm's applicability for solving PDEs with PINNs. 

\section{Additional computational results related to the function approximation example}\label{app:add:results}

\paragraph*{Effect of loss function initialization.}
To demonstrate the effect of the initialization of the NN-parametrized loss function, the outer learning rate, as well as the gradient clipping approach discussed in Section~\ref{sec:meta:design:inner}, we first consider employing Algorithm~\ref{algo:general} with a randomly initialized loss function, a large learning rate equal to $5\times 10^{-2}$ and $J = 20$ inner optimization steps. 
SGD is considered as both the inner and outer optimizer, the approximator NN architecture consists of $3$ hidden layers with $40$ neurons each and $tanh$ activation function, the number of datapoints used for inner training is $N_u = 100$, and the number of datapoints used for updating the loss is $N_{u, val} = 1{,}000$; the approximator NN as well as $N_u$ and $N_{u, val}$ are the same for all experiments in this section.
Note that this is the only case in the computational examples that SGD is used as outer optimizer; in all other cases Adam is used.
In Fig.~\ref{fig:reg:expl:grad:info}, the norm of the loss parameters as well as the norm and the maximum of their gradient for each outer iteration are shown. 
The gradient on iteration $4$ explodes and leads to a large jump on the loss parameters norm as well.
Furthermore, Fig.~\ref{fig:reg:expl:grad:inner} shows the loss gradient norm for each outer iteration and as a function of inner iterations. 
Clearly, although for increasing inner steps $J$ we differentiate over a longer optimization path as explained in Section~\ref{sec:meta:design:inner}, the loss gradient norm does not necessarily increase with increasing $J$. 
For this reason, throughout the rest of the computational examples we use a gradient clipping approach for addressing the exploding gradient issue, instead of, for example, uniformly dividing all gradients by $J$. 
\begin{figure}[H]
	\centering
	\includegraphics[width=.5\linewidth]{./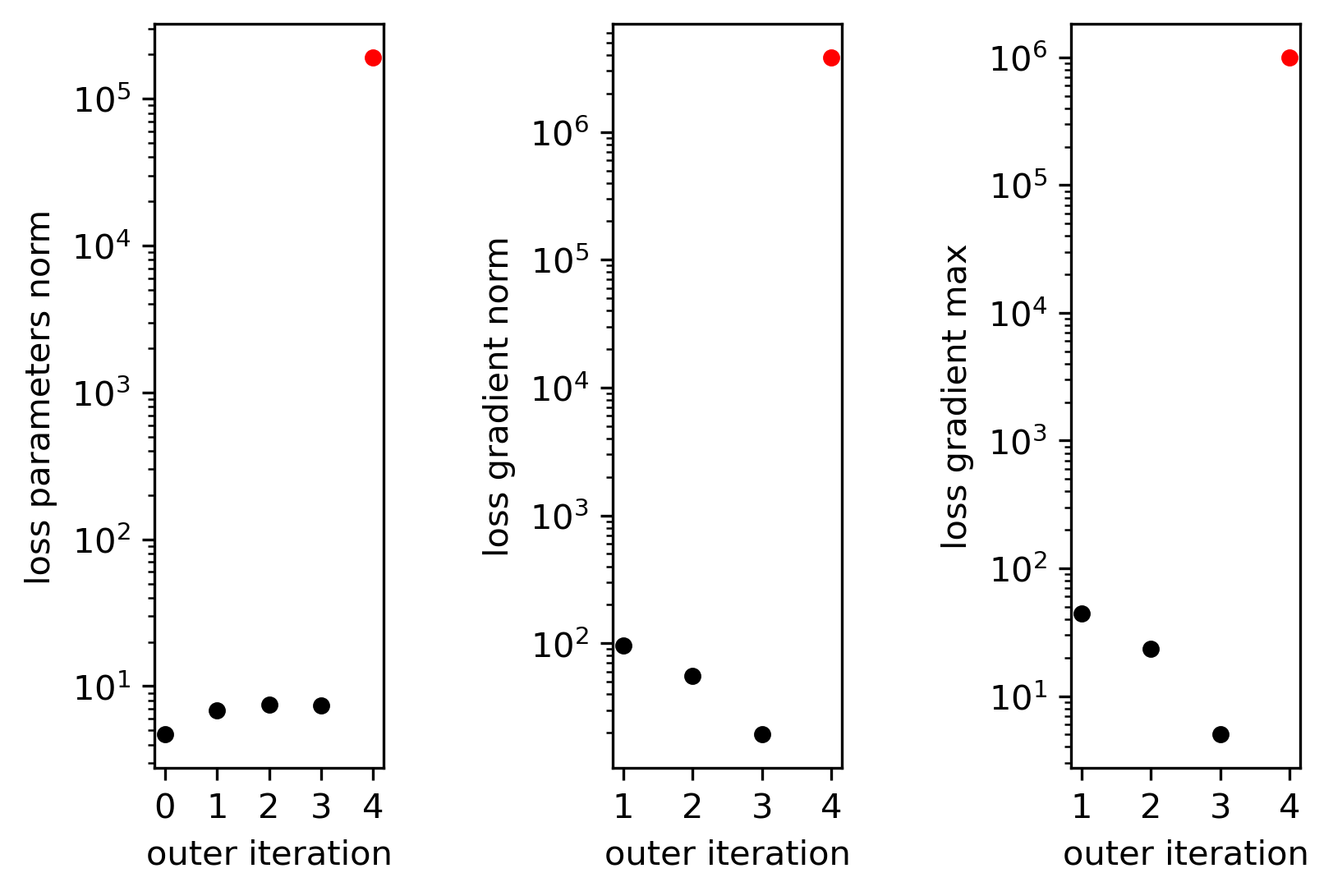}  
	\caption{Function approximation: Loss parameters norm as well as norm and maximum of their gradient for each outer iteration for the case of randomly initializing the loss function parameters and not using gradient clipping.
		The gradient on iteration $4$ explodes and leads to a large jump on the loss parameters norm as well.}
	\label{fig:reg:expl:grad:info}
\end{figure}
\begin{figure}[H]
	\centering
	\includegraphics[width=.5\linewidth]{./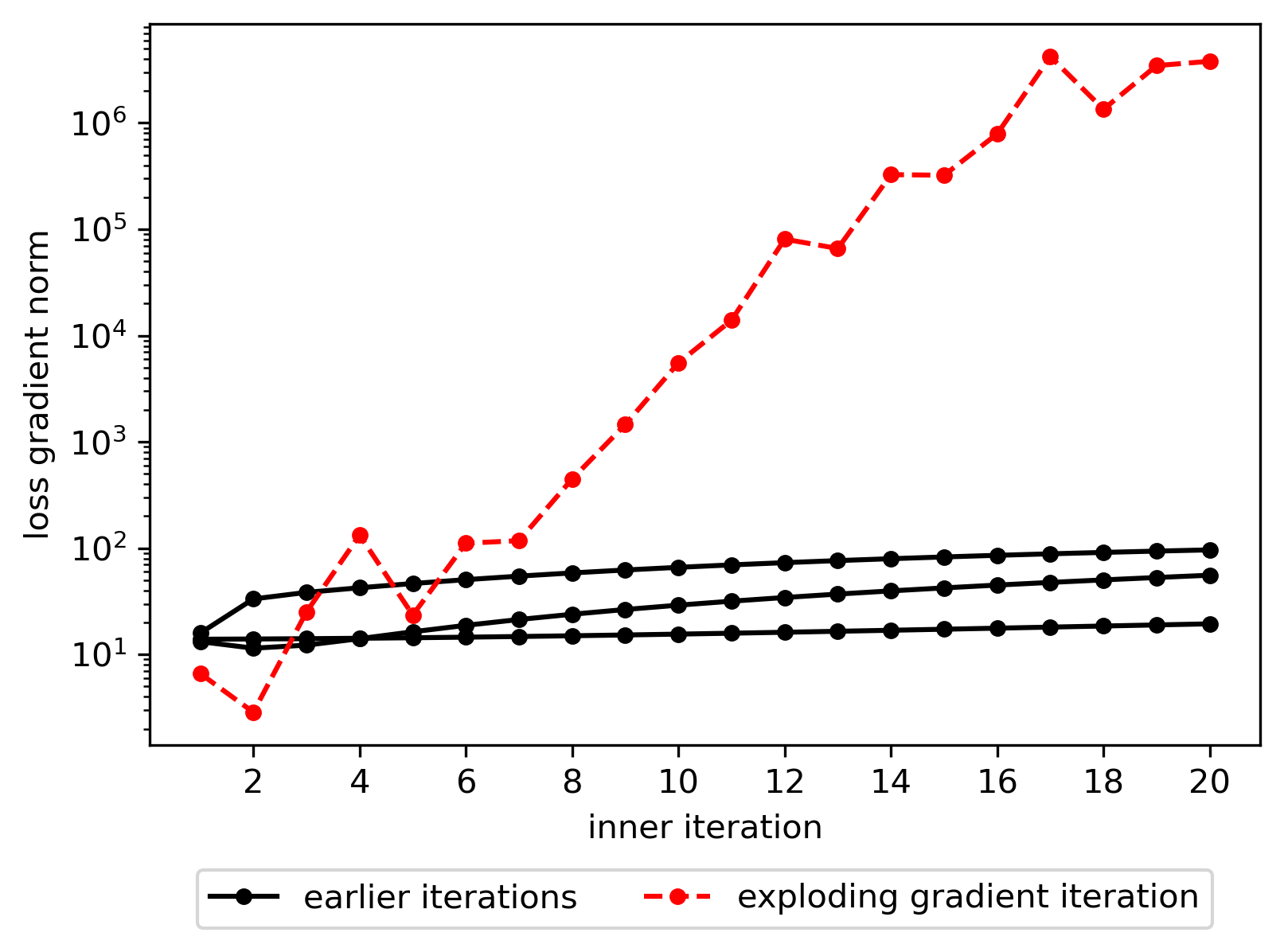}  
	\caption{Function approximation: Loss parameters gradient norm as a function of inner and outer iterations. 
		Gradient norm does not increase with increasing number of inner iterations for all outer iterations.}
	\label{fig:reg:expl:grad:inner}
\end{figure}

\paragraph*{Design options experiment.}
Next, to demonstrate the effect of the design options discussed in Section~\ref{sec:meta:design} and to evaluate the generalization capabilities of the algorithm we consider the following experiment: 
Algorithm~\ref{algo:general} is employed with an LAL-parametrized loss function initialized with an MSE approximation (see Section~\ref{sec:meta:design:param:LAL}), with Adam as inner optimizer, and with $I = 1{,}000$ outer steps. 
Furthermore, the number of inner steps $J$ varies between values in $\{1, 20\}$, the frequency according to which we sample new tasks varies between values in $\{1, 10, 100, 1000 = \text{no resampling}\}$, and whether in each outer iteration we use a newly initialized approximator NN is either True or False; see Appendix~\ref{app:meta:design:other} for relevant discussion. 
In Fig.~\ref{fig:reg:exp:train} we show the outer objective during meta-training and as a function of outer iteration for $J=1$ and $J=20$.
Clearly, resampling tasks and re-initializing the approximator NN introduces noise to the training as depicted by the corresponding outer objective trajectories shown with gray lines in Fig.~\ref{fig:reg:exp:train}; see also relevant results in Fig.~\ref{fig:burgers:trainloss:ffn} pertaining to the Burgers equation example.

Furthermore, in Fig.~\ref{fig:reg:exp:test}, the generalization capacity of the obtained loss functions is evaluated by performing meta-testing on $5$ ID unseen tasks for $100$ and $500$ test iterations. 
Note that each line in Fig.~\ref{fig:reg:exp:test} corresponds to the performance of $101$ different loss functions and is obtained by meta-training with different design options; i.e., we perform a test every $10$ outer iterations for each meta-training session. 
Overall, increasing the number of inner iterations improves the learned loss test performance as well as its robustness with increasing outer iterations and with varying design options.
Moreover, the patterns observed for 100 test iterations are almost identical to the ones observed for 500 test iterations, which means that if we attempt to optimize the design options based on meta-testing with 100 or 500 iterations we will end up with the same optimal ones.

\begin{figure}[H]
	\centering
	\begin{subfigure}[t]{0.48\textwidth}
		\centering
		\includegraphics[width=\textwidth]{./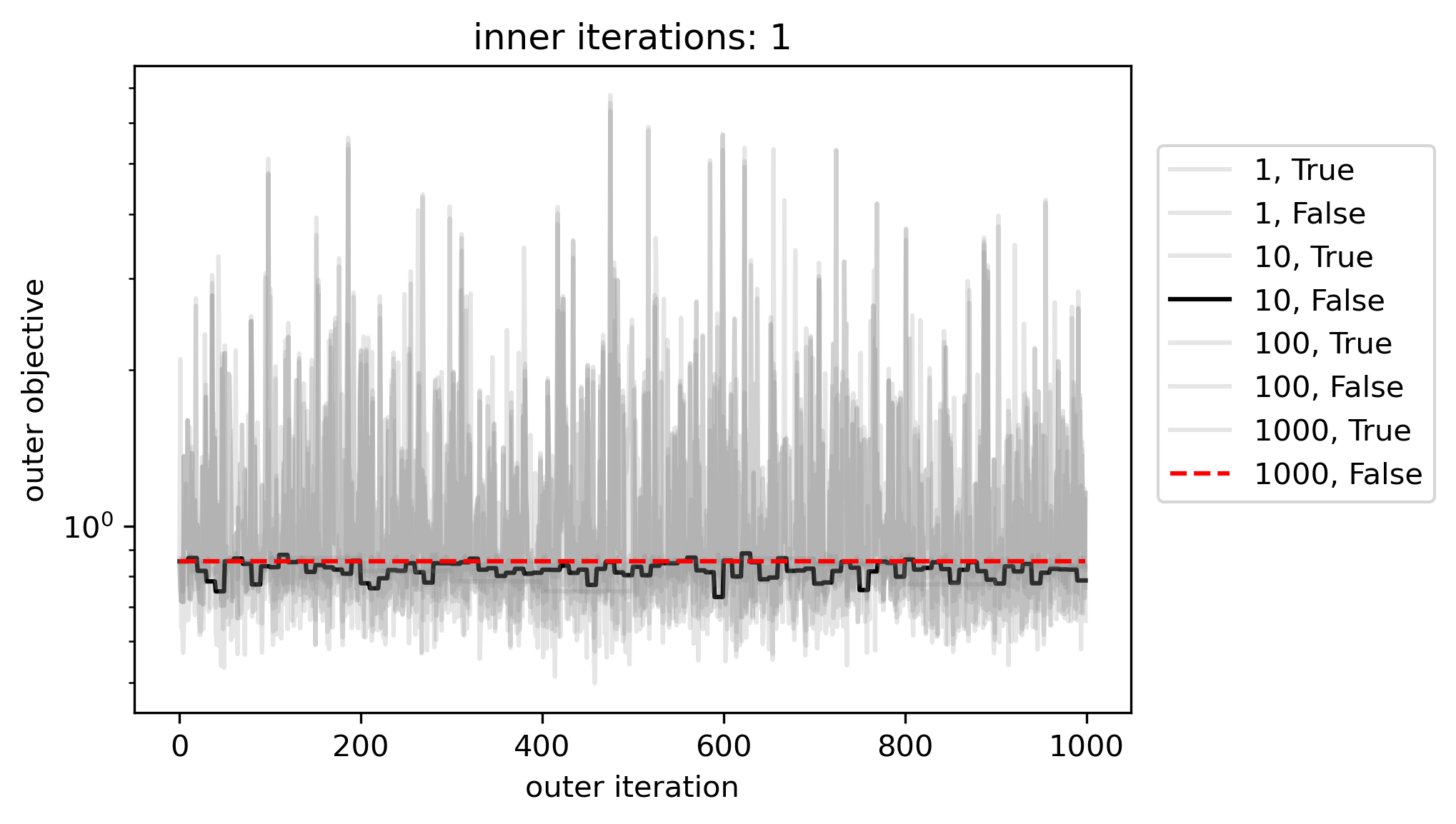}
		\caption{}
		\label{fig:reg:exp:train:1}
	\end{subfigure}
	\hfill
	\begin{subfigure}[t]{0.48\textwidth}
		\centering
		\includegraphics[width=\textwidth]{./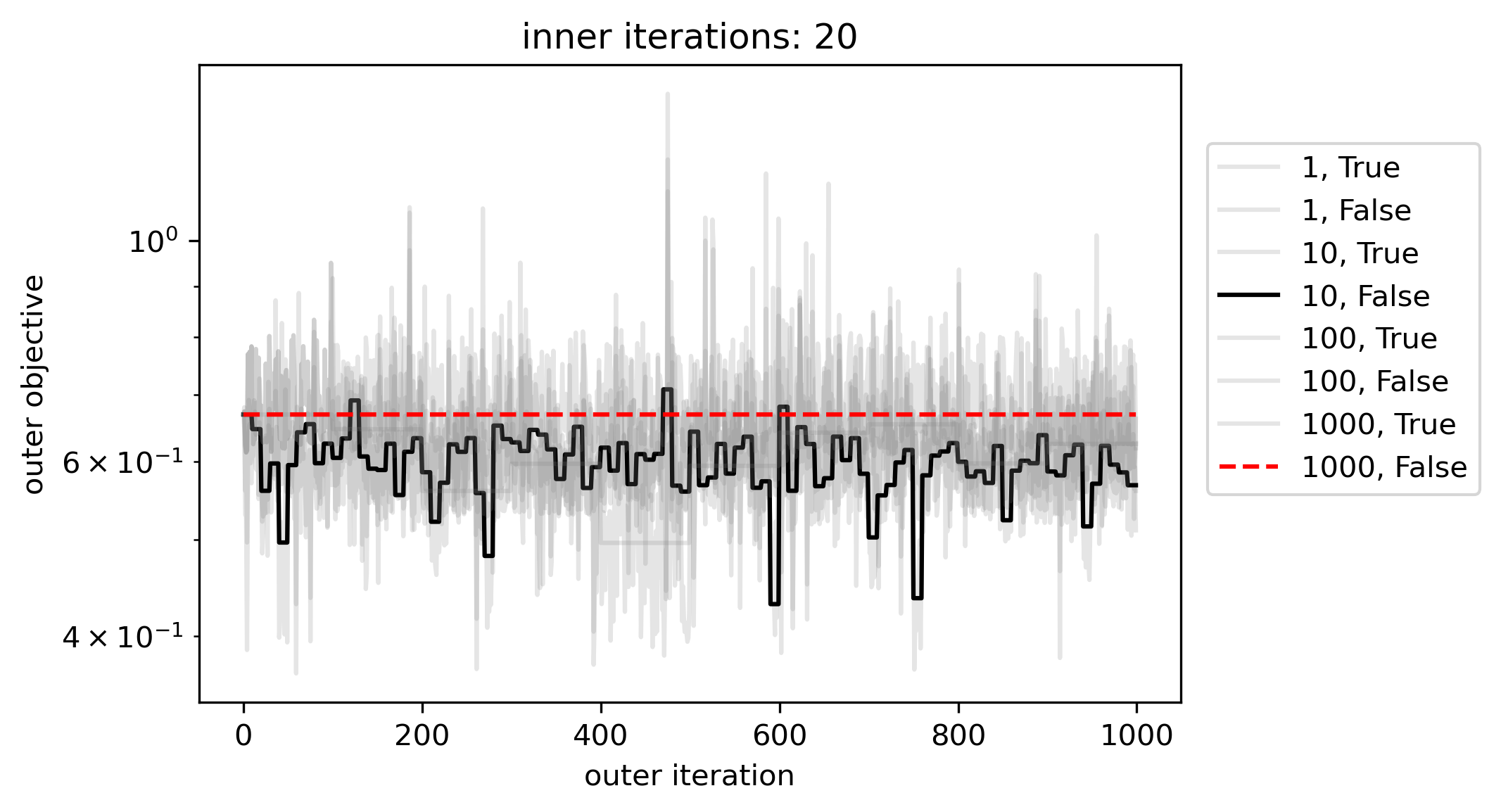}
		\caption{}
		\label{fig:reg:exp:train:20}
	\end{subfigure}
	\caption{Function approximation: Outer objective values recorded during meta-training ($1{,}000$ outer iterations) as a function of design options; these can be construed as \textit{meta-training error} trajectories.
		Results obtained using 1 inner iteration (a) and 20 inner iterations (b) in meta-training. 
		Design options considered are number of inner iterations $J \in \{1, 20\}$, frequency of resampling tasks $I' \in \{1, 10, 100, 1000 = \text{no resampling}\}$, and whether approximator NN is re-initialized with a new initialization setting or with the same one (True, False); see more information in Appendix~\ref{app:meta:design:other}.
		Resampling tasks and re-initializing the approximator NN introduces noise to the training.
		Black lines (task resampling every 10 outer iterations) are more noisy than the red lines (no task resampling), whereas the rest of the lines are shown with the same gray color for indicating noisy lines.} 
	\label{fig:reg:exp:train}
\end{figure}
\begin{figure}[H]
	\centering
	\begin{subfigure}[t]{0.48\textwidth}
		\centering
		\includegraphics[width=\textwidth]{./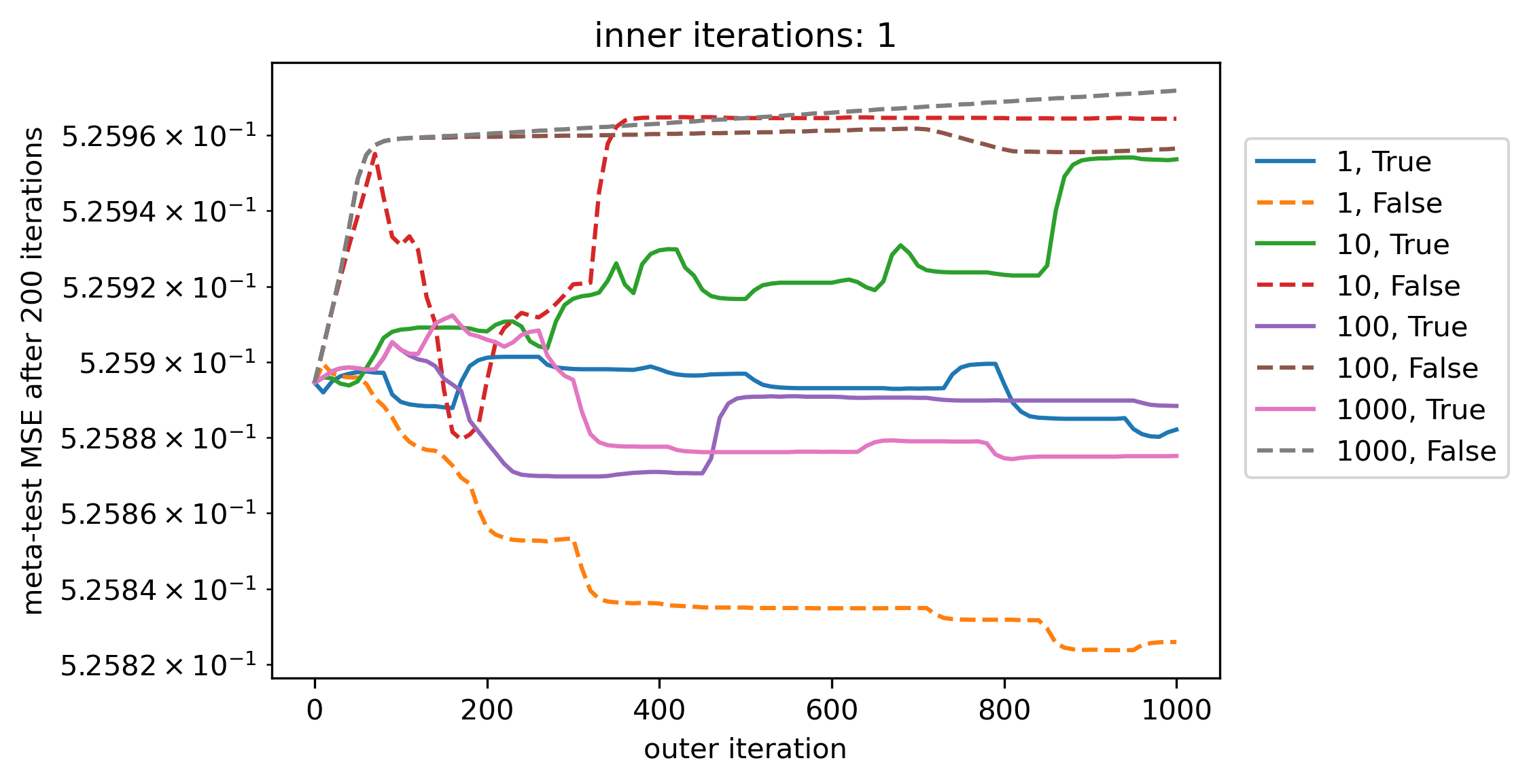}
		\caption{}
		\label{fig:reg:exp:test:1:100}
	\end{subfigure}
	\hfill
	\begin{subfigure}[t]{0.48\textwidth}
		\centering
		\includegraphics[width=\textwidth]{./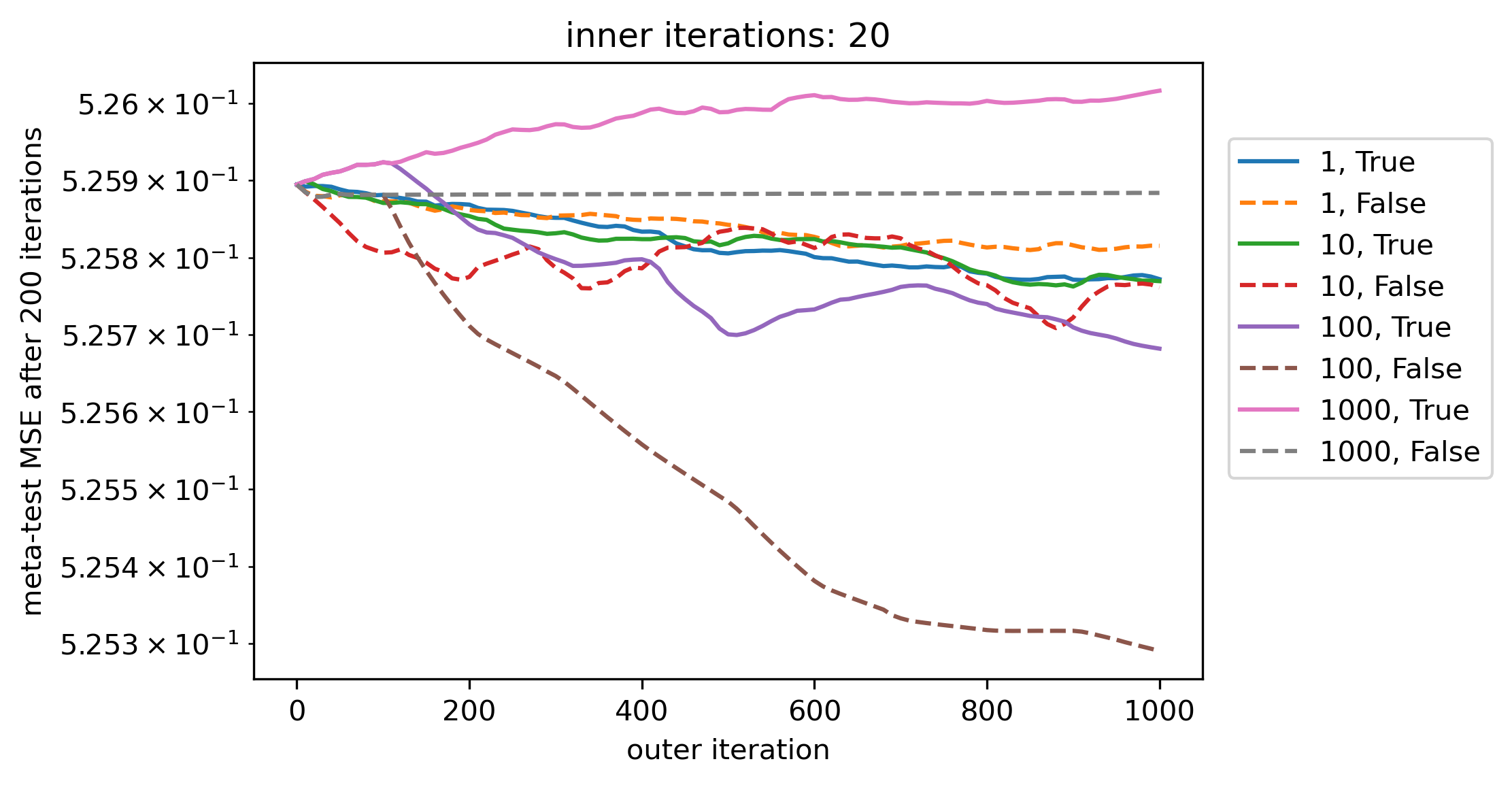}
		\caption{}		
		\label{fig:reg:exp:test:20:100}
	\end{subfigure}
	\begin{subfigure}[t]{0.48\textwidth}
		\centering
		\includegraphics[width=\textwidth]{./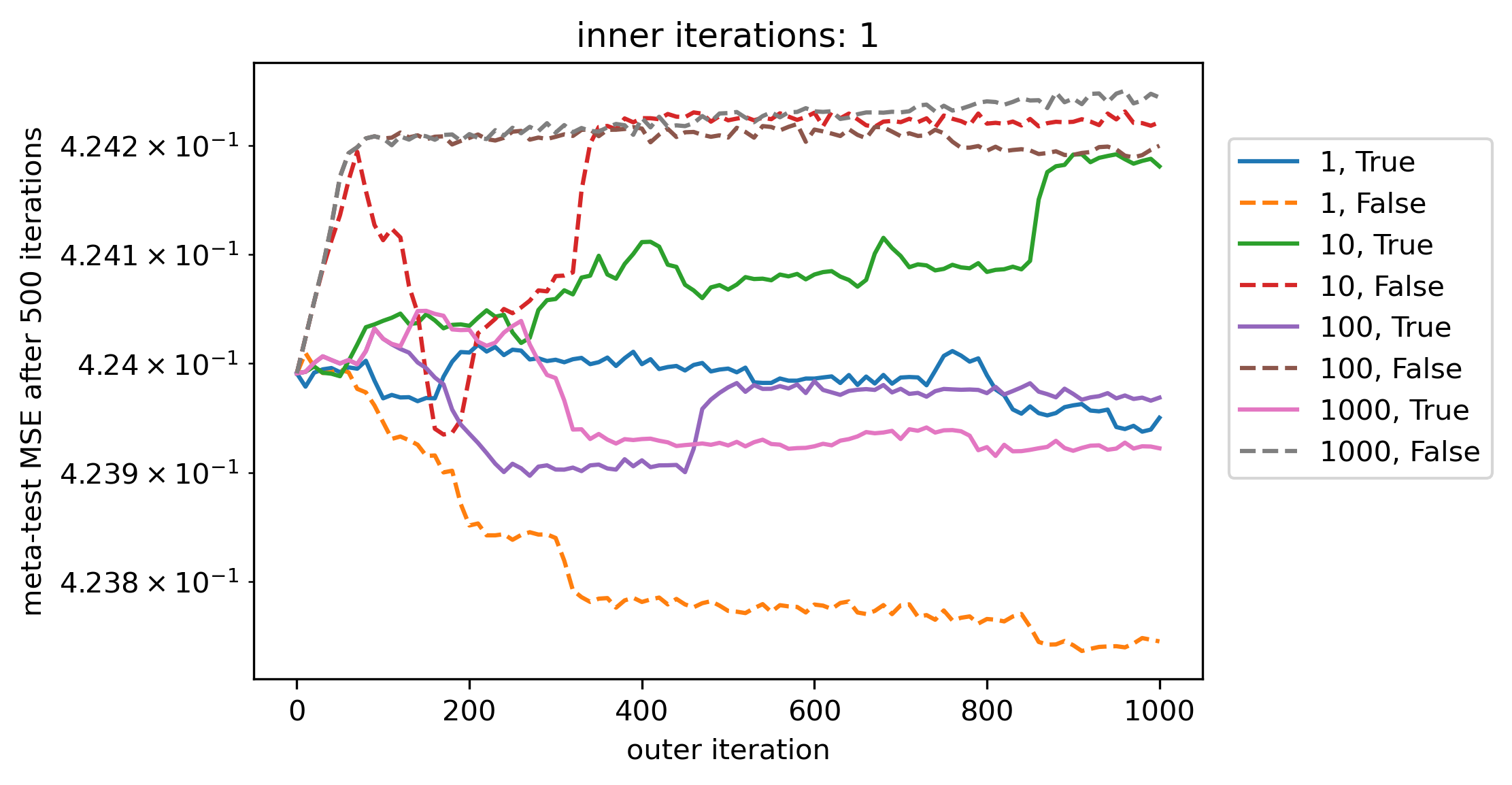}
		\caption{}
		\label{fig:reg:exp:test:1:500}
	\end{subfigure}
	\begin{subfigure}[t]{0.48\textwidth}
		\centering
		\includegraphics[width=\textwidth]{./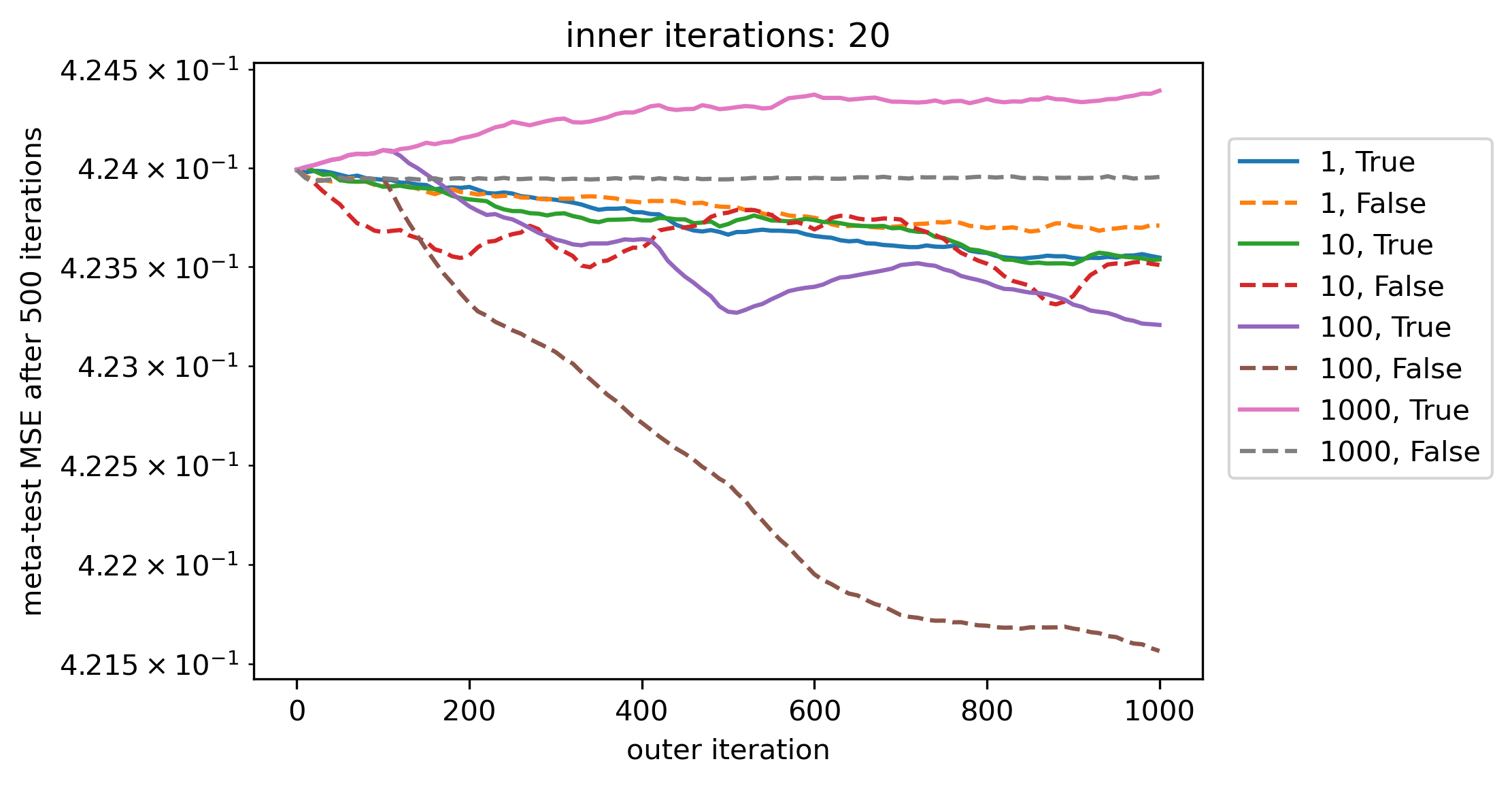}
		\caption{}
		\label{fig:reg:exp:test:20:500}
	\end{subfigure}
	\caption{Function approximation: Meta-testing performance (100 (a, b) and 500 (c, d) test iterations) as a function of outer iteration during meta-training ($1{,}000$ outer iterations) and of design options; these can be construed as \textit{meta-validation error} trajectories.
		Results obtained using 1 inner iteration (a, c) and 20 inner iterations (b, d) in meta-training. 
		Design options considered are number of inner iterations $J \in \{1, 20\}$, frequency of resampling tasks $I' \in \{1, 10, 100, 1000 = \text{no resampling}\}$, and whether approximator NN is re-initialized with a new initialization setting or with the same one (True, False); see more information in Appendix~\ref{app:meta:design:other}. 
		Clearly, increasing the number of inner iterations improves the learned loss test performance as well as its robustness with increasing outer iterations and with varying design options.
		Furthermore, the patterns observed for 100 test iterations are almost identical to the ones observed for 500 test iterations.
	}
	\label{fig:reg:exp:test}
\end{figure}
\end{appendices}

\end{document}